 \newtheorem{assumption}{Assumption}
\def\eqref#1{equation~\ref{#1}}
\def\1{\bm{1}}
\def\rva{{\mathbf{a}}}
\def\rvb{{\mathbf{b}}}
\def\rvg{{\mathbf{g}}}
\def\rvu{{\mathbf{i}}}
\def\rvp{{\mathbf{p}}}
\def\rvq{{\mathbf{q}}}
\def\rvr{{\mathbf{r}}}
\def\rvs{{\mathbf{s}}}
\def\rvu{{\mathbf{u}}}
\def\rvv{{\mathbf{v}}}
\def\rvw{{\mathbf{w}}}
\def\rvx{{\mathbf{x}}}
\def\rvy{{\mathbf{y}}}
\def\rvz{{\mathbf{z}}}
\def\rmI{{\mathbf{I}}}
\def\rmW{{\mathbf{W}}}
\def\rmX{{\mathbf{X}}}
\DeclareMathAlphabet{\mathsfit}{\encodingdefault}{\sfdefault}{m}{sl}
\SetMathAlphabet{\mathsfit}{bold}{\encodingdefault}{\sfdefault}{bx}{n}
\def\gH{{\mathcal{H}}}
\def\sN{{\mathbb{N}}}
\def\sR{{\mathbb{R}}}
\def\sS{{\mathbb{S}}}
\begin{document}

\title{Towards Understanding Gradient Flow Dynamics of Homogeneous Neural Networks Beyond the Origin}

\author{\name Akshay Kumar \email kumar511@umn.edu \\
       \addr Department of Electrical and Computer Engineering\\
       University of Minnesota\\
       Minneapolis, MN 55455, USA
       \AND
       \name Jarvis Haupt \email jdhaupt@umn.edu \\
       \addr Department of Electrical and Computer Engineering\\
       University of Minnesota\\
       Minneapolis, MN 55455, USA}

\editor{Mahdi Soltanolkotabi}

\maketitle

\begin{abstract}
	Recent works exploring the training dynamics of homogeneous neural network weights under gradient flow with small initialization have established that in the early stages of training, the weights remain small and near the origin, but converge in direction. Building on this, the current paper studies the gradient flow dynamics of homogeneous neural networks with locally Lipschitz gradients, \emph{after} they escape the origin. Insights gained from this analysis are used to characterize the first saddle point encountered by gradient flow after escaping the origin. Also, it is shown that for homogeneous \emph{feed-forward} neural networks, under certain conditions, the sparsity structure emerging among the weights before the escape is preserved after escaping the origin and until reaching the next saddle point.
\end{abstract}

\begin{keywords}
  deep learning, implicit regularization, gradient flow, homogeneous neural networks, training dynamics
\end{keywords}

\section{Introduction}
Modern deep neural networks have a surprisingly good generalization behavior when trained via gradient-based methods, even when having sufficient capacity to overfit the training set. A widespread belief is that the training algorithm induces \emph{implicit regularization}, which leads to solutions with favorable generalization performance \citep{soudry_ib}. This has motivated several works to study the training dynamics of neural networks in variety of settings including, for example, the \emph{Neural Tangent Kernel} (NTK)/large initialization regime \citep{ntk, chizat_lazy, arora_exact}, late phase of training homogeneous networks with classification loss \citep{Lyu_ib, ji_matus_align}, and linear and non-linear neural networks \citep{cohen_mtx_fct,ji_gradient,timor_lr,chizat_inf,jacot_rank,abbe_msp}.

This paper studies the training dynamics of neural networks in the small initialization regime\textemdash an important yet not fully understood area. Understanding this regime is particularly important because, for small initialization, neural networks trained via gradient descent operate in the \textit{feature learning} regime, allowing them to learn underlying features present in the data \citep{fl_yang,geiger_feature,mei_mean}. Also, smaller initialization has been observed to lead to better generalization in various tasks \citep{chizat_lazy, geiger_feature}. However, the highly non-linear evolution of the weights in this regime poses significant challenges towards understanding the training dynamics.

Recent studies have focused on the early stages of training homogeneous neural networks via gradient flow with small initialization \citep{maennel_quant,kumar_dc, early_dc,luo_condense,atanasov_align,boursier_early}. For such networks, the origin is a critical point, causing the weights to remain near the origin for some time after training begins, provided the initialization is ``small''. However, interesting structure among the weights begins to emerge even while the weights remain near the origin. In \citet{kumar_dc, early_dc}, it is shown that for sufficiently small initialization, the weights remain small and near the origin for a sufficiently long time, during which they converge in direction towards a Karush-Kuhn-Tucker (KKT) point of a so-called \emph{Neural Correlation Function} (NCF), a phenomenon referred to as early directional convergence. Also, for feed-forward homogeneous neural networks, weights were observed to converge to a KKT point that exhibits sparsity \citep{early_dc,atanasov_align,dc_three_layer}, and these KKT points were characterized in \citet{early_dc}.

Compared to \cite{kumar_dc, early_dc}, which focus on the gradient flow dynamics of homogeneous neural networks near the origin, this work studies the dynamics of gradient flow after escaping the origin. To understand our contributions, consider the experiment shown in \Cref{fig:two_layer_nn}, where we train a two-layer neural network with square activation function (a $3$-homogeneous network) using gradient descent; more details about the experiment are in the figure caption. The initial weights are small and random, as depicted in \Cref{fig:wt_init_2l}. The evolution of the training loss in \Cref{fig:loss_evol_2l} resembles a piece-wise constant function, alternating between periods of stagnation and sharp decreases. Similar behavior has been observed in previous works for other neural networks (see \Cref{sec_related_works} for details). In \Cref{fig:wt_1}, we plot the weights at iteration $i_1$ (marked in \Cref{fig:loss_evol_2l}), just before escaping the origin. The weights remain small, but are larger than the initial weights, and more importantly, they are sparse. This aligns with observations made in \cite{early_dc}, that in the early stages of training, weights tend to converge in direction towards a KKT point of the NCF that exhibits sparsity. Next, we make two key observations about the dynamics after escaping the origin:
\begin{itemize}
	\item After gradient descent escapes the origin, the loss rapidly decreases and soon becomes stagnant again, indicating the trajectory of gradient descent  is near a saddle point.
	\item The sparsity structure of the weights is preserved even after escaping the origin. This is evident when comparing \Cref{fig:wt_1} with  \Cref{fig:wt_2}, where the latter depicts the weights at iteration $i_2$, immediately after reaching the next saddle point.
\end{itemize}
\begin{figure}[!b]
	\centering
	\begin{subfigure}{0.35\textwidth}
		\centering
		\includegraphics[width=\textwidth]{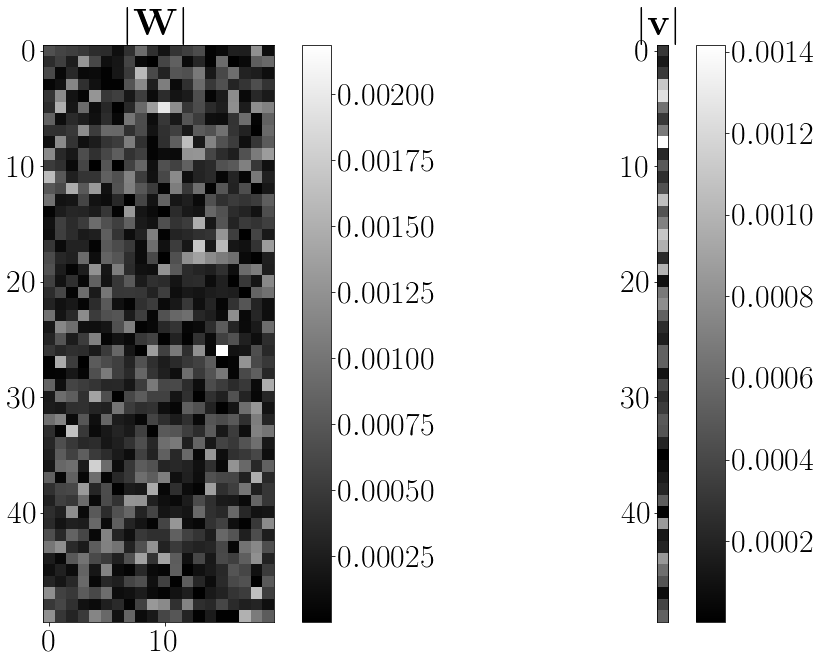}
		\caption{Weights at initialization}
		\label{fig:wt_init_2l}
	\end{subfigure}
	\hfill
	\begin{subfigure}{0.42\textwidth}
		\centering
		\includegraphics[width=\textwidth]{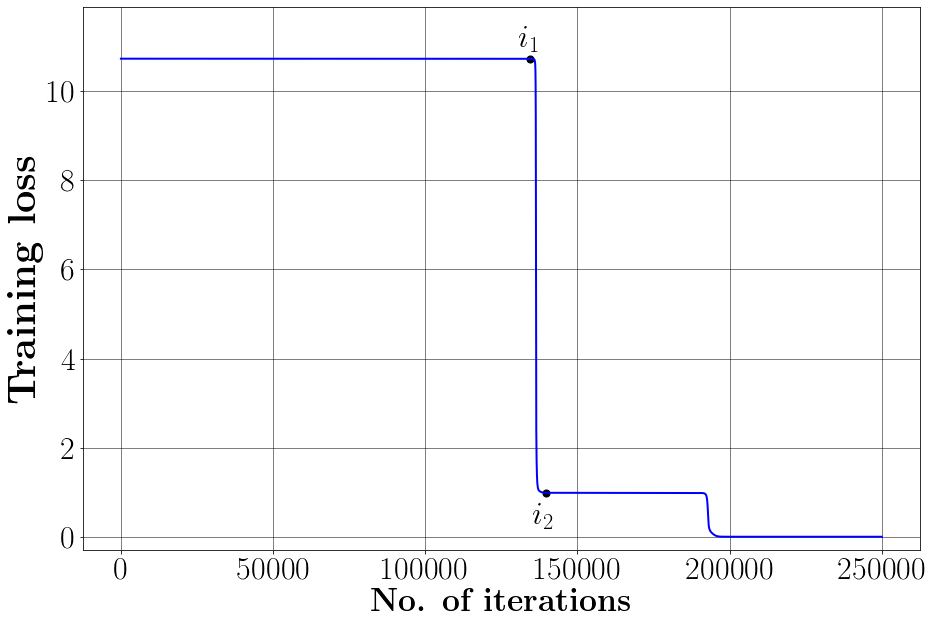}
		\caption{Evolution of training loss}
		\label{fig:loss_evol_2l}
	\end{subfigure}
	
	
	\begin{subfigure}{0.35\textwidth}
		\centering
		\includegraphics[width=\textwidth]{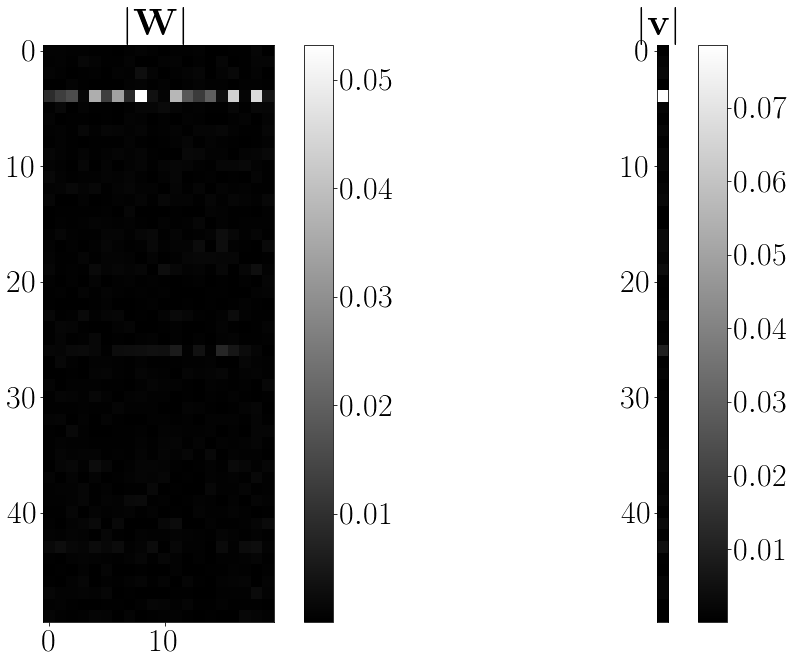}
		\caption{Weights at iteration $i_1$}
		\label{fig:wt_1}
	\end{subfigure}
	\hfill
	\begin{subfigure}{0.35\textwidth}
		\centering
		\includegraphics[width=\textwidth]{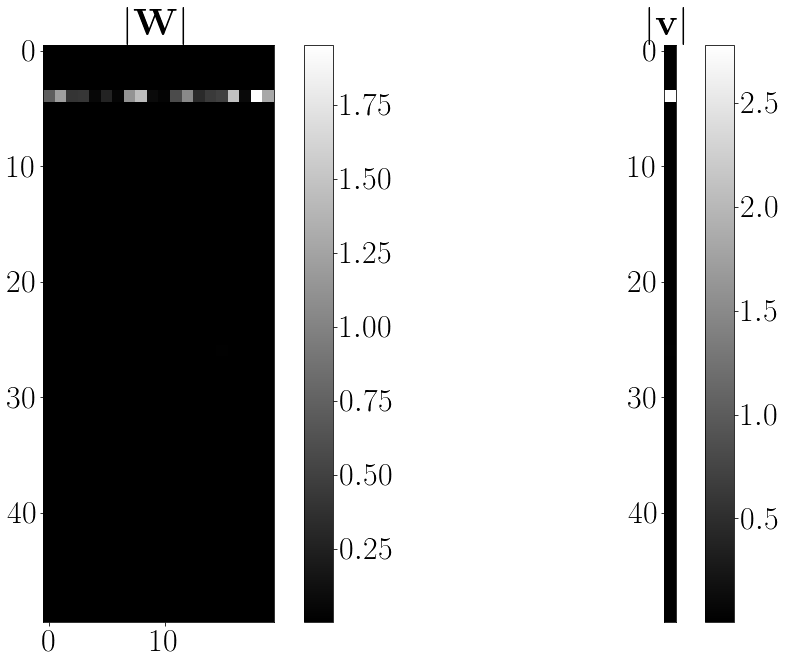}
		\caption{Weights at iteration $i_2$}
		\label{fig:wt_2}
	\end{subfigure}
	\caption{We train a two-layer neural network with output $\rvv^\top\sigma(\rmW\rvx) ,$ where $\sigma(x) = x^2$, and trainable weights $\rvv\in \mathbb{R}^{50},\rmW \in \mathbb{R}^{50 \times 20}$. The training set has $100$ points sampled uniformly from the unit sphere in $\sR^{20}$. We minimize the square loss with respect to the output of a smaller two-layer neural network with two neurons and square activation. We train using gradient descent with small initial weights, as depicted in panel (a). Panel (b) shows the evolution of loss with iterations. Panels (c) and (d) depict the absolute value of weights at iteration $i_1$ and $i_2$ (marked in panel (b)), approximately just before escaping the origin and immediately after reaching the next saddle point, respectively (the gap between them is 5000 iterations).  Panels (c) and (d) show that the sparsity structure emerging among the weights before escaping the origin is preserved until reaching the next saddle point.}
	\label{fig:two_layer_nn}
\end{figure}

 \textbf{Our contributions.} This paper attempts to explain the above observations for homogeneous neural networks. Our main contributions can be summarized as follows:  
 \begin{itemize}
 	\item  \textbf{Gradient Flow Dynamics Post-Escape:} In \Cref{main_thm_2hm} and \Cref{main_thm_Lhm}, we describe the gradient flow dynamics of homogeneous neural networks that have locally Lipschitz gradients and order of homogeneity at least two, after escaping the origin. Our results show that for sufficiently small initialization, the gradient flow escapes along the same path as the gradient flow with small initial weights and initial direction along a KKT point of the NCF. Subsequent corollaries provide a characterization of the saddle point encountered by gradient flow following its escape from the origin.
 	\item \textbf{Post-Escape Sparsity in Feed-Forward Networks:} In \Cref{sec_implications}, we show that for homogeneous \emph{feed-forward} neural networks, the sparsity structure that emerges among the weights before escaping the origin is preserved post-escape under certain specified conditions. This is achieved by showing that, for KKT points of the NCF, the weights with zero magnitude form a \emph{zero-preserving subset}\textemdash a subset of weights that remain zero throughout training under gradient flow, provided they are initialized at zero.  This insight is combined with \Cref{main_thm_2hm} and \Cref{main_thm_Lhm} to prove our result.
 \end{itemize}
Note that we assume the neural network to have locally Lipschitz gradients, which excludes ReLU networks. However, our experimental results suggest that some of our findings may extend to ReLU networks as well. Also, our results only capture a segment of the gradient flow dynamics beyond the origin, specifically up to the first saddle point encountered by gradient flow after escaping the origin. More details about this are provided in later sections.
\subsection{Related Works}\label{sec_related_works}
Several works have investigated the training dynamics of neural networks in the small initialization regime. One of the earliest works examined diagonal linear networks, showing that small initialization leads gradient flow to converge towards minimum $\ell_1$-norm solutions \citep{srebro_ib, dln_sparse}. For linear neural networks, it has been observed that gradient descent with small initialization tends to converge toward low-rank solutions \citep{guna_mtx_fct, cohen_mtx_fct}, with rigorous results in some cases \citep{mahdi_ib, lee_saddle,chou_mf,lawrence_linear}. A similar sparsity-inducing effect of small initialization has been observed for non-linear neural networks as well \citep{chizat_lazy}, but theoretical results have mostly been established for two-layer neural networks trained on simple data sets \citep{lyu_simp,gf_orth,wang_saddle}. Recent works on two-layer networks have explored more challenging scenarios \citep{damian_reps, abbe_msp, mousavi_sgd}, but they often make other assumptions about the training algorithm, such as layer-wise training, the use of explicit regularization like weight decay, etc., whereas we study the dynamics of gradient flow with respect to all the weights and do not add any explicit regularization in the training loss. It is also worth noting that most of the aforementioned works examine the \emph{entire} training process; in contrast, our work describes a segment of the gradient flow dynamics beyond the origin, however, our results hold for a wider class of neural networks. 

Another line of work has identified the so-called saddle-to-saddle dynamics in the trajectory of gradient descent when the initialization is small \citep{jacot_sd, lyu_resolving}.  These works have observed that during training, the trajectory of gradient descent passes through a sequence of saddle points. Moreover, the loss curve almost appears like a piece-wise constant function, alternating between being stagnant and decreasing sharply; see \Cref{fig:loss_evol_2l} for an example. Other works refer to this phenomenon as \emph{incremental learning} \citep{gidel_incr, gissin_incr,razin_incr,slutzky_incr}, since the function learned by the neural network gradually increases in complexity as it moves from one saddle to another. So far, this kind of saddle-to-saddle dynamics has been rigorously established for diagonal linear neural networks \citep{pesme_sd, abbe_inc} and two-layer linear and non-linear neural networks trained with various gradient-based methods under data-related assumptions \citep{lee_saddle, gf_orth, wang_saddle, montanari_saddle,abbe_sgd}, and is conjectured to be true for a wider class of neural networks. Our work, which describes the first saddle point encountered by gradient flow after escaping the origin, can be seen as a step towards establishing it.

Lastly, we highlight the work by \cite{lyu_resolving}, which studies the gradient flow dynamics of linear neural networks under small initialization after escaping the origin. While our paper studies a broader class of neural networks, their proof technique has inspired our approach.

\subsection{Organization}
The paper is organized as follows: \Cref{sec_ps} outlines the problem setup and reviews previous works on early directional convergence. \Cref{sec_gf_beyond} present our findings on the gradient flow dynamics after escaping the origin for homogeneous neural networks, with detailed proofs provided in the Appendix. In \Cref{sec_implications}, we discuss the implications of these results on the gradient flow dynamics of feed-forward homogeneous neural networks. Finally, \Cref{sec_conc} provides concluding remarks, which is followed by the Appendix.
\section{Preliminaries and Problem Setup}\label{sec_ps}
The set of natural numbers is denoted by $\sN$, and for any $L\in\sN$, we let $[L] \coloneqq \{1,2,\cdots, L\}$. We use $\|\cdot\|_2$ to denote the $\ell_2$-norm for a vector, while $\|\cdot\|_F$ and $\|\cdot\|_2$ denote the Frobenius and spectral norm for a matrix, respectively. The $d$-dimensional unit-norm sphere is denoted by $\sS^{d-1}$. For a non-zero vector $\mathbf{z}$, $\rvz^\perp \coloneqq \{\rvb: \rvb^\top\rvz = 0,\rvb\in\sS^{d-1} \}$, that is, set of unit-norm vectors orthogonal to $\rvz$. The $i$th entry of a vector $\rvz$ is denoted by $z_i$. A KKT point of an optimization problem is called a non-negative (positive, zero) KKT point if the objective value at the KKT point is non-negative (positive, zero). \\\\
\textbf{First- and second-order KKT points.} We state the first- and second-order KKT conditions for a maximization problem on unit sphere, which can be easily proved. Consider the following optimization problem
\begin{align*}
\max_{\rvw}  f(\rvw), \text{ s.t. } \|\rvw\|_2^2 = 1.
\end{align*}
Suppose  $f(\rvw)$ is twice-continuously differentiable near $\rvw_*$, then 
\begin{itemize}
	\item $\rvw_*$ is a first-order KKT point if  there exists a $\lambda_*$ such that $\nabla f(\rvw_*) - 2\lambda_*\rvw_* = \mathbf{0}. $
	\item For $\Delta> 0$, we define a first-order KKT point $\rvw_*$ to be a $\Delta-$second-order KKT point if $2\lambda_* -\rvb^\top\nabla^2 f(\rvw_*) \rvb \geq \Delta,$ for all $\rvb \in \rvw_*^\perp.$
\end{itemize}
\textbf{Homogeneous neural netwoks.} For a neural network $\mathcal{H}$, $\mathcal{H}(\rvx;\rvw)$ denotes its output, where $\rvx\in \sR^d$ is the input and $\rvw\in \sR^k$ is a vector containing all the weights. A neural network $\mathcal{H}$ is referred to as $L$-\emph{positively homogeneous} if  
\begin{align*}
\mathcal{H}(\rvx;c\rvw) =  c^L\mathcal{H}(\rvx;\rvw), \text{ for all } c\geq 0 \text{ and } \mathbf{w}\in \sR^k.
\end{align*} 

\begin{assumption}\label{homo_assumption}
	We make the following assumptions on the neural networks considered in this paper: $(i)$ For any fixed $\rvx$, $\mathcal{H}(\rvx;\rvw)$ is locally Lipschitz in $\mathbf{w}$ and is  definable in some o-minimal structure. $(ii)$ The neural network $\mathcal{H}$ is $L$-positively homogeneous, for some $L\geq 2$. $(iii)$ The gradient of  $\mathcal{H}(\rvx;\rvw)$ with respect to $\rvw$, $\nabla \mathcal{H}(\rvx;\rvw)$, is locally Lipschitz  in $\mathbf{w}$.
\end{assumption}
The first two conditions are satisfied by deep feed-forward neural networks with homogeneous activation functions like ReLU and polynomial ReLU $(\max(0, x)^p, p\geq 1).$ We note that definability in some o-minimal structure is a mild technical assumption and is satisfied by all modern deep neural networks \citep{ji_matus_align}. Assuming locally Lipschitz  gradient is crucial for proving our results rigorously, and while it rules out deep ReLU neural networks,  it does include deep linear networks and feed-forward neural networks with polynomial ReLU activation functions $(\max(0, x)^p, p\geq 2)$.\\\\
\textbf{Training setup.} Let $\{\rvx_i,y_i\}_{i=1}^n \in \sR^{d}\times \sR$ be the training data, and define $\rmX = \left[\rvx_1,\cdots,\rvx_n\right] \in \sR^{d \times n}$, $\rvy = \left[y_1,\cdots,y_n\right]^\top\in\sR^{ n}$. Let $\gH(\rmX;\rvw) = \left[\gH(\rvx_1;\rvw), \cdots, \gH(\rvx_n;\rvw)\right]\in\sR^{ n}$ be the vector containing outputs of the neural network, and $\mathcal{J}(\rmX;\rvw)$ denotes the Jacobian of $\gH(\rmX;\rvw)$ with respect to $\mathbf{w}$. For training, we minimize the following objective:
\begin{align}
\mathcal{L}(\rvw) = \sum_{i=1}^n \ell(\mathcal{H}(\rvx_i;\rvw), y_i) = \ell(\mathcal{H}(\rmX;\rvw), \rvy) ,
\label{loss_fn}
\end{align} 
where $\ell(\cdot,\cdot)$ is the loss function, and $\ell(\rvp,\rvq) = \sum_{i=1}^n\ell(p_i,q_i),$ for any two vectors $\rvp, \rvq \in \sR^n$. We use $\ell'(\cdot,\cdot)$ to denote the derivative of $\ell(\cdot,\cdot)$ with respect to the first variable, and define $\ell'(\rvp,\rvq) = \left[\ell'(p_1,q_1),\cdots,\ell'(p_n,q_n)\right]^\top\in \sR^n$. Also, $\ell''(\cdot,\cdot)$ denotes the second-order derivative of $\ell(\cdot,\cdot)$ with respect to the first variable.
\begin{assumption}\label{ass_loss}
	We make the following assumptions on the loss function:
	\begin{itemize}
		\item Smoothness: For some $K>0$, $|\ell''(p,q)| \leq K,$ for all $p, q\in \sR.$
		\item Convexity: $(\ell'(\rvp,\rvy)-\ell'(\rvq,\rvy))^\top(\rvp-\rvq)\geq 0,$ for all $\rvp, \rvq\in \sR^n$.
	\end{itemize}
\end{assumption}
The above assumption is satisfied by common loss functions such as square and logistic loss. We minimize the optimization problem in \cref{loss_fn} using gradient flow:
\begin{align}
\dot{\rvw} = -\nabla \mathcal{L}(\rvw) = -\mathcal{J}(\rmX;\rvw)^\top\ell'(\mathcal{H}(\rmX;\rvw), \rvy),
\label{gf_eq}
\end{align} 
and use $\bm{\psi}(t,\rvw(0))$ to denote the solution of above differential equation, where  $\rvw(0)$ is the initialization. We aim to study the evolution of $\bm{\psi}(t,\rvw(0))$ with time for small initialization.\\\\
\textbf{Early directional convergence.} We next briefly discuss the results of \cite{kumar_dc, early_dc} which study the phenomenon of early directional convergence.
\begin{lemma}
	The origin is a critical point of the optimization problem in \cref{loss_fn}.
	\label{0_crit_lemma}
\end{lemma}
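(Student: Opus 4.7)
The goal is to show $\nabla \mathcal{L}(\mathbf{0}) = \mathbf{0}$. By the chain rule applied to \cref{loss_fn}, $\nabla \mathcal{L}(\rvw) = \mathcal{J}(\rmX;\rvw)^\top \ell'(\mathcal{H}(\rmX;\rvw),\rvy)$, so it suffices to prove $\mathcal{J}(\rmX;\mathbf{0}) = \mathbf{0}$, i.e., $\nabla_{\rvw}\mathcal{H}(\rvx_i;\mathbf{0}) = \mathbf{0}$ for every training input $\rvx_i$. The only structural ingredient needed is $L$-positive homogeneity together with the fact (from Assumption \ref{homo_assumption}(iii)) that $\nabla_{\rvw}\mathcal{H}(\rvx;\cdot)$ exists everywhere and is continuous.

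First, I would differentiate both sides of the homogeneity identity $\mathcal{H}(\rvx;c\rvw)=c^L\mathcal{H}(\rvx;\rvw)$ with respect to $\rvw$. The left side, by the chain rule, yields $c\,\nabla_{\rvw}\mathcal{H}(\rvx;c\rvw)$, and the right side yields $c^L\nabla_{\rvw}\mathcal{H}(\rvx;\rvw)$. Therefore, for any fixed $\rvw\in\sR^k$ and any $c>0$,
\begin{equation*}
\nabla_{\rvw}\mathcal{H}(\rvx;c\rvw) \;=\; c^{L-1}\,\nabla_{\rvw}\mathcal{H}(\rvx;\rvw).
\end{equation*}

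Next, I would send $c\to 0^+$. Since Assumption \ref{homo_assumption}(iii) guarantees that $\nabla_{\rvw}\mathcal{H}(\rvx;\cdot)$ is locally Lipschitz (hence continuous) at the origin, the left side converges to $\nabla_{\rvw}\mathcal{H}(\rvx;\mathbf{0})$. Because $L\geq 2$, we have $L-1\geq 1$, so the right side converges to $\mathbf{0}$. This yields $\nabla_{\rvw}\mathcal{H}(\rvx;\mathbf{0})=\mathbf{0}$ for every $\rvx$, and in particular $\mathcal{J}(\rmX;\mathbf{0})=\mathbf{0}$. Substituting back, $\nabla \mathcal{L}(\mathbf{0}) = \mathcal{J}(\rmX;\mathbf{0})^\top \ell'(\mathcal{H}(\rmX;\mathbf{0}),\rvy) = \mathbf{0}$.

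There is no real obstacle here; the only subtlety is making sure to invoke continuity of $\nabla_{\rvw}\mathcal{H}$ at the origin (which follows from Assumption \ref{homo_assumption}(iii)), since we are taking a limit of the gradient rather than evaluating it pointwise. Note also that the hypothesis $L\geq 2$ is used precisely at this step: for $L=1$ the scaling factor $c^{L-1}$ equals one and the argument fails, which is consistent with the fact that the origin need not be a critical point of a $1$-homogeneous network.
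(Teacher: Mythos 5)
Your proof is correct and follows essentially the same route as the paper: you derive that $\nabla_{\rvw}\mathcal{H}(\rvx;\cdot)$ is $(L-1)$-positively-homogeneous and conclude that it vanishes at the origin because $L-1\geq 1$. The paper states this in one line by invoking the $(L-1)$-homogeneity of $\mathcal{J}(\rmX;\cdot)$ directly (cf.\ Lemma~\ref{euler_thm}); your version is slightly more explicit in handling the $c\to 0^+$ limit, but the argument is the same.
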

\begin{proof}
	If $\mathcal{H}$ is $L-$homogeneous, then $\mathcal{J}(\rmX;\rvw)$ is $(L-1)$-homogeneous. Since $L\geq 2$, we get $\mathcal{J}(\rmX;\mathbf{0}) = \mathbf{0}$, which implies $\nabla \mathcal{L}(\mathbf{0}) = \mathbf{0}.$
\end{proof}
Therefore, if the gradient flow in \cref{gf_eq} is  initialized near the origin, then it is expected to remain near the origin for some time, before escaping from it. In \cite{kumar_dc, early_dc}, authors study the dynamics of gradient flow with small initialization in the early stages of training and while the gradient flow remains near the origin. 

We start with some basic concepts introduced in \cite{kumar_dc, early_dc}. Let $\widetilde{\rvy}\coloneqq -\ell'(0,\rvy)$, then  the Neural Correlation Function (NCF) is defined as 
\begin{align}
\mathcal{N}(\rvu) =  \widetilde{\rvy}^\top\gH(\rmX;\rvu),
\label{ncf_gn}
\end{align}
and the constrained NCF refers to the following optimization problem
\begin{align}
\max_{\rvu} \mathcal{N}(\rvu), \text{ s.t. } \|\rvu\|_2^2 = 1.
\label{ncf_gn_const}
\end{align}
Next, consider the (positive) gradient flow of the NCF:
\begin{align}
\dot{\rvu} = \nabla\mathcal{N}(\rvu) = \mathcal{J}(\rmX;\rvu)^\top\widetilde{\rvy}.
\label{ncf_gf}
\end{align} 
We use $\bm{\phi}(t,\rvu(0))$ to denote the solution of above differential equation, where $\rvu(0)$ is the initialization. Then, as shown in \cite{kumar_dc, early_dc}, for any unit-norm vector $\rvu_0$,\footnote{The unit-norm assumption is for simplicity, the result will hold for other vectors as well.} $\bm{\phi}(t,\rvu_0)$  satisfies the following condition: $(i)$ either $\bm{\phi}(t,\rvu_0)$  converges to the origin, or $(ii)$ $\bm{\phi}(t,\rvu_0) /\|\bm{\phi}(t,\rvu_0)\|_2$ converges to a non-negative KKT point of the constrained NCF. Also, if the order of homogeneity is strictly greater than two, then $\bm{\phi}(t,\rvu_0) /\|\bm{\phi}(t,\rvu_0)\|_2$ may converge in finite time.

We next define the notion of stable set for a non-negative KKT point. 
\begin{definition}
	The stable set $\mathcal{S}(\rvu_*)$ of a non-negative KKT point $\rvu_*$ of \cref{ncf_gn_const} is the set of all unit-norm initializations such that  gradient flow in \cref{ncf_gf} converges in direction to $\rvu_*$:
	\begin{align*}
	\mathcal{S}(\rvu_*) \coloneqq \left\{\rvu_0 \in \sS^{k-1}: \frac{\bm{\phi}(t,\rvu_0)}{\|\bm{\phi}(t,\rvu_0)\|_2} \rightarrow \rvu_*\right\}
	\end{align*} 
\end{definition}
We now present the main result of \cite{kumar_dc, early_dc}, which describes the dynamics of gradient flow with small initialization in the early stages of training. 
\begin{lemma}\label{lemma_early_dc}
	Suppose $\rvw_0\in \mathcal{S}(\rvw_*)$, where $\rvw_*$  is a non-negative KKT point of \cref{ncf_gn_const}. For any sufficiently small $\epsilon>0$, there exists $T$ and $\overline{\delta}$ such that the following holds: for any $\delta\in (0,\overline{\delta})$ we have
	\begin{align*}
	\|\bm{\psi}(t,\delta\rvw_0)\|_2 = O(\delta), \text{ for all } t\in [0,T/\delta^{L-2}], \text{ and } \frac{\bm{\psi}(T/\delta^{L-2},\delta\rvw_0)^\top\rvw_*}{\|\bm{\psi}(T/\delta^{L-2},\delta\rvw_0)\|_2} = 1- O(\epsilon).
	\end{align*}
\end{lemma}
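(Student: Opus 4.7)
The plan is to reduce the gradient flow of $\mathcal{L}$ near the origin, after a suitable time-rescaling, to a vanishing perturbation of the NCF gradient flow in \cref{ncf_gf}, and then transfer the directional convergence guaranteed by $\rvw_0\in\mathcal{S}(\rvw_*)$ back to the original trajectory. Define the rescaled curve $\rvu_\delta(s) \coloneqq \bm{\psi}(s/\delta^{L-2}, \delta\rvw_0)/\delta$, so that $\rvu_\delta(0)=\rvw_0$. Using the $(L-1)$-homogeneity of the Jacobian, $\mathcal{J}(\rmX;\delta\rvu) = \delta^{L-1}\mathcal{J}(\rmX;\rvu)$, together with the mean-value bound $\ell'(\mathcal{H}(\rmX;\delta\rvu),\rvy) = -\widetilde{\rvy} + O(\delta^L\|\rvu\|^L)$ coming from Assumption~\ref{ass_loss} and the $L$-homogeneity of $\mathcal{H}$, the chain rule yields
\begin{align*}
\dot{\rvu}_\delta(s) \;=\; \nabla\mathcal{N}(\rvu_\delta(s)) + \rve_\delta(s), \qquad \|\rve_\delta(s)\|_2 \;\leq\; C_1\,\delta^L\,\|\rvu_\delta(s)\|_2^{2L-1},
\end{align*}
for a constant $C_1$ depending on the local Lipschitz constants from Assumption~\ref{homo_assumption} and on $\rvy$.

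Next, I would leverage the hypothesis $\rvw_0\in\mathcal{S}(\rvw_*)$: by definition the direction of $\bm{\phi}(s,\rvw_0)$ converges to $\rvw_*$, so for any sufficiently small $\epsilon>0$ one can pick $T>0$ with
\begin{align*}
\frac{\bm{\phi}(T,\rvw_0)^{\top}\rvw_*}{\|\bm{\phi}(T,\rvw_0)\|_2} \;\geq\; 1-\tfrac{\epsilon}{2}.
\end{align*}
Let $R \coloneqq 1+\sup_{s\in[0,T]}\|\bm{\phi}(s,\rvw_0)\|_2$; for $L>2$, where $\bm{\phi}(\cdot,\rvw_0)$ may blow up in finite time, this $T$ is chosen strictly before the blow-up time, which is possible precisely because the direction converges inside that window. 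On the region $\{\|\rvu\|_2\leq 2R\}$, Assumption~\ref{homo_assumption}(iii) makes $\nabla\mathcal{N}$ Lipschitz with some constant $M$, and Gronwall's inequality, combined with $\rvu_\delta(0)=\bm{\phi}(0,\rvw_0)$, gives the comparison estimate
\begin{align*}
\|\rvu_\delta(s)-\bm{\phi}(s,\rvw_0)\|_2 \;\leq\; C_1(2R)^{2L-1}\delta^L\,\frac{e^{Ms}-1}{M}, \qquad s\in[0,T],
\end{align*}
so long as the trajectory stays inside this region. A standard bootstrap / continuous-induction argument then shows that for all $\delta\leq\overline{\delta}$ with $\overline{\delta}$ chosen small enough, one indeed has $\|\rvu_\delta(s)\|_2\leq 2R$ throughout $[0,T]$, and the right-hand side at $s=T$ is at most $\epsilon/2$. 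Combining with the choice of $T$ yields $\rvu_\delta(T)^{\top}\rvw_*/\|\rvu_\delta(T)\|_2 \geq 1-\epsilon$, and translating back via $\bm{\psi}(T/\delta^{L-2},\delta\rvw_0)=\delta\,\rvu_\delta(T)$ gives both the norm bound $\|\bm{\psi}(t,\delta\rvw_0)\|_2\leq 2R\,\delta$ on $[0,T/\delta^{L-2}]$ and the stated alignment at the endpoint.

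The main obstacle is closing the bootstrap in the presence of the high-power error $\delta^L\|\rvu_\delta\|^{2L-1}$: the Gronwall estimate needs an a priori bound on $\|\rvu_\delta\|_2$, but that bound can be extracted only once the deviation from $\bm{\phi}(\cdot,\rvw_0)$ is already controlled. The technical heart is therefore a careful coupling of the two, choosing $\overline{\delta}$ small enough that, by continuity of $\rvu_\delta$ in $s$, the deviation stays below a threshold that in turn keeps $\|\rvu_\delta\|_2$ inside the safe region $[0,2R]$ over the full interval $[0,T]$. For $L>2$ this is especially delicate because $R$ grows as $T$ approaches the NCF blow-up time, yet $T$ must be close enough to that time to certify directional convergence within the tolerance $\epsilon$. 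Once this coupling is set up, the final passage back from $\rvu_\delta$ to $\bm{\psi}$ is essentially routine.
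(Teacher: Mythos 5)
The paper does not give its own proof of Lemma~\ref{lemma_early_dc}; it is stated as an imported result from \cite{kumar_dc,early_dc}, and your reconstruction is correct and is essentially the argument those works use. The rescaling $\rvu_\delta(s)=\bm{\psi}(s/\delta^{L-2},\delta\rvw_0)/\delta$ does satisfy $\dot{\rvu}_\delta=\nabla\mathcal{N}(\rvu_\delta)+\rve_\delta$ with $\|\rve_\delta\|=O(\delta^{L}\|\rvu_\delta\|^{2L-1})$ via the $(L-1)$-homogeneity of $\mathcal{J}$ together with the smoothness of $\ell$ in Assumption~\ref{ass_loss}; choosing $T$ as a finite time (before any blow-up) at which $\bm{\phi}(T,\rvw_0)$ is $\epsilon/2$-aligned with $\rvw_*$, letting $R=1+\sup_{[0,T]}\|\bm{\phi}(\cdot,\rvw_0)\|_2$, and closing the Gronwall estimate by continuous induction so that $\|\rvu_\delta(s)\|_2\leq 2R$ on $[0,T]$ gives both claims after undoing the rescaling. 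The same rescaled-comparison-plus-bootstrap device is the one the paper itself deploys in its proofs of Lemmas~\ref{init_align_2hm} and~\ref{init_align_Lhm} for the stronger second-order KKT setting.
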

In the above lemma, the initialization is $\delta\rvw_0$, where $\rvw_0$ is a vector and $\delta>0$ is a scalar that controls the scale of initialization. It is shown that, for all sufficiently small initialization, the weights remain small for sufficiently long time and converge in direction towards a non-negative KKT point of the constrained NCF. Also, as discussed earlier, if $\rvw_0$ does not belong to the stable set of a non-negative KKT point, then $\bm{\phi}(t,\rvw_0)$  must converge to the origin. In this case, rather than directional convergence,  the weights approximately become zero; see \cite{kumar_dc, early_dc} for more details.
\section{Gradient Flow Dynamics Beyond the Origin}\label{sec_gf_beyond}
In this section, we describe the gradient flow dynamics of homogeneous neural networks after escaping the origin.
\subsection{Two-Homogeneous Neural Networks}\label{sec_2hm}
We begin by considering two-homogeneous neural networks and the following theorem describes the gradient flow dynamics of such neural networks after escaping the origin.
\begin{theorem}\label{main_thm_2hm}
	Suppose $\mathcal{H}$ is $2$-homogeneous, and let $\rvw_0\in\mathcal{S}(\rvw_*)$, where $\rvw_*$ is a $\Delta-$second-order positive KKT point of  \cref{ncf_gn_const}, for some $\Delta>0$. 
	Then, for any fixed $\widetilde{T}\in  (-\infty,\infty)$, there exists a $\widetilde{C}>0$ such that for all sufficiently small $\delta>0$,
	\begin{align}
	\left\|\bm{\psi}\left(t+T_1 +\frac{\ln(1/b_\delta)}{2\mathcal{N}(\rvw_*)}+ \frac{\ln({1}/{\widetilde{\delta}})}{2\mathcal{N}(\rvw_*)},\delta\rvw_0\right) - \rvp(t) \right\|_2 \leq \widetilde{C}\delta^\frac{\Delta}{\Delta+8\mathcal{N}(\rvw_*)}, \text{ for all }t\in [-\widetilde{T},\widetilde{T}],
	\label{err_bd_2hm}
	\end{align} 
	where $T_1>0$ is a constant, $\widetilde{\delta}\in (A_2\delta-A_1\delta^3, A_2\delta+A_1\delta^3)$, for some positive constants $A_1, A_2$,  and $b_\delta\in [\kappa_1,\kappa_2]$, for some $\kappa_2\geq \kappa_1>0$, depends on $\delta$. Moreover,
	\begin{align*}
	\rvp(t) \coloneqq \lim_{\delta\rightarrow 0} \bm{\psi}\left(t+\frac{\ln\left({1}/{\delta}\right)}{2\mathcal{N}(\rvw_*)},\delta\rvw_*\right),
	\end{align*}
	which exists for all $t\in (-\infty,\infty)$, and $\rvp(t) = \bm{\psi}\left(t,\rvp(0)\right),$ where $\rvp(0) =\lim\limits_{\delta\to 0} \bm{\psi}\left(\frac{\ln\left({1}/{\delta}\right)}{2\mathcal{N}(\rvw_*)},\delta\rvw_*\right) $ and $\mathcal{L}(\rvp(0)) \leq \mathcal{L}(\mathbf{0}) - \eta,  $ for some $\eta>0$.
\end{theorem}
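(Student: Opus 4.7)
The plan is to split the trajectory into three phases glued by the autonomy of gradient flow: (I) an early alignment phase of fixed length $T_1$ in which the state becomes nearly colinear with $\rvw_*$; (II) an exponential radial amplification phase along the ray $\{c\rvw_*:c>0\}$ lasting time of order $S(\delta)\coloneqq\ln(1/\delta)/(2\mathcal{N}(\rvw_*))$; and (III) an $O(1)$--scale phase tracking an intrinsic limit $\rvp(t)$. For phase (I), I would apply \Cref{lemma_early_dc} together with a refinement of the NCF-linearized approximation near the origin to establish the decomposition
\begin{align*}
\bm{\psi}(T_1,\delta\rvw_0) \;=\; b_\delta\,\widetilde\delta\,\rvw_* \;+\; \rve_\delta,
\end{align*}
where $\widetilde\delta = A_2\delta + O(\delta^3)$ captures the leading magnitude that arises when starting from $\delta\rvw_*$ (so $A_2\approx e^{2\mathcal{N}(\rvw_*) T_1}$), $b_\delta\in[\kappa_1,\kappa_2]$ is the $O(1)$ multiplicative correction accumulated during the alignment from $\rvw_0$ to $\rvw_*$, and $\rve_\delta\in\rvw_*^\perp$ is a small transverse residual.

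For phase (III), I would construct $\rvp(t)$ as follows. Since $\rvw_*$ is a positive KKT point (so $\nabla\mathcal{N}(\rvw_*)=2\mathcal{N}(\rvw_*)\rvw_*$ by Euler's identity) and $\mathcal{H}$ is $2$-homogeneous, the NCF flow from $\delta\rvw_*$ is the exact radial curve $\delta\, e^{2\mathcal{N}(\rvw_*) t}\rvw_*$, and the true gradient flow $\bm{\psi}(t,\delta\rvw_*)$ deviates only by the $O(\|\rvw\|^2)$ nonlinear-loss correction. A Cauchy-in-$\delta$ comparison of $\bm{\psi}(S(\delta_1),\delta_1\rvw_*)$ and $\bm{\psi}(S(\delta_2),\delta_2\rvw_*)$ as $\delta_1,\delta_2\to 0$, using the $2$-homogeneous scaling together with Gronwall, establishes existence of $\rvp(0)\coloneqq\lim_{\delta\to 0}\bm{\psi}(S(\delta),\delta\rvw_*)$; autonomy then delivers $\rvp(t)=\bm{\psi}(t,\rvp(0))$, while strict monotonicity of $\mathcal{L}$ along the gradient flow together with $\|\rvp(0)\|\asymp 1$ yields $\mathcal{L}(\rvp(0))<\mathcal{L}(\mathbf{0})$.

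The matching then uses autonomy to write
\begin{align*}
\bm{\psi}\!\left(t+T_1+S(b_\delta\widetilde\delta),\delta\rvw_0\right)
\;=\; \bm{\psi}\!\left(t+S(b_\delta\widetilde\delta),\,b_\delta\widetilde\delta\,\rvw_* + \rve_\delta\right),
\end{align*}
reducing the claim to comparing the right-hand side with $\rvp(t)\approx\bm{\psi}(t+S(b_\delta\widetilde\delta),\,b_\delta\widetilde\delta\,\rvw_*)$. I would carry out this comparison by linearizing about the radial curve: $\rvw_*$ is an eigenvector of $\nabla^2\mathcal{N}(\rvw_*)$ with eigenvalue $2\mathcal{N}(\rvw_*)$, while the $\Delta$-second-order KKT condition bounds the eigenvalues on $\rvw_*^\perp$ by $2\mathcal{N}(\rvw_*)-\Delta$. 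Consequently, a parallel perturbation merely shifts the effective radial scale (absorbed into $b_\delta$), while a transverse perturbation decays at rate $\Delta$ relative to the radial amplification factor $1/(b_\delta\widetilde\delta)$. The specific exponent $\Delta/(\Delta+8\mathcal{N}(\rvw_*))$ emerges by introducing an intermediate scale $\delta^\beta$ at which to hand off from the NCF-linearized analysis to the full nonlinear dynamics and balancing three error contributions: the $O(\|\rvw\|^2)$ NCF-approximation error integrated up to the hand-off, the transverse residual $\rve_\delta$ amplified by $(b_\delta\widetilde\delta)^{-(1-\Delta/(2\mathcal{N}(\rvw_*)))}$, and the Gronwall amplification during the $\|\rvw\|\asymp 1$ phase where the second-order KKT stability no longer applies.

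The main obstacle is this final error accounting: rigorously extracting the exponent $\Delta/(\Delta+8\mathcal{N}(\rvw_*))$ requires tight control of the interplay between tangential decay supplied by the second-order KKT condition, the $O(\|\rvw\|^2)$ deviation of the flow from the NCF, and Gronwall amplification in the $O(1)$ regime. The locally Lipschitz gradient hypothesis on $\mathcal{H}$ is what keeps the radial-vs-transverse linearization valid throughout the amplification stage, and the Cauchy construction of $\rvp(0)$ must be sharpened to yield quantitative rates compatible with the matching bound.
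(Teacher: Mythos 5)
Your three-phase architecture, the Cauchy-in-$\delta$ construction of $\rvp(0)$, the eigenvalue split of $\nabla^2\mathcal{N}(\rvw_*)$ at $\rvw_*$ versus $\rvw_*^\perp$, and the balancing of the NCF-approximation error against the transverse decay all coincide with the paper's route via Lemmas~\ref{lim_exists_2hm}--\ref{init_dl3_2hm}. The one step that would stall in the form you wrote it is the Phase~(I) decomposition $\bm{\psi}(T_1,\delta\rvw_0)=b_\delta\widetilde\delta\,\rvw_*+\rve_\delta$ at a \emph{fixed} time $T_1$: at any fixed time, early directional convergence only yields $\|\rve_\delta\|=O(\widetilde\delta)$ with a constant (in $\delta$) angular factor, not $o(\widetilde\delta)$. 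This is precisely the pitfall flagged in Remark~\ref{remark_diff}: if you hand this state directly to a Gronwall comparison with $\bm{\psi}(\cdot,b_\delta\widetilde\delta\rvw_*)$, the accumulated error before reaching $O(1)$ scale is $O(1)$. The paper resolves this by continuing the alignment phase for an additional $\delta$-dependent time $4\ln(1/\widetilde\delta)/(\Delta+8\mathcal{N}(\rvw_*))$ (Lemma~\ref{init_align_2hm}), during which the NCF-linearized flow's transverse \emph{amplitude} decays at rate $\Delta/2$ (not $\Delta$ --- the alignment functional $1-\rvw_*^\top\rvw/\|\rvw\|_2$ decays at $\Delta$, and its square root governs the amplitude), while the cubic NCF-vs-gradient-flow discrepancy $\|f(\rvw)\|=O(\|\rvw\|^3)$ must simultaneously stay controlled. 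Balancing those two errors forces the hand-off at scale $\bar\delta=b_\delta\widetilde\delta^{\Delta/(\Delta+8\mathcal{N}(\rvw_*))}$ with residual $O(\bar\delta^3)$, which is exactly the hypothesis of Lemma~\ref{init_dl3_2hm}. Your sketch of amplifying $\rve_\delta$ by $(b_\delta\widetilde\delta)^{-(1-\Delta/(2\mathcal{N}(\rvw_*)))}$ implicitly runs the linearized transverse decay all the way to $O(1)$ scale and so overshoots the correct exponent; once you hold the hand-off at $\bar\delta$ to keep the cubic error under control and amortize the remaining $\ln(1/\bar\delta)/(2\mathcal{N}(\rvw_*))$ stretch by a crude $e^{4\mathcal{N}(\rvw_*)t}$ Gronwall factor, the final error is $O(\bar\delta)$, which reproduces $\delta^{\Delta/(\Delta+8\mathcal{N}(\rvw_*))}$.

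One smaller point: ``strict monotonicity plus $\|\rvp(0)\|\asymp 1$'' does not quite yield the uniform gap $\eta$ in $\mathcal{L}(\rvp(0))=\mathcal{L}(\mathbf{0})-\eta$, since $\mathcal{L}(\delta\rvw_*)\to\mathcal{L}(\mathbf{0})$ as $\delta\to 0$ and you need a quantitative decrease along the trajectory. The paper obtains the gap from the convexity-based upper bound $\mathcal{L}(\rvw)\le\mathcal{L}(\mathbf{0})+\alpha^2 Kn\|\rvw\|_2^4-\mathcal{N}(\rvw)$, evaluated at an intermediate scale where $\mathcal{N}$, positive at $\rvw_*$, dominates the quartic term.
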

We begin by explaining the motivation behind defining $\rvp(t)$. Recall that $\bm{\psi}\left(t, \delta\rvw_*\right)$ denotes the solution of the gradient flow of the training loss with initialization $\delta\rvw_*$. For small $\delta$, $\bm{\psi}\left(t, \delta\rvw_*\right)$ will remain small and near the origin for some time after the training begins. Now, loosely speaking, it turns out that $\bm{\psi}\left(t, \delta\rvw_*\right)$ would have escaped from the origin after $({\ln\left({1}/{\delta}\right)}/{2\mathcal{N}(\rvw_*)} + O(1))$ time has elapsed, where recall that $\mathcal{N}(\cdot)$ is the NCF. Since our interest is in the dynamics of gradient flow after it escapes the origin, this time is added in $\bm{\psi}\left(t, \delta\rvw_*\right)$ while defining $\rvp(t)$.  Taking $\delta\to 0$, gives us the limiting solution of $\bm{\psi}\left(t, \delta\rvw_*\right)$ after it escapes from the origin. Therefore, $\rvp(t)$ can be viewed as the approximate path $\bm{\psi}\left(t, \delta\rvw_*\right)$ takes after it escapes from the origin, for all sufficiently small $\delta$. 

We also note two key properties of $\rvp(t)$. First,  $\rvp(t) = \bm{\psi}\left(t,\rvp(0)\right) $, which implies $\rvp(t)$ is a solution of the gradient flow of the training loss with initialization $\rvp(0)$. In fact, this follows directly from the definition of $\rvp(t)$ and continuity of $\psi(\cdot,\cdot)$ with respect to the initialization:
\begin{align*}
\rvp(t) = \lim_{\delta\rightarrow 0} \bm{\psi}\left(t+\frac{\ln\left({1}/{\delta}\right)}{2\mathcal{N}(\rvw_*)},\delta\rvw_*\right) &= \lim_{\delta\rightarrow 0}\bm{\psi}\left(t,  \bm{\psi}\left(\frac{\ln\left({1}/{\delta}\right)}{2\mathcal{N}(\rvw_*)},\delta\rvw_*\right)\right)\\ &=  \bm{\psi}\left(t,  \lim_{\delta\rightarrow 0}\bm{\psi}\left(\frac{\ln\left({1}/{\delta}\right)}{2\mathcal{N}(\rvw_*)},\delta\rvw_*\right)\right) = \bm{\psi}\left(t,\rvp(0)\right) .
\end{align*} 
Another key property is that $\mathcal{L}(\rvp(0)) \leq \mathcal{L}(\mathbf{0}) - \eta,  $ for some $\eta>0$. This implies that $\|\rvp(0)\|_2 \neq 0$, and thus, $\rvp(0)$ is away from the origin. Now, since the origin is a critical point of the training loss, $\|\rvp(t)\|_2 \neq 0$, for all finite $t\leq 0$.\footnote{If $\|\rvp(t_0)\|_2 = 0$, for some finite $t_0<0$, then since the origin is a critical point of the training loss, $\|\rvp(t)\|_2 = 0$, for all $t\geq t_0$, which contradicts $\|\rvp(0)\|_2\neq 0.$  } Also, since loss always decreases under gradient flow, $\mathcal{L}(\rvp(t)) \leq \mathcal{L}(\rvp(0)) \leq \mathcal{L}(\mathbf{0}) - \eta,$ for all $t\geq 0$, which implies $\rvp(t)$ remains away from the origin, for all $t\geq 0$, including at infinity. Therefore, $\rvp(t)$ has truly escaped from the origin. We highlight this property because, if $\rvw_*$ were a zero KKT point instead of a positive KKT point, that is, $\mathcal{N}(\rvw_*) = 0$, then $\bm{\psi}\left(t, \delta\rvw_*\right)$ may never escape from the origin (see Lemma \ref{ex_not_escape} for an example). Assuming $\rvw_*$ to be a positive KKT point ensures that $\bm{\psi}\left(t, \delta\rvw_*\right)$  eventually escapes from the origin.

We next discuss \cref{err_bd_2hm} and its consequences. For small $\delta$, $\bm{\psi}\left(t, \delta\rvw_0\right)$ will remain small and near the origin for some time after the training begins. From \cref{err_bd_2hm}, we observe that, for all sufficiently small $\delta$, $\bm{\psi}\left(t, \delta\rvw_0\right)$ would have definitely escaped from the origin after $T_1+ (\ln(1/b_\delta)+{\ln({1}/\widetilde{\delta})})/{2\mathcal{N}(\rvw_*)}$ time has elapsed, since $\rvp(0)$ is away from the origin and the RHS of \cref{err_bd_2hm} is small, for small $\delta$. More importantly, \cref{err_bd_2hm} implies that $\bm{\psi}\left(t, \delta\rvw_0\right)$ will remain close to $\rvp(t)$ after escaping the origin for arbitrarily long time, provided the initialization is sufficiently small. Using the definition of $\rvp(t)$, it can be further said that the trajectory of  $\bm{\psi}\left(t, \delta\rvw_0\right)$  and $\bm{\psi}\left(t, \delta\rvw_*\right)$ will approximately be same after they escape from the origin, for arbitrarily long time, if the initialization is sufficiently small. Therefore, the behavior of the gradient flow after escaping from the origin is primarily determined by the KKT point whose stable set contains the initial direction of the weights.

Note that $\rvp(t)$ is a solution of the gradient flow of the training loss. Suppose the trajectory of $\rvp(t)$ is bounded for all $t\geq 0$, and it converge to a saddle point, or more generally, a stationary point of the training loss. Then, in the following corollary, we show that $\bm{\psi}\left(t, \delta\rvw_0\right)$  gets close to that saddle point at some time, for all sufficiently small $\delta$.
\begin{corollary}\label{cor_sq_loss_2hm}
	Consider the setting of \Cref{main_thm_2hm}. Suppose  $\rvp(t)$ is bounded for all $t\geq 0$, and let $\rvp^*= \lim_{t\to \infty} \rvp(t)$, where $\nabla \mathcal{L}(\rvp^*) = 0$ Then, for any sufficiently small $\epsilon>0$, there exists a time $T_\epsilon$ such that for all sufficiently small $\delta>0$,
	\begin{align*}
	\left\|\bm{\psi}\left(T_\delta,\delta\rvw_0\right) - \rvp^*\right\|_2 \leq \epsilon \text{ and } \left\|\nabla \mathcal{L}\left(\bm{\psi}\left(T_\delta,\delta\rvw_0\right)\right)\right\|_2 \leq \epsilon,
	\end{align*}
	where $T_\delta\coloneqq T_\epsilon+T_1 +\frac{\ln(1/b_\delta)}{2\mathcal{N}(\rvw_*)}+ \frac{\ln({1}/{\widetilde{\delta}})}{2\mathcal{N}(\rvw_*)}$.
\end{corollary}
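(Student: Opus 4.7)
The overall strategy is a triangle-inequality argument that stitches together the asymptotic convergence of the limiting path $\rvp(t)$ with the uniform approximation guarantee supplied by \Cref{main_thm_2hm}. Given any target accuracy $\epsilon>0$, the first step is to exploit the hypothesis $\rvp(t)\to \rvp^*$ to pick a \emph{finite} time $T_\epsilon$ large enough that $\|\rvp(T_\epsilon)-\rvp^*\|_2 \leq \epsilon/2$. Since $\nabla \mathcal{L}$ is locally Lipschitz (inherited from \Cref{homo_assumption}) and $\nabla \mathcal{L}(\rvp^*)=\mathbf{0}$, there is a local Lipschitz constant $L_{\mathrm{loc}}$ valid on a closed ball of radius, say, $1$ around $\rvp^*$; by replacing the input accuracy with $\epsilon'\coloneqq \epsilon/\max(1,L_{\mathrm{loc}})$ from the outset, I can assume without loss of generality that an $\epsilon$-bound on $\|\rvw-\rvp^*\|_2$ automatically yields an $\epsilon$-bound on $\|\nabla \mathcal{L}(\rvw)\|_2$ via $\|\nabla \mathcal{L}(\rvw)-\nabla \mathcal{L}(\rvp^*)\|_2\leq L_{\mathrm{loc}}\|\rvw-\rvp^*\|_2$.

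The second step is to invoke \Cref{main_thm_2hm} with the choice $\widetilde{T}=T_\epsilon$ (any value at least $T_\epsilon$ works). The theorem produces a constant $\widetilde{C}>0$ such that for all sufficiently small $\delta$,
\begin{align*}
\left\|\bm{\psi}\left(T_\delta,\delta\rvw_0\right) - \rvp(T_\epsilon)\right\|_2 \;\leq\; \widetilde{C}\,\delta^{\frac{\Delta}{\Delta+8\mathcal{N}(\rvw_*)}},
\end{align*}
where $T_\delta$ is exactly as defined in the statement (evaluating the bound at $t=T_\epsilon\in[-\widetilde{T},\widetilde{T}]$). For $\delta$ small enough, the right-hand side is at most $\epsilon/2$, so the triangle inequality yields $\|\bm{\psi}(T_\delta,\delta\rvw_0)-\rvp^*\|_2\leq \epsilon$. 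Combining this with the local Lipschitz inequality from the first step then gives $\|\nabla \mathcal{L}(\bm{\psi}(T_\delta,\delta\rvw_0))\|_2\leq \epsilon$, completing the proof.

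There is essentially no substantive new analysis here; the corollary is a straightforward ``wrapper'' around \Cref{main_thm_2hm}. The only mild technical point is that $T_\epsilon$ must be chosen \emph{before} applying the theorem, because the theorem's constant $\widetilde{C}$ and the threshold for ``sufficiently small $\delta$'' both depend on $\widetilde{T}$. This is handled naturally by the order of quantifiers in the corollary's statement: $\epsilon$ is fixed first, $T_\epsilon$ is chosen from the convergence of $\rvp(t)$ alone, and only then is $\delta$ taken small. The main conceptual content that makes this work is the uniform-on-compact-intervals nature of the approximation bound in \Cref{err_bd_2hm}, which is what allows $\bm{\psi}(\cdot,\delta\rvw_0)$ to track $\rvp(\cdot)$ for the full duration $T_\epsilon$ needed to reach the neighborhood of $\rvp^*$.
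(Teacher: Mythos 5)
Your proof is correct and follows essentially the same route as the paper: fix $T_\epsilon$ from the convergence $\rvp(t)\to\rvp^*$, invoke \Cref{main_thm_2hm} at $t=T_\epsilon$ (taking $\widetilde T\ge T_\epsilon$), and combine by the triangle inequality, with local Lipschitzness of $\nabla\mathcal{L}$ giving the gradient bound. The only cosmetic difference is in the gradient estimate: the paper first arranges $\|\nabla\mathcal{L}(\rvp(T_\epsilon))\|_2\le\epsilon/2$ and a Lipschitz constant on a ball of radius $2B$ containing the whole trajectory, whereas you derive the gradient bound purely from $\|\rvw-\rvp^*\|_2\le\epsilon'$ and $\nabla\mathcal{L}(\rvp^*)=\mathbf{0}$ via a Lipschitz constant on a small ball around $\rvp^*$; both are valid and the order of quantifiers is handled correctly in each.
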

The above corollary implies that for all sufficiently small $\delta$, after escaping from the origin, gradient flow get close to the saddle point to which $\rvp(t)$ converges. 

The above corollary assumes $\rvp(t)$ is bounded and converges to a finite saddle point. This is a reasonable assumption when square loss is used for training. For loss functions such as logistic loss, the trajectory of gradient flow may not be bounded and saddle points of the training loss could be at infinity. The next corollary considers such cases.
\begin{corollary}\label{cor_log_loss_2hm}
	Consider the setting of \Cref{main_thm_2hm}. Suppose $\lim_{t\to \infty} \rvp(t)/\|\rvp(t)\|_2 = \rvp^*$ and $\lim_{t\to \infty} \|\rvp(t)\| = \infty$, such that $\lim_{t\to\infty} \nabla \mathcal{L}(\rvp(t)) = \mathbf{0}$ and $\lim_{\alpha\to\infty}\nabla \mathcal{L}(\alpha\rvp^*) = 0$.  Then, for any sufficiently small $\epsilon>0$, there exists a time $T_\epsilon$ such that for all sufficiently small $\delta>0$,
	\begin{align*}
	&\left\|\bm{\psi}\left(T_\delta,\delta\rvw_0\right)\right\|_2 \geq \frac{1}{2\epsilon}, \frac{\bm{\psi}\left(T_\delta,\delta\rvw_0\right)^\top\rvp^*}{\left\|\bm{\psi}\left(T_\delta,\delta\rvw_0\right)\right\|_2 }\geq 1-\epsilon, \text{ and } \left\|\nabla \mathcal{L}\left(\bm{\psi}\left(T_\delta,\delta\rvw_0\right)\right)\right\|_2 \leq \epsilon,
	\end{align*}
	where $T_\delta\coloneqq T_\epsilon+T_1 +\frac{\ln(1/b_\delta)}{2\mathcal{N}(\rvw_*)}+ \frac{\ln({1}/{\widetilde{\delta}})}{2\mathcal{N}(\rvw_*)}$.
\end{corollary}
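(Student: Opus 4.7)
The plan is to combine \Cref{main_thm_2hm} with the three limit assumptions on $\rvp(t)$: first pick a \emph{finite} time $T_\epsilon$ at which $\rvp(T_\epsilon)$ already satisfies the three desired inequalities with some slack, then use the uniform approximation guaranteed by \Cref{main_thm_2hm} (on any bounded interval $[-\widetilde{T},\widetilde{T}]$) to transfer those properties from $\rvp(T_\epsilon)$ to $\bm{\psi}(T_\delta,\delta\rvw_0)$ for all sufficiently small $\delta$.

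Concretely, the first step is to use the hypotheses. Since $\|\rvp(t)\|_2 \to \infty$, $\rvp(t)/\|\rvp(t)\|_2 \to \rvp^*$, and $\nabla\mathcal{L}(\rvp(t)) \to \mathbf{0}$ as $t \to \infty$, I would pick $T_\epsilon$ large enough that
\[
\|\rvp(T_\epsilon)\|_2 \geq \tfrac{1}{\epsilon}, \quad \frac{\rvp(T_\epsilon)^\top \rvp^*}{\|\rvp(T_\epsilon)\|_2} \geq 1 - \tfrac{\epsilon}{2}, \quad \|\nabla\mathcal{L}(\rvp(T_\epsilon))\|_2 \leq \tfrac{\epsilon}{2},
\]
taking the maximum of three separate thresholds. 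The second step is to apply \Cref{main_thm_2hm} with $\widetilde{T} \geq T_\epsilon$, which provides a constant $\widetilde{C}$ (depending on $T_\epsilon$) such that for all sufficiently small $\delta$,
\[
\bigl\|\bm{\psi}(T_\delta,\delta\rvw_0) - \rvp(T_\epsilon)\bigr\|_2 \leq \widetilde{C}\,\delta^{\Delta/(\Delta+8\mathcal{N}(\rvw_*))} =: \tau_\delta,
\]
and since the exponent is strictly positive, $\tau_\delta \to 0$ as $\delta \to 0$.

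The third step is to translate the three properties at $\rvp(T_\epsilon)$ into the three claimed properties at $\bm{\psi}(T_\delta,\delta\rvw_0)$, by shrinking $\delta$ so that $\tau_\delta$ is small. The norm bound follows from the reverse triangle inequality: $\|\bm{\psi}(T_\delta,\delta\rvw_0)\|_2 \geq \|\rvp(T_\epsilon)\|_2 - \tau_\delta \geq 1/(2\epsilon)$ once $\tau_\delta \leq 1/(2\epsilon)$. The direction bound is a standard perturbation estimate: if $\rva = \rvb + \rve$ with $\|\rve\|_2 = \tau$ and $\|\rvb\|_2$ much larger than $\tau$, then $\rva^\top \rvp^*/\|\rva\|_2$ differs from $\rvb^\top \rvp^*/\|\rvb\|_2$ by $O(\tau/\|\rvb\|_2)$, which can be absorbed into the remaining $\epsilon/2$ slack. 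The gradient bound uses that $\nabla\mathcal{L}$ is locally Lipschitz (a consequence of \Cref{homo_assumption}(iii) and the bound on $|\ell''|$ in \Cref{ass_loss}): on a fixed ball around $\rvp(T_\epsilon)$ it admits some Lipschitz constant $K_0$, so $\|\nabla\mathcal{L}(\bm{\psi}(T_\delta,\delta\rvw_0))\|_2 \leq \|\nabla\mathcal{L}(\rvp(T_\epsilon))\|_2 + K_0 \tau_\delta \leq \epsilon$ whenever $\tau_\delta \leq \epsilon/(2K_0)$.

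The main point of care, though not really an obstacle, is the order of quantifiers: the Lipschitz constant $K_0$, the norm $\|\rvp(T_\epsilon)\|_2$, and the constant $\widetilde{C}$ from \Cref{main_thm_2hm} all depend on the fixed choice of $\epsilon$ (equivalently, on $T_\epsilon$), so the threshold on $\delta$ must be chosen \emph{after} these constants are fixed. Because $\tau_\delta \to 0$ as $\delta \to 0$, any finite $\epsilon$-dependent constant is eventually dominated, and the three inequalities hold simultaneously. The only non-routine verification is that $\nabla\mathcal{L}$ is indeed locally Lipschitz under our assumptions, which follows from writing $\nabla\mathcal{L}(\rvw) = \mathcal{J}(\rmX;\rvw)^\top \ell'(\mathcal{H}(\rmX;\rvw),\rvy)$, with $\mathcal{J}(\rmX;\cdot)$ locally Lipschitz and $\ell'$ Lipschitz by the bound on $|\ell''|$.
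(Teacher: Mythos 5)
Your proposal follows essentially the same route as the paper: fix $T_\epsilon$ so that $\rvp(T_\epsilon)$ satisfies the three properties with $\epsilon/2$ slack, invoke \Cref{main_thm_2hm} on $[-T_\epsilon,T_\epsilon]$ to get $\|\bm{\psi}(T_\delta,\delta\rvw_0)-\rvp(T_\epsilon)\|_2 = O(\delta^{\Delta/(\Delta+8\mathcal{N}(\rvw_*))})$, and then transfer the three bounds via the triangle inequality, a normalized-inner-product perturbation estimate, and the local Lipschitz continuity of $\nabla\mathcal{L}$ on an $\epsilon$-dependent ball. The paper carries out exactly these steps (with the Lipschitz ball taken of radius $B_\epsilon+\epsilon$ where $B_\epsilon=\max_{t\in[0,T_\epsilon]}\|\rvp(t)\|_2$, a cosmetic difference from your ball around $\rvp(T_\epsilon)$), so there is nothing further to add.
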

In the above corollary, we have assumed that $\rvp(t)$ ``converges" to a saddle point at infinity, in the sense that its norm diverges to infinity, but the direction converges. Under this assumption, $\bm{\psi}\left(t, \delta\rvw_0\right)$  gets close to that saddle point at some time, for all sufficiently small initialization, since its norm becomes large, and it is approximately aligned in direction with the saddle point.

We would like to emphasize that the above corollaries \emph{do not} imply that $\bm{\psi}\left(t, \delta\rvw_0\right)$ will also converge to the same saddle point to which $\rvp(t)$ converges. For instance, consider Corollary \ref{cor_sq_loss_2hm}, which shows that $\bm{\psi}\left(t, \delta\rvw_0\right)$ gets close to the saddle point $\rvp^*$ at some time. It is possible that $\bm{\psi}\left(t, \delta\rvw_0\right)$  may eventually escape from this saddle point, in contrast to $\rvp(t)$ which converges towards $\rvp^*$. The evolution of gradient flow after escaping this saddle point is not described by the above results, and is an important direction for future research. Thus, in this sense, our work only captures a segment of  the gradient flow dynamics beyond the origin, specifically until the first saddle point encountered by gradient flow after escaping the origin. The following two-dimensional example may be helpful in understanding our results better, the complete details of which are in Appendix \ref{additonal_results}.
\begin{figure}
	\centering
	\includegraphics[width=0.43\textwidth]{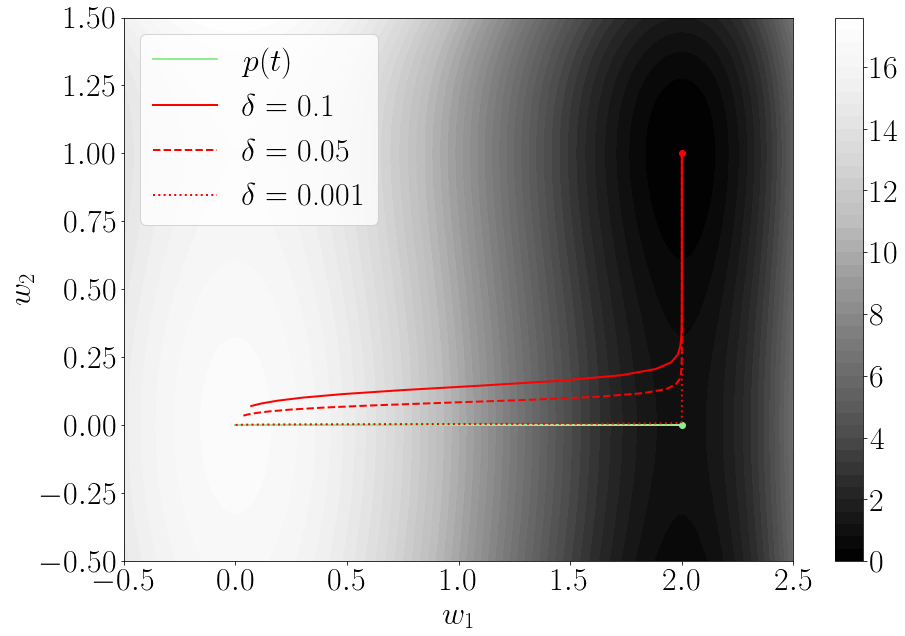}
	\caption{The contour of the loss function in \cref{loss_ex} is in the background. The foreground contains evolution of $\bm{\psi}(t,\delta\rvw_0) $, for $\delta \in\{0.1,0.05,0.001\}$ and $t\in [0,3]$ (in red), and $\rvp(t)$, for $t\in [-1,1]$ (in green). The saddle point at $(2,0)$ and the global minimum at $(2,1)$ are marked with green and red dot respectively.}
	\label{fig:psi_p_ex}
\end{figure}
\begin{example}\label{ex_2hm}
	Suppose $\mathcal{H}(\rvx;\rvw) = w_1^2x_1+w_2^2x_2$, which is two-homogeneous. The training data has two points $(1,0)$ and $(0,1)$, where the corresponding labels are $4$ and $1$ respectively. Assuming square loss, the training loss becomes
	\begin{align}
	\mathcal{L}(\rvw) = (w_1^2-4)^2 + (w_2^2-1)^2,
	\label{loss_ex}
	\end{align}
	and $\mathcal{N}(\rvw) = 8w_1^2+2w_2^2$. Let $\rvw_0 = (1/\sqrt{2},1/\sqrt{2})$, then $\rvw_0\in\mathcal{S}(\rvw_*)$, where $\rvw_* = (1,0)$, and $\mathcal{N}(\rvw_*) = 8$. For any $\delta\in(0,1)$, we have
	\begin{align*}
	\bm{\psi}(t,\delta\rvw_0) = \left(\frac{2\delta}{\sqrt{\delta^2 + (8-\delta^2)e^{-32t}}}, \frac{\delta}{\sqrt{\delta^2 + (2-\delta^2)e^{-8t}}}\right), 
	\end{align*}
	\begin{align*}
	\bm{\psi}(t,\delta\rvw_*) =  \left(\frac{2\delta}{\sqrt{\delta^2 + (4-\delta^2)e^{-32t}}}, 0\right), \text{ and }\rvp(t) = \left(\frac{2}{\sqrt{1 + 4e^{-32t}}}, 0\right).
	\end{align*}
	Note that, for any $\gamma\in (0,1)$,
	\begin{align*}
	\left\|\bm{\psi}\left(\frac{(1-\gamma)\ln\left({1}/{\delta}\right)}{2\mathcal{N}(\rvw_*)},\delta\rvw_*\right) \right\|_2 = O(\delta^\gamma), \left\|\bm{\psi}\left(\frac{(1-\gamma)\ln\left({1}/{\delta}\right)}{2\mathcal{N}(\rvw_*)},\delta\rvw_0\right) \right\|_2 = O(\delta^\gamma),
	\end{align*}
	and for any $\kappa>0$,
	\begin{align*}
	\left\|\bm{\psi}\left(\frac{\ln\left({\kappa}/{\delta}\right)}{2\mathcal{N}(\rvw_*)},\delta\rvw_*\right) \right\|_2 = O(1), \left\|\bm{\psi}\left(\frac{\ln\left({\kappa}/{\delta}\right)}{2\mathcal{N}(\rvw_*)},\delta\rvw_0\right) \right\|_2 = O(1),
	\end{align*}
	Therefore, after $(O(1)+{\ln\left({1}/{\delta}\right)}/{2\mathcal{N}(\rvw_*)})$ time has elapsed, $\bm{\psi}(t,\delta\rvw_*)$ and $\bm{\psi}(t,\delta\rvw_0)$ would have escaped from the origin. Next, $\rvp(t)$ and $\bm{\psi}(t,\delta\rvw_0)$ converge to different limits since
	\begin{align*}
	\lim_{t\to \infty}  \rvp(t) = (2,0) \text{ and } \lim_{t\to \infty} \bm{\psi}(t,\delta\rvw_0) = (2,1).
	\end{align*}
	In  \Cref{fig:psi_p_ex}, we plot the evolution of $\rvp(t)$ and $\bm{\psi}(t,\delta\rvw_0)$, for small values of $\delta$. For the smallest value of $\delta$, observe that $\bm{\psi}(t,\delta\rvw_0)$  follows the same path as $\rvp(t)$ all the way until it gets very close to $(2,0)$, which is a saddle point. Then it escapes from it and eventually converges to $(2,1)$. Thus, even though $\bm{\psi}(t,\delta\rvw_0)$ gets close to $(2,0)$, it does not converge to it, unlike $\rvp(t)$.\\\\
	It is also worth noting that for any $\kappa>0$,
	\begin{align*}
	\bm{\psi}\left(\frac{\ln\left({\kappa}/{\delta}\right)}{2\mathcal{N}(\rvw_*)},\delta\rvw_0\right) = (O(1), O(\delta^{3/4})),\bm{\psi}\left(\frac{\ln\left({\kappa}/{\delta}\right)}{4} + \frac{\ln\left({\kappa}/{\delta}\right)}{2\mathcal{N}(\rvw_*)},\delta\rvw_0\right) = (2-O(\delta^{10}), O(1)).
	\end{align*}
	Thus, $\bm{\psi}\left(t,\delta\rvw_0\right) $ escapes from the saddle point at $(2,0)$ after $(O(1)+{\ln\left({1}/{\delta}\right)}/{2\mathcal{N}(\rvw_*)}) + {\ln\left({1}/{\delta}\right)}/{4})$ time has elapsed. This suggests that to analyze the gradient flow dynamics beyond the first saddle point, another $O({\ln\left({1}/{\delta}\right)})$ term needs to be added in the time. This observation may be helpful for future works that attempt to understand the gradient flow dynamics beyond the first saddle point.  
\end{example}
\subsubsection{Proof Outline of \Cref{main_thm_2hm}}
We first discuss three important lemmata, which are then used to prove \Cref{main_thm_2hm}. The first lemma proves the existence of $\rvp(t)$, for all $t\in (-\infty,\infty)$, by showing $\bm{\psi}\left(t+\frac{\ln\left({1}/{\delta}\right)}{2\mathcal{N}(\rvw_*)},\delta\rvw_*\right)$ is a Cauchy sequence, and also characterizes $\mathcal{L}(\rvp(0)). $

\begin{lemma}\label{lim_exists_2hm}
	Consider the setting of \Cref{main_thm_2hm}, then for any fixed $t\in (-\infty,\infty)$ and all sufficiently small $\delta_2\geq{\delta}_1>0$, there exists a $C>0$ such that
	\begin{align*}
	\left\|\bm{\psi}\left(t+\frac{\ln\left({1}/{{\delta}_1}\right)}{2\mathcal{N}(\rvw_*)},{\delta}_1\rvw_*\right) - \bm{\psi}\left(t+\frac{\ln\left({1}/{\delta_2}\right)}{2\mathcal{N}(\rvw_*)},\delta_2\rvw_*\right)\right\|_2 \leq C\delta_2,
	\end{align*}
	implying $\rvp(t)$  exists for all $t\in (-\infty,\infty)$. Furthermore, let  ${\delta}_1\to 0$ and $\delta_2 = \delta>0$, then  
	\begin{align*}
	\left\|\rvp(t)- \bm{\psi}\left(t+\frac{\ln\left({1}/{\delta}\right)}{2\mathcal{N}(\rvw_*)},\delta\rvw_*\right)\right\|_2 \leq C\delta.
	\end{align*}
	Also, $\mathcal{L}(\rvp(0)) \leq \mathcal{L}(\mathbf{0})-\eta$, for some $\eta>0$.
\end{lemma}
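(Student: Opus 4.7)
The plan is to compare the training-loss gradient flow with the NCF gradient flow started at $\delta\rvw_*$, which by $2$-homogeneity and the first-order KKT condition $\nabla\mathcal{N}(\rvw_*) = 2\mathcal{N}(\rvw_*)\rvw_*$ is the explicit ray $\rvv(t) := \delta e^{2\mathcal{N}(\rvw_*)t}\rvw_*$. Writing $T^*(\delta) := \ln(1/\delta)/(2\mathcal{N}(\rvw_*))$ throughout, the Cauchy property will follow from a time-shift/semigroup argument: since $\rvv$ reaches $\delta_2\rvw_*$ at time $\tau := T^*(\delta_1)-T^*(\delta_2) = \ln(\delta_2/\delta_1)/(2\mathcal{N}(\rvw_*))$, one has (approximately) $\bm{\psi}(\tau,\delta_1\rvw_*)\approx\delta_2\rvw_*$, and the semigroup property then reduces the Cauchy comparison to continuous dependence on initial conditions.

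\emph{Step 1 (Near-origin tracking).} Decompose $-\nabla\mathcal{L}(\rvw) = \nabla\mathcal{N}(\rvw) + E(\rvw)$ with
\[ E(\rvw) := -\mathcal{J}(\rmX;\rvw)^\top\bigl[\ell'(\mathcal{H}(\rmX;\rvw),\rvy) - \ell'(0,\rvy)\bigr]. \]
The $2$-homogeneity of $\mathcal{H}$, the $1$-homogeneity of $\mathcal{J}$, and boundedness of $\ell''$ give $\|E(\rvw)\| = O(\|\rvw\|^3)$. Set $\rve(t) := \bm{\psi}(t,\delta\rvw_*)-\rvv(t)$, so $\dot\rve = [\nabla\mathcal{N}(\rvv+\rve) - \nabla\mathcal{N}(\rvv)] + E(\bm{\psi})$. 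Because $\nabla\mathcal{N}$ is $1$-positively homogeneous and locally Lipschitz, along the ray $\{c\rvw_*: c>0\}$ it is differentiable with Hessian equal to $\nabla^2\mathcal{N}(\rvw_*)$, whose eigenvalues are $2\mathcal{N}(\rvw_*)$ in the $\rvw_*$ direction and $\le 2\mathcal{N}(\rvw_*)-\Delta$ on $\rvw_*^\perp$ by the $\Delta$-second-order KKT condition. Splitting $\rve = a\rvw_* + \rvb$ with $\rvb\perp\rvw_*$ and running componentwise Gronwall (with a bootstrap to keep $\bm{\psi}$ in a narrow cone around $\rvw_*$), one obtains $\|\rve(t)\| = O(\delta^3 e^{6\mathcal{N}(\rvw_*) t})$ as long as $\|\bm{\psi}(t,\delta\rvw_*)\|\le\eta_0$ for a small fixed $\eta_0>0$. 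In particular, $\bm{\psi}(\tau,\delta\rvw_*) = \delta e^{2\mathcal{N}(\rvw_*)\tau}\rvw_* + O(\delta^3 e^{6\mathcal{N}(\rvw_*)\tau})$ for any such $\tau$.

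\emph{Step 2 (Handoff and propagation --- main obstacle).} By Step 1 applied at the handoff time $\tau = \ln(\delta_2/\delta_1)/(2\mathcal{N}(\rvw_*))$, $\bm{\psi}(\tau,\delta_1\rvw_*) = \delta_2\rvw_* + O(\delta_2^3)$. The semigroup property
\[ \bm{\psi}\bigl(t+T^*(\delta_1),\delta_1\rvw_*\bigr) = \bm{\psi}\bigl(t+T^*(\delta_2),\bm{\psi}(\tau,\delta_1\rvw_*)\bigr) \]
then reduces the Cauchy estimate to propagating an $O(\delta_2^3)$ initial-condition perturbation over time $T^*(\delta_2)+t$. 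The hard part is precisely this propagation: since $T^*(\delta_2)\to\infty$ as $\delta_2\to 0$, naive Gronwall using the Lipschitz constant of $\nabla\mathcal{L}$ on a bounded set is far too lossy. The key observation is that on the near-origin interval $s\in[0,T^*(\delta_2)]$, the variational equation along $\bm{\psi}(\cdot,\delta_2\rvw_*)$ is, to leading order, $\dot\rvq = \nabla^2\mathcal{N}(\rvw_*)\rvq$, whose propagator has operator norm at most $e^{2\mathcal{N}(\rvw_*) s}$; at $s=T^*(\delta_2)$ this factor is $1/\delta_2$, so the $O(\delta_2^3)$ perturbation is amplified to $O(\delta_2^2)$ by the end of the near-origin phase. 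Over the remaining bounded time window $t\in[-\widetilde T,\widetilde T]$ the trajectory lies in a bounded set on which $\nabla\mathcal{L}$ is Lipschitz, and standard Gronwall contributes only a constant multiplicative factor. This gives $\|\bm{\psi}(t+T^*(\delta_1),\delta_1\rvw_*) - \bm{\psi}(t+T^*(\delta_2),\delta_2\rvw_*)\| = O(\delta_2^2) \le C\delta_2$, establishing the Cauchy estimate; existence of $\rvp(t)$ and the second displayed bound (upon taking $\delta_1\to 0$) follow.

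\emph{Step 3 (Loss bound).} Fix a small $\eta_0>0$ and set $\tau_{\eta_0}(\delta) := \ln(\eta_0/\delta)/(2\mathcal{N}(\rvw_*)) \le T^*(\delta)$. Step 1 yields $\bm{\psi}(\tau_{\eta_0}(\delta),\delta\rvw_*) = \eta_0\rvw_* + O(\eta_0^3)$. A Taylor expansion of $\mathcal{L}$ at $\mathbf{0}$, using $2$-homogeneity of $\mathcal{H}$, gives
\[ \mathcal{L}(\eta_0\rvw_*) = \mathcal{L}(\mathbf{0}) - \eta_0^2\mathcal{N}(\rvw_*) + O(\eta_0^4), \]
so, since $\mathcal{N}(\rvw_*)>0$, $\mathcal{L}(\eta_0\rvw_*) \le \mathcal{L}(\mathbf{0}) - \tfrac{1}{2}\eta_0^2\mathcal{N}(\rvw_*)$ for small $\eta_0$. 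Continuity of $\mathcal{L}$ together with the $O(\eta_0^3)$ error transfers this (with a slightly smaller constant) to $\mathcal{L}(\bm{\psi}(\tau_{\eta_0}(\delta),\delta\rvw_*))$, and monotonicity of $\mathcal{L}$ along gradient flow propagates it to $\mathcal{L}(\bm{\psi}(T^*(\delta),\delta\rvw_*))$. Passing $\delta\to 0$ yields $\mathcal{L}(\rvp(0)) \le \mathcal{L}(\mathbf{0}) - \eta$ with $\eta := \tfrac{1}{3}\eta_0^2\mathcal{N}(\rvw_*) > 0$.
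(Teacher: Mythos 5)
Your proposal follows essentially the same route as the paper's: Step~1 matches the near-origin tracking against the exponential ray (the bound $\|\rve(t)\| = O(\delta^3 e^{6\mathcal{N}(\rvw_*)t})$ is exactly what the paper derives), Step~2 is the handoff at $\tau = \ln(\delta_2/\delta_1)/(2\mathcal{N}(\rvw_*))$ plus the semigroup property plus the propagation argument that the paper packages as Lemma~\ref{init_dl3_2hm}, and Step~3 matches the paper's fixed-scale loss bound followed by monotonicity and passing $\delta\to 0$. One small quantitative caveat in Step~2: the propagator bound $e^{2\mathcal{N}(\rvw_*)s}$ holds only exactly on the ray; along the actual trajectory, which sits in a fixed-width cone around $\rvw_*$, the local growth rate is $2\mathcal{N}(\rvw_*) + O(\gamma)$ (the paper conservatively uses $4\mathcal{N}(\rvw_*)$), so the end-of-phase perturbation is $O(\delta_2^{2-\epsilon})$ rather than $O(\delta_2^2)$---still comfortably $\le C\delta_2$, so the conclusion is unaffected.
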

We next show  that $\bm{\psi}(t,\delta\rvw_0)$ gets sufficiently aligned with $\rvw_*$  while staying near the origin. 
\begin{lemma}\label{init_align_2hm}
	Consider the setting in \Cref{main_thm_2hm}, then there exists $T_1, a_1 >0$ such that for all sufficiently small $\delta>0$,
	\begin{align*}
	\left\|\bm{\psi}\left(T_1 + \frac{4\ln({1}/{\widetilde{\delta}})}{\Delta+8\mathcal{N}(\rvw_*)},\delta\rvw_0\right) - b_\delta\widetilde{\delta}^{\frac{\Delta}{\Delta + 8\mathcal{N}(\rvw_*)}}\rvw_* \right\|_2 \leq a_1 \widetilde{\delta}^{\frac{3\Delta}{\Delta + 8\mathcal{N}(\rvw_*)}},
	\end{align*}
	where $\widetilde{\delta}\in (A_2\delta-A_1\delta^3, A_2\delta+A_1\delta^3)$, for some positive constants $A_1, A_2$. Also, $b_\delta\in [\kappa_1,\kappa_2]$, for some $\kappa_2\geq \kappa_1>0$, depends on $\delta$.
\end{lemma}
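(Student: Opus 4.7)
The plan is to decompose $\bm{\psi}(t,\delta\rvw_0)=\alpha(t)\rvw_*+\bm{\beta}(t)$ with $\bm{\beta}(t)\perp\rvw_*$ and exploit the $\Delta$-spectral gap at $\rvw_*$: the parallel component grows at the fast rate $2\mathcal{N}(\rvw_*)$ while the perpendicular component grows strictly slower, so that over the stated time window the ratio $\|\bm{\beta}(t)\|_2/\alpha(t)$ contracts and yields the claimed bound. First I would fix the initial state at a $\delta$-independent time $T_1$. Using the Taylor expansion of $\ell$ around $0$ together with $\mathcal{H}(\rvx;\rvw)=O(\|\rvw\|_2^2)$ for $2$-homogeneous $\mathcal{H}$, one has $-\nabla\mathcal{L}(\rvw)=\nabla\mathcal{N}(\rvw)+O(\|\rvw\|_2^3)$ when $\|\rvw\|_2$ is small. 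A Grönwall comparison on $[0,T_1]$ then yields $\bm{\psi}(T_1,\delta\rvw_0)=\bm{\phi}(T_1,\delta\rvw_0)+O(\delta^3)$, and by $1$-homogeneity of $\nabla\mathcal{N}$ we have $\bm{\phi}(T_1,\delta\rvw_0)=\delta\,\bm{\phi}(T_1,\rvw_0)$. Since $\rvw_0\in\mathcal{S}(\rvw_*)$, choosing $T_1$ large enough makes $\bm{\phi}(T_1,\rvw_0)$ arbitrarily well aligned with $\rvw_*$; projecting onto $\rvw_*$ identifies $\widetilde{\delta}$ as the parallel component of $\bm{\psi}(T_1,\delta\rvw_0)$ and produces $\widetilde{\delta}\in(A_2\delta-A_1\delta^3,A_2\delta+A_1\delta^3)$ with $A_2=\rvw_*^\top\bm{\phi}(T_1,\rvw_0)$, while $\|\bm{\beta}(T_1)\|_2\ll\widetilde{\delta}$.

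For $t\ge T_1$, using $1$-homogeneity of $\nabla\mathcal{N}$ and $0$-homogeneity of $\nabla^2\mathcal{N}$ off the origin, I would Taylor expand
\begin{align*}
\nabla\mathcal{N}(\alpha\rvw_*+\bm{\beta})=\alpha\,\nabla\mathcal{N}(\rvw_*+\bm{\beta}/\alpha)=2\alpha\,\mathcal{N}(\rvw_*)\rvw_*+\nabla^2\mathcal{N}(\rvw_*)\bm{\beta}+O(\|\bm{\beta}\|_2^2/\alpha),
\end{align*}
using the first-order KKT identity $\nabla\mathcal{N}(\rvw_*)=2\mathcal{N}(\rvw_*)\rvw_*$, and noting $\nabla^2\mathcal{N}(\rvw_*)\rvw_*=2\mathcal{N}(\rvw_*)\rvw_*$ so that $\nabla^2\mathcal{N}(\rvw_*)$ preserves $\rvw_*^\perp$. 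Combining with the $O(\|\rvw\|_2^3)$ loss--NCF discrepancy and projecting along $\rvw_*$ and $\rvw_*^\perp$ gives the coupled system
\begin{align*}
\dot{\alpha} &= 2\mathcal{N}(\rvw_*)\,\alpha+O(\|\bm{\beta}\|_2^2/\alpha)+O(\alpha^3),\\
\dot{\bm{\beta}} &= M\bm{\beta}+O(\|\bm{\beta}\|_2^2/\alpha)+O(\alpha^3),
\end{align*}
where $M\coloneqq P_{\rvw_*^\perp}\nabla^2\mathcal{N}(\rvw_*)P_{\rvw_*^\perp}$ has all eigenvalues bounded above by $2\mathcal{N}(\rvw_*)-\Delta$ by the $\Delta$-second-order KKT hypothesis.

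Integrating the $\alpha$-equation directly gives $\alpha(t)=\widetilde{\delta}\,e^{2\mathcal{N}(\rvw_*)(t-T_1)}(1+o(1))$, which at $t_{\mathrm{end}}\coloneqq T_1+\frac{4\ln(1/\widetilde{\delta})}{\Delta+8\mathcal{N}(\rvw_*)}$ evaluates to $b_\delta\,\widetilde{\delta}^{\Delta/(\Delta+8\mathcal{N}(\rvw_*))}$ for some $b_\delta$ lying in a fixed interval $[\kappa_1,\kappa_2]$ bounded away from $0$ and $\infty$. For the perpendicular component, Duhamel's formula combined with $\|e^{Mt}\|_2\le e^{(2\mathcal{N}(\rvw_*)-\Delta)t}$ gives
\begin{align*}
\|\bm{\beta}(t)\|_2\;\lesssim\;e^{(2\mathcal{N}(\rvw_*)-\Delta)(t-T_1)}\|\bm{\beta}(T_1)\|_2+\int_{T_1}^{t}e^{(2\mathcal{N}(\rvw_*)-\Delta)(t-s)}\,\alpha(s)^3\,ds,
\end{align*}
and at $t_{\mathrm{end}}$ both terms are $O(\widetilde{\delta}^{3\Delta/(\Delta+8\mathcal{N}(\rvw_*))})$; the Duhamel piece is the dominant one because $6\mathcal{N}(\rvw_*)>2\mathcal{N}(\rvw_*)-\Delta$, and the arithmetic identity $3-\tfrac{24\mathcal{N}(\rvw_*)}{\Delta+8\mathcal{N}(\rvw_*)}=\tfrac{3\Delta}{\Delta+8\mathcal{N}(\rvw_*)}$ yields the claimed exponent. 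The main obstacle is closing a bootstrap: the Taylor expansions above are only valid while $\|\bm{\beta}(t)\|_2\ll\alpha(t)$, so I would introduce the stopping time $\tau\coloneqq\inf\{t\ge T_1:\|\bm{\beta}(t)\|_2/\alpha(t)\ge\eta\}$ for a fixed small $\eta$, use the two ODEs to derive $\frac{d}{dt}(\|\bm{\beta}\|_2/\alpha)\le-\Delta(\|\bm{\beta}\|_2/\alpha)+O(\alpha^2)$ on $[T_1,\tau]$ so that the ratio never reaches $\eta$, and then invoke a standard open--closed continuation argument to conclude $\tau\ge t_{\mathrm{end}}$. A secondary technical difficulty is that $\nabla\mathcal{N}$ need not be twice differentiable at the origin, so the expansion must be carried out via the rescaled variable $\rvw_*+\bm{\beta}/\alpha$, which stays bounded away from $\mathbf{0}$ throughout, where Assumption~\ref{homo_assumption} supplies the required regularity.
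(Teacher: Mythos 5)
Your proposal is correct in structure and arrives at the same exponents, but the key second phase is argued by a genuinely different route than the paper. You decompose $\bm{\psi}(t,\delta\rvw_0) = \alpha(t)\rvw_* + \bm{\beta}(t)$ and work directly with a coupled linearised ODE system at $\rvw_*$: the parallel rate is $2\mathcal{N}(\rvw_*)$, the perpendicular operator $M=P_{\rvw_*^\perp}\nabla^2\mathcal{N}(\rvw_*)P_{\rvw_*^\perp}$ has spectrum at most $2\mathcal{N}(\rvw_*)-\Delta$, and a Duhamel plus bootstrap argument for the ratio $\|\bm{\beta}\|_2/\alpha$ closes the estimate. The paper instead never linearises: after recentring at time $T_\gamma$ it compares $\rvu(t)/\widetilde{\delta}$ to the actual NCF flow $\rvz_3(t)$ launched from the slightly-misaligned unit vector $\rvw_1$, using Lemma~\ref{rate_ineq_2hm} to write $\rvz_3(t)=\rvg(t)e^{2t\mathcal{N}(\rvw_*)}$ with $\|\rvg(t)\|_2$ confined to $[\kappa_1,\kappa_2]$ and the angular gap bounded by $\gamma e^{-t\Delta}$; a single Gr\"onwall step with the cubic loss--NCF discrepancy $\|f(\rvw)\|_2\leq\beta\|\rvw\|_2^3$ then gives $\|\rvu(t)/\widetilde{\delta}-\rvz_3(t)\|_2\lesssim\widetilde{\delta}^2 e^{6t\mathcal{N}(\rvw_*)}$. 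What your approach buys is conceptual transparency (the $\Delta$-spectral gap is on the page); what it costs is that the error in the $\alpha$-equation accumulates a multiplicative $O(1)$ factor over the $\Theta(\ln(1/\widetilde{\delta}))$ window, so your interim claim $\alpha(t)=\widetilde{\delta}e^{2\mathcal{N}(\rvw_*)(t-T_1)}(1+o(1))$ is slightly too strong as stated --- the correct closure is $\alpha(t_{\mathrm{end}})=b_\delta\widetilde{\delta}^{\Delta/(\Delta+8\mathcal{N})}$ with $b_\delta=\exp\bigl(\int_{T_1}^{t_{\mathrm{end}}}O(\|\bm{\beta}\|_2^2/\alpha^2)+O(\alpha^2)\,ds\bigr)$, bounded because $\|\bm{\beta}\|_2/\alpha$ decays exponentially (via your bootstrap, which you correctly flagged as the main point to fill in) and $\int\alpha^2$ stays bounded as $\widetilde{\delta}\to 0$. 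The paper packages this same cancellation into the factor $\|\rvg(\overline T)\|_2\in[\kappa_1,\kappa_2]$, using the exact homogeneity scaling instead of an a posteriori bootstrap, but both mechanisms yield the same $b_\delta$.
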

If we define $T_{\widetilde{\delta}} \coloneqq T_1 + {4\ln({1}/{\widetilde{\delta}})}/({\Delta+8\mathcal{N}(\rvw_*)})$, then the above lemma implies 
\begin{align*}
\left\|\bm{\psi}\left(T_{\widetilde{\delta}},\delta\rvw_0\right)\right\|_2 = O\left(\delta^{\frac{\Delta}{\Delta + 8\mathcal{N}(\rvw_*)}}\right), 
\frac{\bm{\psi}\left(T_{\widetilde{\delta}},\delta\rvw_0\right)^\top\rvw_*}{\left\|\bm{\psi}\left(T_{\widetilde{\delta}},\delta\rvw_0\right)\right\|_2} = 1- O\left(\delta^{\frac{2\Delta}{\Delta + 8\mathcal{N}(\rvw_*)}}\right),
\end{align*}
for all sufficiently small $\delta>0$. Therefore, in the early stages of training, $\bm{\psi}(t,\delta\rvw_0)$ remains small and converges in direction to $\rvw_*$. Note that, the above equation implies a stronger directional convergence than Lemma \ref{lemma_early_dc}, since there $\epsilon$ is small but fixed (does not depend on $\delta$). However, compared to Lemma \ref{lemma_early_dc}, here $\rvw_*$ is assumed to be a \emph{second-order} KKT point.

The following lemma allows us to combine Lemma \ref{init_align_2hm} and Lemma \ref{lim_exists_2hm} in order to prove \Cref{main_thm_2hm}. It essentially shows that if initialized near the origin and sufficiently aligned with $\rvw_*$, then gradient flow remains close to $\rvp(t)$.
\begin{lemma}\label{init_dl3_2hm}
	Consider the setting in \Cref{main_thm_2hm}. Suppose there exists a constant $C_1>0$ such that for every $\delta>0$, ${\rva}_\delta$ is a vector that satisfies $\|{\rva}_\delta - \delta\rvw_*\|_2\leq C_1\delta^3$. Then, for any fixed $\widetilde{T}\in (-\infty,\infty)$, there exists a constant $C>0$ such that for all sufficiently small $\delta>0$, 
	\begin{align*}
	\left\|\bm{\psi}\left(t+ \frac{\ln\left({1}/{\delta}\right)}{2\mathcal{N}(\rvw_*)},{\rva}_\delta\right) - \bm{\psi}\left(t+ \frac{\ln\left({1}/{\delta}\right)}{2\mathcal{N}(\rvw_*)},\delta\rvw_*\right)  \right\|_2 \leq C\delta, \text{ for all }t\in [-\widetilde{T},\widetilde{T}].
	\end{align*}
\end{lemma}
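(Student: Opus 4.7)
The plan is to partition the time horizon $[0, T_\delta + \widetilde{T}]$, with $T_\delta := \ln(1/\delta)/(2\mathcal{N}(\rvw_*))$, into a long near-origin Phase A $= [0, T_\delta - T^*]$ and a short Phase B $= [T_\delta - T^*, T_\delta + \widetilde{T}]$ of fixed length $T^* + \widetilde{T}$, where $T^* > 0$ is chosen so that $\|\rvp(-T^*)\|_2 \leq \eta/2$ for a small parameter $\eta$ to be fixed later. Such a $T^*$ exists because $\rvp(t) \to \mathbf{0}$ as $t \to -\infty$, a fact one reads off from the near-origin approximation underlying \Cref{lim_exists_2hm}. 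A naive Gronwall using the global Lipschitz constant of $\nabla \mathcal{L}$ produces a factor $e^{K T_\delta} = \delta^{-K/(2\mathcal{N}(\rvw_*))}$, which for arbitrary $K$ is too large to contract the initial $O(\delta^3)$ gap into $O(\delta)$; the key idea is that near the origin the growth rate of the perturbation can be sharpened to essentially $2\mathcal{N}(\rvw_*)$, i.e.\ the reciprocal of the escape time, so that $\delta^3$ is inflated only to $\delta^{2-o(1)}$ across Phase A.

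To implement the Phase A bound, first observe that since $\mathcal{H}$ is $2$-homogeneous with locally Lipschitz gradient and $\ell$ is smooth, one writes $\nabla \mathcal{L}(\rvw) = -\nabla \mathcal{N}(\rvw) + \rvg(\rvw)$ with $\|\rvg(\rvw)\|_2 = O(\|\rvw\|_2^3)$, whence $\nabla^2 \mathcal{L}(\rvw) = -\nabla^2 \mathcal{N}(\rvw) + O(\|\rvw\|_2^2)$. Now $\nabla^2 \mathcal{N}$ is $0$-homogeneous, so depends only on direction; the $\Delta$-second-order KKT condition at $\rvw_*$, combined with Euler's relation for $\nabla \mathcal{N}$, gives $\lambda_{\max}(\nabla^2 \mathcal{N}(\rvw_*)) = 2\mathcal{N}(\rvw_*)$, attained uniquely along $\rvw_*$ with spectral gap at least $\Delta$ on $\rvw_*^\perp$. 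Next, an alignment argument mirroring the proof of \Cref{init_align_2hm} (but with initial direction already $\rvw_*$) shows that throughout Phase A both $\bm{\psi}(t, \delta\rvw_*)$ and $\bm{\psi}(t, \rva_\delta)$ remain within angular distance $o(1)$ of the ray through $\rvw_*$ as $\delta, \eta \to 0$; decomposing each trajectory as $r(t)\rvw_* + \rvh(t)$ with $\rvh \perp \rvw_*$, the parallel component grows at rate $2\mathcal{N}(\rvw_*)$ while the perpendicular grows at rate at most $2\mathcal{N}(\rvw_*) - \Delta$, so $\|\rvh(t)\|_2 / r(t)$ stays small. By continuity and $0$-homogeneity of $\nabla^2 \mathcal{N}$, the chord-averaged Hessian $H(t) := \int_0^1 \nabla^2 \mathcal{L}\bigl(s\bm{\psi}(t, \rva_\delta) + (1-s)\bm{\psi}(t, \delta\rvw_*)\bigr)\,ds$ satisfies $\lambda_{\max}(-H(t)) \leq 2\mathcal{N}(\rvw_*) + \alpha(\eta)$ with $\alpha(\eta) \to 0$ as $\eta \to 0$. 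Since $\dot{\rvz} = -H(t)\rvz$ for $\rvz(t) := \bm{\psi}(t,\rva_\delta) - \bm{\psi}(t,\delta\rvw_*)$, Gronwall on $\|\rvz\|_2^2$ yields
\[
\|\rvz(T_\delta - T^*)\|_2 \;\leq\; C_1 \delta^3 \cdot e^{[2\mathcal{N}(\rvw_*) + \alpha(\eta)](T_\delta - T^*)} \;\leq\; C'\, \delta^{\,2 - \alpha(\eta)/\mathcal{N}(\rvw_*)},
\]
which is $O(\delta)$ once $\eta$ is fixed small enough that $\alpha(\eta) \leq \mathcal{N}(\rvw_*)$.

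Phase B is handled by a standard Gronwall: by \Cref{lim_exists_2hm} together with the Phase A conclusion, both trajectories lie in a compact neighborhood of $\{\rvp(s) : s \in [-T^*, \widetilde{T}]\}$, on which $\nabla \mathcal{L}$ has a finite Lipschitz constant $K_R$, and the phase has fixed duration $T^* + \widetilde{T}$; therefore $\|\rvz(t)\|_2 \leq e^{K_R(T^* + \widetilde{T})} \cdot O(\delta) = O(\delta)$ throughout Phase B, covering $[T_\delta - \widetilde{T}, T_\delta + \widetilde{T}]$ as required. The main obstacle lies in Phase A: one must simultaneously maintain angular alignment with $\rvw_*$ over the long time $O(\ln(1/\delta))$ despite the cubic corrections $\rvg(\rvw)$, and verify that these corrections perturb the Gronwall exponent only by $\alpha(\eta) \to 0$, not by a scale-independent additive constant. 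Both rest on the interplay between $0$-homogeneity of $\nabla^2 \mathcal{N}$ and the $\Delta$-second-order KKT condition: the former ensures that only the direction of the trajectory, not its magnitude, enters the Hessian bound; the latter delivers the spectral gap that both drives alignment and caps the worst-case growth rate at exactly $2\mathcal{N}(\rvw_*)$.
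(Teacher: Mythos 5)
Your core strategy is sound and in the right spirit as the paper's: both show each trajectory stays near-aligned with $\rvw_*$ until time roughly $\ln(1/\delta)/(2\mathcal{N}(\rvw_*))$, exploit that near the origin the linearisation of the flow has top rate close to $2\mathcal{N}(\rvw_*)$ so the initial $O(\delta^3)$ gap only inflates to $O(\delta)$, and then cap with a fixed-time Gronwall (\Cref{err_bd_gf}). The route differs in one worthwhile way. The paper's Phase A is a \emph{two-step} comparison: it first compares each of $\bm{\psi}(\cdot,\rva_\delta)$ and $\bm{\psi}(\cdot,\delta\rvw_*)$ to the common reference $\rvz(t)=e^{2t\mathcal{N}(\rvw_*)}\rvw_*$ using the sharp $2\mathcal{N}(\rvw_*)$ inner-product inequality from \Cref{inn_pd_ineq} (this is what produces the alignment and the $O(\delta^2 e^{6t\mathcal{N}(\rvw_*)})$ error bound and hence the escape time), and only then Gronwalls the difference $\rvu-\rvq$ directly with the \emph{cruder} constant $3\mathcal{N}(\rvw_*)+\zeta C_4^2(1+C_1)^2\leq 4\mathcal{N}(\rvw_*)$, which suffices because $\delta^3$ leaves two powers of headroom. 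Your approach Gronwalls the difference directly via a chord-averaged Hessian and pushes the rate all the way down to $2\mathcal{N}(\rvw_*)+\alpha(\eta)$ using the spectral gap $\Delta$ at $\rvw_*$. That is sharper than necessary (any rate below $4\mathcal{N}(\rvw_*)$ works) but is conceptually cleaner: it isolates exactly why the spectral gap matters. It does require the chord to stay inside a small cone around $\rvw_*$, which follows from convexity of the cone $\{\rvw:\rvw^\top\rvw_*\geq(1-\gamma)\|\rvw\|_2\}$ once both endpoints are aligned, and a bootstrap showing the difference remains much smaller than the individual trajectory norms; the proposal gestures at both but does not execute the first-exit-time argument that makes them rigorous, which the paper does carry out explicitly via $\overline{T}_1,\overline{T}_2$.

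There is one genuine logical problem: your Phase B invokes $\rvp(t)$ and \Cref{lim_exists_2hm}, both to choose $T^*$ (``so that $\|\rvp(-T^*)\|_2\leq\eta/2$'') and to get compactness of the trajectory for the local Lipschitz constant $K_R$. But in the paper's development, \Cref{lim_exists_2hm} is \emph{proved using} \Cref{init_dl3_2hm}, and the very existence of $\rvp(t)$ rests on \Cref{lim_exists_2hm}. So you have a circular dependency. The fix is easy and is exactly what the paper does: do not stop Phase A based on $\rvp$ at all, but when the trajectories reach a fixed small norm level $C_4$ (equivalently, at time $\ln(C_4/\delta)/(2\mathcal{N}(\rvw_*))$ with $C_4$ chosen so that the cubic Lipschitz correction $\zeta C_4^2(1+C_1)^2\leq\mathcal{N}(\rvw_*)$), and then finish with \Cref{err_bd_gf} alone over the remaining fixed-length window. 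No reference to $\rvp(t)$ is needed for this lemma.
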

\begin{proof}\textbf{of  Theorem \ref{main_thm_2hm}:} From Lemma \ref{init_align_2hm}, we know
\begin{align*}
\left\|\bm{\psi}\left(T_1 + \frac{4\ln({1}/{\widetilde{\delta}})}{\Delta+8\mathcal{N}(\rvw_*)},\delta\rvw_0\right) - b_\delta\widetilde{\delta}^{\frac{\Delta}{\Delta + 8\mathcal{N}(\rvw_*)}}\rvw_* \right\|_2 \leq a_1 \widetilde{\delta}^{\frac{3\Delta}{\Delta + 8\mathcal{N}(\rvw_*)}},
\end{align*}
for some $T_1, a_1>0$. Define $\bar{\delta} = b_\delta\widetilde{\delta}^{\frac{\Delta}{\Delta + 8\mathcal{N}(\rvw_*)}}$, then the above equation implies
\begin{align*}
\left\|\bm{\psi}\left(T_1 + \frac{4\ln({1}/{\widetilde{\delta}})}{\Delta+8\mathcal{N}(\rvw_*)},\delta\rvw_0\right) - \bar{\delta}\rvw_* \right\|_2 \leq a_2\bar{\delta}^3,
\end{align*}
where $a_2=a_1/\kappa_1^3\geq a_1/b_\delta^3.$ Next, using Lemma \ref{init_dl3_2hm}, for any fixed $\widetilde{T}\in (-\infty,\infty)$, there exists a constant $\widetilde{C}_1>0$ such that for all sufficiently small $\delta>0$,
\begin{align*}
\left\|\bm{\psi}\left(t+ \frac{\ln\left({1}/{\bar{\delta}}\right)}{2\mathcal{N}(\rvw_*)},\bm{\psi}\left(T_1 + \frac{4\ln({1}/{\widetilde{\delta}})}{\Delta+8\mathcal{N}(\rvw_*)},\delta\rvw_0\right)\right) - \bm{\psi}\left(t+ \frac{\ln\left({1}/{\bar{\delta}}\right)}{2\mathcal{N}(\rvw_*)},\bar{\delta}\rvw_*\right)  \right\|_2 \leq \widetilde{C}_1\bar{\delta},
\end{align*}
for all $t\in [-\widetilde{T},\widetilde{T}]$, and from Lemma \ref{lim_exists_2hm}, there exists a $\widetilde{C}_2>0$ such that
\begin{align*}
\left\|\rvp(t)- \bm{\psi}\left(t+\frac{\ln\left({1}/{\bar{\delta}}\right)}{2\mathcal{N}(\rvw_*)},\bar{\delta}\rvw_*\right)\right\|_2 \leq \widetilde{C}_2\bar{\delta}, \text{ for all } t\in [-\widetilde{T},\widetilde{T}].
\end{align*}
Now, since
\begin{align*}
&\bm{\psi}\left(t+ \frac{\ln\left({1}/{\bar{\delta}}\right)}{2\mathcal{N}(\rvw_*)},\bm{\psi}\left(T_1 + \frac{4\ln({1}/{\widetilde{\delta}})}{\Delta+8\mathcal{N}(\rvw_*)},\delta\rvw_0\right)\right)\\
&= \bm{\psi}\left(t+ \frac{\ln\left({1}/{\bar{\delta}}\right)}{2\mathcal{N}(\rvw_*)}+T_1 + \frac{4\ln({1}/{\widetilde{\delta}})}{\Delta+8\mathcal{N}(\rvw_*)},\delta\rvw_0\right)\\
&= \bm{\psi}\left(t+ T_1 +\frac{\ln(1/b_\delta)}{2\mathcal{N}(\rvw_*)}+\left(\frac{\Delta}{2\mathcal{N}(\rvw_*)}+4\right) \frac{\ln({1}/{\widetilde{\delta}})}{\Delta+8\mathcal{N}(\rvw_*)},\delta\rvw_0\right)\\
&= \bm{\psi}\left(t+ T_1 +\frac{\ln(1/b_\delta)}{2\mathcal{N}(\rvw_*)}+ \frac{\ln({1}/{\widetilde{\delta}})}{2\mathcal{N}(\rvw_*)},\delta\rvw_0\right),
\end{align*}
we have that for all $t\in [-\widetilde{T},\widetilde{T}]$ and for all sufficiently small $\delta>0$, 
\begin{align*}
&\left\|\bm{\psi}\left(t+ T_1 +\frac{\ln(1/b_\delta)}{2\mathcal{N}(\rvw_*)}+ \frac{\ln({1}/{\widetilde{\delta}})}{2\mathcal{N}(\rvw_*)},\delta\rvw_0\right) - \rvp(t)\right\|_2 \leq \left\|\bm{\psi}\left(t+ \frac{\ln\left({1}/{\bar{\delta}}\right)}{2\mathcal{N}(\rvw_*)},\bar{\delta}\rvw_*\right)  - \rvp(t)\right\|_2 \\
& + \left\|\bm{\psi}\left(t+ T_1 +\frac{\ln(1/b_\delta)}{2\mathcal{N}(\rvw_*)}+ \frac{\ln({1}/{\widetilde{\delta}})}{2\mathcal{N}(\rvw_*)},\delta\rvw_0\right) - \bm{\psi}\left(t+ \frac{\ln\left({1}/{\bar{\delta}}\right)}{2\mathcal{N}(\rvw_*)},\bar{\delta}\rvw_*\right) \right\|_2\\
&\leq \widetilde{C}_1\bar{\delta}+\widetilde{C}_2\bar{\delta} \leq \widetilde{C}\delta^{\frac{\Delta}{\Delta + 8\mathcal{N}(\rvw_*)}},
\end{align*}
where $\widetilde{C}$ is a positive constant. The last inequality is true since $\widetilde{\delta}\leq A_2\delta+A_1\delta^3\leq 2A_2\delta$, for all sufficiently small $\delta$. Thus, the proof is complete.
\end{proof}
\subsection{Deep Homogeneous Neural Networks}\label{sec_Lhm}
We now present our main result describing the dynamics of gradient flow for $L$-homogeneous neural networks after it escapes from origin, where $L>2$.
\begin{theorem}\label{main_thm_Lhm}
	Suppose $\mathcal{H}$ is $L$-homogeneous, where $L>2$. Let $\rvw_0\in\mathcal{S}(\rvw_*)$, where $\rvw_*$ is a $\Delta-$second-order positive KKT point of  \cref{ncf_gn_const}, for some $\Delta>0$. Then, for any fixed $\widetilde{T}\in  (-\infty,\infty)$, there exists a $\widetilde{C}>0$  such that for all sufficiently small $\delta>0,$
	\begin{align}
	&\left\|\bm{\psi}\left(t+ \frac{T_1}{\delta^{L-2}} + \left(\frac{1/b_\delta^{L-2}}{L(L-2)\mathcal{N}(\rvw_*)} - T\right){\widetilde{\delta}^{\frac{-(L-2)\Delta}{2L^2\mathcal{N}(\rvw_*)+\Delta}}}+\frac{T}{\widetilde{\delta}^{L-2}},\delta\rvw_0\right) - \rvp(t) \right\|_2\nonumber \\
	&\leq \widetilde{C}\delta^{\frac{\Delta}{2L^2\mathcal{N}(\rvw_*) +\Delta}}, 
	\label{err_bd_Lhm}
	\end{align} 
	for all $t\in [-\widetilde{T},\widetilde{T}]$,  where $T_1>0$ is a constant,  $\widetilde{\delta}\in (A_2\delta-A_1\delta^{L+1}, A_2\delta+A_1\delta^{L+1})$ for some constants $A_1, A_2>0$,  $T\geq \frac{1}{L(L-2)\mathcal{N}(\rvw_*)}$, and $b_\delta^{L-2}\in \left[\frac{1}{TL(L-2)\mathcal{N}(\rvw_*)},1\right]$ depends on $\delta$. Moreover,
	\begin{align*}
	\rvp(t) := \lim_{\delta\rightarrow 0} \bm{\psi}\left(t+\frac{1/\delta^{L-2}}{L(L-2)\mathcal{N}(\rvw_*)},\delta\rvw_*\right), 
	\end{align*}
	\sloppypar \noindent which exists for all $t\in (-\infty,\infty)$, and $\rvp(t) = \bm{\psi}\left(t,\rvp(0)\right),$ where $\rvp(0) =\lim\limits_{\delta\to 0} \bm{\psi}\left(\frac{1/\delta^{L-2}}{L(L-2)\mathcal{N}(\rvw_*)},\delta\rvw_*\right) $ and $\mathcal{L}(\rvp(0)) \leq  \mathcal{L}(\mathbf{0}) - \eta,  $ for some $\eta>0$.
\end{theorem}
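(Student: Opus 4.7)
The plan is to mirror the proof of \Cref{main_thm_2hm} by establishing $L$-homogeneous analogs of \Cref{lim_exists_2hm}, \Cref{init_align_2hm}, and \Cref{init_dl3_2hm}, then assembling them via a triangle inequality exactly as in the two-homogeneous case. The one genuinely new ingredient, which drives every modified quantity in the statement, is the polynomial escape time: Euler's identity $\rvu^\top\nabla\mathcal{N}(\rvu)=L\mathcal{N}(\rvu)$ forces the tangential component $\alpha(t)$ along $\rvw_*$ under the NCF flow to obey $\dot\alpha=L\alpha^{L-1}\mathcal{N}(\rvw_*)$, whose explicit solution blows up at the finite time $t^*(\delta)=1/(\delta^{L-2}L(L-2)\mathcal{N}(\rvw_*))$ when $\alpha(0)=\delta$. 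This polynomial rate replaces the logarithmic $\ln(1/\delta)/(2\mathcal{N}(\rvw_*))$ of the two-homogeneous case throughout.

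\textbf{Existence of $\rvp(t)$.} First I would prove the analog of \Cref{lim_exists_2hm}, namely that $\{\bm{\psi}(t+t^*(\delta),\delta\rvw_*)\}_{\delta>0}$ is Cauchy in $\delta$ for every fixed $t\in(-\infty,\infty)$. Near the origin one has $\nabla\mathcal{L}(\rvw)=-\nabla\mathcal{N}(\rvw)+O(\|\rvw\|^{2L-1})$, while $\nabla\mathcal{N}(\rvw)=O(\|\rvw\|^{L-1})$ by $(L-1)$-homogeneity; combined with the self-similarity of the NCF flow under $\rvu\mapsto s\rvu$, this allows a direct comparison of $\bm{\psi}(t+t^*(\delta_1),\delta_1\rvw_*)$ and $\bm{\psi}(t+t^*(\delta_2),\delta_2\rvw_*)$ via a Gr\"onwall estimate over the full escape window, yielding a Cauchy bound of size $O(\delta_2)$. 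Passing to the limit produces $\rvp(t)$, and continuity of $\bm{\psi}(t,\cdot)$ in its initialization gives $\rvp(t)=\bm{\psi}(t,\rvp(0))$. The strict inequality $\mathcal{L}(\rvp(0))\leq\mathcal{L}(\mathbf{0})-\eta$ follows because flowing along the positive-valued KKT direction $\rvw_*$ to norm $\Theta(1)$ strictly decreases the loss, and this decrease is preserved in the limit.

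\textbf{Directional convergence.} Next I would prove the $L$-homogeneous analog of \Cref{init_align_2hm}: at a suitable time of the form $T_1/\delta^{L-2}+T/\widetilde\delta^{L-2}$, the gradient flow from $\delta\rvw_0$ lies $O\!\bigl(\widetilde\delta^{3\Delta/(2L^2\mathcal{N}(\rvw_*)+\Delta)}\bigr)$-close to $\bar\delta\rvw_*$, where $\bar\delta=b_\delta\widetilde\delta^{\Delta/(2L^2\mathcal{N}(\rvw_*)+\Delta)}$. Decomposing $\rvw=\alpha\rvw_*+\rvr$ with $\rvr\perp\rvw_*$, I would first invoke \Cref{lemma_early_dc} to obtain crude directional alignment with $\rvw_*$ within time $T_1/\delta^{L-2}$. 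After that, the tangential component obeys $\dot\alpha\approx L\alpha^{L-1}\mathcal{N}(\rvw_*)$ while the $\Delta$-second-order condition contracts the perpendicular ratio $\|\rvr\|/\alpha$ at relative rate $\Delta\alpha^{L-2}/2$ against a tangential growth rate of $L\alpha^{L-2}\mathcal{N}(\rvw_*)$. Integrating these polynomial rates over the alignment window and balancing contraction against growth produces exactly the exponent $\Delta/(2L^2\mathcal{N}(\rvw_*)+\Delta)$ appearing in the theorem, together with the multiplicative factor $b_\delta$ and the $\widetilde\delta$-window $(A_2\delta-A_1\delta^{L+1},A_2\delta+A_1\delta^{L+1})$ that absorbs higher-order corrections.

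\textbf{Perturbation lemma and assembly.} Finally I would prove the analog of \Cref{init_dl3_2hm}: if $\|\rva_\delta-\delta\rvw_*\|_2\leq C_1\delta^{L+1}$, then $\bm{\psi}(t+t^*(\delta),\rva_\delta)$ and $\bm{\psi}(t+t^*(\delta),\delta\rvw_*)$ stay within $O(\delta)$ on any fixed window $[-\widetilde T,\widetilde T]$. This is the \emph{main obstacle}: a Gr\"onwall-type analysis must control the deviation over a time of length $t^*(\delta)=\Theta(1/\delta^{L-2})$, during which $\|\nabla^2\mathcal{L}(\rvw(s))\|\sim\|\rvw(s)\|^{L-2}$ accumulates logarithmically near blow-up, so the amplification factor grows polynomially in $1/\delta$; careful bookkeeping shows that the perturbation scale $O(\delta^{L+1})$ is exactly what is needed to keep the propagated error at $O(\delta)$. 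Assembling the three lemmata by the same triangle-inequality decomposition as in the proof of \Cref{main_thm_2hm}, with the identification $\bar\delta=b_\delta\widetilde\delta^{\Delta/(2L^2\mathcal{N}(\rvw_*)+\Delta)}$ so that $t^*(\bar\delta)=\bigl(1/(b_\delta^{L-2}L(L-2)\mathcal{N}(\rvw_*))\bigr)\widetilde\delta^{-(L-2)\Delta/(2L^2\mathcal{N}(\rvw_*)+\Delta)}$, then reproduces the precise time expression and the final error bound $\widetilde C\,\delta^{\Delta/(2L^2\mathcal{N}(\rvw_*)+\Delta)}$ claimed in \Cref{main_thm_Lhm}; the additional $T/\widetilde\delta^{L-2}$ offset in the time expression corresponds to the alignment-window contribution from \Cref{init_align_2hm}'s analog and $T\geq 1/(L(L-2)\mathcal{N}(\rvw_*))$ ensures the bracket $(\tau^*-T)$ stays non-negative.
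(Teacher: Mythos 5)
Your proposal follows the paper's own proof: the three $L$-homogeneous analogs you sketch are precisely Lemmas \ref{init_dl3_Lhm}, \ref{lim_exists_Lhm}, and \ref{init_align_Lhm}, proven via the same rescaled-time Gr\"onwall estimates together with the finite-time blow-up of the NCF flow near $\rvw_*$ (Lemma \ref{rate_ineq}), and assembled by the identical triangle-inequality decomposition. The quantitative identifications you make — polynomial escape time $1/(\delta^{L-2}L(L-2)\mathcal{N}(\rvw_*))$, perturbation scale $\delta^{L+1}$, alignment exponent $\Delta/(2L^2\mathcal{N}(\rvw_*)+\Delta)$, and the change of scale $\bar\delta = b_\delta\widetilde\delta^{\Delta/(2L^2\mathcal{N}(\rvw_*)+\Delta)}$ — all coincide with the paper's.
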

Overall, the result here is similar to two-homogeneous case, except for the fact that it takes longer time to escape from the origin. For small $\delta$, $\bm{\psi}\left(t,\delta\rvw_*\right)$ would have escaped from the origin after $\frac{1/\delta^{L-2}}{L(L-2)\mathcal{N}(\rvw_*)}+O(1)$ time has elapsed, which is reflected in the definition of $\rvp(t)$. From \cref{err_bd_Lhm}, we can say that the trajectory of  $\bm{\psi}\left(t, \delta\rvw_0\right)$  and $\bm{\psi}\left(t, \delta\rvw_*\right)$ will approximately be the same after escaping from the origin, for arbitrarily long time if the initialization is sufficiently small. The proof follows a similar approach to the two-homogeneous case, with key differences arising from the fact that in deep homogeneous networks, the gradient flow of NCF can become unbounded at a finite time. See the proof for more details.

Next, we can use the above theorem to determine the saddle point encountered by gradient flow after escaping from the origin, in a similar way as for two-homogeneous neural networks. We start with the case when $\rvp(t)$ is bounded.

\begin{corollary}\label{cor_sq_loss_Lhm}
		Consider the setting of \Cref{main_thm_Lhm}. Suppose  $\rvp(t)$ is bounded for all $t\geq 0$, and let $\rvp^*= \lim_{t\to \infty} \rvp(t)$, where $\nabla \mathcal{L}(\rvp^*) = 0$ Then, for any sufficiently small $\epsilon>0$, there exists a time $T_\epsilon$ such that for all sufficiently small $\delta>0$,
	\begin{align*}
	\left\|\bm{\psi}\left(T_\delta,\delta\rvw_0\right) - \rvp^*\right\|_2 \leq \epsilon, \left\|\nabla \mathcal{L}\left(\bm{\psi}\left(T_\delta,\delta\rvw_0\right)\right)\right\|_2 \leq \epsilon,
	\end{align*}
	where $T_\delta \coloneqq {T_\epsilon}+ \frac{T_1}{\delta^{L-2}} + \left.\left(\frac{1/b_\delta^{L-2}}{L(L-2)\mathcal{N}(\rvw_*)} - T\right) {\widetilde{\delta}^{\frac{-(L-2)\Delta}{2L^2\mathcal{N}(\rvw_*)+\Delta}}}\right.+\frac{T}{\widetilde{\delta}^{L-2}}$.
\end{corollary}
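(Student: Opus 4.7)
The plan is to mirror the proof of Corollary \ref{cor_sq_loss_2hm} and reduce the statement to an application of Theorem \ref{main_thm_Lhm} combined with two continuity arguments (for $\rvp(t)$ as $t\to\infty$ and for $\nabla\gL$ near $\rvp^*$). The corollary has two conclusions: closeness of $\bm{\psi}(T_\delta,\delta\rvw_0)$ to $\rvp^*$, and smallness of $\nabla\gL$ at that point; both will follow from the same approximation bound.

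First I would exploit the hypothesis $\lim_{t\to\infty}\rvp(t)=\rvp^*$ to choose a finite time $T_\epsilon$ (depending only on $\epsilon$, not on $\delta$) such that $\|\rvp(T_\epsilon)-\rvp^*\|_2\le \epsilon/2$. Since $\rvp(t)$ is bounded for $t\ge 0$ by assumption and $\nabla\gL$ is locally Lipschitz on bounded sets (by Assumption \ref{homo_assumption}(iii) and Assumption \ref{ass_loss}, since $\nabla\gL(\rvw)=\gJ(\rmX;\rvw)^\top\ell'(\gH(\rmX;\rvw),\rvy)$ is a composition of locally Lipschitz and smooth maps), continuity of $\nabla\gL$ at $\rvp^*$ together with $\nabla\gL(\rvp^*)=\vzero$ lets me further enlarge $T_\epsilon$ so that $\|\nabla\gL(\rvp(T_\epsilon))\|_2\le\epsilon/2$ as well.

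Next I invoke Theorem \ref{main_thm_Lhm} with the choice $\widetilde{T}=T_\epsilon$ (viewing $T_\epsilon$ as a fixed, $\delta$-independent time once $\epsilon$ is fixed). This yields a constant $\widetilde C>0$ such that for all sufficiently small $\delta>0$,
\begin{align*}
\bigl\|\bm{\psi}(T_\delta,\delta\rvw_0)-\rvp(T_\epsilon)\bigr\|_2 \;\le\; \widetilde C\,\delta^{\frac{\Delta}{2L^2\gN(\rvw_*)+\Delta}},
\end{align*}
where $T_\delta$ is exactly the time defined in the corollary statement (with the $t=T_\epsilon$ instance of equation \eqref{err_bd_Lhm}). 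For all sufficiently small $\delta$ the right-hand side is at most $\epsilon/2$, so by the triangle inequality $\|\bm{\psi}(T_\delta,\delta\rvw_0)-\rvp^*\|_2\le \epsilon$, which gives the first conclusion.

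For the second conclusion, I combine the approximation bound with local Lipschitzness of $\nabla\gL$. Because $\rvp(t)$ is bounded on $[0,\infty)$ and $\bm{\psi}(T_\delta,\delta\rvw_0)$ lies in an $O(\epsilon)$-neighbourhood of $\rvp^*$, both points sit in a common compact set on which $\nabla\gL$ has some Lipschitz constant $M$. Then
\begin{align*}
\bigl\|\nabla\gL(\bm{\psi}(T_\delta,\delta\rvw_0))\bigr\|_2
\;\le\; \bigl\|\nabla\gL(\rvp(T_\epsilon))\bigr\|_2 + M\bigl\|\bm{\psi}(T_\delta,\delta\rvw_0)-\rvp(T_\epsilon)\bigr\|_2
\;\le\; \tfrac{\epsilon}{2}+M\widetilde C\,\delta^{\frac{\Delta}{2L^2\gN(\rvw_*)+\Delta}},
\end{align*}
which is at most $\epsilon$ for all sufficiently small $\delta$. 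Rescaling $\epsilon$ at the outset absorbs the factors $1/2$. The only mildly delicate point is ensuring that the Lipschitz constant $M$ used in the last step is chosen \emph{before} $\delta$ is shrunk, but since it depends only on a neighbourhood of $\{\rvp(t):t\in[0,\infty)\}\cup\{\rvp^*\}$ and this set is bounded, $M$ is fixed by $\epsilon$; there is no circularity. This is the step I would flag as the main technical obstacle, but it is routine once the boundedness hypothesis is used.
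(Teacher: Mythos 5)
Your proposal matches the paper's proof essentially step for step: choose $T_\epsilon$ from the convergence of $\rvp(t)$ and the stationarity of $\rvp^*$, apply Theorem \ref{main_thm_Lhm} at $t=T_\epsilon$ (this being $\delta$-independent), then use the triangle inequality plus a local Lipschitz bound on $\nabla\mathcal{L}$ over a fixed bounded set to get both conclusions. The subtlety you flag about fixing the Lipschitz constant before shrinking $\delta$ is exactly what the paper handles by choosing $\widetilde{A}$ on the ball of radius $2B$ up front, so your argument is correct and equivalent.
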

We next consider the case when $\rvp(t)$ can become unbounded.
\begin{corollary}\label{cor_log_loss_Lhm}
	Consider the setting of \Cref{main_thm_Lhm}. Suppose $\lim_{t\to \infty} \rvp(t)/\|\rvp(t)\|_2 = \rvp^*$ and $\lim_{t\to \infty} \|\rvp(t)\| = \infty$, such that $\lim_{t\to\infty} \nabla \mathcal{L}(\rvp(t)) = \mathbf{0}$ and $\lim_{\alpha\to\infty}\nabla \mathcal{L}(\alpha\rvp^*) = 0$.  Then, for any sufficiently small $\epsilon>0$, there exists a time $T_\epsilon$ such that for all sufficiently small $\delta>0$,
	\begin{align*}
	&\left\|\bm{\psi}\left(T_\delta,\delta\rvw_0\right)\right\|_2 \geq \frac{1}{2\epsilon}, \frac{\bm{\psi}\left(T_\delta,\delta\rvw_0\right)^\top\rvp^*}{\left\|\bm{\psi}\left(T_\delta,\delta\rvw_0\right)\right\|_2 }\geq 1-\epsilon, \text{ and }\left\|\nabla \mathcal{L}\left(\bm{\psi}\left(T_\delta,\delta\rvw_0\right)\right)\right\|_2 \leq \epsilon,
	\end{align*}
	where $T_\delta \coloneqq {T_\epsilon}+ \frac{T_1}{\delta^{L-2}} + \left.\left(\frac{1/b_\delta^{L-2}}{L(L-2)\mathcal{N}(\rvw_*)} - T\right) {\widetilde{\delta}^{\frac{-(L-2)\Delta}{2L^2\mathcal{N}(\rvw_*)+\Delta}}}\right.+\frac{T}{\widetilde{\delta}^{L-2}}$.
\end{corollary}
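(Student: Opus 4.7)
The plan is to reduce this to an approximation-plus-continuity argument built on top of \Cref{main_thm_Lhm}. The three target conclusions are properties of a single point $\bm{\psi}(T_\delta,\delta\rvw_0)$, which by \Cref{main_thm_Lhm} is close to $\rvp(T_\epsilon)$ for an appropriately chosen finite time $T_\epsilon$. So the game is: first pick $T_\epsilon$ using only the asymptotic behavior of $\rvp(t)$, then apply \Cref{main_thm_Lhm} with $\widetilde{T}=T_\epsilon$, then transfer norm, direction, and gradient estimates from $\rvp(T_\epsilon)$ to $\bm{\psi}(T_\delta,\delta\rvw_0)$ via the triangle inequality and local Lipschitz continuity of $\nabla\mathcal{L}$.

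First, I would use the three asymptotic hypotheses on $\rvp(t)$ to select $T_\epsilon$. Since $\|\rvp(t)\|_2\to\infty$, $\rvp(t)/\|\rvp(t)\|_2\to\rvp^*$, and $\nabla\mathcal{L}(\rvp(t))\to\vzero$, I can fix $T_\epsilon$ large enough that simultaneously $\|\rvp(T_\epsilon)\|_2\geq 2/\epsilon$, $\|\rvp(T_\epsilon)/\|\rvp(T_\epsilon)\|_2-\rvp^*\|_2\leq \epsilon/4$, and $\|\nabla\mathcal{L}(\rvp(T_\epsilon))\|_2\leq\epsilon/2$. Next, with this $T_\epsilon$ fixed (independent of $\delta$), I would apply \Cref{main_thm_Lhm} with $\widetilde{T}=T_\epsilon$, yielding a constant $\widetilde{C}>0$ such that for all sufficiently small $\delta$,
\begin{equation*}
\|\bm{\psi}(T_\delta,\delta\rvw_0)-\rvp(T_\epsilon)\|_2 \leq \widetilde{C}\,\delta^{\frac{\Delta}{2L^2\mathcal{N}(\rvw_*)+\Delta}}.
\end{equation*}
Call the right-hand side $r_\delta$; it tends to $0$ as $\delta\to 0$, so I can freely shrink $\delta$ to make $r_\delta$ smaller than any prescribed threshold.

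For the norm bound, the reverse triangle inequality gives $\|\bm{\psi}(T_\delta,\delta\rvw_0)\|_2\geq \|\rvp(T_\epsilon)\|_2-r_\delta\geq 2/\epsilon - r_\delta$, which exceeds $1/(2\epsilon)$ once $r_\delta\leq 3/(2\epsilon)$. For the directional alignment, since $\|\rvp(T_\epsilon)\|_2$ is large and $\|\bm{\psi}(T_\delta,\delta\rvw_0)-\rvp(T_\epsilon)\|_2\leq r_\delta$, a standard perturbation estimate (e.g.\ $\|\rvu/\|\rvu\|_2-\rvv/\|\rvv\|_2\|_2\leq 2\|\rvu-\rvv\|_2/\min(\|\rvu\|_2,\|\rvv\|_2)$) shows the unit direction of $\bm{\psi}(T_\delta,\delta\rvw_0)$ is within $\epsilon/2$ of that of $\rvp(T_\epsilon)$ for small $\delta$; combined with the $\epsilon/4$ bound on $\|\rvp(T_\epsilon)/\|\rvp(T_\epsilon)\|_2-\rvp^*\|_2$, this gives the desired $\geq 1-\epsilon$ inner product. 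For the gradient bound, local Lipschitz continuity of $\nabla\mathcal{L}$ (which follows from \Cref{homo_assumption} and \Cref{ass_loss}) in a neighborhood of $\rvp(T_\epsilon)$ gives a constant $K_\epsilon$ such that $\|\nabla\mathcal{L}(\bm{\psi}(T_\delta,\delta\rvw_0))-\nabla\mathcal{L}(\rvp(T_\epsilon))\|_2\leq K_\epsilon r_\delta$, which is at most $\epsilon/2$ for $\delta$ small, and combining with $\|\nabla\mathcal{L}(\rvp(T_\epsilon))\|_2\leq\epsilon/2$ yields the $\epsilon$ bound.

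The only subtle point I foresee is ordering the quantifiers correctly: the constant $\widetilde{C}$ produced by \Cref{main_thm_Lhm} depends on $\widetilde{T}=T_\epsilon$, and the local Lipschitz constant $K_\epsilon$ depends on the neighborhood of $\rvp(T_\epsilon)$, so both must be fixed \emph{after} $\epsilon$ and $T_\epsilon$ but \emph{before} the final smallness requirement on $\delta$. This is the analog of the issue in \Cref{cor_log_loss_2hm}, and provided the hypotheses $\nabla\mathcal{L}(\rvp(t))\to\vzero$ and $\nabla\mathcal{L}(\alpha\rvp^*)\to\vzero$ are used only to ensure that such a finite $T_\epsilon$ exists, no new complications arise from the $L>2$ homogeneity; the mechanics are identical to the two-homogeneous case, with the different form of $T_\delta$ absorbed into the statement of \Cref{main_thm_Lhm}.
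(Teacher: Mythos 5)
Your proposal matches the paper's proof in all essential respects: pick $T_\epsilon$ from the asymptotics of $\rvp(t)$, invoke \Cref{main_thm_Lhm} with $\widetilde{T}=T_\epsilon$, then transfer the norm, direction, and gradient bounds from $\rvp(T_\epsilon)$ to $\bm{\psi}(T_\delta,\delta\rvw_0)$ via triangle inequality and local Lipschitz continuity, with the quantifier order you flag handled exactly as the paper handles it. The only cosmetic difference is that the paper bounds the alignment ratio $\bm{\psi}^\top\rvp^*/\|\bm{\psi}\|_2$ directly via numerator/denominator perturbations rather than through a unit-vector perturbation lemma, but this is the same estimate in different packaging.
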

\begin{remark}
	Theorem \ref{main_thm_2hm} and Theorem \ref{main_thm_Lhm} assume $\rvw_0\in\mathcal{S}(\rvw_*)$, where $\rvw_*$ is a second-order positive KKT point of  \cref{ncf_gn_const}. If this is not satisfied, three scenarios arise: $(i)$ $\rvw_0\in\mathcal{S}(\rvw_*)$, where  $\rvw_*$ is a first-order positive KKT point, but not second-order, $(ii)$  $\rvw_0\in\mathcal{S}(\rvw_*)$, where  $\rvw_*$ is a zero KKT point, or $(iii)$ $\phi(t,\rvw_0)$ converges to the origin. In the last two cases, the gradient flow may not escape the origin. We are unable to handle the first case, and leave it as a future direction. Perhaps this case does not occur typically, since for many problems gradient descent almost surely avoids first-order saddle points \citep{lee1_esc_saddle, lee2_esc_saddle}.
\end{remark}
\begin{remark}
	\Cref{main_thm_2hm} and \Cref{main_thm_Lhm} implies that the trajectories of $\bm{\psi}(t,\delta\rvw_0)$ and $\bm{\psi}(t,\delta\rvw_*)$ stay close after escaping the origin. One might attempt to prove this using the result of \cite{kumar_dc, early_dc} stated in Lemma \ref{lemma_early_dc}, which shows for $L=2$ that, for small $\epsilon>0$, there exists a time $T$ such that $\bm{\psi}(T,\delta\rvw_0)^\top\rvw_*/\|\bm{\psi}(T,\delta\rvw_0)\|_2 = 1- O(\epsilon),$ for all small $\delta>0$. To explore this idea further, suppose $\rvw_0 = \rvw_*+\epsilon\rvb$, where $\rvb \in \rvw_*^\perp$, that is, approximate directional convergence holds at the initialization itself. Since $\bm{\psi}(\cdot,\cdot)$ varies continuously with initialization and $\|\delta\rvw_0 - \delta\rvw_*\|_2 = \delta\epsilon$, which is small because $\delta$ and $\epsilon$ are small, it can be shown that $\bm{\psi}(t,\delta\rvw_*)$ and $\bm{\psi}(t,\delta\rvw_0)$ remain close for some time after the training begins. However, that time may not be long enough to ensure that $\bm{\psi}(t,\delta\rvw_*)$ and $\bm{\psi}(t,\delta\rvw_0)$ remain close after they escape from the origin, making this approach unsuccessful (see Appendix \ref{additonal_results} for more details).
	\label{remark_diff}
\end{remark}
\section{Implications for Feed-Forward Neural Networks}\label{sec_implications}
This section discusses the implications of the above theorems for feed-forward neural network. Specifically, we use the above theorems and the results of \cite{early_dc}, which studies the KKT points of the NCF, to show how sparsity structure is preserved among the weights after gradient flow escapes from the origin.

 We first introduce the notion of \emph{zero-preserving subset}, which is a subset of the weights of a neural network that remain zero throughout training, if they are zero at initialization. 
\begin{definition}
	Consider the training setup of \Cref{sec_ps}, and recall $\bm{\psi}(t,\rvw(0))$ denotes the solution of
	\begin{equation*}
	\dot{\rvw} = -\nabla \mathcal{L}(\rvw),
	\end{equation*}
	where $\rvw(0)$ is the initialization. If $\rvw_z$ is a vector containing subset of entries in $\rvw$,\footnote{Throughout this section, we will abuse the notation slightly and refer to a vector as a set containing its entries and vice-versa.}  then $\bm{\psi}_{\rvw_z}(t,\rvw(0))$ denotes the evolution of weights belonging to $\rvw_z$. We define $\rvw_z$ to be a zero-preserving subset of  $\rvw$ under the following condition: if $\|\bm{\psi}_{\rvw_z}(0,\rvw(0)) \|_2= 0$, then $\|\bm{\psi}_{\rvw_z}(t,\rvw(0))\|_2= 0$, for all $t\in(-\infty,\infty)$.
\end{definition} 
Note that there could be a subset of weights which are non-zero at initialization, but become zero at some stage of the training and remain zero after that. There could also be subset of weights which are zero at initialization, but become non-zero during training. Such set of weights do not satisfy the definition of zero-preserving subset. The weights belonging to zero-preserving subset are zero at initialization, and they remain zero throughout training. 

It may seem that the training data plays a role in determining the zero-preserving subsets, that is, for a fixed architecture, different training data may lead to different sets of zero-preserving subsets. While this is possible, we show that for some architectures, certain zero-preserving subsets do not change with the training data. In such cases, zero-preserving subsets should be viewed as a property of the architecture, rather than the training data.\\\\
For $L\geq 2$, the output of an $L-$layer feed-forward neural network $\mathcal{H}$ is
\begin{equation}
\mathcal{H}(\rvx;\rmW_1,\cdots,\rmW_L) = \rmW_L\sigma(\rmW_{L-1}\cdots\sigma(\rmW_1\rvx)\cdots),
\label{L_layer_feed}
\end{equation}
where $\rmW_l\in \sR^{k_{l}\times k_{l-1}}$, $k_0 = d$ and $k_L = 1.$ The activation function $\sigma:\sR\to\sR$ is applied co-ordinate wise. Note that, if $\sigma(x) = \max(x,\alpha x)^p$, for some $p\in\sN$ and $\alpha\in \sR$, then the above neural network is homogeneous. We use $\rmW_l[:,j]$ and $\rmW_l[j,:]$ to denote the $j$-th column and $j$-th row of $\rmW_l$ respectively. Also, $\rmW_l[j,:]$ contains  incoming  weights from the $j$-th neuron in the $l$-th layer, and  $\rmW_{l+1}[:,j]$ contains outgoing weights from the same neuron. 

The following lemma describes the zero-preserving subsets of feed-forward neural networks for arbitrary training data.
\begin{lemma}\label{zp_subs_ff}
	Let $\mathcal{H}$ be an $L-$layer feed-forward neural network as in \cref{L_layer_feed}, where $\sigma(\cdot)$ is locally Lipschitz, continuously differentiable\footnote{We assume differentiability for simplicity, the result can also be proved for ReLU activation.} and $\sigma(0) = 0$. Suppose $\rvw_z$ is a subset of the weights such that 
	\begin{equation*}
	\rvw_z = \bigcup\limits_{l=1}^{L-1}\left(\rmW_l[j,:] \cup \rmW_{l+1}[:,j], j\in \mathcal{S}_l\right),
	\end{equation*}
	where $\mathcal{S}_l$ is an arbitrary subset of $[k_l]$, for all $l\in [L-1]$, then $\rvw_z$ is a zero-preserving subset.
\end{lemma}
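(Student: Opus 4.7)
The strategy is to show that the gradient of the training loss has zero components along every coordinate in $\rvw_z$ whenever the coordinates in $\rvw_z$ are themselves zero, and then invoke uniqueness of solutions to the gradient-flow ODE. Once the vanishing-gradient property is in hand, consider the trajectory $\tilde{\bm\psi}(t,\rvw(0))$ produced by running gradient flow on the restricted problem obtained by fixing all coordinates in $\rvw_z$ at $0$ and freely evolving the remaining coordinates. Because the gradient components along $\rvw_z$ vanish on the subspace $\{\rvw_z = 0\}$, this restricted trajectory also satisfies the full ODE $\dot{\rvw} = -\nabla\mathcal{L}(\rvw)$ with the same initial condition. Since \Cref{homo_assumption}(iii) together with \Cref{ass_loss} ensures $\nabla \mathcal{L}$ is locally Lipschitz, the classical uniqueness theorem for ODEs forces $\bm\psi(t,\rvw(0)) = \tilde{\bm\psi}(t,\rvw(0))$ on the maximal existence interval in both time directions, yielding the claim.

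The key computation is the forward/backward chain-rule argument showing that $\partial \mathcal{L}/\partial \theta = 0$ for every weight $\theta\in\rvw_z$ whenever $\rvw_z = \mathbf{0}$. Fix an input $\rvx_i$ and write $\rvh_0 = \rvx_i$, $\rvz_l = \rmW_l \rvh_{l-1}$, $\rvh_l = \sigma(\rvz_l)$ for $l\in[L-1]$, and $\mathcal{H}(\rvx_i;\cdot) = \rmW_L\rvh_{L-1}$. For any $j\in\mathcal{S}_l$, the assumption $\rmW_l[j,:] = \mathbf{0}$ gives $(\rvz_l)_j = 0$, and hence $(\rvh_l)_j = \sigma(0) = 0$. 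Thus every neuron in $\mathcal{S}_l$ produces a zero activation, irrespective of the values taken by the other weights.

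For the outgoing weights $\rmW_{l+1}[:,j]$ with $j\in\mathcal{S}_l$: by the chain rule,
\begin{equation*}
\frac{\partial \ell(\mathcal{H}(\rvx_i;\rvw),y_i)}{\partial \rmW_{l+1}[k,j]}
= \frac{\partial \ell}{\partial (\rvz_{l+1})_k}\,(\rvh_l)_j = 0,
\end{equation*}
since $(\rvh_l)_j = 0$. For the incoming weights $\rmW_l[j,:]$ with $j\in\mathcal{S}_l$, a standard backpropagation identity yields
\begin{equation*}
\frac{\partial \ell}{\partial (\rvz_l)_j} \;=\; \sigma'((\rvz_l)_j)\sum_{k} \rmW_{l+1}[k,j]\,\frac{\partial \ell}{\partial (\rvz_{l+1})_k} \;=\; 0,
\end{equation*}
because $\rmW_{l+1}[:,j] = \mathbf{0}$; consequently $\partial \ell/\partial \rmW_l[j,m] = (\partial \ell/\partial (\rvz_l)_j)(\rvh_{l-1})_m = 0$. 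Summing over the training samples, every partial derivative of $\mathcal{L}$ along a coordinate of $\rvw_z$ vanishes on $\{\rvw_z = \mathbf{0}\}$, as required.

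The main obstacle is essentially bookkeeping: one must be careful that the backpropagation argument for the incoming weights actually uses the outgoing weights being zero (and vice versa for the outgoing weights using the incoming weights through $\sigma(0) = 0$), so that one genuinely needs both halves of the union defining $\rvw_z$. No strict inequality or non-degeneracy assumption on the data $\rmX$ or on the surviving weights enters, which is what allows the conclusion to be data-independent. Once the vanishing-gradient identity is established, the uniqueness step is immediate from local Lipschitzness of $\nabla\mathcal{L}$, and it yields the conclusion for all $t\in(-\infty,\infty)$ on which $\bm\psi(t,\rvw(0))$ is defined.
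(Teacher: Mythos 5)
Your proof is correct, but it takes a genuinely different route from the paper's. You compute that every partial derivative of $\mathcal{L}$ along a coordinate of $\rvw_z$ vanishes \emph{exactly} on the set $\{\rvw_z = \mathbf{0}\}$, and then conclude by an invariant-subspace argument via uniqueness of the gradient-flow ODE. The paper instead never passes through the uniqueness theorem at the invariance step: it establishes the slightly weaker bounds
\[
\left\|\frac{\partial\mathcal{H}}{\partial\rmW_{l+1}[:,j]}\right\|_2 \leq K\|\rmW_l[j,:]\|_2,\qquad
\left\|\frac{\partial\mathcal{H}}{\partial\rmW_{l}[j,:]}\right\|_2 \leq K\|\rmW_{l+1}[:,j]\|_2,
\]
valid \emph{everywhere} on a bounded region, and plugs them into a Gronwall inequality for $\|\rmW_{l+1}[:,j]\|_2^2 + \|\rmW_l[j,:]\|_2^2$; starting at $0$ forces this quantity to remain $0$. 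The two arguments differ in one non-trivial respect: your uniqueness step requires $\nabla\mathcal{L}$ to be locally Lipschitz, which entails $\sigma'$ locally Lipschitz, whereas the lemma only assumes $\sigma$ locally Lipschitz and $C^1$ (and does not invoke \Cref{homo_assumption}, which you cite). For the activations actually used downstream ($\max(x,\alpha x)^p$ with $p\ge 2$ or the linear case) this distinction is moot, and the paper also implicitly assumes $\bm\psi$ is a well-defined flow, but the Gronwall route is cleaner precisely because it bounds the trajectory directly without needing uniqueness at this step. What your approach buys is conceptual transparency — the exact vanishing of the relevant gradient components on the subspace makes the structural reason for zero-preservation immediate, and generalizes without change to non-differentiable $\sigma$ at the level of the computation (only the uniqueness invocation would need replacing).
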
 
The assumption $\sigma(0) = 0$ is satisfied by all homogeneous activation functions. According to the above lemma, the zero-preserving subset is a collection of rows and columns of the weight matrices such that if $\rmW_l[j,:]\in\rvw_z$, then $\rmW_{l+1}[:,j]\in\rvw_z$, and vice-versa. Another way to view the zero-preserving subset is to look at the hidden neurons. The zero-preserving subset is formed by combining \emph{all} the incoming and outgoing weights of certain subset of hidden neurons; some examples are provided in \Cref{fig:zp_set}. Also, note that the zero-preserving subset contains only rows of $\rmW_1$ and only columns of $\rmW_L$, whereas it could contain rows and columns of other weight matrices.

\begin{figure}[ht]
	\centering
	\begin{subfigure}[b]{0.45\textwidth}
		\centering
		\begin{tikzpicture}
		
		\tikzstyle{input neuron}=[circle, draw, minimum size=1cm]
		\tikzstyle{hidden neuron}=[circle, draw, minimum size=1cm]
		\tikzstyle{hidden neuron color}=[circle, draw, fill=gray!50, minimum size=1cm]
		\tikzstyle{output neuron}=[circle, draw, minimum size=1cm]
		\tikzstyle{annot} = [text width=4em, text centered]
		
		\def\numInputs{2}
		\def\numHidden{3}
		\def\numOutputs{1}
		
		\foreach \i in {1,...,\numInputs}
		\node[input neuron] (I-\i) at (0, -\i) {};

		\foreach \i in {1}
		\node[hidden neuron] (H-\i) at (2, -\i + 0.5) {};
		\foreach \i in {2,...,\numHidden}
		\node[hidden neuron color] (H-\i) at (2, -\i + 0.5) {};
		
		\node[output neuron] (O-1) at (4, -1.5) {};
		
		\foreach \i in {1,...,\numInputs}
		\foreach \j in {1}
		\draw[->] (I-\i) -- (H-\j);
		\foreach \i in {1,...,\numInputs}
		\foreach \j in {2,...,\numHidden}
		\draw[->,dashed] (I-\i) -- (H-\j);
		
		\foreach \i in {1}
		\draw[->] (H-\i) -- (O-1);
		\foreach \i in {2,...,\numHidden}
		\draw[->,dashed] (H-\i) -- (O-1);
		
		\node[annot, above of=I-1, node distance=1cm] {Input layer};
		\node[annot, above of=H-1, node distance=1cm] {Hidden layer};
		\node[annot, above of=O-1, node distance=1cm] {Output layer};
		
		\end{tikzpicture}
		\caption{Single hidden layer}
		\label{fig:neural1}
	\end{subfigure}
	\begin{subfigure}[b]{0.45\textwidth}
		\centering
		\begin{tikzpicture}
		
		\tikzstyle{input neuron}=[circle, draw, minimum size=1cm]
		\tikzstyle{hidden neuron}=[circle, draw, minimum size=1cm]
		\tikzstyle{hidden neuron color}=[circle, draw, fill=gray!50, minimum size=1cm]
		\tikzstyle{output neuron}=[circle, draw, minimum size=1cm]
		\tikzstyle{annot} = [text width=4em, text centered]
		
		\def\numInputs{2}
		\def\numHiddenA{3}
		\def\numHiddenB{3}
		\def\numOutputs{1}
		
		\foreach \i in {1,...,\numInputs}
		\node[input neuron] (I-\i) at (0, -\i) {};
		
		\foreach \i in {1}
		\node[hidden neuron] (H1-\i) at (2, -\i + 0.5) {};
		\foreach \i in {2,...,\numHiddenA}
		\node[hidden neuron color]  (H1-\i) at (2, -\i + 0.5) {};
		
		\foreach \i in {1}
		\node[hidden neuron] (H2-\i) at (4, -\i + 0.5) {};
		\foreach \i in {2,...,\numHiddenB}
		\node[hidden neuron color] (H2-\i) at (4, -\i + 0.5) {};
		
		\node[output neuron] (O-1) at (6, -1.5) {};
		
		\foreach \i in {1,...,\numInputs}
		\foreach \j in {1}
		\draw[->] (I-\i) -- (H1-\j);
		\foreach \i in {1,...,\numInputs}
		\foreach \j in {2,...,\numHiddenA}
		\draw[->,dashed] (I-\i) -- (H1-\j);
		
		\foreach \i in {2,...,\numHiddenA}
		\foreach \j in {1,...,\numHiddenB}
		\draw[->,dashed] (H1-\i) -- (H2-\j);
		\foreach \i in {1}
		\foreach \j in {2,...,\numHiddenB}
		\draw[->,dashed] (H1-\i) -- (H2-\j);
		\foreach \i in {1}
		\foreach \j in {1}
		\draw[->] (H1-\i) -- (H2-\j);
		
		\foreach \i in {2,...,\numHiddenB}
		\draw[->,dashed] (H2-\i) -- (O-1);
		\foreach \i in {1}
		\draw[->] (H2-\i) -- (O-1);
		
		\node[annot, above of=I-1, node distance=1cm] {Input layer};
		\node[annot, above of=H1-1, node distance=1cm] {Hidden layer 1};
		\node[annot, above of=H2-1, node distance=1cm] {Hidden layer 2};
		\node[annot, above of=O-1, node distance=1cm] {Output layer};
		
		\end{tikzpicture}
		\caption{Two hidden layer}
		\label{fig:neural2}
	\end{subfigure}
	\caption{The weights corresponding to the dashed arrows, or equivalently, all the incoming and outgoing weights of the hidden neurons in gray, form a zero-preserving subset. }
	\label{fig:zp_set}
\end{figure}
Now, recall that our goal in this section is to show that, for feed-forward homogeneous neural networks trained via gradient flow with small initialization, the sparsity structure emerging among the weights just before escaping the origin is preserved post-escape. Briefly, our approach to achieve this goal is as follows: From \Cref{main_thm_2hm} and \Cref{main_thm_Lhm}, we know that for all sufficiently small initializations, gradient flow escapes the origin along the same path as gradient flow with small initial weights and initial direction along a KKT point of the constrained NCF. Now, in \citet{early_dc}, KKT points of the constrained NCF for feed-forward homogeneous networks are studied, and for these KKT points, it can be shown that all the rows and columns of the hidden weights with zero magnitude form a zero-preserving subset. Combining this fact with the result of \Cref{main_thm_2hm} and \Cref{main_thm_Lhm}, we will show that if the initial weights lie in a stable set of a positive KKT point, then the sparsity structure will approximately be preserved after gradient flow escapes the origin. We next discuss this approach in greater detail. \\\\
We start with the following lemma from \citet[Lemma 5]{early_dc}. For feed-forward homogeneous networks, it highlights a key property of KKT points of the constrained NCF.
\begin{lemma}\label{bal_r1_kkt}
	Let $\mathcal{H}$ be an $L-$layer feed-forward neural network as in \cref{L_layer_feed}, where $\sigma(x) = \max(x,\alpha x)^p$, for some $p\in\sN$ and $\alpha\in \sR$. Let $\left(\overline{\rmW}_{1},\cdots \overline{\rmW}_{L}\right)$be a positive KKT point of 
	\begin{align}
	\max_{\rmW_1,\cdots,\rmW_L} \mathcal{N}(\rmW_1,\cdots,\rmW_L)\coloneqq\widetilde{\rvy}^\top\mathcal{H}(\rmX;\rmW_1,\cdots,\rmW_L),  \text{ s.t. } \sum_{i=1}^L\|\rmW_i\|_F^2 = 1.
	\label{ncf_const_ff}
	\end{align}
	Then,
	\begin{equation*}
	\text{diag}(\overline{\rmW}_l\overline{\rmW}_l^\top) = p\cdot\text{diag}(\overline{\rmW}_{l+1}^\top\overline{\rmW}_{l+1}), \text{ for all }l\in [L-1],
	\end{equation*}
	or equivalently, $\|\overline{\rmW}_l[j,:]\|_2^2 = p\|\overline{\rmW}_{l+1}[:,j]\|_2^2$, for all $j\in [k_l]$ and $l\in [L-1]$.
\end{lemma}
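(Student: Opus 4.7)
The plan is to exploit a rescaling symmetry of the network at a single hidden neuron, which yields a conservation law that, combined with the KKT stationarity equation, gives the desired balancedness. Specifically, for any neuron $j$ in layer $l$ (with $1\le l\le L-1$) and any scalar $c>0$, the joint rescaling $\rmW_l[j,:] \mapsto c\,\rmW_l[j,:]$ and $\rmW_{l+1}[:,j] \mapsto c^{-p}\rmW_{l+1}[:,j]$ leaves the network output unchanged: scaling the $j$-th row of $\rmW_l$ by $c$ scales the preactivation of neuron $j$ in layer $l$ by $c$, hence its activation by $c^p$ since $\sigma$ is $p$-homogeneous; this activation feeds into layer $l+1$ only through $\rmW_{l+1}[:,j]$, so scaling that column by $c^{-p}$ exactly cancels the $c^p$ factor, leaving every downstream preactivation, and hence $\gH(\rmX;\rmW_1,\ldots,\rmW_L)$ and $\mathcal{N}(\rmW_1,\ldots,\rmW_L)$, invariant.

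Differentiating the identity $\mathcal{N}(c\,\rmW_l[j,:],\,c^{-p}\rmW_{l+1}[:,j])=\mathcal{N}$ in $c$ and evaluating at $c=1$ yields the conservation law
\begin{equation*}
\bigl\langle \rmW_l[j,:],\,\nabla_{\rmW_l[j,:]}\mathcal{N}\bigr\rangle - p\bigl\langle \rmW_{l+1}[:,j],\,\nabla_{\rmW_{l+1}[:,j]}\mathcal{N}\bigr\rangle = 0.
\end{equation*}
At a first-order KKT point of \cref{ncf_const_ff} there exists a multiplier $\lambda$ such that $\nabla_{\rmW_m}\mathcal{N}(\overline{\rmW})=2\lambda\,\overline{\rmW}_m$ for every $m\in[L]$; reading off the $j$-th row of layer $l$ and $j$-th column of layer $l+1$ and substituting into the conservation law gives
\begin{equation*}
2\lambda\bigl(\|\overline{\rmW}_l[j,:]\|_2^2 - p\,\|\overline{\rmW}_{l+1}[:,j]\|_2^2\bigr)=0.
\end{equation*}

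It remains to rule out $\lambda=0$ when the KKT point is positive. For this, note that $\mathcal{N}$ is positively homogeneous of some total degree $d>0$ in the concatenation of all weights (with $d$ determined by $L$ and $p$ via the recursion $a_{l+1}=1+pa_l$, $a_1=1$). Applying Euler's identity to $\mathcal{N}$ at $\overline{\rmW}$ and substituting the KKT relation yields
\begin{equation*}
d\,\mathcal{N}(\overline{\rmW}) \;=\; \sum_{m=1}^{L}\bigl\langle \overline{\rmW}_m,\nabla_{\rmW_m}\mathcal{N}\bigr\rangle \;=\; 2\lambda\sum_{m=1}^{L}\|\overline{\rmW}_m\|_F^2 \;=\; 2\lambda,
\end{equation*}
where the last equality uses the constraint. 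Since $\mathcal{N}(\overline{\rmW})>0$ at a positive KKT point, $\lambda>0$, so we may divide through and conclude $\|\overline{\rmW}_l[j,:]\|_2^2 = p\,\|\overline{\rmW}_{l+1}[:,j]\|_2^2$ for every $j\in[k_l]$ and every $l\in[L-1]$, which is equivalent to the diagonal identity $\text{diag}(\overline{\rmW}_l\overline{\rmW}_l^\top)=p\cdot\text{diag}(\overline{\rmW}_{l+1}^\top\overline{\rmW}_{l+1})$ stated in the lemma.

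The only delicate step is verifying the rescaling invariance and cleanly differentiating it under the elementwise $p$-homogeneous $\sigma$; once that is in hand, the rest is an Euler-identity calculation. No deep obstacle is anticipated, since this is a restatement of \citet[Lemma 5]{early_dc}.
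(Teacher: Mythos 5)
The paper does not actually prove this lemma: it cites it as \citet[Lemma 5]{early_dc} and builds on it without giving its own argument, so there is no internal proof to compare against. Your standalone proof is correct and follows the standard balancedness route. The single-neuron rescaling $\rmW_l[j,:]\mapsto c\,\rmW_l[j,:]$, $\rmW_{l+1}[:,j]\mapsto c^{-p}\rmW_{l+1}[:,j]$ does leave every downstream pre-activation, hence $\mathcal{N}$, unchanged, because the $c^p$ factor picked up by the $j$-th hidden activation is exactly cancelled in the only place that activation enters layer $l+1$; differentiating at $c=1$ gives the stated conservation law, and substituting the stationarity relation $\nabla_{\rmW_m}\mathcal{N}(\overline{\rmW})=2\lambda\overline{\rmW}_m$ yields $2\lambda\bigl(\|\overline{\rmW}_l[j,:]\|_2^2-p\|\overline{\rmW}_{l+1}[:,j]\|_2^2\bigr)=0$. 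Your use of Euler's identity to identify $2\lambda$ with $d\cdot\mathcal{N}(\overline{\rmW})$ (with $d$ the homogeneity degree of the full network in the stacked weights) and the positivity of the KKT value to conclude $\lambda>0$ is the right way to rule out the trivial case; note the paper's Lemma~\ref{second_kkt} records exactly this identification for the constrained NCF. The one technical point you gloss over is that the lemma is stated for all $p\in\sN$, including $p=1$, where $\sigma$ is not differentiable at zero; there the stationarity condition and the differentiation of the invariance must be phrased with Clarke subdifferentials and a consistent selection of subgradients, which is the version used in the cited reference. For $p\ge 2$ (the case relevant to the rest of the paper, under Assumption~\ref{homo_assumption}) your smooth argument is complete as written.
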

In \citet{early_dc}, authors empirically observed that, for feed-forward homogeneous neural networks, the weights tend to converge towards KKT points of the constrained NCF that have many rows and columns of the hidden weights with zero norm, during the early stages of training. In fact, for $p\geq 2$ (polynomial Leaky ReLU), the hidden weights usually had only one row and column with non-zero norm.
   
Now, using these observations and the above lemma, we describe the zero-preserving subset emerging from the sparsity structure in positive KKT points, and its consequence on dynamics of gradient flow beyond the origin. Since  $\|\overline{\rmW}_l[j,:]\|_2^2 = p\|\overline{\rmW}_{l+1}[:,j]\|_2^2$, if \mbox{$\|\overline{\rmW}_l[j,:]\|_2$} is $0$, then $\|\overline{\rmW}_{l+1}[:,j]\|_2$ is $0$, and vice versa. Therefore, all the rows and columns of a KKT point with zero norm will form a zero-preserving subset. Consequently, for gradient flow with initial direction along such KKT points, by the definition of zero-preserving subset, all the rows and columns with zero norm at initialization will remain zero during training, implying the sparsity structure is preserved during training. 

Furthermore, from \Cref{main_thm_2hm} and \Cref{main_thm_Lhm}, we know that the trajectory of gradient flow with small initialization after escaping the origin is approximately equal to the gradient flow with small initial weights and initial direction along a KKT point of the NCF. Hence, for a feed-forward homogeneous neural network, if direction of the initial weights are in a stable set of a positive KKT point, then the sparsity structure will be approximately preserved after escaping the origin. We formalize this idea in the following theorem, where we use $\rmW_{1:L}^0\coloneqq(\rmW_1^0,\cdots,\rmW_L^0)$ and $\overline{\rmW}_{1:L} \coloneqq (\overline{\rmW}_1,\cdots,\overline{\rmW}_L)$, to denote the initial direction of the weights and the KKT point of the constrained NCF respectively.

 \begin{theorem}\label{main_thm_sp}
 	Let $\mathcal{H}$ be an $L-$layer feed-forward neural network as in \cref{L_layer_feed}, where $\sigma(x) = \max(x,\alpha x)^p$, for some $p\in\sN, p\geq 2$ and $\alpha\in \sR$, or $p=1, \alpha = 1$. Suppose $\rmW_{1:L}^0 \in \mathcal{S}(\overline{\rmW}_{1:L}) $, where $\overline{\rmW}_{1:L}$ is a $\Delta-$second-order positive KKT point of \cref{ncf_const_ff}. Define $\overline{\mathcal{N}} \coloneqq \mathcal{N}(\overline{\rmW}_{1:L})$, and let $\rvw_z$ be the following subset of the weights:
 	\begin{align*}
 	\rvw_z = \bigcup\limits_{l=1}^{L-1}\bigg(\left\{\rmW_l[j,:]: \left\|\overline{\rmW}_l[j,:]\right\|_2 = 0, j\in [k_l]\right\} \cup \left\{\rmW_{l+1}[:,j]: \left\|\overline{\rmW}_{l+1}[:,j]\right\|_2 = 0, j\in [k_l]\right\}\bigg).
 	\end{align*} 
 	\begin{itemize}
 		\item If $\mathcal{H}$ is two-homogeneous, then, for all sufficiently small $\delta>0$,  
 		\begin{align}
 		&\left\|\bm{\psi}\left(T_{\widetilde{\delta}}^1, \delta\rmW_{1:L}^0\right) \right\|_2 \leq b_1 {\delta}^{\frac{\Delta}{\Delta + 8\overline{\mathcal{N}}}}, \text{ and } \frac{\left\|\bm{\psi}_{\rvw_z}\left(T_{\widetilde{\delta}}^1, \delta\rmW_{1:L}^0\right) \right\|_2}{\left\|\bm{\psi}\left( T_{\widetilde{\delta}}^1, \delta\rmW_{1:L}^0\right) \right\|_2} \leq b_2 {\delta}^{\frac{2\Delta}{\Delta + 8\overline{\mathcal{N}}}},
 		\label{init_algn_2hm_ff}
 		\end{align}
 		where $b_1,b_2>0$ are some constants, $T_{\widetilde{\delta}}^1 = T_1 + {4\ln({1}/{\widetilde{\delta}})}/(\Delta+8\overline{\mathcal{N}})$ and $T_1,\widetilde{\delta}$ are the same as defined in Theorem \ref{main_thm_2hm}. Furthermore, for any fixed $\widetilde{T}>0$, there exists a $\widetilde{C}>0$ such that, for all sufficiently small $\delta>0$ and for all $t\in [-\widetilde{T},\widetilde{T}]$,  
 		\begin{align}
 		\left\|\bm{\psi}_{\rvw_z}\left(t+T_{\widetilde{\delta}}^2 , \delta\rmW_{1:L}^0\right) \right\|_2 \leq \widetilde{C}\delta^\frac{\Delta}{\Delta+8\overline{\mathcal{N}} },
 		\label{err_bd_2hm_ff}
 		\end{align} 
 		where $T_{\widetilde{\delta}}^2 = T_1 +\frac{\ln(1/b_\delta)}{2\overline{\mathcal{N}}}+ \frac{\ln({1}/{\widetilde{\delta}})}{2\overline{\mathcal{N}}}$, and  $b_\delta$ is the same as defined in Theorem \ref{main_thm_2hm}. 
 		
 		\item  If $\mathcal{H}$ is $L$-homogeneous, for some $L>2$, then, for all sufficiently small $\delta>0$,  
 		\begin{align}
 		\left\|\bm{\psi}\left(T_{\widetilde{\delta}}^1 , \delta\rmW_{1:L}^0\right) \right\|_2 \leq b_1 {\delta}^{\frac{\Delta}{\Delta + 2L^2\overline{\mathcal{N}}}}, \text{ and } \frac{\left\|\bm{\psi}_{\rvw_z}\left(T_{\widetilde{\delta}}^1, \delta\rmW_{1:L}^0\right) \right\|_2}{\left\|\bm{\psi}\left(T_{\widetilde{\delta}}^1, \delta\rmW_{1:L}^0\right) \right\|_2} \leq b_2 {\delta}^{\frac{L\Delta}{\Delta + 2L^2\overline{\mathcal{N}}}},
 		\label{init_algn_Lhm_ff}
 		\end{align}
 		where $b_1,b_2>0$ are some constants, $T_{\widetilde{\delta}}^1 = \frac{T_1}{\delta^{L-2}} + \frac{T}{\widetilde{\delta}^{L-2}}\left(1 - \widetilde{\delta}^{\frac{2(L-2)L^2\overline{\mathcal{N}}}{2L^2\overline{\mathcal{N}}+\Delta}}\right),$and $T_1,T,\widetilde{\delta}$ are the same as defined in Theorem \ref{main_thm_Lhm}. Furthermore, for any fixed $\widetilde{T}>0$, there exists a $\widetilde{C}>0$ such that, for all sufficiently small $\delta>0$ and for all $t\in [-\widetilde{T},\widetilde{T}]$,  
 		\begin{align}
 		&\left\|\bm{\psi}_{\rvw_z}\left(t+T_{\widetilde{\delta}}^2,\delta\rmW_{1:L}^0\right) \right\|_2 \leq \widetilde{C}\delta^{\frac{\Delta}{2L^2\overline{\mathcal{N}} +\Delta}},
 		\label{err_bd_Lhm_ff}
 		\end{align} 
 		where $T_{\widetilde{\delta}}^2 =  \frac{T_1}{\delta^{L-2}} + \left(\frac{1/b_\delta^{L-2}}{L(L-2)\overline{\mathcal{N}}} - T\right){\widetilde{\delta}^{\frac{-(L-2)\Delta}{2L^2\overline{\mathcal{N}}+\Delta}}}+\frac{T}{\widetilde{\delta}^{L-2}}$, and $b_\delta$ is the same as defined in Theorem \ref{main_thm_Lhm}. 
 	\end{itemize}
 \end{theorem}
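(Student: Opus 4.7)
The plan is to combine three ingredients already in the paper: the balance condition of Lemma~\ref{bal_r1_kkt}, the hidden-neuron characterization of zero-preserving subsets in Lemma~\ref{zp_subs_ff}, and the initial-alignment and post-escape approximations from Lemma~\ref{init_align_2hm} and Theorem~\ref{main_thm_2hm} (with the $L$-homogeneous analogues used to prove Theorem~\ref{main_thm_Lhm}). The first and crucial step is to verify that $\rvw_z$ is a zero-preserving subset of $(\rmW_1,\ldots,\rmW_L)$. By Lemma~\ref{bal_r1_kkt} (and its direct linear counterpart in the $p=1,\alpha=1$ case), $\|\overline{\rmW}_l[j,:]\|_2 = 0$ if and only if $\|\overline{\rmW}_{l+1}[:,j]\|_2 = 0$, so $\rvw_z$ collects exactly the incoming and outgoing weights of the hidden neurons in $\mathcal{S}_l := \{j \in [k_l] : \|\overline{\rmW}_l[j,:]\|_2 = 0\}$, which places $\rvw_z$ in the form required by Lemma~\ref{zp_subs_ff}. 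Consequently $\delta\overline{\rmW}_{1:L}$ vanishes on $\rvw_z$ for every $\delta > 0$, and gradient flow initialized there keeps those coordinates identically zero.

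For the initial-alignment bounds \eqref{init_algn_2hm_ff} and \eqref{init_algn_Lhm_ff}, I apply Lemma~\ref{init_align_2hm} (resp.\ its $L$-homogeneous analogue) with $\rvw_0 = \rmW_{1:L}^0$ and $\rvw_* = \overline{\rmW}_{1:L}$, which gives
\begin{equation*}
\left\|\bm{\psi}\!\left(T_{\widetilde{\delta}}^1, \delta\rmW_{1:L}^0\right) - b_\delta\widetilde{\delta}^{\Delta/(\Delta+8\overline{\mathcal{N}})}\overline{\rmW}_{1:L}\right\|_2 \leq a_1\widetilde{\delta}^{3\Delta/(\Delta+8\overline{\mathcal{N}})}.
\end{equation*}
The triangle inequality together with $\|\overline{\rmW}_{1:L}\|_2 = 1$, $b_\delta \le \kappa_2$, and $\widetilde{\delta} = \Theta(\delta)$ yields the $\delta^{\Delta/(\Delta+8\overline{\mathcal{N}})}$ upper bound on the total norm, while the reverse triangle inequality with $b_\delta \ge \kappa_1$ gives a matching lower bound of the same order once $\delta$ is small enough that the cubic correction is dominated. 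Restricting the displayed estimate to the $\rvw_z$-coordinates and using that $\overline{\rmW}_{1:L}$ vanishes on $\rvw_z$ bounds the numerator by $a_1\widetilde{\delta}^{3\Delta/(\Delta+8\overline{\mathcal{N}})}$, and the ratio therefore satisfies the claimed $\delta^{2\Delta/(\Delta+8\overline{\mathcal{N}})}$ bound. The $L$-homogeneous case runs identically with the exponents inherited from Theorem~\ref{main_thm_Lhm}, producing the $\delta^{L\Delta/(\Delta+2L^2\overline{\mathcal{N}})}$ ratio.

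For the post-escape bounds \eqref{err_bd_2hm_ff} and \eqref{err_bd_Lhm_ff}, the key observation is that the limiting path $\rvp(t)$ itself vanishes on $\rvw_z$. Indeed, by the first paragraph $\bm{\psi}_{\rvw_z}(s, \delta\overline{\rmW}_{1:L}) \equiv 0$ for every $\delta > 0$ and every $s$; passing to the limit defining $\rvp(0)$ in Theorem~\ref{main_thm_2hm} / Theorem~\ref{main_thm_Lhm} yields $\rvp_{\rvw_z}(0) = 0$, and invoking Lemma~\ref{zp_subs_ff} once more on the initialization $\rvp(0)$ propagates this to $\rvp_{\rvw_z}(t) = 0$ for all $t$. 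Therefore
\begin{equation*}
\left\|\bm{\psi}_{\rvw_z}\!\left(t+T_{\widetilde{\delta}}^2, \delta\rmW_{1:L}^0\right)\right\|_2 = \left\|\bm{\psi}_{\rvw_z}\!\left(t+T_{\widetilde{\delta}}^2, \delta\rmW_{1:L}^0\right) - \rvp_{\rvw_z}(t)\right\|_2 \leq \left\|\bm{\psi}\!\left(t+T_{\widetilde{\delta}}^2, \delta\rmW_{1:L}^0\right) - \rvp(t)\right\|_2,
\end{equation*}
and Theorem~\ref{main_thm_2hm} (resp.\ Theorem~\ref{main_thm_Lhm}) bounds the right-hand side by $\widetilde{C}\delta^{\Delta/(\Delta+8\overline{\mathcal{N}})}$ (resp.\ $\widetilde{C}\delta^{\Delta/(2L^2\overline{\mathcal{N}}+\Delta)}$) uniformly over $t \in [-\widetilde{T},\widetilde{T}]$. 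The only step that looks delicate is pushing the zero-preservation through the limit that defines $\rvp(t)$, but this obstacle evaporates once one notes the zero property holds for the entire $\delta$-family $\bm{\psi}(\cdot, \delta\overline{\rmW}_{1:L})$, so it survives the limit trivially; everything else is routine bookkeeping of constants and exponents across the two homogeneity regimes.
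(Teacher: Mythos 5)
Your proposal is correct and follows essentially the same route as the paper: verify $\rvw_z$ is a zero-preserving subset via Lemma~\ref{bal_r1_kkt} and Lemma~\ref{zp_subs_ff}, use Lemma~\ref{init_align_2hm} (resp.\ Lemma~\ref{init_align_Lhm}) with triangle/reverse-triangle bounds to get the two pre-escape estimates and their ratio, and note that $\rvp_{\rvw_z}(t)\equiv 0$ by passing the zero-preservation property through the limit so that Theorem~\ref{main_thm_2hm}/\ref{main_thm_Lhm} directly yields the post-escape bound. The only minor redundancy is your second invocation of Lemma~\ref{zp_subs_ff} at $\rvp(0)$ — as your final remark already notes, the zero property survives the $\delta\to 0$ limit pointwise in $t$, so no extra propagation step is needed.
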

In the above theorem, $\overline{\rmW}_{1:L}$ is a positive KKT point of the constrained NCF, and $\rvw_z$ contain all the rows and columns of $\overline{\rmW}_{1:L}$ that have zero norm,\footnote{More precisely, all the rows of $(\overline{\rmW}_1,\cdots,\overline{\rmW}_{L-1})$ and all the columns of $(\overline{\rmW}_2,\cdots,\overline{\rmW}_{L})$ with zero norm.} which is a zero-preserving subset, as discussed above. For two-homogeneous networks, \cref{init_algn_2hm_ff} implies that at time $T_{\widetilde{\delta}}^1$, the norm of the weights are small, but weights belonging to $\rvw_z$ have much smaller magnitude. Therefore, a sparsity structure emerges among the weights before gradient flow escapes the origin. We mainly use Lemma \ref{init_align_2hm} to get \cref{init_algn_2hm_ff} which, we recall, implies that during the early stages of training, the weights remain small and converge in direction towards $\overline{\rmW}_{1:L}$. Since $\overline{\rmW}_{1:L}$ has a sparsity structure, specifically, weights belonging to $\rvw_z$ have zero magnitude, therefore, the same sparsity structure would emerge in the weights of the neural network during the early stages of training.

From \Cref{main_thm_2hm}, we know that gradient flow escapes the origin after $T_{\widetilde{\delta}}^2$ time has elapsed. Thus, \cref{err_bd_2hm_ff} implies that weights belonging to $\rvw_z$ remain small even after gradient flow escapes the origin. Hence, the sparsity structure that emerges among the weights before escaping the origin is preserved post-escape as well. To get \cref{err_bd_2hm_ff}, we rely on \Cref{main_thm_2hm} and $\rvw_z$ being a zero-preserving subset. 

For deep homogeneous networks, similar results are provided in \cref{init_algn_Lhm_ff} and \cref{err_bd_Lhm_ff}. Additionally, using the above theorem and following the proofs of Corollaries \ref{cor_sq_loss_2hm}, \ref{cor_log_loss_2hm}, \ref{cor_sq_loss_Lhm}, and \ref{cor_log_loss_Lhm}, it can be shown that the sparsity structure is preserved until the first saddle point encountered by gradient flow after escaping the origin.
\subsection{Numerical Experiments}
We next conduct experiments to validate the above theorem, with results presented in Figures \ref{fig:3_layer_nn}, \ref{fig:2_layer_rl} and \ref{fig:3_layer_rl}. The setting for these experiments is similar to Figure \ref{fig:two_layer_nn}, the weights are trained using gradient descent with small initialization and until they escape the origin and reach the next saddle point. Each figure contains both the evolution of training loss and the plot of weights, before escaping the origin and after reaching the next saddle point. The code is available at \href{https://github.com/akk0135/beyond_origin_experiments}{\texttt{github.com/akk0135/beyond\_origin\_experiments}}.\\\\
\textbf{Square activation function. }In \Cref{fig:two_layer_nn}, we observed that the weights of a two-layer network with square activation became sparse before gradient descent escaped the origin, with only one row of $\rmW$ and $\rvv$ being non-zero. In accordance with \Cref{main_thm_sp}, this sparsity structure was preserved after escaping the origin as well. Similarly, \Cref{fig:3_layer_nn} shows that for a three-layer neural network with square activation, before gradient descent escapes the origin, a single row of $\rmW_1$, and a single entry of $\rmW_2$, $\rvv$ are non-zero, and this sparsity structure is preserved even after gradient descent escapes the origin and reaches the next saddle point. Notably, the rows and columns with zero norm form a zero-preserving subset.\\\\
\textbf{ReLU activation function. }Although our results exclude ReLU networks, we explore two- and three-layer ReLU networks in \Cref{fig:2_layer_rl} and \Cref{fig:3_layer_rl}. Note that, before escaping the origin, the weights become approximately sparse, but, compared to square activation, a greater number of rows and columns of the weights are non-zero. This aligns with observations from \cite{early_dc}, which noted that for ReLU networks, gradient descent tend to converge towards a KKT point of the constrained NCF with multiple non-zero rows and columns. Next, a close look at the weights seems to suggest that rows and columns of relatively small size stay small, even after the gradient descent escapes the origin and reaches the next saddle point. Also, the rows and columns with relatively small size seems to follow the definition of zero-preserving subset, that is, if the $j$-th row of $\rmW_l$ is small, then the $j$-th column of $\rmW_{l+1}$ is small, and vice-versa. For example, in the two-layer case, the first two rows of $\rmW$ and first two columns of $\rvv^\top$ stay small at iteration $i_1$ and $i_2$. In the three-layer case, rows 15-17 of $\rmW_1$ and columns 15-17 of  $\rmW_2$, and rows 2-5 of $\rmW_2$ and columns 2-5 of  $\rvv^\top$ stay small at iteration $i_1$ and $i_2$. 

These observations suggest that the sparsity structure is also preserved for ReLU networks, even though our results exclude ReLU activation. Additional experiments are provided in \Cref{sec:add_exp}. These observations can be useful for future work in this area.
\begin{figure}[htbp]
	\centering
	\begin{minipage}[c]{0.3\textwidth}
		\centering
		\includegraphics[width=\textwidth]{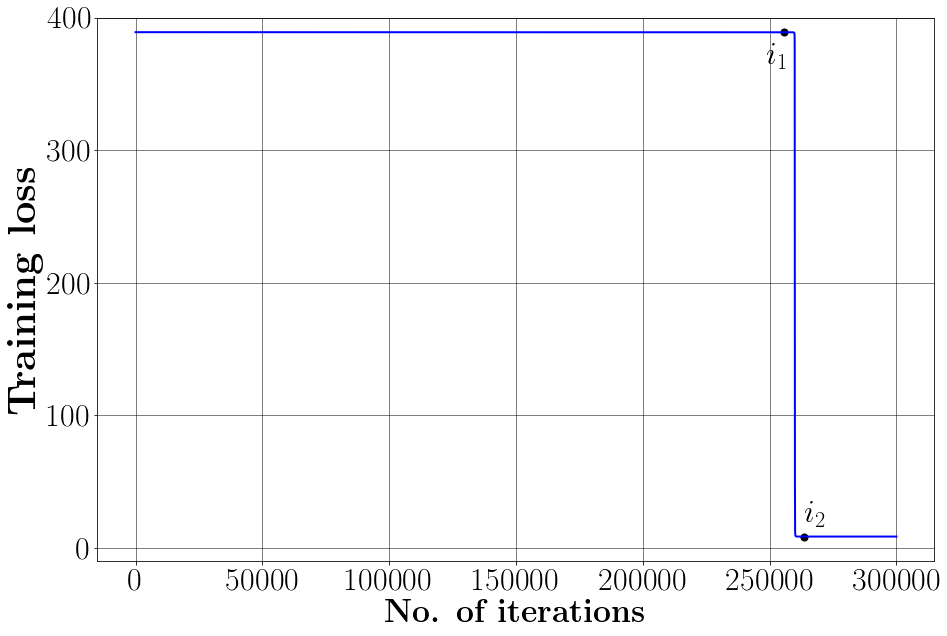}
		\\ (a) Evolution of training loss with iterations
	\end{minipage}\hspace{1cm}%
	\begin{minipage}[c]{0.6\textwidth}
		\centering
		\includegraphics[width=\textwidth]{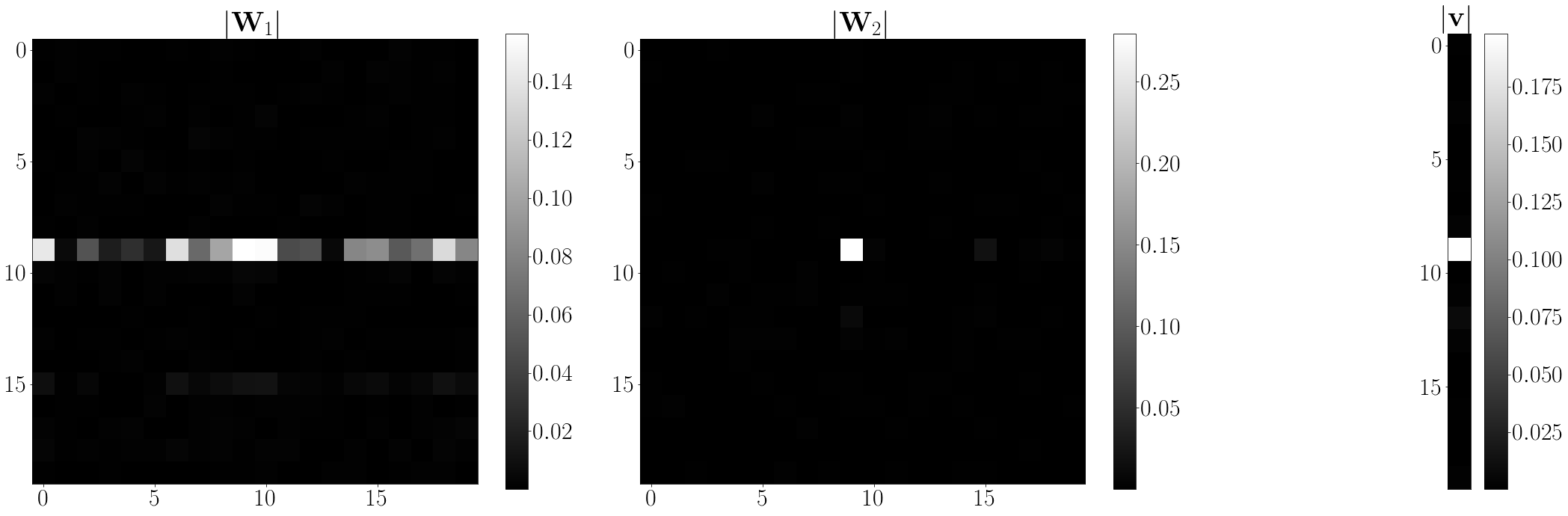}
		\\ (b) Weights at iteration $i_1$ \\[1ex]
		\includegraphics[width=\textwidth]{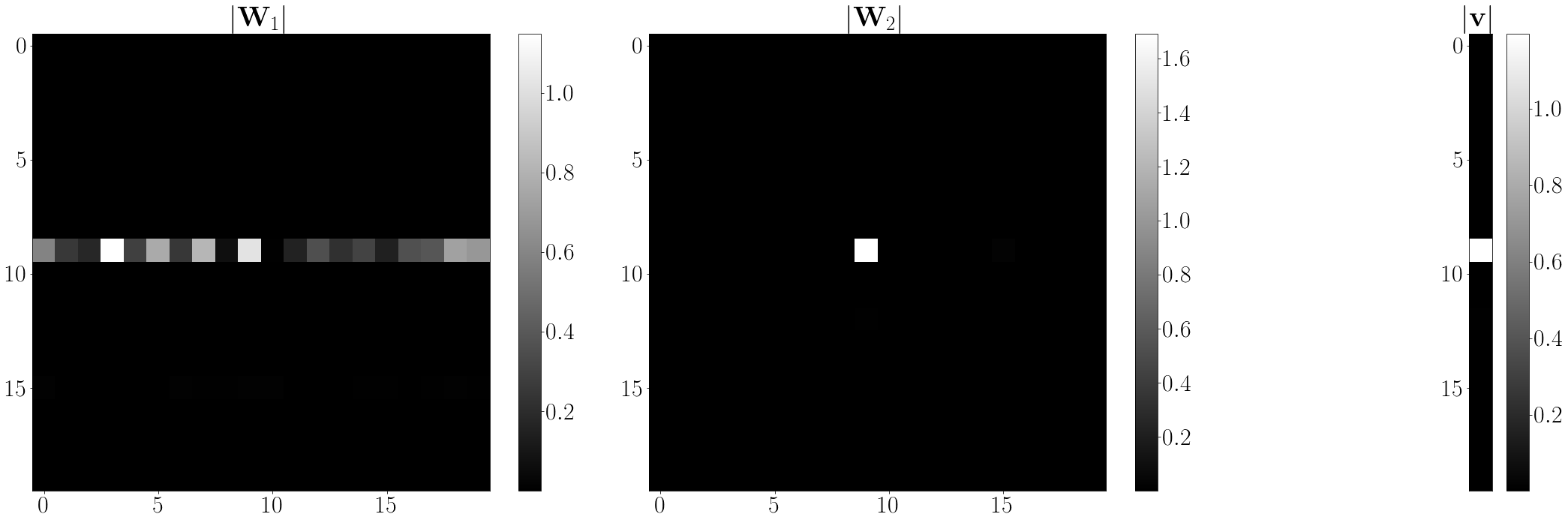}
		\\ (c) Weights at iteration $i_2$
	\end{minipage}
	\caption{We train a three-layer neural network whose output is $\rvv^\top\sigma(\rmW_2\sigma(\rmW_1\rvx)) ,$ where $\sigma(x) = x^2$ (\textbf{square activation}), and $\rvv\in \mathbb{R}^{20},\rmW_2,\rmW_1  \in \mathbb{R}^{20 \times 20}$ are the trainable weights. The sparsity structure is preserved upon escaping from the origin.}
	\label{fig:3_layer_nn}
\end{figure}
\begin{figure}[htbp]
	\centering
	\begin{minipage}[c]{0.3\textwidth}
		\centering
		\includegraphics[width=\textwidth]{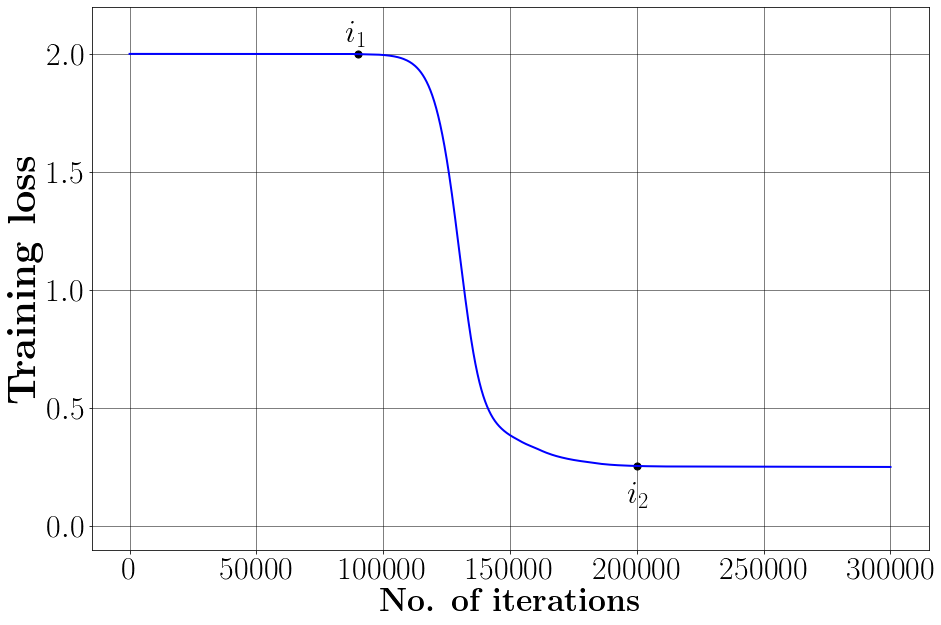}
		\\ (a) Evolution of training loss with iterations
	\end{minipage}
	\begin{minipage}[c]{0.6\textwidth}
		\centering
		\includegraphics[height = 0.35\textwidth, width=0.5\textwidth]{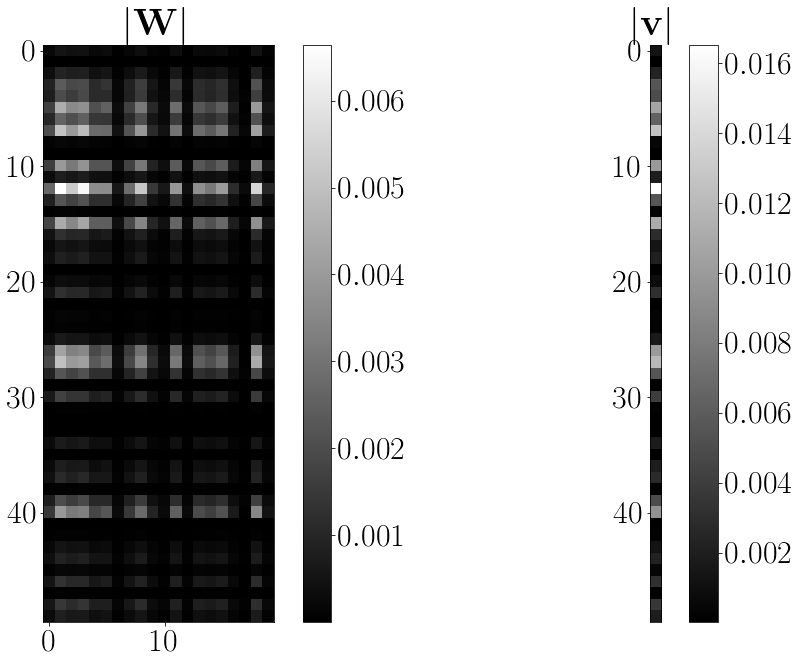}
		\\ (b) Weights at iteration $i_1$ \\[1ex]
		\includegraphics[height = 0.35\textwidth, width=0.5\textwidth]{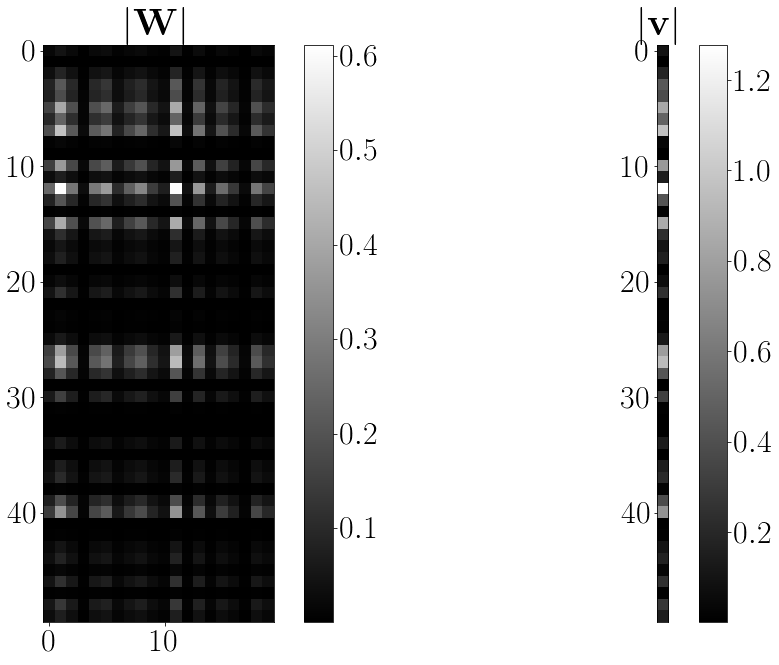}
		\\ (c) Weights at iteration $i_2$
	\end{minipage}
	\caption{We train a two-layer neural network whose output is $\rvv^\top\sigma(\rmW\rvx),$ where $\sigma(x) = \max(x,0)$ (\textbf{ReLU activation}), and  $\rvv\in \mathbb{R}^{50},\rmW  \in \mathbb{R}^{50 \times 20}$ are the trainable weights. As in the previous example, the sparsity structure is preserved upon escaping from the origin.}
	\label{fig:2_layer_rl}
\end{figure}

\begin{figure}[htbp]
	\centering
	\begin{minipage}[c]{0.3\textwidth}
		\centering
		\includegraphics[width=\textwidth]{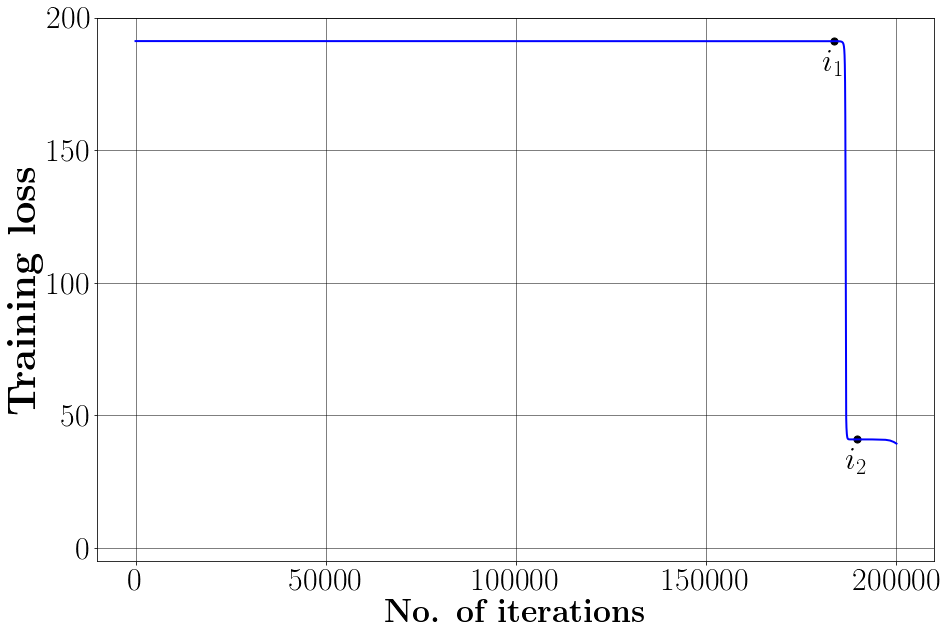}
		\\ (a) Evolution of training loss with iterations
	\end{minipage}\hspace{1cm}%
	\begin{minipage}[c]{0.6\textwidth}
		\centering
		\includegraphics[width=\textwidth]{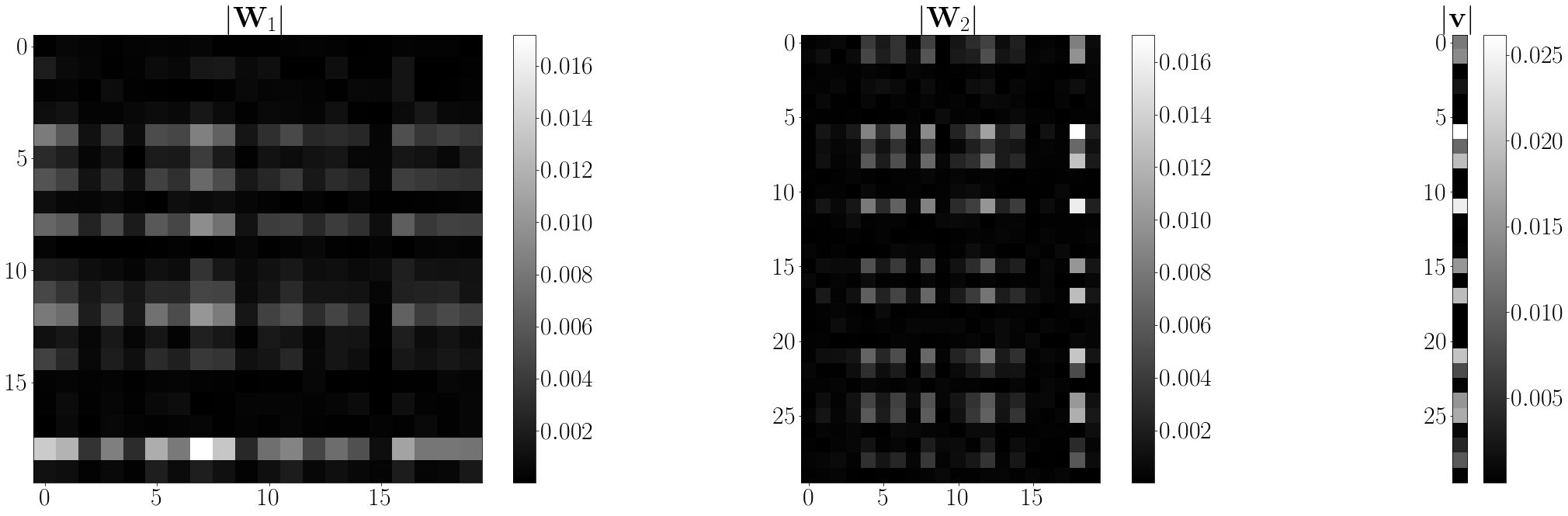}
		\\ (b) Weights at iteration $i_1$ \\[1ex]
		\includegraphics[width=\textwidth]{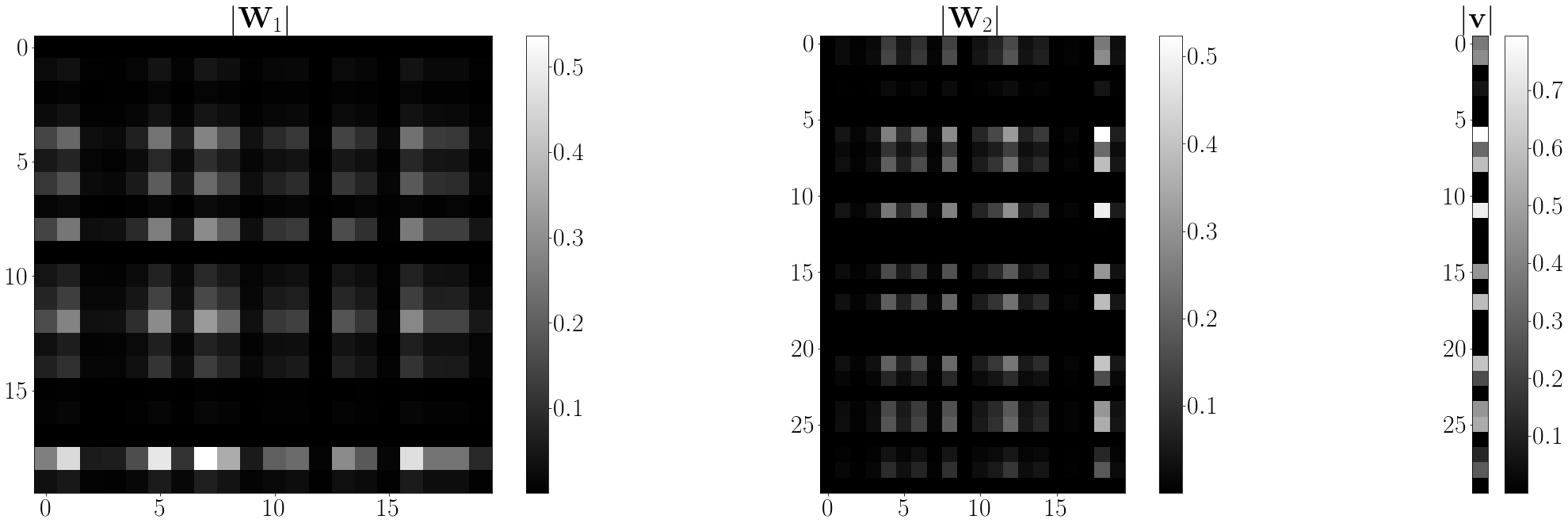}
		\\ (c) Weights at iteration $i_2$
	\end{minipage}
	\caption{We train a three-layer neural network whose output is $\rvv^\top\sigma(\rmW_2\sigma(\rmW_1\rvx)) ,$ where $\sigma(x) = \max(x,0)$ (\textbf{ReLU activation}), and $\rvv\in \mathbb{R}^{30},\rmW_2 \in \mathbb{R}^{30 \times 20} ,\rmW_1  \in \mathbb{R}^{20 \times 20}$ are the trainable weights.  This highlights the preservation of sparsity structure after escaping from the origin even for deeper ReLU networks.}
	\label{fig:3_layer_rl}
\end{figure}
\section{Conclusion and Future Directions}\label{sec_conc}
This paper studied the gradient flow dynamics of homogeneous neural networks which are trained with small initialization. We showed that for all sufficiently small initializations, the gradient flow escapes along the same path as the gradient flow with small initial weights and initial direction along a KKT point of the NCF. Next, we studied the gradient flow dynamics of feed-forward homogeneous neural networks and showed that the sparsity structure that emerges among the weights before escaping the origin is preserved post-escape.  

Our work describes a segment of the gradient flow dynamics beyond the origin, specifically up to the first saddle point encountered by gradient flow after escaping the origin. Understanding gradient flow dynamics beyond that saddle point is an important direction for future research; \Cref{sec:bey_saddle} provides some observations in this regard. Also, our results hold for neural networks with locally Lipschitz gradient, which excludes ReLU neural networks. Extending our results for such neural networks would be a valuable next step.
\acks{The authors graciously acknowledge resource support from the Minnesota Supercomputing Institute (MSI), and financial support in the form of gift funding from InterDigital.}\\
\newpage

\appendix
\noindent \textbf{Organization of the Appendix: } Appendix \ref{appendix_key} contains key lemmata useful to prove our main results. The proofs omitted from Section \ref{sec_2hm}, \ref{sec_Lhm} and \ref{sec_implications} are in Appendix \ref{appendix_2hm},  \ref{appendix_Lhm} and \ref{appendix_imp}, respectively. Appendix \ref{additonal_results} contains additional results and discussion, while \Cref{sec:add_exp} reports further experiments. Finally, \Cref{sec:bey_saddle} presents empirical observations on the training dynamics after escaping the first saddle point.
\section{Key Lemmata}\label{appendix_key}
The following lemma, also known as Euler's theorem, states two important properties of homogeneous functions \citep[Theorem B.2]{Lyu_ib}.
\begin{lemma}\label{euler_thm}
	Let $F:\sR^d\rightarrow \sR$ be locally Lipschitz, differentiable and L-positively homogeneous for some $L>0$. Then,
	\begin{itemize}
		\item For any $\rvx\in\sR^d$ and $c\geq 0$, $\nabla F(c\rvx) = c^{L-1}F(\rvx)$.
		\item For any $\rvx\in\sR^d$ , $\rvx^\top\nabla F(\rvx)= LF(\rvx).$
	\end{itemize}
\end{lemma}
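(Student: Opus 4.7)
The plan is to derive both identities by differentiating the defining relation $F(c\rvx)=c^L F(\rvx)$ in two different ways: once treating $\rvx$ as the variable (to obtain the first part, which, reading the lemma charitably, I interpret as $\nabla F(c\rvx)=c^{L-1}\nabla F(\rvx)$ since the stated right-hand side $c^{L-1}F(\rvx)$ would be a scalar while the left-hand side is a vector), and once treating $c$ as the variable (to obtain Euler's classical identity).

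For the first part, fix $c\geq 0$ and regard $F(c\rvx)=c^L F(\rvx)$ as an identity of functions of $\rvx$ on $\sR^d$. Since $F$ is everywhere differentiable, I would apply the gradient in $\rvx$ to both sides. By the chain rule the left-hand side yields $c\,\nabla F(c\rvx)$, and the right-hand side yields $c^L\nabla F(\rvx)$, so
\begin{equation*}
c\,\nabla F(c\rvx)=c^L\nabla F(\rvx).
\end{equation*}
For $c>0$ I divide by $c$ to get $\nabla F(c\rvx)=c^{L-1}\nabla F(\rvx)$. The boundary case $c=0$ splits: when $L>1$, both sides vanish as $c\downarrow 0$ because $\nabla F$ is continuous on a neighborhood of the origin (a consequence of differentiability of $F$ together with local Lipschitzness, which bounds $\nabla F$ locally); when $L=1$ the relation becomes $\nabla F(c\rvx)=\nabla F(\rvx)$, and the $c=0$ case follows by taking a limit $c\downarrow 0$ and again invoking continuity of $\nabla F$ near the origin.

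For the second part, fix $\rvx\in\sR^d$ and now regard $F(c\rvx)=c^L F(\rvx)$ as an identity of functions of the scalar $c\geq 0$. Differentiating in $c$, the left-hand side gives $\rvx^\top\nabla F(c\rvx)$ by the chain rule, while the right-hand side gives $Lc^{L-1}F(\rvx)$. Evaluating at $c=1$ yields immediately
\begin{equation*}
\rvx^\top\nabla F(\rvx)=LF(\rvx),
\end{equation*}
which is the second claim.

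I do not anticipate a genuine obstacle; the result is the classical Euler identity for homogeneous functions and both derivations are elementary applications of the chain rule. The only point worth being careful about is justifying the differentiation manipulations at degenerate points: at $c=0$ in the first identity, and for $L<1$ where $c^{L-1}$ blows up as $c\downarrow 0$ so the identity is only meaningful for $c>0$. Local Lipschitzness of $F$ together with its differentiability rules out pathological behavior and ensures $\nabla F$ is well-defined and locally bounded, so the limiting arguments above go through.
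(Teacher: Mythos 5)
The paper does not supply its own proof of this lemma; it simply cites \citet[Theorem B.2]{Lyu_ib}, so there is no in-paper argument to compare against. Your two-line derivation (differentiate $F(c\rvx)=c^L F(\rvx)$ once in $\rvx$ at fixed $c$, once in $c$ at fixed $\rvx$) is the standard Euler argument, and it establishes both claims correctly. You are also right that the first bullet as printed has a typo and must read $\nabla F(c\rvx)=c^{L-1}\nabla F(\rvx)$.

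One technical slip is worth flagging. In the $c=0$ case you appeal to continuity of $\nabla F$ in a neighborhood of the origin, claiming it follows from differentiability plus local Lipschitzness. That inference is false in general: those hypotheses only give that $\nabla F$ is locally \emph{bounded}, not continuous (the usual example $x\mapsto x^2\sin(1/x)$ is differentiable and locally Lipschitz with a discontinuous derivative at $0$). What you actually need at $c=0$ is only $\nabla F(\vzero)=\vzero$ when $L>1$, and this follows more directly: since $F$ is locally Lipschitz it is bounded on the unit sphere, so $|F(\rvh)|\le M\|\rvh\|_2^L=o(\|\rvh\|_2)$ for $L>1$; combined with $F(\vzero)=0$ and differentiability at $\vzero$, the first-order expansion forces $\nabla F(\vzero)=\vzero$. (For $L=1$, a $1$-homogeneous function differentiable at $\vzero$ is necessarily linear, so $\nabla F$ is constant and the $c=0$ case is trivial; for $0<L<1$, differentiability at $\vzero$ forces $F\equiv 0$.) This is an easy repair, and the paper only invokes the lemma with $L\ge 2$, but as written the stated justification for the boundary case does not follow from the hypotheses.
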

Some properties of the second-order KKT points of the constrained NCF are derived next. 
\begin{lemma}\label{second_kkt}
	Suppose $\mathcal{H}$ is $L-$homogeneous, where $L\geq 2$.  Let $\rvw_*$ be a $\Delta-$second-order positive KKT point of the constrained NCF in \cref{ncf_gn_const}. Then,
	\begin{equation}
	\nabla\mathcal{N}(\rvw_*) = L\mathcal{N}(\rvw_*)\rvw_*, L\mathcal{N}(\rvw_*)-\rvb^\top\nabla^2\mathcal{N}(\rvw_*)\rvb \geq \Delta, \forall \rvb \in \rvw_*^\perp, \text{ and }
	\label{1_kkt}
	\end{equation}
	\begin{equation}
	\|\nabla^2\mathcal{N}(\rvw_*)\|_2\leq L(L-1)\mathcal{N}(\rvw_*).
	\label{kkt_op_norm}
	\end{equation}
\end{lemma}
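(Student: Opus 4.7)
The plan is to read off \eqref{1_kkt} by substituting Euler's theorem into the first- and second-order KKT conditions for spherical maximization, and then to obtain \eqref{kkt_op_norm} by pinning down the action of $\nabla^2\mathcal{N}(\rvw_*)$ on $\rvw_*$ through a second application of homogeneity, which isolates the tangent block of the Hessian to be controlled by the second-order condition.

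First, the first-order KKT condition for the spherical maximization gives $\nabla\mathcal{N}(\rvw_*)=2\lambda_*\rvw_*$ for some $\lambda_*$. Taking the inner product with the unit vector $\rvw_*$ and using Euler's identity $\rvw_*^\top\nabla\mathcal{N}(\rvw_*)=L\mathcal{N}(\rvw_*)$ from Lemma \ref{euler_thm}, one reads off $2\lambda_*=L\mathcal{N}(\rvw_*)$, which rearranges to the first part of \eqref{1_kkt}. Substituting the same identification of $2\lambda_*$ into the $\Delta$-second-order KKT condition $2\lambda_*-\rvb^\top\nabla^2\mathcal{N}(\rvw_*)\rvb\ge\Delta$ gives the second part of \eqref{1_kkt} immediately.

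For \eqref{kkt_op_norm}, note that $\nabla\mathcal{N}$ is $(L-1)$-positively homogeneous, so differentiating $\nabla\mathcal{N}(c\rvw_*)=c^{L-1}\nabla\mathcal{N}(\rvw_*)$ in $c$ at $c=1$ yields $\nabla^2\mathcal{N}(\rvw_*)\rvw_*=(L-1)\nabla\mathcal{N}(\rvw_*)=L(L-1)\mathcal{N}(\rvw_*)\rvw_*$. Thus $\rvw_*$ is an eigenvector of the symmetric Hessian with eigenvalue $L(L-1)\mathcal{N}(\rvw_*)$, and symmetry forces $\rvw_*^\top\nabla^2\mathcal{N}(\rvw_*)\rvb=0$ for every $\rvb\in\rvw_*^\perp$. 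Consequently the Hessian decomposes orthogonally as $L(L-1)\mathcal{N}(\rvw_*)\rvw_*\rvw_*^\top$ plus an operator supported on $\rvw_*^\perp$; on that tangent block, the bound from \eqref{1_kkt} gives quadratic-form values at most $L\mathcal{N}(\rvw_*)-\Delta\le L(L-1)\mathcal{N}(\rvw_*)$, since $L\ge 2$ and $\mathcal{N}(\rvw_*),\Delta>0$. Assembling the two blocks yields \eqref{kkt_op_norm}.

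The first two identities are routine substitutions using Euler's theorem and the KKT definitions. The main obstacle is \eqref{kkt_op_norm}: the second-order KKT condition supplies only a one-sided (upper) bound on the tangential Hessian block, while the spectral norm a priori involves two-sided control of the eigenvalues. The block decomposition in the third step is what makes the plan work, as the $\rvw_*$ direction saturates the bound exactly through Euler's theorem, so the only delicate point is reconciling the remaining tangential eigenvalues with the absolute-value nature of the spectral norm.
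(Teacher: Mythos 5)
Your argument for \eqref{1_kkt} and the identification $\nabla^2\mathcal{N}(\rvw_*)\rvw_* = L(L-1)\mathcal{N}(\rvw_*)\rvw_*$ reproduce the paper's own proof exactly: the paper invokes Lemma \ref{euler_thm} on the $(L-1)$-homogeneous map $\nabla\mathcal{N}$, you differentiate the scaling law $\nabla\mathcal{N}(c\rvw_*)=c^{L-1}\nabla\mathcal{N}(\rvw_*)$ in $c$ at $c=1$; these are the same calculation. This part is correct and matches the source.

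On \eqref{kkt_op_norm} you correctly flag the tension---the second-order KKT condition yields only the upper bound $\rvb^\top\nabla^2\mathcal{N}(\rvw_*)\rvb \leq L\mathcal{N}(\rvw_*)-\Delta$ on the tangential quadratic form, while the spectral norm of a symmetric matrix is the largest eigenvalue \emph{in absolute value}---but then you step over it: ``assembling the two blocks'' presupposes the lower bound $\rvb^\top\nabla^2\mathcal{N}(\rvw_*)\rvb \geq -L(L-1)\mathcal{N}(\rvw_*)$ for $\rvb\in\rvw_*^\perp$, which is never established. Nothing in the hypotheses supplies it: at a constrained maximizer the curvature of $\mathcal{N}$ in tangential directions may be arbitrarily strongly negative. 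For instance, with $L=2$ take $\mathcal{N}(\rvw)=\rvw^\top M\rvw$ for a symmetric $M$ with top eigenvalue $1$ and a second eigenvalue $-100$; the top eigenvector is a $\Delta$-second-order positive KKT point with $\mathcal{N}(\rvw_*)=1$, yet $\|\nabla^2\mathcal{N}(\rvw_*)\|_2=\|2M\|_2=200 > 2\mathcal{N}(\rvw_*)$. The paper's own proof contains exactly the same unaddressed jump, passing directly from the one-sided tangential bound to ``Hence, $\|\nabla^2\mathcal{N}(\rvw_*)\|_2\leq L(L-1)\mathcal{N}(\rvw_*)$'' without comment, so you have faithfully reproduced the source's reasoning rather than misread it; but having named the delicate point, your proof needed either to close it or to weaken the conclusion to the one-sided eigenvalue bound $\lambda_{\max}(\nabla^2\mathcal{N}(\rvw_*))\leq L(L-1)\mathcal{N}(\rvw_*)$, which is in fact all that the Gr\"onwall-type estimates in Lemmas \ref{init_dl3_2hm}, \ref{init_align_2hm}, and \ref{init_dl3_Lhm} actually require.
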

\begin{proof}
	For \cref{ncf_gn_const}, the Lagrangian is
	\begin{equation}
	L(\rvw, \lambda) = \mathcal{N}(\rvw) + \lambda(1 - \|\rvw\|_2^2).
	\end{equation}
	Since $\rvw_*$ is a second-order KKT point, $\|\rvw_*\|_2^2 = 1$, 
	\begin{equation}
	\mathbf{0} = \nabla_\rvw L(\rvw_*, \lambda_*) = \nabla\mathcal{N}(\rvw_*) - 2\lambda_*\rvw_*,
	\label{1_kkt_pf}
	\end{equation}
	and
	\begin{equation}
	\Delta\leq -\rvb^\top\nabla^2_\rvw L(\rvw_*, \lambda_*)\rvb =2\lambda_* -  \rvb^\top\nabla^2 \mathcal{N}(\rvw_*)\rvb, \forall \rvb \in \rvw_*^\perp.
	\label{2_kkt_pf}
	\end{equation}
	Multiplying \cref{1_kkt_pf} by $\rvw_*^\top$ from the left, and using Lemma \ref{euler_thm} and $\|\rvw_*\|_2^2 = 1$, we get that
	\begin{equation*}
	\mathbf{0} = L\mathcal{N}(\rvw_*) - 2\lambda_*.
	\end{equation*}
	Using the above value of $\lambda_*$ in \cref{1_kkt_pf} and \cref{2_kkt_pf}  gives us \cref{1_kkt}. Next, since $\nabla \mathcal{N}(\rvw)$ is $(L-1)$-homogeneous, from Lemma \ref{euler_thm}, we have
	\begin{align*}
	\nabla^2 \mathcal{N}(\rvw_*)\rvw_* = (L-1)\nabla \mathcal{N}(\rvw_*).
	\end{align*}
	Since $\nabla \mathcal{N}(\rvw_*) = L\mathcal{N}(\rvw_*)\rvw_*$, the above equation implies that $\rvw_*$ is an eigenvector of $\nabla^2 \mathcal{N}(\rvw_*)$ with eigenvalue $L(L-1)\mathcal{N}(\rvw_*).$ Now, since $\nabla^2 \mathcal{N}(\rvw_*)$ is a symmetric matrix, other eigenvectors would be orthogonal to $\rvw_*$. However, for all $\rvb \in \rvw_*^\perp$, we know
	\begin{align*}
	\rvb^\top\nabla^2\mathcal{N}(\rvw_*)\rvb \leq L\mathcal{N}(\rvw_*) - \Delta \leq L(L-1)\mathcal{N}(\rvw_*).
	\end{align*}
	Hence, $\|\nabla^2 \mathcal{N}(\rvw_*)\|_2\leq L(L-1) \mathcal{N}(\rvw_*).$
\end{proof}
\subsection{Local Behavior at KKT point}
In this subsection, we derive some important inequalities satisfied by the weights near a $\Delta-$second-order KKT point of the constrained NCF. 
\begin{lemma}\label{approx_1}
	Suppose $\mathcal{H}$ is $L$-homogeneous, where $L\geq 2$. Let $\rvw_*$ be a $\Delta-$second-order positive KKT point of \cref{ncf_gn_const}, then for all sufficiently small $\epsilon>0$ and $\rvb\in \rvw_*^\perp$, we have
	\begin{align}
	&\mathcal{N}(\rvw_*+\epsilon \rvb) = \mathcal{N}(\rvw_*) + \frac{\epsilon^2}{2}\rvb^\top\nabla^2\mathcal{N}(\rvw_*)\rvb + o(\epsilon^2), \mbox{ and }\label{app_1}\\
	&\nabla\mathcal{N}(\rvw_*+\epsilon \rvb)^\top\rvw_* = L\mathcal{N}(\rvw_*) + \frac{(L-2)\epsilon^2}{2}\rvb^\top\nabla^2\mathcal{N}(\rvw_*)\rvb + o(\epsilon^2).\label{app_2}
	\end{align}
\end{lemma}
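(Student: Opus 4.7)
The plan is to prove the two expansions separately, with the key idea of using Euler's identity to avoid needing third-order differentiability for the second expansion.

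For \cref{app_1}, I would simply apply the second-order Taylor expansion of $\mathcal{N}$ at $\rvw_*$, which is justified by the twice-continuous differentiability of $\mathcal{N}$ near $\rvw_*$ (implicit in the definition of a $\Delta$-second-order KKT point). This gives
\begin{equation*}
\mathcal{N}(\rvw_*+\epsilon\rvb) = \mathcal{N}(\rvw_*) + \epsilon\nabla\mathcal{N}(\rvw_*)^\top\rvb + \frac{\epsilon^2}{2}\rvb^\top\nabla^2\mathcal{N}(\rvw_*)\rvb + o(\epsilon^2).
\end{equation*}
By the first-order KKT condition in Lemma \ref{second_kkt}, $\nabla\mathcal{N}(\rvw_*) = L\mathcal{N}(\rvw_*)\rvw_*$, and since $\rvb\in\rvw_*^\perp$, the linear term vanishes, giving \cref{app_1} immediately.

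For \cref{app_2}, the apparent obstacle is that a direct Taylor expansion of $\nabla\mathcal{N}(\rvw_*+\epsilon\rvb)^\top\rvw_*$ up to order $\epsilon^2$ would require third-order differentiability of $\mathcal{N}$, which is not assumed. The main trick is to sidestep this using Euler's theorem (Lemma \ref{euler_thm}) applied at $\rvw_*+\epsilon\rvb$:
\begin{equation*}
\nabla\mathcal{N}(\rvw_*+\epsilon\rvb)^\top(\rvw_*+\epsilon\rvb) = L\mathcal{N}(\rvw_*+\epsilon\rvb),
\end{equation*}
so that
\begin{equation*}
\nabla\mathcal{N}(\rvw_*+\epsilon\rvb)^\top\rvw_* = L\mathcal{N}(\rvw_*+\epsilon\rvb) - \epsilon\,\nabla\mathcal{N}(\rvw_*+\epsilon\rvb)^\top\rvb.
\end{equation*}
The first term on the right is handled by \cref{app_1} and equals $L\mathcal{N}(\rvw_*) + \tfrac{L\epsilon^2}{2}\rvb^\top\nabla^2\mathcal{N}(\rvw_*)\rvb + o(\epsilon^2)$. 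For the second term, only a first-order expansion of $\nabla\mathcal{N}$ is needed: $\nabla\mathcal{N}(\rvw_*+\epsilon\rvb) = \nabla\mathcal{N}(\rvw_*) + \epsilon\,\nabla^2\mathcal{N}(\rvw_*)\rvb + o(\epsilon)$; taking the inner product with $\rvb$ and noting $\nabla\mathcal{N}(\rvw_*)^\top\rvb = L\mathcal{N}(\rvw_*)\rvw_*^\top\rvb = 0$ gives $\epsilon\,\nabla\mathcal{N}(\rvw_*+\epsilon\rvb)^\top\rvb = \epsilon^2\rvb^\top\nabla^2\mathcal{N}(\rvw_*)\rvb + o(\epsilon^2)$. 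Subtracting produces $L\mathcal{N}(\rvw_*) + \tfrac{(L-2)\epsilon^2}{2}\rvb^\top\nabla^2\mathcal{N}(\rvw_*)\rvb + o(\epsilon^2)$, which is exactly \cref{app_2}.

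So the proof is essentially a clean two-step calculation: a standard second-order Taylor expansion and a first-order Taylor expansion, glued together by Euler's identity. The only thing requiring any thought is spotting that Euler lets one trade the need for third-order smoothness for a second copy of \cref{app_1}; after that the orthogonality $\rvb\in\rvw_*^\perp$ combined with $\nabla\mathcal{N}(\rvw_*)\parallel\rvw_*$ does all the remaining cancellation.
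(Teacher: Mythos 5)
Your proof is correct and matches the paper's argument step for step: second-order Taylor for \cref{app_1} with the linear term killed by $\nabla\mathcal{N}(\rvw_*)\parallel\rvw_*$ and $\rvb\perp\rvw_*$, then Euler's identity to rewrite $\nabla\mathcal{N}(\rvw_*+\epsilon\rvb)^\top\rvw_*$ as $L\mathcal{N}(\rvw_*+\epsilon\rvb) - \epsilon\nabla\mathcal{N}(\rvw_*+\epsilon\rvb)^\top\rvb$, handled by \cref{app_1} and a first-order expansion of the gradient respectively. Your remark that Euler's identity trades third-order smoothness for a second use of \cref{app_1} is exactly the right way to read the argument.
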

\begin{proof}
	Using Taylor's theorem, for all sufficiently small $\epsilon>0$, we have
	\begin{equation*}
	\mathcal{N}(\rvw_*+\epsilon \rvb) = \mathcal{N}(\rvw_*) + \epsilon\rvb^\top\nabla\mathcal{N}(\rvw_*) + \frac{\epsilon^2}{2}\rvb^\top\nabla^2\mathcal{N}(\rvw_*)\rvb + o(\epsilon^2).
	\end{equation*}
	From Lemma \ref{second_kkt}, we know $\nabla\mathcal{N}(\rvw_*)^\top\rvb = L\mathcal{N}(\rvw_*)\rvw_*^\top\rvb  = 0$. Using this fact in the above equation gives us \cref{app_1}. Next, note that
	\begin{align}
	\nabla\mathcal{N}(\rvw_*+\epsilon \rvb)^\top\rvw_* &= \nabla\mathcal{N}(\rvw_*+\epsilon \rvb)^\top(\rvw_*+\epsilon\rvb) - \epsilon\nabla\mathcal{N}(\rvw_*+\epsilon \rvb)^\top\rvb \nonumber\\
	&= L\mathcal{N}(\rvw_*+\epsilon \rvb) - \epsilon\nabla\mathcal{N}(\rvw_*+\epsilon \rvb)^\top\rvb,
	\label{gn_pf}
	\end{align}
	where the last equality follows from Lemma \ref{euler_thm}. Now, using \cref{app_1}, we have
	\begin{equation*}
	L\mathcal{N}(\rvw_*+\epsilon \rvb) = L\mathcal{N}(\rvw_*) + \frac{L\epsilon^2}{2}\rvb^\top\nabla^2\mathcal{N}(\rvw_*)\rvb + o(\epsilon^2).
	\end{equation*}
	From Taylor's theorem, for all sufficiently small $\epsilon>0$, we have
	\begin{equation*}
	\nabla\mathcal{N}(\rvw_*+\epsilon \rvb)^\top\rvb  = \nabla\mathcal{N}(\rvw_*)^\top\rvb + \epsilon\rvb^\top\nabla^2\mathcal{N}(\rvw_*)\rvb + o(\epsilon) = \epsilon\rvb^\top\nabla^2\mathcal{N}(\rvw_*)\rvb + o(\epsilon),
	\end{equation*}
	where we used $\nabla\mathcal{N}(\rvw_*)^\top\rvb  = 0$. Combining the above two equalities with \cref{gn_pf} gives us
	\begin{equation*}
	\nabla\mathcal{N}(\rvw_*+\epsilon \rvb)^\top\rvw_* = L\mathcal{N}(\rvw_*) + \frac{(L-2)\epsilon^2}{2}\rvb^\top\nabla^2\mathcal{N}(\rvw_*)\rvb + o(\epsilon^2),
	\end{equation*}
	which completes the proof.
\end{proof}
\begin{lemma}\label{inn_pd_ineq}
	Suppose $\mathcal{H}$ is $L$-homogeneous, where $L\geq 2$. Let $\rvw_*$ be a $\Delta-$second-order positive KKT point of \cref{ncf_gn_const}, then there exists a sufficiently small $\gamma>0$ such that for any unit-norm vector $\rvw$ satisfying $\rvw^\top\rvw_* \geq 1 - \gamma$, and for all $t_2\geq t_1\geq 0$, the following holds:
	\begin{equation}
	\left(\nabla \mathcal{N}(t_1\rvw) -  \nabla \mathcal{N}(t_2\rvw_*)\right)^\top(t_1\rvw - t_2\rvw_*) - L(L-1)\mathcal{N}(\rvw_*)t_2^{L-2}\|t_1\rvw - t_2\rvw_*\|_2^2\leq 0 .
	\label{inn_pd1}
	\end{equation}
\end{lemma}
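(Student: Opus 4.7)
The plan is to express the left-hand side via the fundamental theorem of calculus and then control the resulting Hessian integrand using the structural information at $\rvw_*$ provided by Lemma~\ref{second_kkt}. Define $\rva := t_1\rvw - t_2\rvw_*$ and $\rvv(s) := (1-s)t_2\rvw_* + st_1\rvw$ for $s \in [0,1]$. Assuming the smoothness near $\rvw_*$ that is implicit in the definition of the second-order KKT condition, one has the identity $(\nabla\mathcal{N}(t_1\rvw) - \nabla\mathcal{N}(t_2\rvw_*))^\top \rva = \int_0^1 \rva^\top \nabla^2\mathcal{N}(\rvv(s))\rva\, ds$. Applying Lemma~\ref{euler_thm} twice, $\nabla^2\mathcal{N}$ is $(L-2)$-homogeneous, so writing $r(s) := \|\rvv(s)\|_2$ and $\hat\rvv(s) := \rvv(s)/r(s)$ gives $\nabla^2\mathcal{N}(\rvv(s)) = r(s)^{L-2}\nabla^2\mathcal{N}(\hat\rvv(s))$; the triangle inequality and $t_1 \le t_2$ yield $r(s) \le (1-s)t_2 + st_1 \le t_2$, hence $r(s)^{L-2} \le t_2^{L-2}$ for $L \ge 2$.

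The core step is to establish, for $\gamma$ sufficiently small, the pointwise bound $\rva^\top \nabla^2\mathcal{N}(\hat\rvv(s))\rva \le L(L-1)\mathcal{N}(\rvw_*)\|\rva\|_2^2$ uniformly in $s \in [0,1]$. At $\hat\rvv(s) = \rvw_*$, decompose $\rva = c\rvw_* + \rvd$ with $\rvd \in \rvw_*^\perp$; since Lemma~\ref{second_kkt} guarantees that $\rvw_*$ is an eigenvector of $\nabla^2\mathcal{N}(\rvw_*)$ with eigenvalue $L(L-1)\mathcal{N}(\rvw_*)$ and that every unit direction in $\rvw_*^\perp$ has Rayleigh quotient at most $L\mathcal{N}(\rvw_*) - \Delta$, one computes $\rva^\top \nabla^2\mathcal{N}(\rvw_*)\rva \le L(L-1)\mathcal{N}(\rvw_*)\|\rva\|_2^2 - [L(L-2)\mathcal{N}(\rvw_*) + \Delta]\|\rvd\|_2^2$, which is the target bound with strictly positive slack $[L(L-2)\mathcal{N}(\rvw_*) + \Delta]\|\rvd\|_2^2$.

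For $\hat\rvv(s)$ merely close to $\rvw_*$, one extends this via local Lipschitz continuity of $\nabla^2\mathcal{N}$ on the unit sphere near $\rvw_*$: the explicit form $\rvv(s) = ((1-s)t_2 + st_1\alpha)\rvw_* + st_1\rvb$, where $\alpha := \rvw^\top\rvw_*$ and $\rvb := \rvw - \alpha\rvw_*$, shows that $\|\hat\rvv(s) - \rvw_*\|_2$ is controlled by $\sqrt{1-\alpha^2}$, which is small when $\alpha \ge 1 - \gamma$. The resulting perturbation of the Hessian, bounded by $O(\|\hat\rvv(s) - \rvw_*\|_2)\|\rva\|_2^2$, must be absorbed by the slack; once this holds pointwise, integrating in $s$ and using $r(s)^{L-2} \le t_2^{L-2}$ yields the desired inequality.

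The main obstacle is this absorption step: when $t_1 \ll t_2$ the slack $\sim t_1^2(1-\alpha^2)$ is dwarfed by a crude continuity error $\sim t_2^2\sqrt{1-\alpha^2}$, so a purely local Lipschitz argument is insufficient. The remedy is to couple the continuity bound with the homogeneity factor $r(s)^{L-2}$, which suppresses the integrand precisely on the part of the trajectory where $\hat\rvv(s)$ is noticeably displaced from $\rvw_*$ (namely $s$ close to $1$, where $r(s) \approx t_1 \ll t_2$); for $L > 2$ this damping closes the estimate. For $L = 2$, $\nabla^2\mathcal{N}$ is $0$-homogeneous and constant along rays, so the perturbation only enters through the angular deviation of $\hat\rvv(s)$, and a sharper perturbation bound restricted to the $\rvw_*^\perp$ block, combined with the gap $\Delta$, closes the estimate in that case as well.
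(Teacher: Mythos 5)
Your via–fundamental-theorem-of-calculus route is genuinely different from the paper's argument, and you correctly put your finger on the hard part: the slack $(L(L-2)\mathcal{N}(\rvw_*)+\Delta)\|\rvd\|_2^2$ scales like $t_1^2(1-\alpha^2)$, whereas a crude Lipschitz bound on the Hessian perturbation scales like $\|\rva\|_2^2\,\|\hat\rvv(s)-\rvw_*\|_2 \sim t_2^2\sqrt{1-\alpha^2}$, which is unboundedly larger in the regime $t_1\ll t_2$. However, the proposed remedies are not established and, for $L=2$, the sketch cannot close.

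For $L>2$, your idea of trading the $\Delta$-slack for the gap $t_2^{L-2}-\int_0^1 r(s)^{L-2}\,ds$ is plausible when $t_1\ll t_2$ (then $\int_0^1 r(s)^{L-2}\,ds\approx t_2^{L-2}/(L-1)$ and the gap is macroscopic, of order $t_2^{L-2}$), and the $\Delta$-slack handles the opposite extreme $t_1\approx t_2$, where the radial gap collapses but $\|\rvd\|_2$ is comparable to $\|\rva\|_2$. What's missing is a single estimate that interpolates these two slack sources uniformly across $t_1/t_2\in(0,1]$; your sketch asserts that the damping "closes the estimate" without exhibiting the uniform bound.

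For $L=2$ there is no radial damping ($r(s)^{L-2}\equiv 1$), and the claimed "sharper perturbation bound restricted to the $\rvw_*^\perp$ block" does not suffice. Using Euler's theorem one can indeed sharpen the $\rvw_*$-block error to $\rvw_*^\top[\nabla^2\mathcal{N}(\hat\rvv(s))-\nabla^2\mathcal{N}(\rvw_*)]\rvw_* = O(\theta(s)^2)$ where $\theta(s)=\|\hat\rvv(s)-\rvw_*\|_2$, but even so, with $c\coloneqq t_1\alpha-t_2$ this term integrates to about $c^2\int_0^1\theta(s)^2\,ds \approx t_2^2\cdot t_1(1-\alpha^2)/t_2 = t_1t_2(1-\alpha^2)$, which dwarfs the slack $\Delta t_1^2(1-\alpha^2)$ by a factor $t_2/(\Delta t_1)$ when $t_1\ll t_2$. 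The obstruction is structural: any error attached to the $\rvw_*$-component of $\rva$ scales with $t_2^2$, while the $\Delta$-slack scales with $t_1^2$; no refinement of the continuity constant changes this scaling, so the absorption step genuinely fails.

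The paper avoids perturbation-of-Hessian estimates altogether. It normalizes by $t_1^L$, sets $t=t_2/t_1\ge 1$, and applies Lemma~\ref{euler_thm} to write the inner product in closed form as an explicit polynomial in $t$ together with $\mathcal{N}(\rvw)$ and $\nabla\mathcal{N}(\rvw)^\top\rvw_*$. For $L=2$ the resulting expression is affine in $t$ with slope $-[\nabla\mathcal{N}(\rvw)^\top\rvw_*-2\alpha\mathcal{N}(\rvw_*)]\le 0$ (by Lemma~\ref{lips_ineq}), so it is maximized at $t=1$, where Lemma~\ref{approx_1} and the $\Delta$-gap give nonpositivity. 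For $L>2$ it is a polynomial in $t$, analyzed by bounding $g'(t)$ past $t=L/(L-2)$. The radial homogeneity is thus used as an exact algebraic identity rather than as an approximate integral, which is what removes the absorption problem. If you want to pursue the FTC route, you would need to evaluate the integral exactly (which reproduces the Euler identities) rather than bound its integrand locally.
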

\begin{proof}
	The proof is trivial if $t_1 = 0$, since in that case
	\begin{equation*}
	\left(\nabla \mathcal{N}(t_1\rvw) -  \nabla \mathcal{N}(t_2\rvw_*)\right)^\top(t_1\rvw - t_2\rvw_*) = \nabla \mathcal{N}(t_2\rvw_*)^\top( t_2\rvw_*) = t_2^LL\mathcal{N}(\rvw_*),
	\end{equation*}
	and $L(L-1)\mathcal{N}(\rvw_*)t_2^{L-2}\|t_1\rvw - t_2\rvw_*\|_2^2 = L(L-1)\mathcal{N}(\rvw_*)t_2^{L} \geq t_2^LL\mathcal{N}(\rvw_*).$
	Therefore, we assume $t_1 > 0$. Now, define $t  = t_2/t_1$, then \cref{inn_pd1} can be written as
	\begin{equation}
	\left(\nabla \mathcal{N}(\rvw) -  \nabla \mathcal{N}(t\rvw_*)\right)^\top(\rvw - t\rvw_*)  - L(L-1)\mathcal{N}(\rvw_*)t^{L-2}\|\rvw - t\rvw_*\|_2^2\leq 0.
	\label{inn_pd1_re}
	\end{equation}
	We will show that the above inequality is true for all $t\geq 1$ to prove the lemma. We will proceed by considering two cases: $L = 2$ and $L> 2$.\\
	\textbf{Case 1 ($L=2$):} Since $\rvw_*$ is a $\Delta$-second-order KKT point, from Lemma \ref{second_kkt} we know
	\begin{equation}
	\nabla\mathcal{N}(\rvw_*) = 2\mathcal{N}(\rvw_*)\rvw_*, \text{ and } \Delta+\rvb^\top\nabla^2\mathcal{N}(\rvw_*)\rvb \leq 2\mathcal{N}(\rvw_*), \forall \rvb \in \rvw_*^\perp.
	\label{kkt_cond_pf}
	\end{equation}
	For the sake of brevity, we define $\lambda_* =  2\mathcal{N}(\rvw_*)$ and $b_* = \rvb^\top\nabla^2\mathcal{N}(\rvw_*)\rvb $ here onward in this proof. Using Lemma \ref{euler_thm}, homogeneity and $\|\rvw\|_2 = 1$, the LHS of \cref{inn_pd1_re} can be written as
	\begin{align*}
	&2\mathcal{N}(\rvw) + 2t^2\mathcal{N}(\rvw_*) -  t\nabla\mathcal{N}(\rvw_*)^\top\rvw - t\nabla \mathcal{N}(\rvw)^\top\rvw_* - 2\mathcal{N}(\rvw_*)(1+t^2 - 2t\rvw^\top \rvw_*)\\
	& =  2\mathcal{N}(\rvw) +\lambda_*t^2 - \lambda_*t\rvw_*^\top\rvw - t\nabla \mathcal{N}(\rvw)^\top\rvw_* - \lambda_*(1+t^2 - 2t\rvw^\top \rvw_*)\\
	& =  2\mathcal{N}(\rvw) - \lambda_* - t\nabla \mathcal{N}(\rvw)^\top\rvw_* + \lambda_*t\rvw_*^\top\rvw,
	\end{align*}
	where the first equality uses \cref{kkt_cond_pf}, and simplifying the first equality leads to the second. Since $\rvw$ has unit norm, we define $\rvw = (\rvw_*+\epsilon\rvb)/\sqrt{1+\epsilon^2}$, where $\rvb\in \rvw_*^\perp$. Putting this choice of $\rvw$ in the above equation gives us
	\begin{equation*}
	\frac{2\mathcal{N}(\rvw_*+\epsilon\rvb)}{(1+\epsilon^2)} - \lambda_* - \frac{t\nabla \mathcal{N}(\rvw_*+\epsilon\rvb)^\top\rvw_*}{\sqrt{1+\epsilon^2}} + \frac{\lambda_*t\rvw_*^\top(\rvw_*+\epsilon\rvb)}{\sqrt{1+\epsilon^2}}.
	\end{equation*}
	To complete the proof, we will show that the above quantity is less than zero for all sufficiently small $\epsilon>0$. Using Lemma \ref{approx_1} and $\|\rvw_*\|_2 = 1$, the above quantity becomes
	\begin{equation*}
	\frac{2\mathcal{N}(\rvw_*)+\epsilon^2b_* + o(\epsilon^2)}{(1+\epsilon^2)} - \lambda_* - \frac{2t\mathcal{N}(\rvw_*) + o(\epsilon^2)}{\sqrt{1+\epsilon^2}} + \frac{\lambda_*t}{\sqrt{1+\epsilon^2}},
	\end{equation*}
	which can be further simplified to get
	\begin{equation*}
	\frac{\lambda_*+\epsilon^2b_*}{(1+\epsilon^2)} - \lambda_*+o(\epsilon^2).
	\end{equation*}
	Since $1/(1+\epsilon^2) = 1-\epsilon^2 + o(\epsilon^2)$, the above quantity can be written as  
	\begin{equation*}
	\lambda_*(1-\epsilon^2)+\epsilon^2b_*  - \lambda_* + o(\epsilon^2)  = \epsilon^2(b_*-\lambda_*) + o(\epsilon^2) \leq -\Delta\epsilon^2 + o(\epsilon^2).
	\end{equation*}
	Since $\Delta>0$, the proof is complete.\\
	\textbf{Case 2 ($L>2$):} Since $\rvw_*$ is a $\Delta-$second-order KKT point, from Lemma \ref{second_kkt} we have
	\begin{equation}
	\nabla\mathcal{N}(\rvw_*) = L\mathcal{N}(\rvw_*)\rvw_*, \text{ and } \Delta+\rvb^\top\nabla^2\mathcal{N}(\rvw_*)\rvb \leq L\mathcal{N}(\rvw_*), \forall \rvb \in \rvw_*^\perp.
	\label{L_kkt_cond_pf}
	\end{equation}
	For the sake of brevity, we define $\lambda_* =  L\mathcal{N}(\rvw_*)$ and  $b_* = \rvb^\top\nabla^2\mathcal{N}(\rvw_*)\rvb$ here onward in this proof. Using Lemma \ref{euler_thm}, homogeneity and $\|\rvw\|_2 = 1$, the LHS of \cref{inn_pd1_re} can be written as
	\begin{align*}
	&L\mathcal{N}(\rvw) + Lt^L\mathcal{N}(\rvw_*) -  t^{L-1}\nabla \mathcal{N}(\rvw_*)^\top\rvw - t\nabla \mathcal{N}(\rvw)^\top\rvw_*\\
	& - t^{L-2}L(L-1)\mathcal{N}(\rvw_*)(1+t^2 - 2t\rvw^\top \rvw_*)\nonumber\\
	& =  L\mathcal{N}(\rvw) +\lambda_*t^L - \lambda_*t^{L-1}\rvw_*^\top\rvw - t\nabla \mathcal{N}(\rvw)^\top\rvw_* - \lambda_*t^{L-2}(L-1)(1+t^2 - 2t\rvw^\top \rvw_*)\\
	& = L\mathcal{N}(\rvw) - \lambda_*((L-1)t^{L-2} + (L-2)t^L) - t\nabla \mathcal{N}(\rvw)^\top\rvw_* + \lambda_*(2L-3)t^{L-1}\rvw_*^\top\rvw,
	\end{align*}
	where the first equality uses \cref{L_kkt_cond_pf}, and simplifying the first equality leads to the second. Define $\rvw = (\rvw_*+\epsilon\rvb)/\sqrt{1+\epsilon^2}$, where $\rvb\in \rvw_*^\perp$. Putting this choice of $\rvw$ in the above equation gives us
	\begin{equation*}
	\frac{L\mathcal{N}(\rvw_*+\epsilon\rvb)}{(1+\epsilon^2)^{L/2}} - \lambda_*((L-1)t^{L-2} + (L-2)t^L) - \frac{t\nabla \mathcal{N}(\rvw_*+\epsilon\rvb)^\top\rvw_*}{({1+\epsilon^2})^{(L-1)/2}} + \frac{(2L-3)\lambda_*t^{L-1}}{\sqrt{1+\epsilon^2}}.
	\end{equation*}
	To complete the proof, we will show that the above quantity is less than zero for all sufficiently small $\epsilon>0$. Using Lemma \ref{approx_1} and $\|\rvw_*\|_2 = 1$, we further get
	\begin{equation*}
	\frac{\lambda_*+\frac{L\epsilon^2}{2}b_*+ o(\epsilon^2)}{(1+\epsilon^2)^{L/2}} - \lambda_*((L-1)t^{L-2} + (L-2)t^L) - \frac{t\lambda_* +\frac{t(L-2)\epsilon^2}{2}b_* + o(\epsilon^2)}{({1+\epsilon^2})^{(L-1)/2}} + \frac{\lambda_*(2L-3)t^{L-1}}{\sqrt{1+\epsilon^2}}.
	\end{equation*}
	Since $1/(1+\epsilon^2)^k = 1-k\epsilon^2 + o(\epsilon^2)$, the above quantity can be written as  
	\begin{align*}
	&\lambda_*\left(1-\frac{L\epsilon^2}{2}\right)+\frac{L\epsilon^2b_*}{2}  - \lambda_*((L-1)t^{L-2} + (L-2)t^L) -  t\lambda_*\left(1-\frac{(L-1)\epsilon^2}{2}\right)\\
	& - \frac{t(L-2)b_*\epsilon^2}{2} + \lambda_*(2L-3)t^{L-1}\left(1-\frac{\epsilon^2}{2}\right) + o(\epsilon^2)\\
	& = \lambda_*(1 - (L-1)t^{L-2} - (L-2)t^L-t + (2L-3)t^{L-1}) \\
	& + \frac{\epsilon^2}{2}(-L\lambda_* + Lb_*+t(L-1)\lambda_* - t(L-2)b_* -\lambda_*(2L-3)t^{L-1}) + o(\epsilon^2).
	\end{align*}
	Now, define $a(t) = (1+(L-2)t^{L-1}-(L-1)t^{L-2})$, then 
	\begin{equation*}
	(1-t)a(t) = 1 - (L-1)t^{L-2} - (L-2)t^L-t + (2L-3)t^{L-1}.
	\end{equation*}
	Also, note that $a(1) = 0$, and $a'(t) = (L-2)(L-1)(t^{L-2}-t^{L-3}) \geq 0,$ for all $t\geq 1.$ Hence, $a(t)\geq 0$ and an increasing function, for all $t\geq 1$.
	
	Next, let $b(t) = -L\lambda_* + Lb_*+t(L-1)\lambda_* - t(L-2)b_* -\lambda_*(2L-3)t^{L-1}$. Thus, our goal is to that show for all sufficiently small $\epsilon>0$,
	\begin{equation}
	(1-t)a(t) \lambda_*+ b(t)\epsilon^2/2 +o(\epsilon^2) \leq 0.
	\label{eq_ineq}
	\end{equation}
	We first consider the case when $t\in [1,L/(L-2)]$. In this case, since $a(t)\geq 0$, for all $t\geq 1$, we have $(1-t)a(t)\leq 0$. Next 
	\begin{align*}
	b(t) & =  -L\lambda_* + (L- t(L-2))b_*+t(L-1)\lambda_* -\lambda_*(2L-3)t^{L-1}\\
	&\leq  -L\lambda_* + (L-t(L-2))(\lambda_*-\Delta) +  t(L-1)\lambda_* -\lambda_*(2L-3)t^{L-1}\\
	&= -t(L-2)\lambda_* + t(L-1)\lambda_* - \lambda_*(2L-3)t^{L-1} - (L-t(L-2))\Delta\\
	&\leq t\lambda_* -  \lambda_*(2L-3)t^{L-1} \leq t\lambda_*(4-2L) \leq \lambda_*(4-2L) ,
	\end{align*}
	where the first inequality follows from \cref{L_kkt_cond_pf}, and the second uses $\Delta>0$. The last two inequalities are true since $t\geq 1.$ Therefore,  
	\begin{equation*}
	(1-t)a(t) \lambda_*+ b(t)\epsilon^2/2 +o(\epsilon^2) \leq \lambda_*(4-2L)\epsilon^2 + o(\epsilon^2).
	\end{equation*}
	Since $L>2$, the above inequality implies \cref{eq_ineq} is true for all sufficiently small $\epsilon>0$.
	
	We next consider $t\geq L/(L-2)$. Define $g(t) = (1-t)a(t) \lambda_*+ b(t)\epsilon^2/2$. From the above discussion, we know
	\begin{equation*}
	g(L/(L-2)) \leq  \lambda_*(4-2L)\epsilon^2 .
	\end{equation*}
	Now, if we can show that for all sufficiently small $\epsilon>0$, $g(t)\leq g(L/(L-2))$, for all $t\geq L/(L-2)$, then
	\begin{equation*}
	(1-t)a(t) \lambda_*+ b(t)\epsilon^2/2 +o(\epsilon^2) \leq \lambda_*(4-2L)\epsilon^2 + o(\epsilon^2), \text{ for all } t\geq L/(L-2).
	\end{equation*}
	Thus, \cref{eq_ineq} is true for all sufficiently small $\epsilon>0$. To complete the proof, we next show that $g'(t)\leq 0$, for all $t\geq L/(L-2)$ and all sufficiently small $\epsilon>0$. Note that
	\begin{equation*}
	g'(t) = (1-t)a'(t)\lambda_* - a(t) \lambda_*+ b'(t)\epsilon^2/2.
	\end{equation*}
	Since $a'(t)\geq 0$ and $t\geq 1$, $(1-t)a'(t)\lambda_* \leq 0$. Further, $a'(t)\geq 0$ implies $a(t)\geq a(L/(L-2))$, for all $t\geq L/(L-2)$. Since $a(L/L-2) = (1+(L/L-2)^{L-2}) \geq 1$, we have that
	\begin{align*}
	- a(t) \lambda_*+ b'(t)\epsilon^2/2 &= - a(t) \lambda_*+ ((L-1)\lambda_* - (L-2)b_* - (2L-3)(L-1)\lambda_*t^{L-2})\epsilon^2/2\\
	&\leq -\lambda_* - (L-2)b_*\epsilon^2/2 +(L-1)\lambda_*(1 - (2L-3)t^{L-2})\epsilon^2/2 \\
	&\leq -\lambda_* - (L-2)b_*\epsilon^2/2  \leq 0,
	\end{align*}
	where the first inequality uses $a(t)\geq a(L/(L-2))\geq 1$. The second inequality uses $L>2$ and $t\geq 1$. The last inequality is true for all sufficiently small $\epsilon>0$. Thus, $g'(t)\leq 0.$
\end{proof}
\begin{lemma}\label{lips_ineq}
	Suppose $\mathcal{H}$ is $L-$homogeneous, where $L\geq 2$, and $\rvw_*$ is a $\Delta-$second-order positive KKT point of \cref{ncf_gn_const}. Then there exists a sufficiently small $\gamma>0$ such that for all unit-norm vectors $\rvw$ satisfying $\rvw^\top\rvw_*\geq1-\gamma$ the following inequalities hold:
	\begin{align}
	&\rvw_*^\top\nabla\mathcal{N}(\rvw) - L\mathcal{N}(\rvw)\rvw_*^\top\rvw - \frac{\Delta}{2}\|\rvw - \rvw_*\|_2^2 \geq 0, \text{ and }\label{kl_ineq}\\
	&\mathcal{N}(\rvw) -  \mathcal{N}(\rvw_*) + \frac{\Delta}{4}\|\rvw - \rvw_*\|_2^2 \leq 0.\label{lips_ineq1}
	\end{align}
\end{lemma}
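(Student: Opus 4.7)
The plan is to introduce local coordinates around $\rvw_*$ on the unit sphere and reduce both inequalities to a second-order Taylor expansion in the deviation. Any unit-norm $\rvw$ with $\rvw^\top\rvw_* > 1-\gamma$ can be written uniquely as $\rvw = (\rvw_* + \epsilon\rvb)/\sqrt{1+\epsilon^2}$ for some unit-norm $\rvb \in \rvw_*^\perp$ and $\epsilon \geq 0$, with $\epsilon^2 = O(\gamma)$ via $\rvw^\top\rvw_* = 1/\sqrt{1+\epsilon^2}$. Both inequalities reduce to equality at $\epsilon = 0$, so it suffices to show that the leading $\epsilon^2$ coefficient is strictly negative for \cref{lips_ineq1} and strictly positive for \cref{kl_ineq}, uniformly in $\rvb$; one can then absorb the $o(\epsilon^2)$ remainder by further shrinking $\gamma$.

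Concretely, using the $L$-homogeneity of $\mathcal{N}$ and $(L-1)$-homogeneity of $\nabla\mathcal{N}$, I would rewrite $\mathcal{N}(\rvw) = (1+\epsilon^2)^{-L/2}\mathcal{N}(\rvw_*+\epsilon\rvb)$ and $\nabla\mathcal{N}(\rvw) = (1+\epsilon^2)^{-(L-1)/2}\nabla\mathcal{N}(\rvw_*+\epsilon\rvb)$, then plug in the tangent-direction expansions from \Cref{approx_1}---namely \cref{app_1} and \cref{app_2}---together with $\rvw_*^\top\rvw = 1/\sqrt{1+\epsilon^2} = 1 - \epsilon^2/2 + o(\epsilon^2)$ and $\|\rvw-\rvw_*\|_2^2 = 2 - 2\rvw_*^\top\rvw = \epsilon^2 + o(\epsilon^2)$. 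A short Taylor computation then yields
\begin{align*}
\mathcal{N}(\rvw) - \mathcal{N}(\rvw_*) &= \tfrac{\epsilon^2}{2}\bigl(\rvb^\top\nabla^2\mathcal{N}(\rvw_*)\rvb - L\mathcal{N}(\rvw_*)\bigr) + o(\epsilon^2),\\
\rvw_*^\top\nabla\mathcal{N}(\rvw) - L\mathcal{N}(\rvw)\,\rvw_*^\top\rvw &= \epsilon^2\bigl(L\mathcal{N}(\rvw_*) - \rvb^\top\nabla^2\mathcal{N}(\rvw_*)\rvb\bigr) + o(\epsilon^2),
\end{align*}
where the $O(1)$ terms in the second line cancel exactly. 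Invoking the $\Delta$-second-order KKT bound $L\mathcal{N}(\rvw_*) - \rvb^\top\nabla^2\mathcal{N}(\rvw_*)\rvb \geq \Delta$ from \Cref{second_kkt} then upper-bounds the first line by $-\Delta\epsilon^2/2 + o(\epsilon^2)$ and lower-bounds the second by $\Delta\epsilon^2 + o(\epsilon^2)$. Adding $(\Delta/4)\,\|\rvw-\rvw_*\|_2^2 = (\Delta/4)\epsilon^2 + o(\epsilon^2)$ in the first case and subtracting $(\Delta/2)\,\|\rvw-\rvw_*\|_2^2 = (\Delta/2)\epsilon^2 + o(\epsilon^2)$ in the second leaves $\epsilon^2$ coefficients of $-\Delta/4$ and $+\Delta/2$ respectively, which supply the required strict signs.

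The main obstacle is purely algebraic bookkeeping: the second identity requires the $L\mathcal{N}(\rvw_*)$ leading term from $\rvw_*^\top\nabla\mathcal{N}(\rvw)$ to cancel exactly against the one in $L\mathcal{N}(\rvw)\,\rvw_*^\top\rvw$, and the three sources of $\epsilon^2$ corrections---from \cref{app_2}, from \cref{app_1} via the product rule, and from the $(1+\epsilon^2)^{-L/2}$ and $(1+\epsilon^2)^{-(L-1)/2}$ homogeneity factors---must combine into the clean coefficient shown above. A minor subtlety is uniformity of the $o(\epsilon^2)$ remainders in $\rvb$, which holds because $\rvb$ ranges over the compact set $\rvw_*^\perp$ and $\mathcal{N}$ is $C^2$ on a neighborhood of $\rvw_*$ under \Cref{homo_assumption}, so a single $\gamma > 0$ suffices for all admissible $\rvw$.
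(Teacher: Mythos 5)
Your proposal is correct and follows essentially the same route as the paper's own proof: both parametrize $\rvw = (\rvw_*+\epsilon\rvb)/\sqrt{1+\epsilon^2}$ with $\rvb \in \rvw_*^\perp$, expand to second order using \Cref{approx_1} together with the homogeneity scaling factors, and invoke the $\Delta$-second-order KKT bound from \Cref{second_kkt} to extract the strictly signed $\epsilon^2$ coefficients $-\Delta/4$ and $+\Delta/2$. Your remark about the uniformity of the $o(\epsilon^2)$ remainder over the compact set $\rvw_*^\perp$ is a point the paper leaves implicit, but it does not change the argument.
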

\begin{proof}
	Let $\rvw = (\rvw_*+\epsilon\rvb)/\sqrt{1+\epsilon^2}$, where $\rvb\in \rvw_*^\perp$. Putting this $\rvw$ in \cref{kl_ineq}  gives us
	\begin{equation}
	\frac{\rvw_*^\top\nabla\mathcal{N}(\rvw_*+\epsilon\rvb)}{(1+\epsilon^2)^{(L-1)/2}} - \frac{L\mathcal{N}(\rvw_*+\epsilon\rvb)\rvw_*^\top(\rvw_*+\epsilon\rvb)}{(1+\epsilon^2)^{(L+1)/2}} - \frac{\Delta}{2}(2-2\rvw^\top \rvw_*).
	\label{kl_pf}
	\end{equation}
	To complete the proof, we will show that the above quantity is greater than zero for all sufficiently small $\epsilon>0$. Note that, for all sufficiently small $\epsilon>0$,
	\begin{equation*}
	\rvw^\top \rvw_* = {1}/{\sqrt{1+\epsilon^2}} = 1-{\epsilon^2}/{2} + o(\epsilon^2),
	\end{equation*}
	and from Lemma \ref{approx_1}, we have
	\begin{align*}
	&\nabla\mathcal{N}(\rvw_*+\epsilon \rvb)^\top\rvw_* = L\mathcal{N}(\rvw_*) + \frac{(L-2)\epsilon^2}{2}\rvb^\top\nabla^2\mathcal{N}(\rvw_*)\rvb + o(\epsilon^2) \text{ and }\\
	&\mathcal{N}(\rvw_*+\epsilon \rvb) = \mathcal{N}(\rvw_*) + \frac{\epsilon^2}{2}\rvb^\top\nabla^2\mathcal{N}(\rvw_*)\rvb + o(\epsilon^2).
	\end{align*}
	Let  $\lambda_* =  L\mathcal{N}(\rvw_*)$, $b_* = \rvb^\top\nabla^2\mathcal{N}(\rvw_*)\rvb$, then \cref{kl_pf} can be simplified to get
	\begin{align*}
	&\lambda_*\left(1-\frac{(L-1)\epsilon^2}{2}\right) + \frac{(L-2)\epsilon^2}{2}b_* + o(\epsilon^2) - \lambda_*\left(1-\frac{(L+1)\epsilon^2}{2}\right) - \frac{L\epsilon^2}{2}b_*- \frac{\Delta\epsilon^2}{2}\\
	&= \frac{\epsilon^2}{2}\left(-(L-1)\lambda_*+ (L-2)b_* + (L+1)\lambda_* - Lb_* -\Delta \right) + o(\epsilon^2) \\
	&= \frac{\epsilon^2}{2}\left(  2\lambda_* - 2b_* -\Delta \right) + o(\epsilon^2) \geq \Delta\epsilon^2/2 + o(\epsilon^2).
	\end{align*}
	The last inequality uses $b_*\leq \lambda_*-\Delta$. Since $\Delta>0$, the above inequality proves \cref{kl_ineq}.\\
	Similarly, \cref{lips_ineq1} can be written as
	\begin{align*}
	&\frac{\mathcal{N}(\rvw_*+\epsilon \rvb)}{(1+\epsilon^2)^{L/2}} - \mathcal{N}(\rvw_*)+\frac{\Delta(2 -2\rvw^\top \rvw_* )}{4}\\
	&=  \mathcal{N}(\rvw_*)\left(1-\frac{L\epsilon^2}{2}\right) + \frac{\epsilon^2}{2}\rvb^\top\nabla^2\mathcal{N}(\rvw_*)\rvb - \mathcal{N}(\rvw_*) + \frac{\Delta\epsilon^2}{4} + o(\epsilon^2)\\
	&= \frac{\epsilon^2}{2}(b_* - L \mathcal{N}(\rvw_*) + \Delta/2) + o(\epsilon^2) \leq \frac{-\Delta\epsilon^2}{4} + o(\epsilon^2).
	\end{align*}
	The last inequality uses $b_*\leq \lambda_*-\Delta$. Since $\Delta>0$, the above inequality proves \cref{lips_ineq1}.
\end{proof}
\subsection{Gradient Flow Dynamics of NCF Near a KKT point}
In this subsection, we describe the gradient flow dynamics of NCF when initialized near a second-order positive KKT point.
\begin{lemma}\label{rate_ineq_2hm}
	Suppose $\mathcal{H}$ is $2-$homogeneous, and $\rvw_*$ is a $\Delta-$second-order positive KKT point of the constrained NCF in \cref{ncf_gn_const}. Let $\rvw(t)$ be the solution of 
	\begin{equation*}
	\dot{\rvw} = \nabla\mathcal{N}(\rvw), \rvw(0) = \rvw_0,
	\end{equation*}
	where $\|\rvw_0\|_2 = 1.$ There exists a sufficiently small $\gamma>0$ such that if $\rvw_*^\top\rvw_0 > 1-\gamma$, then
	\begin{align}
	\frac{\rvw_* ^\top\rvw(t)}{\|\rvw(t)\|_2} \geq 1 - e^{-t\Delta}\gamma, \forall t\geq 0, \text{ and }\rvw(t) = \rvg(t)e^{2t\mathcal{N}(\rvw_*)},
	\label{rt_2}
	\end{align}
	where $\|\rvg(t)\|_2 \in [\kappa_1,\kappa_2]$, for some $\kappa_2\geq\kappa_1>0$.
\end{lemma}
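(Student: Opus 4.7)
\textbf{Proof plan for Lemma \ref{rate_ineq_2hm}.}
The strategy is to pass to ``polar coordinates'' on the flow, decoupling the radial and angular evolution, and then handle each piece separately using Lemma \ref{lips_ineq}.

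\textbf{Step 1: Polar decomposition.} Write $\rvw(t) = r(t)\hat{\rvw}(t)$ with $r(t)=\|\rvw(t)\|_2$ and $\hat{\rvw}(t)\in\sS^{k-1}$. Using $2$-homogeneity of $\mathcal{N}$ (so $\nabla\mathcal{N}$ is $1$-homogeneous) and Euler's identity $\rvw^\top\nabla\mathcal{N}(\rvw) = 2\mathcal{N}(\rvw)$, a direct computation gives
\begin{align*}
\dot r/r = 2\mathcal{N}(\hat{\rvw}),
\qquad
\dot{\hat{\rvw}} = \nabla\mathcal{N}(\hat{\rvw}) - 2\mathcal{N}(\hat{\rvw})\hat{\rvw}.
\end{align*}
In particular $\|\hat{\rvw}(t)\|_2=1$ is preserved, and $r(0)=1$.

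\textbf{Step 2: Directional convergence via Gr\"onwall.} Let $x(t) \coloneqq 1-\rvw_*^\top\hat{\rvw}(t)$, so $\|\hat{\rvw}-\rvw_*\|_2^2 = 2x(t)$. From the angular ODE,
\begin{align*}
\dot x = -\bigl(\rvw_*^\top\nabla\mathcal{N}(\hat{\rvw}) - 2\mathcal{N}(\hat{\rvw})\rvw_*^\top\hat{\rvw}\bigr).
\end{align*}
Provided $\hat{\rvw}$ lies in the region where Lemma \ref{lips_ineq} applies (shrink $\gamma$ if necessary), \cref{kl_ineq} with $L=2$ yields
\begin{align*}
\dot x \le -\tfrac{\Delta}{2}\|\hat{\rvw}-\rvw_*\|_2^2 = -\Delta x.
\end{align*}
A standard continuation argument closes the loop: since $x(0)\le \gamma$ by hypothesis and $\dot x \le 0$ on $\{x\le\gamma\}$, the trajectory never leaves this region, so the bound holds for all $t\ge 0$. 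Gr\"onwall then gives $x(t)\le \gamma e^{-\Delta t}$, i.e.\ $\rvw_*^\top\hat{\rvw}(t)\ge 1-\gamma e^{-\Delta t}$. Since $\hat{\rvw}=\rvw(t)/\|\rvw(t)\|_2$, this is the first claim of \cref{rt_2}.

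\textbf{Step 3: Radial rate.} Integrating $\dot r/r = 2\mathcal{N}(\hat{\rvw})$ yields
\begin{align*}
r(t) = \exp\!\Bigl(2t\mathcal{N}(\rvw_*) + 2\int_0^t\!\bigl[\mathcal{N}(\hat{\rvw}(s))-\mathcal{N}(\rvw_*)\bigr]ds\Bigr).
\end{align*}
I need two-sided control of the correction integral. The upper bound $\mathcal{N}(\hat{\rvw})\le \mathcal{N}(\rvw_*) - \tfrac{\Delta}{4}\|\hat{\rvw}-\rvw_*\|_2^2$ comes directly from Lemma \ref{lips_ineq} (\cref{lips_ineq1}). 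For the matching lower bound, a Taylor expansion around $\rvw_*$ combined with Lemma \ref{second_kkt} (the operator norm bound $\|\nabla^2\mathcal{N}(\rvw_*)\|_2\le 2\mathcal{N}(\rvw_*)$) and $\nabla\mathcal{N}(\rvw_*)\perp(\hat{\rvw}-\rvw_*)$ gives $|\mathcal{N}(\hat{\rvw})-\mathcal{N}(\rvw_*)|\le M\|\hat{\rvw}-\rvw_*\|_2^2$ for some $M>0$ on a neighborhood of $\rvw_*$. Combining with Step 2,
\begin{align*}
\Bigl|\int_0^t\!\bigl[\mathcal{N}(\hat{\rvw}(s))-\mathcal{N}(\rvw_*)\bigr]ds\Bigr|
\le 2M\int_0^t x(s)\,ds
\le 2M\int_0^\infty \gamma e^{-\Delta s}\,ds
= \tfrac{2M\gamma}{\Delta}.
\end{align*}

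\textbf{Step 4: Assembly.} Set $\rvg(t)\coloneqq \hat{\rvw}(t)\exp\!\bigl(2\int_0^t[\mathcal{N}(\hat{\rvw}(s))-\mathcal{N}(\rvw_*)]ds\bigr)$. Then $\rvw(t)=r(t)\hat{\rvw}(t)=\rvg(t)e^{2t\mathcal{N}(\rvw_*)}$, and the bound above gives $\|\rvg(t)\|_2\in[e^{-4M\gamma/\Delta},e^{4M\gamma/\Delta}]\eqqcolon[\kappa_1,\kappa_2]$, completing the proof.

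The only nontrivial obstacle is Step 3's lower bound on $\mathcal{N}(\hat{\rvw})-\mathcal{N}(\rvw_*)$: Lemma \ref{lips_ineq} supplies only one-sided quadratic growth, so the matching upper bound must be extracted from the local $C^{1,1}$ structure of $\mathcal{N}$ together with the orthogonality $\nabla\mathcal{N}(\rvw_*)\parallel \rvw_*$ furnished by Lemma \ref{second_kkt}. Everything else reduces to Gr\"onwall and direct integration.
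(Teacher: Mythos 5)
Your proof follows essentially the same route as the paper's: control the angle $\rvw_*^\top\rvw/\|\rvw\|_2$ via \cref{kl_ineq} and Gr\"onwall, then split $\|\rvw(t)\|_2 = \exp\bigl(2t\mathcal{N}(\rvw_*) + 2\int_0^t[\mathcal{N}(\hat{\rvw}(s))-\mathcal{N}(\rvw_*)]\,ds\bigr)$ and bound the correction integral using the exponential angular decay. The one spot where you deviate is the pointwise estimate on $|\mathcal{N}(\hat{\rvw})-\mathcal{N}(\rvw_*)|$: the paper simply uses local Lipschitz-ness of $\mathcal{N}$ (a linear bound $\le \mu\|\hat{\rvw}-\rvw_*\|_2$, integrated against $\sqrt{2\gamma}\,e^{-\Delta s/2}$), whereas you push for a quadratic bound $\le M\|\hat{\rvw}-\rvw_*\|_2^2$.

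The quadratic bound is in fact true, so your numerics go through, but your stated justification contains a false claim: $\nabla\mathcal{N}(\rvw_*) = 2\mathcal{N}(\rvw_*)\rvw_*$ is \emph{not} orthogonal to $\hat{\rvw}-\rvw_*$. Indeed $\rvw_*^\top(\hat{\rvw}-\rvw_*) = \rvw_*^\top\hat{\rvw} - 1 = -\tfrac12\|\hat{\rvw}-\rvw_*\|_2^2 \neq 0$. What saves the quadratic estimate is precisely this spherical identity: since both vectors lie on $\sS^{k-1}$, the first-order Taylor term $\nabla\mathcal{N}(\rvw_*)^\top(\hat{\rvw}-\rvw_*) = -\mathcal{N}(\rvw_*)\|\hat{\rvw}-\rvw_*\|_2^2$ is itself quadratic, not zero. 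Replace the orthogonality claim with this computation and Step 3 is sound. (Your final sentence also writes ``orthogonality $\nabla\mathcal{N}(\rvw_*)\parallel\rvw_*$,'' conflating the two notions.) The resulting quadratic bound is slightly sharper than the paper's linear one, but the gain is not used anywhere; both yield the required $[\kappa_1,\kappa_2]$ bracket on $\|\rvg(t)\|_2$.
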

\begin{proof}
	We chose a sufficiently small $\gamma>0$ such that 
	\begin{equation}
	\mathcal{N}(\rvw_*)\geq \mathcal{N}(\rvw) > 0 \text{ and } \rvw_*^\top\nabla\mathcal{N}(\rvw) - 2\mathcal{N}(\rvw)\rvw_*^\top\rvw - \frac{\Delta}{2}\|\rvw - \rvw_*\|_2^2 \geq 0,
	\label{2hm_ineq_rt}
	\end{equation}
	for all unit-norm vector $\rvw$ satisfying $\rvw_*^\top\rvw > 1-\gamma$. The above conditions are possible since $\mathcal{N}(\rvw_*) > 0$ and due to Lemma \ref{lips_ineq}.
	
	Now, since $\rvw_0^\top\rvw_*>1-\gamma$, we have $\mathcal{N}(\rvw_0) > 0$. From \cite[Lemma C.4]{kumar_dc} , $\|\rvw(t)\|_2 \geq \|\rvw_0\|_2>0, $ for all $t\geq 0$. We next show that $\rvw_*^\top\rvw(t)/\|\rvw(t)\|_2>1-\gamma$, for all $t\geq 0$. For the sake of contradiction, suppose $\overline{T}>0$ be the smallest time such that $\rvw_*^\top\rvw(\overline{T})/\|\rvw(\overline{T})\|_2 = 1-\gamma.$ Then, for all $t\in [0,\overline{T})$, we have 
	\begin{align}
	\frac{d}{dt}\left(\frac{\rvw_* ^\top\rvw(t)}{\|\rvw(t)\|_2}\right) &=\rvw_*^\top\left(\rmI - \frac{\rvw\rvw^\top}{\|\rvw\|_2^2}\right)\frac{\nabla \mathcal{N}(\rvw) }{\|\rvw\|_2}\nonumber\\
	&= \rvw_*^\top\nabla \mathcal{N}(\rvw) /\|\rvw\|_2 - 2\rvw_*^\top\rvw\mathcal{N}(\rvw)/\|\rvw\|_2^3\nonumber\\
	&= \rvw_*^\top\nabla \mathcal{N}(\rvw/\|\rvw\|_2)  - 2\rvw_*^\top(\rvw/\|\rvw\|_2)\mathcal{N}(\rvw/\|\rvw\|_2)\nonumber\\
	& \geq  \frac{\Delta}{2}\left\|\frac{\rvw(t)}{\|\rvw(t)\|_2} - \rvw_*\right\|_2^2 = \Delta\left(1-\frac{\rvw_* ^\top\rvw(t)}{\|\rvw(t)\|_2}\right) \geq 0,
	\label{rt_ang}
	\end{align}
	where the first inequality follows from \cref{2hm_ineq_rt}. The above equation implies
	\begin{equation*}
	\left(\frac{\rvw_* ^\top\rvw(\overline{T})}{\|\rvw(\overline{T})\|_2}\right) \geq \left(\frac{\rvw_* ^\top\rvw(0)}{\|\rvw(0)\|_2}\right) > 1-\gamma,
	\end{equation*}
	which leads to a contradiction. Now, since  $\rvw_*^\top\rvw(t)/\|\rvw(t)\|_2>1-\gamma$, for all $t\geq 0$, from \cref{rt_ang} we have
	\begin{equation*}
	\left(1-\frac{\rvw_* ^\top\rvw(t)}{\|\rvw(t)\|_2}\right) \leq e^{-t\Delta}\left(1-\frac{\rvw_* ^\top\rvw(0)}{\|\rvw(0)\|_2}\right) \leq e^{-t\Delta}\gamma,
	\end{equation*}
	which proves the first part of \cref{rt_2}. Now, if $\rvw(t) = \rvg(t)e^{2t\mathcal{N}(\rvw_*)}$, then
	\begin{equation*}
	\nabla\mathcal{N}(\rvg) e^{2t\mathcal{N}(\rvw_*)} = \nabla\mathcal{N}(\rvw)  = \dot{\rvw} = \dot{\rvg}e^{2t\mathcal{N}(\rvw_*)}+2 \mathcal{N}(\rvw_*)\rvg(t)e^{2t\mathcal{N}(\rvw_*)},
	\end{equation*}
	which implies
	\begin{equation*}
	\dot{\rvg}+2 \mathcal{N}(\rvw_*)\rvg = \nabla\mathcal{N}(\rvg) .
	\end{equation*}
	Multiplying the above equation by $\rvg^\top$ from the left, we get
	\begin{equation*}
	\frac{1}{2}\frac{d\|\rvg\|_2^2}{dt}+2 \mathcal{N}(\rvw_*)\|\rvg\|_2^2 = 2\mathcal{N}(\rvg) = 2\mathcal{N}(\rvg/\|\rvg\|_2)\|\rvg\|_2^2 ,
	\end{equation*}
	which implies
	\begin{equation*}
	\|\rvg(t)\|_2^2 = \|\rvg(0)\|_2^2 e^{4\int_{0}^t \left(\mathcal{N}(\rvg(s)/\|\rvg(s)\|_2) - \mathcal{N}(\rvw_*)\right) ds}. 
	\end{equation*}
	Since $\mathcal{N}(\rvg(t)/\|\rvg(t)\|_2) = \mathcal{N}(\rvw(t)/\|\rvw(t)\|_2)  \leq \mathcal{N}(\rvw_*), $ for all $t\geq 0$, we have 
	\begin{equation*}
	\|\rvg(t)\|_2^2 \leq \|\rvg(0)\|_2^2 :=\kappa_2.
	\end{equation*}
	Next, since $\mathcal{N}(\rvw)$ is locally Lipschitz, there exists a $\mu>0$ such that, for all $t\geq0$,
	\begin{equation*}
	\mathcal{N}(\rvw_*) - \mathcal{N}(\rvg(t)/\|\rvg(t)\|_2) = |\mathcal{N}(\rvw_*) - \mathcal{N}(\rvg(t)/\|\rvg(t)\|_2)| \leq \mu\|\rvg(t)/\|\rvg(t)\|_2 - \rvw_*\|_2,
	\end{equation*}
	therefore, for $t\geq 0$, using the above equation and first part of \cref{rt_2}, we get
	\begin{align*}
	{\int_{0}^t \left(\mathcal{N}(\rvg(s)/\|\rvg(s)\|_2) - \mathcal{N}(\rvw_*)\right) ds} &\geq \int_{0}^t -\mu\|\rvg(s)/\|\rvg(s)\|_2 - \rvw_*\|_2 ds\\
	&= \int_{0}^t -\mu\sqrt{\|\rvw(s)/\|\rvw(s)\|_2 - \rvw_*\|_2^2} ds\\
	&= \int_{0}^t -\mu\sqrt{2-2\rvw_*^\top\rvw(s)/\|\rvw(s)\|_2} ds\\
	&\geq -\mu\sqrt{2\gamma}\int_{0}^t e^{-s\Delta/2} ds \geq -2\mu\sqrt{2\gamma}/\Delta.
	\end{align*}
	Hence, $ \|\rvg(t)\|_2^2 \geq \|\rvg(0)\|_2^2e^{-8\mu\sqrt{2\gamma}/\Delta} :=\kappa_1,	$ which completes the proof. 
\end{proof}

\begin{lemma}\label{rate_ineq}
	Suppose $\mathcal{H}$ is $L-$homogeneous, where $L>2$, and $\rvw_*$ is a $\Delta-$second-order positive KKT point of \cref{ncf_gn_const}. Let $\rvw(t)$ be the solution of 
	\begin{equation*}
	\dot{\rvw} = \nabla\mathcal{N}(\rvw), \rvw(0) = \rvw_0,
	\end{equation*}
	where $\|\rvw_0\|_2 = 1.$ There exists a sufficiently small $\gamma>0$ such that if $\rvw_*^\top\rvw_0 >1-\gamma$, then $\mathcal{N}(\rvw_0) > 0$ and there exists a finite $T>0$ such that
	\begin{align*}
	\lim_{t\rightarrow T}\|\rvw(t)\|_2 = \infty, \text{ where } T\in \left[\frac{1}{L(L-2)\mathcal{N}(\rvw_*)}, \frac{1}{L(L-2)\mathcal{N}(\rvw_0)}\right], \text{ and }
	\end{align*}
	\begin{align*}
	\frac{\rvw_*^\top\rvw(t)}{\|\rvw(t)\|_2} \geq 1 - \gamma (1-t/T)^{\frac{\Delta}{L(L-2)\mathcal{N}(\rvw_*)}}.
	\end{align*}
	Further, for all $t\in [0,T)$, $\|\rvw(t)\|_2$ is an increasing function and   
	\begin{align*}
	\rvw(t) = \frac{\rvg(t)}{(T-t)^{1/(L-2)}},
	\end{align*}
	where $\|\rvg(t)\|_2 $ is a decreasing function for all $t\in [0,T)$, 
	\begin{align*}
	\|\rvg(0)\|_2^{L-2} = T, \text{ and } \lim_{t\rightarrow T} \|\rvg(t)\|_2^{L-2} = {1}/{(L(L-2)\mathcal{N}(\rvw_*))}.
	\end{align*}
\end{lemma}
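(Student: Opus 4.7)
The plan is to mirror the proof of Lemma \ref{rate_ineq_2hm}, but to accommodate the fact that for $L>2$ the radial ODE is superlinear and blows up in finite time. I would first pick $\gamma>0$ small enough that \Cref{lips_ineq} applies and $\mathcal{N}(\cdot)>0$ on the spherical cap $\{\rvw:\|\rvw\|_2=1,\ \rvw_*^\top\rvw>1-\gamma\}$. Writing $\hat{\rvw}(t):=\rvw(t)/\|\rvw(t)\|_2$ and combining the Euler identity $\rvw^\top\nabla\mathcal{N}(\rvw)=L\mathcal{N}(\rvw)$ with \cref{kl_ineq}, the directional ODE reads
\begin{equation*}
\frac{d}{dt}\!\left(\frac{\rvw_*^\top\rvw}{\|\rvw\|_2}\right) = \|\rvw\|_2^{L-2}\!\left(\rvw_*^\top\nabla\mathcal{N}(\hat{\rvw})-L\mathcal{N}(\hat{\rvw})\rvw_*^\top\hat{\rvw}\right)\;\geq\; \Delta\|\rvw\|_2^{L-2}\!\left(1-\frac{\rvw_*^\top\rvw}{\|\rvw\|_2}\right)\;\geq\;0.
\end{equation*}
A contradiction argument identical to the one in Lemma \ref{rate_ineq_2hm} then shows $\rvw_*^\top\hat{\rvw}(t)>1-\gamma$ for every $t$ in the maximal interval of existence, so $\mathcal{N}(\hat{\rvw}(t))>0$ throughout.

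For the radial dynamics I would set $u(t):=\|\rvw(t)\|_2^{L-2}$. Euler's identity gives $\dot u = L(L-2)\mathcal{N}(\hat{\rvw}(t))u^2$, a Bernoulli equation whose reciprocal integrates (with $u(0)=1$) to
\begin{equation*}
\frac{1}{u(t)} \;=\; 1 - L(L-2)\int_0^t \mathcal{N}(\hat{\rvw}(s))\,ds,
\end{equation*}
and the blow-up time $T$ is defined by $L(L-2)\int_0^T\mathcal{N}(\hat{\rvw}(s))\,ds=1$. To sandwich $T$, I would show $\mathcal{N}(\hat{\rvw}(t))$ is non-decreasing: differentiating and using Euler's identity gives $\tfrac{d}{dt}\mathcal{N}(\hat{\rvw})=\|\rvw\|_2^{L-2}(\|\nabla\mathcal{N}(\hat{\rvw})\|_2^2-L^2\mathcal{N}(\hat{\rvw})^2)$, which is the squared norm of the tangential (spherical) component of $\nabla\mathcal{N}$ after subtracting the radial piece $L\mathcal{N}(\hat{\rvw})\hat{\rvw}$, hence non-negative. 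Combined with $\mathcal{N}(\hat{\rvw}(t))\leq\mathcal{N}(\rvw_*)$ (a consequence of $\rvw_*$ being a local maximizer on the sphere), this places $T$ in $[1/(L(L-2)\mathcal{N}(\rvw_*)),\,1/(L(L-2)\mathcal{N}(\rvw_0))]$ and confirms $\lim_{t\to T}\|\rvw(t)\|_2=\infty$.

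For the structural claims, set $\rvg(t):=\rvw(t)(T-t)^{1/(L-2)}$, so $\|\rvg(t)\|_2^{L-2}=u(t)(T-t)$. Rewriting via the definition of $T$ gives
\begin{equation*}
\|\rvg(t)\|_2^{L-2} \;=\; \frac{T-t}{L(L-2)\int_t^T\mathcal{N}(\hat{\rvw}(s))\,ds}.
\end{equation*}
At $t=0$ this is $T$; as $t\to T$, continuity of $\mathcal{N}\circ\hat{\rvw}$ at $T$ gives the limit $1/(L(L-2)\mathcal{N}(\rvw_*))$. Monotonicity of $\|\rvg(t)\|_2$ follows since the running average $(T-t)^{-1}\int_t^T\mathcal{N}(\hat{\rvw})\,ds$ is non-decreasing in $t$ (it averages over values closer to the supremum). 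Strict increase of $\|\rvw(t)\|_2$ is immediate from $\tfrac{d}{dt}\|\rvw\|_2=L\mathcal{N}(\hat{\rvw})\|\rvw\|_2^{L-1}>0$.

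For the convergence rate, set $f(t):=1-\rvw_*^\top\rvw(t)/\|\rvw(t)\|_2$; the inequality from the first paragraph reads $\dot f\leq -\Delta u(t)f(t)$, so $f(t)\leq \gamma\exp(-\Delta\int_0^t u(s)\,ds)$. Using $\mathcal{N}(\hat{\rvw}(s))\leq\mathcal{N}(\rvw_*)$ gives the pointwise bound $u(s)\geq 1/(L(L-2)\mathcal{N}(\rvw_*)(T-s))$, and integrating yields $\int_0^t u(s)\,ds\geq \ln(T/(T-t))/(L(L-2)\mathcal{N}(\rvw_*))$, producing the claimed rate $f(t)\leq \gamma(1-t/T)^{\Delta/(L(L-2)\mathcal{N}(\rvw_*))}$. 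The main obstacle is the coupling of the three invariants on the maximal interval: one must simultaneously propagate the good-cone invariance, the positivity and monotonicity of $\mathcal{N}(\hat{\rvw})$, and the Bernoulli blow-up, since each is needed to validate the integral formula for $u$ that underpins every quantitative bound; once the coupled propagation is set up carefully, the remaining steps reduce to bookkeeping.
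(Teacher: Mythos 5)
Your proposal is correct and, in several places, takes a genuinely cleaner route than the paper. You and the paper agree on the key mechanism: $\widetilde{\mathcal{N}}(\rvw(t))=\mathcal{N}(\hat{\rvw}(t))$ is non-decreasing (your derivative identity is exactly $\nabla\mathcal{N}^\top(\mathbf{I}-\hat{\rvw}\hat{\rvw}^\top)\nabla\mathcal{N}\geq 0$ in intrinsic form), the cap $\{\rvw_*^\top\hat{\rvw}>1-\gamma\}$ is forward-invariant by the same differential inequality from \cref{kl_ineq}, and the radial ODE integrates to $1/\|\rvw(t)\|_2^{L-2}=1-L(L-2)\int_0^t\mathcal{N}(\hat{\rvw})$. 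Where you diverge is in the bookkeeping. First, you read the sandwich $T\in[1/(L(L-2)\mathcal{N}(\rvw_*)),1/(L(L-2)\mathcal{N}(\rvw_0))]$ directly off $\mathcal{N}(\rvw_0)\leq\mathcal{N}(\hat{\rvw}(s))\leq\mathcal{N}(\rvw_*)$ in $L(L-2)\int_0^T\mathcal{N}(\hat{\rvw})=1$, whereas the paper only proves the upper bound this way and routes the lower bound through $\|\rvg\|$ being decreasing and $\|\rvg(T)\|^{L-2}=1/(L(L-2)\mathcal{N}(\rvw_*))$; your version is more direct (and incidentally sidesteps a sign slip in the paper's final inequality for the lower bound). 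Second, for monotonicity of $\|\rvg\|$ you observe that $\|\rvg(t)\|_2^{L-2}=(T-t)/\bigl(L(L-2)\int_t^T\mathcal{N}(\hat{\rvw})\bigr)$, i.e.\ a reciprocal of the running average of a non-decreasing function, which is itself non-decreasing; the paper instead sets up $h(t)$, computes $h''$, and runs a Gronwall-plus-contradiction argument. Your argument is both shorter and conceptually more transparent. Third, the paper establishes $\lim_{t\to T}\hat{\rvw}(t)=\rvw_*$ by citing an external convergence-to-KKT result and then using \cref{kl_ineq} to pin down which KKT point it is; you get this directly from the rate bound $1-\rvw_*^\top\hat{\rvw}(t)\leq\gamma(1-t/T)^{\Delta/(L(L-2)\mathcal{N}(\rvw_*))}\to 0$, which only needs the pointwise lower bound $u(s)\geq 1/(L(L-2)\mathcal{N}(\rvw_*)(T-s))$ and is therefore self-contained. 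The only thing to tidy is ordering: in your sketch you invoke $\mathcal{N}\circ\hat{\rvw}\to\mathcal{N}(\rvw_*)$ at $T$ (when computing $\|\rvg(T)\|$) before you derive the rate bound that justifies it; simply move the rate-bound paragraph earlier, since it does not depend on the limit of $\mathcal{N}\circ\hat{\rvw}$, and everything closes.
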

\begin{proof}
	We begin by choosing a sufficiently small $\gamma>0$ such that 
	\begin{equation}\label{Lhm_ineq_rt}
	\mathcal{N}(\rvw_*)\geq \mathcal{N}(\rvw) > 0, \text{ and } \rvw_*^\top\nabla\mathcal{N}(\rvw) - L\mathcal{N}(\rvw)\rvw_*^\top\rvw - \frac{\Delta}{2}\|\rvw - \rvw_*\|_2^2 \geq 0,
	\end{equation}
	for all unit norm vector $\rvw$ satisfying $\rvw_*^\top\rvw \geq 1-\gamma$. The above conditions are possible since $\mathcal{N}(\rvw_*) > 0$ and due to Lemma \ref{lips_ineq}.
	
	Now, since $\rvw_0^\top\rvw_*>1-\gamma$, we have $\mathcal{N}(\rvw_0) > 0$. From \cite[Lemma 13]{early_dc}, $\|\rvw(t)\|_2 \geq \|\rvw_0\|_2>0, $ for all $t\geq 0$. Next, we show that $\|\rvw(t)\|_2$ becomes unbounded at some finite time. For this, define
	\begin{equation*}
	\widetilde{\mathcal{N}}(\rvw) \coloneqq \mathcal{N}\left(\frac{\rvw}{\|\rvw\|_2}\right)= \frac{\mathcal{N}(\rvw)}{\|\rvw\|_2^L}, \text{ then }
	\nabla\widetilde{\mathcal{N}}(\rvw) = \left(\mathbf{I}-\frac{\rvw\rvw^\top}{\|\rvw\|_2^2}\right)\frac{\nabla  \mathcal{N}(\rvw)}{\|\rvw\|_2^L}.
	\end{equation*}
	Now, note that
	\begin{equation*}
	\frac{d\widetilde{\mathcal{N}}(\rvw)}{dt} = \dot{\rvw}^\top\nabla\widetilde{\mathcal{N}}(\rvw) = \nabla  \mathcal{N}(\rvw)\left(\mathbf{I}-\frac{\rvw\rvw^\top}{\|\rvw\|_2^2}\right)\frac{\nabla  \mathcal{N}(\rvw)}{\|\rvw\|_2^L} \geq 0,
	\end{equation*}
	which implies $\widetilde{\mathcal{N}}(\rvw(t))$ increases with time. Hence,
	\begin{equation}
	\frac{1}{2}\frac{d\|\rvw\|_2^2}{dt} = L\mathcal{N}(\rvw) = L\|\rvw\|_2^L\widetilde{\mathcal{N}}(\rvw) \geq L\|\rvw\|_2^L\mathcal{N}(\rvw_0)\geq 0 ,
	\label{Lhm_nm_bd}
	\end{equation}
	which implies $\|\rvw(t)\|_2$ is an increasing function, for all $t\geq 0$, and 
	\begin{equation*}
	\frac{d\|\rvw\|_2}{dt} \geq L\|\rvw\|_2^{L-1}\mathcal{N}(\rvw_0).
	\end{equation*}
	Taking $\|\rvw\|_2^{L-1}$ to the LHS and integrating from $0$ to $t$, we get
	\begin{equation*}
	\frac{1}{L-2}\left(\frac{1}{\|\rvw(0)\|_2^{L-2}} - \frac{1}{\|\rvw(t)\|_2^{L-2}}\right) \geq L\mathcal{N}(\rvw_0)t.
	\end{equation*}
	Using $\|\rvw(0)\|_2= 1$, and simplifying the above equation we get
	\begin{equation*}
	\|\rvw(t)\|_2^{L-2} \geq \frac{	1}{1- tL(L-2)\mathcal{N}(\rvw_0)}.
	\end{equation*}
	The above equation implies that for some $T\leq 1/(L(L-2)\mathcal{N}(\rvw_0))$, $\|\rvw(T)\|_2 = \infty.$ (We will show $T\geq 1/L(L-2)\mathcal{N}(\rvw_*)$ later.)
	
	We next derive related results for $\rvg(t)$. We know
	\begin{equation*}
	\frac{1}{2}\frac{d\|\rvw\|_2^2}{dt} = L\mathcal{N}(\rvw) = L\mathcal{N}(\rvw/\|\rvw\|_2)\|\rvw\|_2^L ,
	\end{equation*}
	which implies
	\begin{equation*}
	\frac{1}{\|\rvw\|_2^{L-1}}\frac{d\|\rvw\|_2}{dt} = L\mathcal{N}(\rvw/\|\rvw\|_2).
	\end{equation*}
	Integrating both sides from $0$ to $t\in (0,T)$, we get
	\begin{equation*}
	\frac{1}{L-2}\left(\frac{1}{\|\rvw(0)\|_2^{L-2}} - \frac{1}{\|\rvw(t)\|_2^{L-2}}\right) = \int_{0}^tL\mathcal{N}(\rvw(s)/\|\rvw(s)\|_2) ds.
	\end{equation*}
	Using $\|\rvw(0)\|_2 = 1$ and re-arranging the above equation gives us
	\begin{equation}
	1 - \int_{0}^tL(L-2)\mathcal{N}(\rvw(s)/\|\rvw(s)\|_2) ds = \frac{1}{\|\rvw(t)\|_2^{L-2}}.
	\label{nm_eq_L}
	\end{equation}
	Substituting $\rvw(t) = \rvg(t)/(T-t)^{1/(L-2)}$, we get
	\begin{equation*}
	\frac{1 - \int_{0}^tL(L-2)\mathcal{N}(\rvw(s)/\|\rvw(s)\|_2) ds}{(T-t)} = \frac{1}{\|\rvg(t)\|_2^{L-2}}.
	\end{equation*}
	Note that $\|\rvg(0)\|_2^{L-2} = T.$ Next, we compute $\|\rvg(T)\|_2$. In the LHS of the above equality, at $t=T$, the denominator is obviously $0$, and the numerator is also $0$ since, using \cref{nm_eq_L}, 
	\begin{equation*}
	1 - \int_{0}^TL(L-2)\mathcal{N}(\rvw(s)/\|\rvw(s)\|_2) ds = \lim_{t\rightarrow T}\frac{1}{\|\rvw(t)\|_2^{L-2}} = 0.
	\end{equation*}
	Therefore, using L'Hopital's rule,
	\begin{align*}
	\frac{1}{\|\rvg(T)\|_2^{L-2}} = \frac{\lim_{t\rightarrow T} -L(L-2)\mathcal{N}(\rvw(t)/\|\rvw(t)\|_2)}{-1} =  \lim_{t\rightarrow T} L(L-2)\mathcal{N}(\rvw(t)/\|\rvw(t)\|_2).
	\end{align*}
	We will soon show that $\lim_{t\rightarrow T}\rvw(t)/\|\rvw(t)\|_2 = \rvw_*$, which will complete the result for $\|\rvg(T)\|_2^{L-2}$. We now prove that $\|\rvg(t)\|_2$ is a decreasing function. For this, define
	\begin{equation*}
	h(t):= \frac{1 - \int_{0}^tL(L-2)\mathcal{N}(\rvw(s)/\|\rvw(s)\|_2) ds}{(T-t)} = \frac{1 - \int_{0}^tL(L-2)\widetilde{\mathcal{N}}(\rvw(s)) ds}{(T-t)}.
	\end{equation*} 
	We next show that $h'(t)\geq 0$, for all $t\in (0,T)$. This would imply $h(t)$ is increasing and therefore, $\|\rvg(t)\|_2$ is decreasing, for all for all $t\in (0,T)$. For all $t\in [0,T)$,
	\begin{align*}
	h'(t) &= \frac{-L(L-2)\widetilde{\mathcal{N}}(\rvw(t))(T-t) + (1 - \int_{0}^tL(L-2)\widetilde{\mathcal{N}}(\rvw(s))ds)}{(T-t)^2}\\
	& = \frac{1}{(T-t)}\left(-L(L-2)\widetilde{\mathcal{N}}(\rvw(t)) + h(t)\right).
	\end{align*}
	Note that 
	\begin{equation*}
	h(0) = 1/T \geq 0, \text{ and } h'(0) = \frac{1}{T}\left(-L(L-2)\mathcal{N}(\rvw_0) + \frac{1}{T}\right)  \geq 0,
	\end{equation*}
	where the last inequality follows since $T\leq 1/(L(L-2)\mathcal{N}(\rvw_0)).$ For the sake of contradiction, suppose there exists a $t_1\in (0,T)$ such that $h'(t_1) = -\epsilon$, for some $\epsilon>0$. Next, we compute $h''(t)$. Let $a(t)$ denote the numerator of $h'(t)$, then
	\begin{equation*}
	a'(t) = -L(L-2)\nabla\mathcal{N}(\rvw(t))^\top\left(\mathbf{I}-\frac{\rvw(t)\rvw(t)^\top}{\|\rvw(t)\|_2^2}\right)\frac{\nabla  \mathcal{N}(\rvw(t))}{\|\rvw\|_2^L} + h'(t).
	\end{equation*}
	Thus,
	\begin{align*}
	h''(t) &= \frac{a'(t)}{(T-t)} + \frac{a(t)}{(T-t)^2} = \frac{1}{T-t}\left(a'(t) + \frac{a(t)}{(T-t)}\right) = \frac{1}{T-t}\left(a'(t) + h'(t)\right)\\
	&= \frac{1}{(T-t)} \left(-L(L-2)\nabla\mathcal{N}(\rvw(t))^\top\left(\mathbf{I}-\frac{\rvw(t)\rvw(t)^\top}{\|\rvw(t)\|_2^2}\right)\frac{\nabla  \mathcal{N}(\rvw(t))}{\|\rvw\|_2^L} + 2h'(t)\right)
	\end{align*}
	Since $-L(L-2)\nabla\mathcal{N}(\rvw(t))^\top\left(\mathbf{I}-\frac{\rvw(t)\rvw(t)^\top}{\|\rvw(t)\|_2^2}\right)\frac{\nabla  \mathcal{N}(\rvw(t))}{\|\rvw\|_2^L} \leq 0$, the above equation implies
	\begin{equation*}
	h''(t) -\frac{2h'(t)}{(T-t)}\leq 0.
	\end{equation*}
	From Lemma \ref{gronwall_ineq}, for any $t\in (t_1,T)$, we get
	\begin{equation*}
	h'(t) \leq h'(t_1)/P(t), \text{ where } P(t) = 	e^{-\int_{t_1}^t 2/(T-s) ds} = (T-t)^2/(T-t_1)^2.
	\end{equation*}
	Hence, for any $t\in (t_1,T)$, 
	\begin{equation*}
	h'(t) \leq  {-\epsilon(T-t_1)^2}/(T-t)^2.
	\end{equation*}
	Integrating the above equation from $t_1$ to $t\in (t_1,T)$, we get
	\begin{equation*}
	h(t) - h(t_1) \leq -\epsilon(T-t_1)^2\left(\frac{1}{T-t} - \frac{1}{T-t_1}\right).
	\end{equation*}
	Now, if $t$ is chosen sufficiently close to $T$, then $h(t)$ becomes negative. This is  impossible since $h(t) = 1/\|\rvg(t)\|_2^{L-2}$, leading to a contradiction. Hence, $h'(t)\geq 0$, for all $t\in (0,T)$.
	
	We next show that $\rvw_*^\top\rvw(t)/\|\rvw(t)\|_2>1-\gamma$, for all $t\in [0,T)$. For the sake of contradiction, suppose $\overline{T}\in [0,T)$ is the smallest time such that $\rvw_*^\top\rvw(\overline{T})/\|\rvw(\overline{T})\|_2 = 1-\gamma.$ Then, for all $t\in [0,\overline{T})$,
	\begin{align}
	\frac{d}{dt}\frac{\rvw_*^\top\rvw(t)}{\|\rvw(t)\|_2}  &= \rvw_*^\top\left(\rmI - \frac{\rvw\rvw^\top}{\|\rvw\|_2^2}\right)\frac{\nabla \mathcal{N}(\rvw) }{\|\rvw\|_2}\nonumber\\
	&= \rvw_*^\top\nabla \mathcal{N}(\rvw) /\|\rvw\|_2 - L\rvw_*^\top\rvw\mathcal{N}(\rvw)/\|\rvw\|_2^3\nonumber\\
	&= \|\rvw\|_2^{L-2}\left(\rvw_*^\top\nabla \mathcal{N}(\rvw/\|\rvw\|_2)  - L\rvw_*^\top(\rvw/\|\rvw\|_2)\mathcal{N}(\rvw/\|\rvw\|_2)\right)\nonumber\\
	& \geq  \|\rvw\|_2^{L-2}\frac{\Delta}{2}\left\|\frac{\rvw(t)}{\|\rvw(t)\|_2} - \rvw_*\right\|_2^2 =  \|\rvw\|_2^{L-2}\Delta\left(1-\frac{\rvw_* ^\top\rvw(t)}{\|\rvw(t)\|_2}\right) \geq 0,\label{align_rt_Lhm}
	\end{align}
	where the first inequality follows from \cref{Lhm_ineq_rt}. The above equation implies
	\begin{equation*}
	\left(\frac{\rvw_* ^\top\rvw(\overline{T})}{\|\rvw(\overline{T})\|_2}\right) \geq \left(\frac{\rvw_* ^\top\rvw(0)}{\|\rvw(0)\|_2}\right) > 1-\gamma,
	\end{equation*}
	which leads to a contradiction. Now, from \cite[Lemma 2]{early_dc}, we know 
	\begin{equation*}
	\lim_{t\to T}{\rvw(t)}/{\|\rvw(t)\|_2} = \overline{\rvw},
	\end{equation*} 
	where $\overline{\rvw}$ is a first-order KKT point of  \cref{ncf_gn_const}. We next show that $\overline{\rvw} = \rvw_*$. Since $\overline{\rvw}$ is a first-order KKT point, from Lemma \ref{second_kkt}, we know $L\mathcal{N}(\overline{\rvw})\overline{\rvw} = \nabla\mathcal{N}(\overline{\rvw}).$ Also, since $\overline{\rvw}^\top\rvw^*\geq 1-\gamma$, from \cref{Lhm_ineq_rt}, we have
	\begin{equation*}
	0 \leq \rvw_*^\top\nabla\mathcal{N}(\overline{\rvw}) - L\mathcal{N}(\overline{\rvw})\rvw_*^\top\overline{\rvw} - \frac{\Delta}{2}\|\overline{\rvw} - \rvw_*\|_2^2 = - \frac{\Delta}{2}\|\overline{\rvw} - \rvw_*\|_2^2,
	\end{equation*}
	implying $\overline{\rvw} = \rvw_*$. Hence, $\lim\limits_{t\rightarrow T} \|\rvg(t)\|_2^{L-2} = {1}/{(L(L-2)\mathcal{N}(\rvw_*))} \geq \|\rvg(0)\|_2^{L-2} =1/T.$\\	
	Finally, since $\|\rvg(t)\|_2$ is a decreasing function, $\|\rvw(t)\|_2^{L-2} = \|\rvg(t)\|_2^{L-2}/(T-t) \geq \|\rvg(T)\|_2^{L-2}/(T-t) $. Let $\alpha:= \|\rvg(T)\|_2^{L-2}$, then, using \cref{align_rt_Lhm}, we have
	\begin{equation*}
	\frac{d}{dt}\left(1 - \frac{\rvw_*^\top\rvw(t)}{\|\rvw(t)\|_2}\right)  \leq  -\|\rvw\|_2^{L-2}\Delta\left(1-\frac{\rvw_* ^\top\rvw(t)}{\|\rvw(t)\|_2}\right) \leq -\frac{\alpha\Delta}{(T-t)}\left(1-\frac{\rvw_* ^\top\rvw(t)}{\|\rvw(t)\|_2}\right).
	\end{equation*} 
	Integrating the above equation from $0$ to $t\in (0,T)$, we get	
	\begin{equation*}
	\left(1 - \frac{\rvw_*^\top\rvw(t)}{\|\rvw(t)\|_2}\right) \leq \left(1 - \frac{\rvw_*^\top\rvw(0)}{\|\rvw(0)\|_2}\right)e^{-\int_0^t\frac{\alpha\Delta}{(T-s)} ds} \leq \gamma \left(1-\frac{t}{T}\right)^{\alpha\Delta},
	\end{equation*}
	which completes the proof.
\end{proof}
\begin{lemma}\label{err_bd_gf}
	Let $\rvp,\rvq\in \sR^d$. Then, for any fixed $\widetilde{T}>0$, there exists a $\widetilde{C}>0$  such that 
	\begin{align*}
	\|\bm{\psi}(t,\rvp)-\bm{\psi}(t,\rvq)\|_2 \leq \widetilde{C}\|\rvp-\rvq\|_2, \text{for all } t\in [-\widetilde{T},\widetilde{T}].
	\end{align*}
\end{lemma}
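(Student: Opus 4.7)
The plan is to prove this by a standard Gronwall argument applied to the gradient flow ODE $\dot{\rvw} = -\nabla \mathcal{L}(\rvw)$. Since $\bm{\psi}(\cdot, \rvp)$ and $\bm{\psi}(\cdot, \rvq)$ are continuous in $t$, their images on the compact interval $[-\widetilde{T}, \widetilde{T}]$ lie in some common closed ball $\mathcal{B} \subset \sR^d$. The first step is to establish that $\nabla \mathcal{L}$ is Lipschitz on $\mathcal{B}$. Recalling that
\begin{equation*}
\nabla \mathcal{L}(\rvw) = \mathcal{J}(\rmX;\rvw)^\top \ell'(\mathcal{H}(\rmX;\rvw), \rvy),
\end{equation*}
Assumption~\ref{homo_assumption}(iii) gives local Lipschitzness of $\mathcal{J}(\rmX;\rvw) = \nabla \mathcal{H}(\rmX;\rvw)$ in $\rvw$; Assumption~\ref{homo_assumption}(i) gives local Lipschitzness of $\mathcal{H}(\rmX;\rvw)$; and Assumption~\ref{ass_loss} gives $|\ell''|\leq K$, so that $\rvw \mapsto \ell'(\mathcal{H}(\rmX;\rvw),\rvy)$ is locally Lipschitz. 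Since both factors are locally Lipschitz and bounded on $\mathcal{B}$, their product $\nabla \mathcal{L}$ is Lipschitz on $\mathcal{B}$ with some constant $L_{\mathrm{Lip}}$ depending on $\mathcal{B}$ (equivalently, on $\rvp$, $\rvq$, and $\widetilde{T}$).

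For $t\in [0,\widetilde{T}]$, I would define $\rve(t) \coloneqq \bm{\psi}(t,\rvp) - \bm{\psi}(t,\rvq)$ and compute
\begin{equation*}
\frac{d}{dt}\|\rve(t)\|_2^2 = -2\rve(t)^\top \bigl(\nabla \mathcal{L}(\bm{\psi}(t,\rvp)) - \nabla\mathcal{L}(\bm{\psi}(t,\rvq))\bigr) \leq 2 L_{\mathrm{Lip}}\|\rve(t)\|_2^2,
\end{equation*}
using Cauchy--Schwarz together with the Lipschitz bound on $\mathcal{B}$. Gronwall's inequality then yields $\|\rve(t)\|_2 \leq e^{L_{\mathrm{Lip}}\widetilde{T}}\|\rvp-\rvq\|_2$. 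For $t\in [-\widetilde{T},0]$, the change of variable $s=-t$ recasts the gradient flow as $\dot{\rvw} = +\nabla\mathcal{L}(\rvw)$ run forward in $s \in [0,\widetilde{T}]$, and the identical estimate applies. Setting $\widetilde{C} \coloneqq e^{L_{\mathrm{Lip}}\widetilde{T}}$ completes the argument.

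There is no genuine obstacle here: this is a routine continuous-dependence statement for ODEs over a fixed finite time horizon, and every ingredient (locally Lipschitz $\nabla \mathcal{H}$, $\mathcal{H}$, and bounded $\ell''$) is supplied directly by Assumptions~\ref{homo_assumption} and~\ref{ass_loss}. The only mildly delicate point is exhibiting a single bounded set $\mathcal{B}$ on which the Lipschitz constant is uniform, which follows from continuity of $\bm{\psi}$ in $t$ and compactness of $[-\widetilde{T},\widetilde{T}]$; one should note that $\widetilde{C}$ is allowed to depend on $\rvp$, $\rvq$, and $\widetilde{T}$ through $\mathcal{B}$, which is consistent with how the lemma is invoked in the proofs of \Cref{main_thm_2hm} and \Cref{main_thm_Lhm}.
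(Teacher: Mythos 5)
Your argument is essentially identical to the paper's: both bound the trajectories on $[-\widetilde{T},\widetilde{T}]$ into a bounded set, use the locally Lipschitz gradient of $\mathcal{L}$ to obtain a differential inequality for $\|\bm{\psi}(t,\rvp)-\bm{\psi}(t,\rvq)\|_2^2$, apply Gronwall on $[0,\widetilde{T}]$ and the reversed estimate on $[-\widetilde{T},0]$, and take $\widetilde{C}=e^{\mu\widetilde{T}}$. Your added justification of why $\nabla\mathcal{L}$ is locally Lipschitz (from Assumptions~\ref{homo_assumption} and~\ref{ass_loss}) is a harmless elaboration of a step the paper simply asserts.
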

\begin{proof}
	Let $\rvu_1(t) = \bm{\psi}(t,\rvp)$ and $\rvu_2(t) = \bm{\psi}(t,\rvq)$. Then,
	\begin{equation*}
	\dot{\rvu}_1 = -\nabla\mathcal{L}(\rvu_1), \rvu_1(0) = \rvp, \dot{\rvu}_2 = -\nabla\mathcal{L}(\rvu_2), \rvu_2(0) = \rvq.
	\end{equation*}
	Since $\mathcal{L}(\cdot)$ has locally Lipshitz gradient, and $\widetilde{T}$ is fixed, we may assume that there exists a $\mu$ such that
	\begin{equation*}
	\|\nabla\mathcal{L}(\rvu_1(t)) - \nabla\mathcal{L}(\rvu_2(t))\|_2 \leq \mu\|\rvu_1(t)-\rvu_2(t)\|_2, \text{ for all } t\in[-\widetilde{T},\widetilde{T}].
	\end{equation*}
	Hence, for any $t\in [0,\widetilde{T}]$, we have
	\begin{equation*}
	\frac{1}{2}\frac{d}{dt}\|\rvu_1(t)-\rvu_2(t)\|_2^2 = -(\rvu_1-\rvu_2)^\top(\nabla\mathcal{L}(\rvu_1)-\nabla\mathcal{L}(\rvu_2)) \leq \mu\|\rvu_1(t)-\rvu_2(t)\|_2^2.
	\end{equation*}
	Integrating the above inequality from $0$ to $t\in [0,\widetilde{T}]$, we get
	\begin{equation}
		\|\rvu_1(t)-\rvu_2(t)\|_2\leq e^{\mu t}\|\rvu_1(0)-\rvu_2(0)\|_2 \leq e^{\mu \widetilde{T}}\|\rvu_1(0)-\rvu_2(0)\|_2.
		\label{pos_t_bd}
	\end{equation}
	Similarly, for any $t\in [-\widetilde{T},0]$, we have
	\begin{equation*}
	\frac{1}{2}\frac{d}{dt}\|\rvu_1(t)-\rvu_2(t)\|_2^2 = -(\rvu_1-\rvu_2)^\top(\nabla\mathcal{L}(\rvu_1)-\nabla\mathcal{L}(\rvu_2)) \geq -\mu\|\rvu_1(t)-\rvu_2(t)\|_2^2.
	\end{equation*}
	Integrating the above inequality from $t\in [-\widetilde{T},0]$ to $0$, we get
	\begin{equation}
	\|\rvu_1(t)-\rvu_2(t)\|_2\leq e^{-\mu t}\|\rvu_1(0)-\rvu_2(0)\|_2 \leq e^{\mu \widetilde{T}}\|\rvu_1(0)-\rvu_2(0)\|_2.
	\label{neg_t_bd}
	\end{equation}
	Combining \cref{pos_t_bd} and \cref{neg_t_bd} completes the proof.
\end{proof}	
\subsection{Auxiliary Lemmata}
In this subsection, we provide auxiliary lemmata which are crucial for the proofs.
\begin{lemma}\label{inc_dec}
	Suppose $\eta_1>\eta_2>0 $, then $f_1(t) = \frac{1-\eta_1t}{1-\eta_2t}$ and $f_2(t) = \frac{1}{1-\eta_1t}$ are monotonically decreasing and increasing functions, respectively, on $t\in [0,1/\eta_1)$.
\end{lemma}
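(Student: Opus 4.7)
The plan is to prove both monotonicity claims by a direct derivative computation, since on the stated domain $t\in [0,1/\eta_1)$ both denominators are strictly positive and the functions are smooth. First I would check the domain: for $t\in[0,1/\eta_1)$ we have $\eta_1 t<1$, and since $\eta_1>\eta_2>0$ this also gives $\eta_2 t<\eta_1 t<1$, so $1-\eta_2 t>0$ and $1-\eta_1 t>0$ throughout. Thus $f_1$ and $f_2$ are continuously differentiable on the interval and it suffices to determine the signs of $f_1'$ and $f_2'$.

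For $f_1$ I would apply the quotient rule:
\begin{equation*}
f_1'(t) = \frac{-\eta_1(1-\eta_2 t) - (1-\eta_1 t)(-\eta_2)}{(1-\eta_2 t)^2} = \frac{\eta_2 - \eta_1}{(1-\eta_2 t)^2}.
\end{equation*}
Since $\eta_1>\eta_2$, the numerator is strictly negative and the denominator strictly positive, hence $f_1'(t)<0$ on $[0,1/\eta_1)$, so $f_1$ is monotonically decreasing. For $f_2$, a direct differentiation gives
\begin{equation*}
f_2'(t) = \frac{\eta_1}{(1-\eta_1 t)^2},
\end{equation*}
which is strictly positive on the same interval since $\eta_1>0$, so $f_2$ is monotonically increasing.

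There is no real obstacle here; the only thing to be careful about is to verify the denominators do not vanish on the stated half-open interval, which follows immediately from $\eta_1>\eta_2>0$ and $t<1/\eta_1$. This lemma is a trivial calculus fact and serves only as a bookkeeping tool used inside the analysis of the gradient flow for $L$-homogeneous networks (Lemma \ref{rate_ineq} and surrounding arguments), so no deeper technique is required.
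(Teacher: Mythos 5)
Your proposal is correct and takes exactly the same route as the paper: a direct quotient-rule computation yielding $f_1'(t) = (\eta_2-\eta_1)/(1-\eta_2 t)^2 < 0$ and $f_2'(t) = \eta_1/(1-\eta_1 t)^2 > 0$ on $[0,1/\eta_1)$. The only addition beyond the paper's one-line derivation is your (correct but unnecessary for the conclusion) verification that the denominators stay positive on the stated interval.
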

\begin{proof}
	The claim follows since, for $t\in [0,1/\eta_1)$,
	\begin{equation*}
	f_1'(t) = \frac{-\eta_1(1-\eta_2t) + \eta_2(1-\eta_1t)}{(1-\eta_2t)^2} = \frac{\eta_2-\eta_1}{(1-\eta_2t)^2} \leq 0, \text{ and } f_2'(t) = \frac{\eta_1}{(1-\eta_1t)^2} \geq 0.
	\end{equation*}
\end{proof}
\begin{lemma}\label{int_factor}
	If $g(t) = b/(1-at)$, for some $a>0$. Then, for all $t\in [0,1/a)$, $e^{-\int_0^t g(s) ds} =  (1-at)^{b/a}.$
\end{lemma}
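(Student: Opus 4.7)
The plan is to prove \Cref{int_factor} by a direct calculation of the integral $\int_0^t g(s)\,ds$ followed by exponentiation; this is essentially a standard antiderivative computation, with the only minor subtlety being to track signs carefully.

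First I would observe that on the interval $[0,1/a)$, the denominator $1-as$ is strictly positive and bounded away from $0$ on each compact subinterval, so $g(s) = b/(1-as)$ is continuous and the integral $\int_0^t g(s)\,ds$ is well defined for every $t\in[0,1/a)$. The key step is to compute this integral explicitly. Using the substitution $u = 1-as$, so that $du = -a\,ds$, I get
\begin{equation*}
\int_0^t \frac{b}{1-as}\,ds = -\frac{b}{a}\int_1^{1-at} \frac{1}{u}\,du = -\frac{b}{a}\bigl(\ln(1-at) - \ln 1\bigr) = -\frac{b}{a}\ln(1-at),
\end{equation*}
where I used $1-at>0$ to drop the absolute value inside the logarithm.

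Finally, exponentiating gives
\begin{equation*}
e^{-\int_0^t g(s)\,ds} = e^{\frac{b}{a}\ln(1-at)} = (1-at)^{b/a},
\end{equation*}
which is the desired identity. There is no real obstacle here: the result follows from a one-line substitution, and the only point to keep in mind is that the hypothesis $t\in[0,1/a)$ ensures $1-at>0$, so the logarithm (and hence the real-valued power $(1-at)^{b/a}$) is well defined throughout.
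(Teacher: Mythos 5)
Your proof is correct and follows essentially the same route as the paper: compute $\int_0^t g(s)\,ds = -\tfrac{b}{a}\ln(1-at)$ and exponentiate. The paper simply states the antiderivative without spelling out the substitution, but the argument is identical.
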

\begin{proof} Since $\int_0^t g(s) ds = {-b}\ln(1-at)/a,$ we have $	e^{-\int_0^t g(s) ds} = (1-at)^{b/a}.$
\end{proof}
\begin{lemma}\label{gronwall_ineq}
	If $x(t)$ satisfies $\dot{x} \leq g(t)x + h(t),$ then
	\begin{equation*}
	x(t) \leq \frac{1}{P(t)} \left(x(0) + \int_{0}^t P(s) h(s) ds \right), \text{ where } P(t) = 	e^{-\int_0^t g(s) ds}.
	\end{equation*}
\end{lemma}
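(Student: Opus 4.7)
The plan is to prove this by the standard integrating-factor technique for first-order linear differential inequalities. I would begin by observing that $P(t) = e^{-\int_0^t g(s)ds}$ is strictly positive on the domain of interest, satisfies $P(0) = 1$, and obeys $\dot{P}(t) = -g(t) P(t)$. The sign of $g$ is irrelevant to any of these properties, so no extra hypothesis on $g$ is needed beyond integrability.

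The key step is to multiply the hypothesized inequality by the positive function $P(t)$. Applying the product rule gives
\[
\frac{d}{dt}\bigl(P(t) x(t)\bigr) = \dot{P}(t) x(t) + P(t) \dot{x}(t) = -g(t) P(t) x(t) + P(t) \dot{x}(t).
\]
Since $P(t) > 0$, I may multiply the assumed inequality $\dot{x}(t) \leq g(t) x(t) + h(t)$ by $P(t)$ without flipping the sign, obtaining $P(t)\dot{x}(t) \leq g(t) P(t) x(t) + P(t) h(t)$. Substituting this bound into the product-rule expression causes the $g(t) P(t) x(t)$ terms to cancel, leaving
\[
\frac{d}{dt}\bigl(P(t) x(t)\bigr) \leq P(t) h(t).
\]

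Integrating from $0$ to $t$ and using $P(0) = 1$ then yields $P(t) x(t) - x(0) \leq \int_0^t P(s) h(s)\, ds$, and dividing by $P(t) > 0$ produces exactly the claimed bound. There is essentially no obstacle here; the only subtlety is recognizing that because $P(t)$ is manifestly positive, multiplying the differential inequality by $P(t)$ preserves its direction, and that the cancellation of the $g x$ terms is what makes the integrating factor work. For full rigor one could also note that $Px$ is absolutely continuous so the fundamental theorem of calculus applies to the integration step.
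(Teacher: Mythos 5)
Your proof is correct and follows the same integrating-factor argument as the paper's: multiply by $P(t)=e^{-\int_0^t g(s)\,ds}$, recognize the total derivative $\frac{d}{dt}(P x)$, integrate, and rearrange. You spell out a few sanity checks (positivity of $P$, $P(0)=1$, absolute continuity of $Px$) that the paper leaves implicit, but the route is identical.
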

\begin{proof}
	Multiplying by $e^{-\int_0^t g(s) ds}$ on both sides, we get
	\begin{equation*}
	e^{-\int_0^t g(s) ds} h(t) \geq e^{-\int_0^t g(s) ds} \dot{x} - e^{-\int_0^t g(s) ds}  g(t) x  = \frac{d}{dt}\left(e^{-\int_0^t g(s) ds} x\right).
	\end{equation*}
	Let $P(t) = e^{-\int_0^t g(s) ds} $, then integrating both sides from $0$ to $t$, we get
	\begin{equation*}
	\int_{0}^t P(s) h(s) ds \geq P(t) x(t) - x(0),
	\end{equation*}
	which can be rearranged to get the desired result.
\end{proof}
\section{Proofs Omitted from \Cref{sec_2hm}}\label{appendix_2hm}
This section contains the proof of Lemma \ref{lim_exists_2hm}, \ref{init_align_2hm} and \ref{init_dl3_2hm}, which were used to prove Theorem \ref{main_thm_2hm}. 
\begin{proof}\textbf{of Lemma \ref{init_dl3_2hm}:}
	We choose $\gamma>0$ sufficiently small such that for all unit-norm vector $\rvw$ satisfying $\rvw^\top\rvw_* \geq1-\gamma$, we have
	\begin{align}
	&\mathcal{N}(\rvw) \leq \mathcal{N}(\rvw_*), \|\nabla^2\mathcal{N}(\rvw)\|_2\leq 3\mathcal{N}(\rvw_*) \text{ and }
	\label{loc_ineq_2hm}\\
	&\left(\nabla \mathcal{N}(t_1\rvw) -  \nabla \mathcal{N}(t_2\rvw_*)\right)^\top\left( t_1\rvw - t_2\rvw_*\right) \leq 2\mathcal{N}(\rvw_*)\|t_1\rvw - t_2\rvw_*\|_2^2, \forall t_2\geq t_1\geq 0.
	\label{inn_pd_2hm}
	\end{align}
	The first inequality follows from Lemma \ref{lips_ineq}. The second inequality holds since, from Lemma \ref{second_kkt}, $\|\nabla^2\mathcal{N}(\rvw_*)\|_2 = 2\mathcal{N}(\rvw_*)$, and $\nabla^2\mathcal{N}(\rvw)$ is continuous in the neighborhood of $\rvw_*$. The third inequality follows from Lemma \ref{inn_pd_ineq}. 
	
	We further define $f(\rvw) = \mathcal{J}(\rmX;\rvw)^\top(\ell'(\mathcal{H}(\rmX;\rvw), \rvy) - \ell'(\mathbf{0}, \rvy))$. Then, note that
	\begin{equation*}
	\|f(\rvw)\|_2 \leq \| \mathcal{J}(\rmX;\rvw)\|_2\|\ell'(\mathcal{H}(\rmX;\rvw), \rvy) - \ell'(\mathbf{0}, \rvy)\|_2 \leq Kn\| \mathcal{J}(\rmX;\rvw)\|_2\|\mathcal{H}(\rmX;\rvw)\|_2 ,
	\end{equation*}
	where the first inequality follows from Cauchy-Schwartz inequality, and the second follows from the smoothness of the loss  function (see \Cref{ass_loss}). Note that the RHS in the above inequality is $3-$homogeneous in $\rvw$. Thus, there exists a $\beta>0$ such that
	\begin{equation}
	\|f(\rvw)\|_2 \leq  \beta\|\rvw\|_2^3, \text{ for all }\rvw\in \sR^k.
	\label{lips_res_2hm}
	\end{equation}
	Also, $ f(\rvw)$ is continuously differentiable in the neighborhood of $\rvw_*$. Thus, we may assume that for all vector $\rvw\in\sR^k$ that satisfy $\rvw_*^\top\rvw/\|\rvw\|_2 \geq 1-\gamma, $ we have
	\begin{align*}
	\|\nabla f(\rvw)\|_2 \leq \sum_{i=1}^n&\left\|\nabla^2\mathcal{H}(\rvx_i;\rvw) \right\|_2 \left|\ell'(\mathcal{H}(\rvx_i;\rvw), y_i) - \ell'({0}, y_i)\right| +\\
	& \left\|\nabla\mathcal{H}(\rvx_i;\rvw) \nabla\mathcal{H}(\rvx_i;\rvw) ^\top\right\|_2\left|\ell''(\mathcal{H}(\rvx_i;\rvw), y_i)\right| \\
	\leq K\sum_{i=1}^n&\|\nabla^2\mathcal{H}(\rvx_i;\rvw) \|_2 |\mathcal{H}(\rvx_i;\rvw) | + \|\nabla\mathcal{H}(\rvx_i;\rvw) \nabla\mathcal{H}(\rvx_i;\rvw) ^\top\|_2,
	\end{align*}
	where the second inequality uses smoothness of the loss function. Note that the final upper bound in the above equation is $2-$homogeneous. Thus, there exists a $\zeta>0$ such that for all vectors $\rvw\in\sR^k$ that satisfy $\rvw_*^\top\rvw/\|\rvw\|_2 \geq 1-\gamma, $ we have
	\begin{equation*}
	\|\nabla f(\rvw)\|_2 \leq \zeta\|\rvw\|_2^2.
	\end{equation*}
	Furthermore, from the mean value theorem, we have
	\begin{equation}
	\|f(\rvw_1) - f(\rvw_2)\|_2 \leq \|\nabla f(\tilde{\rvw})\|_2\|\rvw_1-\rvw_2\|_2 \leq \zeta\max(\|\rvw_2\|_2^2,\|\rvw_1\|_2^2)\|\rvw_1-\rvw_2\|_2,
	\label{lips_f_2hm}
	\end{equation}
	where $\rvw_*^\top\rvw_1/\|\rvw_1\|_2, \rvw_*^\top\rvw_2/\|\rvw_2\|_2 \geq 1-\gamma. $
	
	Now, let $\rvq(t) = \bm{\psi}\left(t,{\rva}_\delta\right)$, $\rvu(t) = \bm{\psi}\left(t,\delta\rvw_*\right)$, then $\|\rvq(0) - \rvu(0)\|_2=\left\|\rva_\delta- \delta\rvw_* \right\|_2 \leq C_1\delta^3,$ which implies $\|\rvq(0)\|_2 \leq \delta + C_1\delta^3$. Therefore, for all sufficiently small $\delta>0$, we have
	\begin{align*}
	\frac{\rvw_*^\top\rvq(0)}{\|\rvq(0)\|_2} &= \frac{\rvw_*^\top\rvu(0) + \rvw_*^\top(\rvq(0) - \rvu(0))}{\|\rvq(0)\|_2}\\
	& \geq \frac{\delta - \|\rvq(0) - \rvu(0)\|_2}{\|\rvq(0)\|_2}\geq \frac{\delta - C_1\delta^3}{\|\rvq_\delta(0)\|_2} \geq\frac{\delta - C_1\delta^3}{\delta + C_1\delta^3} > 1-\gamma.
	\end{align*}
	Define 
	\begin{equation*}
	\overline{T}_{1} = \min_{t\geq 0}\left\{t: \frac{\rvw_*^\top\rvq(t)}{\|\rvq(t)\|_2} = 1-\gamma\right\}.
	\end{equation*}
	Recall that $\rvq(t)$ satisfies
	\begin{equation*}
	\dot{\rvq} = -\mathcal{J}(\rmX;\rvq)^\top\ell'(\mathcal{H}(\rmX;\rvq), \rvy) = \nabla\mathcal{N}(\rvq)  - \mathcal{J}(\rmX;\rvq)^\top(\ell'(\mathcal{H}(\rmX;\rvq), \rvy) - \ell'(\mathbf{0}, \rvy)).
	\end{equation*}
	Multiplying the above equation by $\rvq^\top$ from the left we get
	\begin{equation*}
	\frac{1}{2}\frac{d\|\rvq\|_2^2}{dt} = 2\mathcal{N}(\rvq) - 2 \mathcal{H}(\rmX;\rvq)^\top(\ell'(\mathcal{H}(\rmX;\rvq), \rvy) - \ell'(\mathbf{0}, \rvy))
	\leq 2\mathcal{N}(\rvq) = 2\mathcal{N}(\rvq/\|\rvq\|_2)\|\rvq\|_2^2, 
	\end{equation*}
	where the first equality follows from Lemma \ref{euler_thm}, and the first inequality is due to convexity of the loss function. The second equality holds since $\mathcal{N}(\rvq)$ is two-homogeneous.
	
	Now, since $\rvw_*^\top\rvq(t)/\|\rvq(t)\|_2 \geq 1-\gamma$ for all $t\in [0,\overline{T}_1]$, from \cref{loc_ineq_2hm}, we get
	\begin{equation*}
	\frac{1}{2}\frac{d\|\rvq\|_2^2}{dt} \leq 2\mathcal{N}(\rvw_*)\|\rvq\|_2^2,
	\end{equation*}
	which implies
	\begin{equation}
	\|\rvq(t)\|_2 \leq \|\rvq(0)\|_2e^{2t\mathcal{N}(\rvw_*)}, \text{ for all } t\in [0,\overline{T}_1].
	\label{qt_bd_2hm}
	\end{equation}
	Let $\rvz(t) = e^{2t\mathcal{N}(\rvw_*)}\rvw_*$, which is the solution of
	\begin{equation*}
	\dot{\rvz} = \nabla \mathcal{N}(\rvz), \rvz(0) = \rvw_*.  
	\end{equation*}
	Note that, for $t\in [0,\overline{T}_1]$,
	\begin{align}
	&\left(\nabla \mathcal{N}\left(\frac{\rvq(t)}{\|\rvq(0)\|_2}\right) - \nabla\mathcal{N}(\rvz(t))\right)^\top\left(\frac{\rvq(t)}{\|\rvq(0)\|_2}- \rvz(t)\right)\nonumber\\
	= &\left(\nabla \mathcal{N}\left(\frac{\|\rvq(t)\|_2}{\|\rvq(0)\|_2}\frac{\rvq(t)}{\|\rvq(t)\|_2}\right) - \nabla\mathcal{N}\left({\rvz(t)}\right)\right)^\top\left(\frac{\|\rvq(t)\|_2}{\|\rvq(0)\|_2}\frac{\rvq(t)}{\|\rvq(t)\|_2}- {\rvz(t)}\right)\nonumber\\
	&\leq 2\mathcal{N}(\rvw_*)\left\|\frac{\|\rvq(t)\|_2}{\|\rvq(0)\|_2}\frac{\rvq(t)}{\|\rvq(t)\|_2}- \|\rvz(t)\|_2\frac{\rvz(t)}{\|\rvz(t)\|_2}\right\|^2 = 2\mathcal{N}(\rvw_*)\left\|\frac{\rvq(t)}{\|\rvq(0)\|_2}- {\rvz(t)}\right\|^2,
	\label{inn_pd_pf_2hm}
	\end{align}
	where the inequality follows from ${\|\rvq(t)\|_2}/{\|\rvq(0)\|_2}\leq e^{2t\mathcal{N}(\rvw_*)} = \|\rvz(t)\|_2$, $\rvw_*^\top\rvq(t)/\|\rvq(t)\|_2 \geq 1-\gamma$ for all $t\in [0,\overline{T}_1]$, and \cref{inn_pd_2hm}. Hence, for $t\in [0,\overline{T}_1]$,
	\begin{align*}
	&\frac{1}{2}\frac{d}{dt} \left\|\frac{\rvq(t)}{\|\rvq(0)\|_2} - \rvz(t) \right\|_2^2\\  
	&= \left(\frac{\rvq(t)}{\|\rvq(0)\|_2} - \rvz(t) \right)^\top\left(\frac{\dot{\rvq}}{\|\rvq(0)\|_2} - \dot{\rvz}\right)\\
	&= \left(\frac{\rvq(t)}{\|\rvq(0)\|_2} - \rvz(t) \right)^\top\left(\frac{\nabla\mathcal{N}({\rvq})}{\|\rvq(0)\|_2} - \nabla\mathcal{N}({\rvz})\right) - \left(\frac{\rvq(t)}{\|\rvq(0)\|_2} - \rvz(t) \right)^\top \frac{f(\rvq)}{\|\rvq(0)\|_2}\\
	&\leq 2\mathcal{N}(\rvw_*)\left\|\frac{\rvq(t)}{\|\rvq(0)\|_2}- {\rvz(t)}\right\|_2^2 + \beta\left\|\frac{\rvq(t)}{\|\rvq(0)\|_2}- {\rvz(t)}\right\|_2\frac{\|\rvq\|_2^3}{\|\rvq(0)\|_2},
	\end{align*}
	where the inequality follows from \cref{inn_pd_pf_2hm} and \cref{lips_res_2hm}. The above equation implies
	\begin{equation*}
	\frac{d}{dt} \left\|\frac{\rvq(t)}{\|\rvq(0)\|_2} - \rvz(t) \right\|_2  \leq 2\mathcal{N}(\rvw_*)\left\|\frac{\rvq(t)}{\|\rvq(0)\|_2} - \rvz(t) \right\|_2 +  \frac{\beta\|\rvq\|_2^3}{\|\rvq(0)\|_2}.
	\end{equation*}
	Using Lemma \ref{gronwall_ineq}, we get 
	\begin{align*}
	\left\|\frac{\rvq(t)}{\|\rvq(0)\|_2} - \rvz(t) \right\|_2 &\leq e^{2t\mathcal{N}(\rvw_*)}\left(\left\|\frac{\rvq(0)}{\|\rvq(0)\|_2} - \rvz(0) \right\|_2 + \int_{0}^te^{-2s\mathcal{N}(\rvw_*)}\beta\|\rvq(s)\|_2^3/\|\rvq(0)\|_2 ds\right) \\	 
	&\leq e^{2t\mathcal{N}(\rvw_*)}\left(\left\|\frac{\rvq(0)}{\|\rvq(0)\|_2} - \rvz(0) \right\|_2 + \beta\|\rvq(0)\|_2^2\int_{0}^te^{4s\mathcal{N}(\rvw_*)} ds\right)\\
	& \leq C_2^2\delta^2e^{6t\mathcal{N}(\rvw_*)},
	\end{align*}
	where the second equality uses \cref{qt_bd_2hm}. In the third inequality, $C_2$ is some sufficiently large constant, and it follows since, for all sufficiently small $\delta>0$, $\|\rvq(0)\|_2 = O(\delta)$ and 
	\begin{align*}
	\left\|\frac{\rvq(0)}{\|\rvq(0)\|_2} - \rvz(0) \right\|_2 = \frac{\|\rva_\delta - \|\rva_\delta\|_2\rvw_*\|_2}{\|\rva_\delta\|_2} \leq \frac{\|\rva_\delta - \delta\rvw_*\|_2 + \left|\delta - \|\rva_\delta\|_2\right|}{\delta-C_1\delta^3}\leq \frac{2C_1\delta^3}{\delta-C_1\delta^3} = O(\delta^2). 
	\end{align*}
	Define $\tau_q(t) \coloneqq \rvq(t)/\|\rvq(0)\|_2- \rvz(t) $, then, using the definition of $\overline{T}_{1}$, we have
	\begin{equation*}
	1-\gamma = \frac{\rvw_*^\top\rvq(\overline{T}_1)}{\|\rvq(\overline{T}_1)\|_2}  =  \frac{\rvw_*^\top\rvz(\overline{T}_1) + \rvw_*^\top\tau_q(\overline{T}_1) }{\|\rvq(\overline{T}_1)\|_2/\|\rvq(0)\|_2} \geq  \frac{e^{2\overline{T}_1\mathcal{N}(\rvw_*)} -  \|\tau_q(\overline{T}_1)\|_2 }{e^{2\overline{T}_1\mathcal{N}(\rvw_*)}} \geq 1 - {C_2^2\delta^2 e^{4\overline{T}_1\mathcal{N}(\rvw_*)} },
	\end{equation*}
	which implies $\overline{T}_{1}\geq \frac{1}{2\mathcal{N}(\rvw_*)}\ln\left(\frac{\sqrt{\gamma}}{C_2\delta}\right).$
	
	Next, we focus on $\rvu(t)$. Note that $\rvw_*^\top\rvu(0)/\|\rvu(0)\|_2  = 1>1-\gamma$, and $\rvu(t)$ satisfies
	\begin{equation*}
	\dot{\rvu} = -\mathcal{J}(\rmX;\rvu)^\top\ell'(\mathcal{H}(\rmX;\rvu), \rvy) = \nabla\mathcal{N}(\rvu)  - \mathcal{J}(\rmX;\rvu)^\top(\ell'(\mathcal{H}(\rmX;\rvu), \rvy) - \ell'(\mathbf{0}, \rvy)).
	\end{equation*}
	Thus, if we define
	\begin{equation*}
	\overline{T}_{2} = \min_{t\geq 0}\left\{t: \frac{\rvw_*^\top\rvu(t)}{\|\rvu(t)\|_2} = 1-\gamma\right\},
	\end{equation*}
	then, similar to $\rvq(t)$, we can show, for all $t\in [0,\overline{T}_2]$, 
	\begin{equation*}
	\|\rvu(t)\|_2 \leq \|\rvu(0)\|_2e^{2t\mathcal{N}(\rvw_*)}.
	\end{equation*}
	Also, there exists a sufficiently large constant $C_3$ such that 
	\begin{align*}
	\left\|\frac{\rvu(t)}{\|\rvu(0)\|_2} - \rvz(t) \right\|_2 &\leq e^{2t\mathcal{N}(\rvw_*)}\left(\left\|\frac{\rvu(0)}{\|\rvu(0)\|_2} - \rvz(0) \right\|_2 + \int_{0}^te^{-2s\mathcal{N}(\rvw_*)}\beta\|\rvu(s)\|_2^3/\|\rvu(0)\|_2 ds\right) \\	 
	& \leq C_3^2\delta^2e^{6t\mathcal{N}(\rvw_*)},
	\end{align*}
	and $\overline{T}_{2}\geq \frac{1}{2\mathcal{N}(\rvw_*)}\ln\left(\frac{\sqrt{\gamma}}{C_3\delta}\right).$
	
	Next, choose $C_4>0$ sufficiently small such that $C_4 < \sqrt{\gamma}\min(1/C_2,1/C_3)$ and $C_4^2\leq \mathcal{N}(\rvw_*)/(\zeta(1+C_1)^2)$. Then, for all $t\in [0,\frac{1}{2\mathcal{N}(\rvw_*)}\ln\left(\frac{C_4}{\delta}\right)]$,
	\begin{align}
	&	\frac{\rvw_*^\top\rvu(t)}{\|\rvu(t)\|_2} \geq 1-\gamma,	\frac{\rvw_*^\top\rvq(t)}{\|\rvq(t)\|_2} \geq 1-\gamma, \text{ and }\nonumber\\
	&\|\rvu(t)\|_2 \leq \|\rvu(0)\|_2e^{2t\mathcal{N}(\rvw_*)} \leq C_4, \|\rvq(t)\|_2 \leq \|\rvq(0)\|_2e^{2t\mathcal{N}(\rvw_*)} \leq C_4(1+C_1\delta^2) .
	\label{qu_bd}
	\end{align}
	Therefore, for $t\in \left[0,\frac{1}{2\mathcal{N}(\rvw_*)}\ln\left(\frac{C_4}{\delta}\right)\right]$, we have
	\begin{align*}
	\frac{1}{2}\frac{d}{dt} \left\|{\rvu(t)} - \rvq(t) \right\|_2^2 &= \left({\rvu} - \rvq\right)^\top\left({\dot{\rvu}} - \dot{\rvq}\right)\\
	& = \left({\rvu} - \rvq\right)^\top\left(\nabla\mathcal{N}(\rvu) - \nabla\mathcal{N}(\rvq)\right) - \left({\rvu}- \rvq\right)^\top\left(f(\rvu) - f(\rvq)\right)\\
	&\leq 3\mathcal{N}(\rvw_*)\left\|{\rvu} - \rvq \right\|_2^2 + \zeta\left\|{\rvu}- \rvq \right\|_2^2\max(\|\rvq\|_2^2,\|\rvu\|_2^2).\\
	& \leq (3\mathcal{N}(\rvw_*)+ \zeta C_4^2(1+C_1)^2)\left\|{\rvu(t)} - \rvq(t) \right\|_2^2\\
	&\leq 4\mathcal{N}(\rvw_*)\left\|{\rvu(t)} - \rvq(t) \right\|_2^2,
	\end{align*}	
	where the first inequality follows from combining mean value theorem with \cref{loc_ineq_2hm} and $0$-homogeneity of $\nabla^2\mathcal{N}(\rvw)$, and \cref{lips_f_2hm}. The second inequality follows from \cref{qu_bd} when $\delta\leq 1$. The last inequality uses our assumption on $C_4.$ Therefore,
	\begin{align*}
	\left\|\rvu\left(\frac{1}{2\mathcal{N}(\rvw_*)}\ln\left(\frac{C_4}{\delta}\right)\right) - \rvq\left(\frac{1}{2\mathcal{N}(\rvw_*)}\ln\left(\frac{C_4}{\delta}\right)\right)  \right\|_2 &\leq  \left\|\rvu(0) - \rvq(0) \right\|_2e^{\frac{4\mathcal{N}(\rvw_*)}{2\mathcal{N}(\rvw_*)}\ln\left(\frac{C_4}{\delta}\right)}\\
	& \leq C_1C_4^2\delta.
	\end{align*}	
	Since $\widetilde{T}$ is fixed, from Lemma \ref{err_bd_gf} and the above inequality, there exists a ${C}>0$ such that for all sufficiently small $\delta>0$, we have 
	\begin{align*}
	\left\|\bm{\psi}\left(t+ \frac{1}{2\mathcal{N}(\rvw_*)}\ln\left(\frac{1}{\delta}\right),{\rva}_\delta\right) - \bm{\psi}\left(t+ \frac{1}{2\mathcal{N}(\rvw_*)}\ln\left(\frac{1}{\delta}\right),\delta\rvw_*\right)  \right\|_2 \leq C\delta,
	\end{align*}
	for all $t\in [-\widetilde{T},\widetilde{T}]$, which completes the proof. 
\end{proof}
\begin{proof}\textbf{of Lemma \ref{lim_exists_2hm}: }
	We choose $\gamma>0$ sufficiently small such that for all unit-norm vectors $\rvw$ satisfying $\rvw^\top\rvw_* \geq 1-\gamma$, we have $\mathcal{N}(\rvw) \leq \mathcal{N}(\rvw_*)$ and
	\begin{equation}
	\left(\nabla \mathcal{N}(t_1\rvw) -  \nabla \mathcal{N}(t_2\rvw_*)\right)^\top\left( t_1\rvw - t_2\rvw_*\right) \leq 2\mathcal{N}(\rvw_*)\|t_1\rvw - t_2\rvw_*\|_2^2 , \forall t_2\geq t_1\geq 0.
	\label{2_inn_pd_2hm}
	\end{equation}
	The first and second inequality follow from Lemma \ref{lips_ineq} and Lemma \ref{inn_pd_ineq}, respectively. Let $\rvu_{\delta_1}(t) = \bm{\psi}\left(t, \delta_1\rvw_*\right) \text{ and } \rvu_{\delta_2}(t) = \bm{\psi}\left(t, \delta_2\rvw_*\right),$ where recall that $\delta_2\geq \delta_1>0$. Let $\rvz(t) = e^{2t\mathcal{N}(\rvw_*)}\rvw_*$, which is the solution of
	\begin{equation*}
	\dot{\rvz} = \nabla \mathcal{N}(\rvz), \rvz(0) = \rvw_*.  
	\end{equation*}
	Note that $\rvu_{\delta_1}(0) = \delta_1\rvw_*$. Define
	\begin{equation*}
	T^*_1 = \min_{t\geq 0}\left\{t: \frac{\rvw_*^\top\rvu_{\delta_1}(t)}{\|\rvu_{\delta_1}(t)\|_2} = 1-\gamma\right\}.
	\end{equation*}
	Then, as demonstrated in the proof of  Lemma \ref{init_dl3_2hm}, we can show
	\begin{equation*}
	\|\rvu_{\delta_1}(t)\|_2 \leq \|\rvu_{\delta_1}(0)\|_2e^{2t\mathcal{N}(\rvw_*)}.
	\end{equation*}
	Also, there exists a sufficiently large constant $C_2$ such that
	\begin{equation}
	\left\|\frac{\rvu_{\delta_1}(t)}{\|\rvu_{\delta_1}(0)\|_2} - \rvz(t) \right\|_2
	\leq C_2^2\delta_1^2e^{6t\mathcal{N}(\rvw_*)}, \text{ for all }t\in [0,T_1^*],\label{dt1_bd_2hm}
	\end{equation}
	and ${T}_{1}^*\geq \frac{1}{2\mathcal{N}(\rvw_*)}\ln\left(\frac{\sqrt{\gamma}}{C_2\delta_1}\right)$. Hence, if $\delta_2\leq \sqrt{\gamma}/C_2$, then ${T}_{1}^*\geq \frac{1}{2\mathcal{N}(\rvw_*)}\ln\left(\frac{\delta_2}{\delta_1}\right)$.
	
	Next, since $\delta_1\rvz\left(\frac{1}{2\mathcal{N}(\rvw_*)}\ln\left(\frac{\delta_2}{\delta_1}\right)\right) = \delta_2\rvw_*$, using \cref{dt1_bd_2hm} we get
	\begin{align}
	\left\|\rvu_{\delta_1}\left(\frac{\ln\left({\delta_2}/{\delta_1}\right)}{2\mathcal{N}(\rvw_*)}\right) - \delta_2\rvw_* \right\|_2 &= \left\|\rvu_{\delta_1}\left(\frac{1}{2\mathcal{N}(\rvw_*)}\ln\left(\frac{\delta_2}{\delta_1}\right)\right) - \delta_1\rvz\left(\frac{1}{2\mathcal{N}(\rvw_*)}\ln\left(\frac{\delta_2}{\delta_1}\right)\right) \right\|_2\nonumber \\
	&\leq \frac{C_2^2\delta_1^3 \delta_2^3 }{\delta_1^3} = C_2\delta_2^3.
	\label{d1d2_bd}
	\end{align}
	Now, note that 
	\begin{align*}
	\bm{\psi}\left(t+\frac{1}{2\mathcal{N}(\rvw_*)}\ln\left(\frac{1}{\delta_1}\right),\delta_1\rvw_*\right) &= \bm{\psi}\left(t+\frac{1}{2\mathcal{N}(\rvw_*)}\ln\left(\frac{1}{\delta_2}\right)+\frac{1}{2\mathcal{N}(\rvw_*)}\ln\left(\frac{\delta_2}{\delta_1}\right),\delta_1\rvw_*\right)\\
	= & \bm{\psi}\left(t+\frac{1}{2\mathcal{N}(\rvw_*)}\ln\left(\frac{1}{\delta_2}\right),\bm{\psi}\left(\frac{1}{2\mathcal{N}(\rvw_*)}\ln\left(\frac{\delta_2}{\delta_1}\right),\delta_1\rvw_*\right)\right)\\
	=&  \bm{\psi}\left(t+\frac{1}{2\mathcal{N}(\rvw_*)}\ln\left(\frac{1}{\delta_2}\right),\rvu_{\delta_1}\left(\frac{1}{2\mathcal{N}(\rvw_*)}\ln\left(\frac{\delta_2}{\delta_1}\right)\right)\right),
	\end{align*}
	Combining the above equality with \cref{d1d2_bd} and Lemma \ref{init_dl3_2hm}, we get that for any fixed $t\in (-\infty,\infty)$ and all sufficiently small $\delta_2,\delta_1>0$, there exists a constant $C>0$ such that
	\begin{equation*}
	\left\|\bm{\psi}\left(t+\frac{1}{2\mathcal{N}(\rvw_*)}\ln\left(\frac{1}{\delta_1}\right),\delta_1\rvw_*\right) - \bm{\psi}\left(t+\frac{1}{2\mathcal{N}(\rvw_*)}\ln\left(\frac{1}{\delta_2}\right),\delta_2\rvw_*\right)\right\|_2 \leq C\delta_2,
	\end{equation*}
	which implies $\rvp(t)$ exists for all $t\in (-\infty,\infty)$. 
	
	We next prove that $\mathcal{L}(\rvp(0)) \leq\mathcal{L}(\mathbf{0}) - \eta$, for some $\eta>0$. Let $\rvu(t) = \bm{\psi}(t,\delta\rvw_*)$ and $\rvz(t) = e^{2t\mathcal{N}(\rvw_*)}\rvw_*$, then from \cref{dt1_bd_2hm}, there exists a sufficiently large constant $B_1$ such that 
	\begin{align}
	\left\|{\rvu(t)}/{\delta} - \rvz(t) \right\|_2
	& \leq B_1^2\delta^2e^{6t\mathcal{N}(\rvw_*)}, \text{ for all }t\in [0,T_1],
	\label{diff_u_2hm}
	\end{align}
	where ${T}_{1}\geq \frac{1}{2\mathcal{N}(\rvw_*)}\ln\left(\frac{\sqrt{\gamma}}{B_1\delta}\right).$
	
	Define $\alpha\coloneqq\max_{\|\rvw\|_2 = 1} \|\mathcal{H}(\rmX,\rvw)\|_2$. From the convexity of the loss function, we know
	\begin{align}
	\mathcal{L}(\rvw) &\leq \mathcal{L}(\mathbf{0}) + \ell'(\mathcal{H}(\rmX,\rvw),\rvy)^\top\mathcal{H}(\rmX,\rvw)\nonumber\\
	&= \mathcal{L}(\mathbf{0}) + (\ell'(\mathcal{H}(\rmX,\rvw),\rvy)-\ell'(\mathbf{0},\rvy))^\top\mathcal{H}(\rmX,\rvw) + \ell'(\mathbf{0},\rvy)^\top\mathcal{H}(\rmX,\rvw)\nonumber\\
	& \leq \mathcal{L}(\mathbf{0}) + Kn\|\mathcal{H}(\rmX,\rvw)\|_2^2 - \mathcal{N}(\rvw) \leq \mathcal{L}(\mathbf{0}) + \alpha^2 Kn\|\rvw\|_2^4 - \mathcal{N}(\rvw),
	\label{loss_ub_2hm}
	\end{align}
	where the last two inequalities follow from smoothness of the loss function and the definition of $\alpha$. Now, choose $\epsilon\in(0,1)$ sufficiently small such that
	\begin{equation*}
	\epsilon\leq \frac{\sqrt{\gamma}}{B_1}, \text{ and } Kn\alpha^2(\sqrt{\epsilon}+\epsilon^{2.5}/B_1^2)^4 \leq \mathcal{N}(\rvw) -  \frac{\mathcal{N}(\rvw_*)}{2}, \text{ if } \|\rvw-\rvw_*\|_2\leq \epsilon^2B_1^2,
	\end{equation*}
	where the second inequality holds true since $\mathcal{N}(\rvw)$ is continuous and $\mathcal{N}(\rvw_*)>0$. Note that
	\begin{equation*}
	{T}_{1}\geq \frac{1}{2\mathcal{N}(\rvw_*)}\ln\left(\frac{\sqrt{\gamma}}{B_1\delta}\right)\geq \frac{1}{2\mathcal{N}(\rvw_*)}\ln\left(\frac{\epsilon}{\delta}\right)\coloneqq T_2.
	\end{equation*} 
	Hence, using \cref{diff_u_2hm},  $\|\rvu(T_2)\|_2\leq \epsilon+B_1^2\epsilon^3$. Also, if we define $\tau\coloneqq\rvu(T_2)-\delta\rvz(T_2)$, then $\|\tau\|_2 \leq B_1^2\epsilon^3$. Therefore, using \cref{loss_ub_2hm}, we get
	\begin{align*}
	\mathcal{L}(\rvu(T_2)) &\leq \mathcal{L}(\mathbf{0}) + \alpha^2 Kn\|\rvu(T_2)\|_2^4 - \mathcal{N}(\rvu(T_2))\\
	&\leq\mathcal{L}(\mathbf{0}) + \alpha^2 Kn(\epsilon+B_1^2\epsilon^3)^4 - \mathcal{N}(\rvu(T_2))\\
	&= \mathcal{L}(\mathbf{0}) + \epsilon^2(\alpha^2 Kn(\sqrt{\epsilon}+B_1^2\epsilon^{2.5})^4 - \mathcal{N}(\rvu(T_2)/\epsilon)) \leq  \mathcal{L}(\mathbf{0}) - \epsilon^2\mathcal{N}(\rvw_*)/2,
	\end{align*}
	where the last inequality follows from the choice of $\epsilon$ and since $\|\rvu(T_2)/\epsilon-\rvw_*\|_2\leq B_1^2\epsilon^2.$ If we define $\eta = \epsilon^2\mathcal{N}(\rvw_*)/2$, then the above equation implies, for all sufficiently small $\delta>0$,
	\begin{equation*}
	\mathcal{L}\left(\bm{\psi}\left(\frac{\ln(\epsilon)+\ln\left({1}/{\delta}\right)}{2\mathcal{N}(\rvw_*)},\delta\rvw_*\right)\right) = \mathcal{L}\left(\bm{\psi}\left(\frac{\ln\left({\epsilon}/{\delta}\right)}{2\mathcal{N}(\rvw_*)},\delta\rvw_*\right)\right)\leq \mathcal{L}(\mathbf{0}) - \eta,
	\end{equation*} 
	where note that $\epsilon\in (0,1)$ and fixed, and thus, $\eta$ is fixed. Since $\ln(\epsilon)<0$, and the loss decreases with time, the above equation implies, for all sufficiently small $\delta>0$,
	\begin{equation*}
	\mathcal{L}\left(\bm{\psi}\left(\frac{\ln\left({1}/{\delta}\right)}{2\mathcal{N}(\rvw_*)},\delta\rvw_*\right)\right) \leq\mathcal{L}\left(\bm{\psi}\left(\frac{\ln\left({\epsilon}/{\delta}\right)}{2\mathcal{N}(\rvw_*)},\delta\rvw_*\right)\right)\leq \mathcal{L}(\mathbf{0}) - \eta.
	\end{equation*} 
	Taking $\delta\to 0$, gives us $\mathcal{L}(\rvp(0)) \leq \mathcal{L}(\mathbf{0}) - \eta,$ which completes the proof.
\end{proof}
\begin{proof}\textbf{of Lemma \ref{init_align_2hm}:}
	We choose $\gamma>0$ sufficiently small such that for all unit-norm vectors $\rvw$ satisfying $\rvw^\top\rvw_* \geq 1-\gamma$, we have
	\begin{align}
	&\mathcal{N}(\rvw) \leq \mathcal{N}(\rvw_*), \|\nabla^2\mathcal{N}(\rvw)\|_2\leq 3\mathcal{N}(\rvw_*), \text{ and }\label{loc_min_2hm}\\
	&\left(\nabla \mathcal{N}(t_1\rvw) -  \nabla \mathcal{N}(t_2\rvw_*)\right)^\top\left(t_1\rvw - t_2\rvw_*\right) \leq 2\mathcal{N}(\rvw_*)\|t_1\rvw - t_2\rvw_*\|_2^2 , \forall t_2\geq t_1\geq 0.
	\label{inn_pd_2hm_3}
	\end{align}
	The first inequality follows from Lemma \ref{lips_ineq}. The second inequality holds since, from Lemma \ref{second_kkt}, $\|\nabla^2\mathcal{N}(\rvw_*)\|_2 = 2\mathcal{N}(\rvw_*)$, and $\nabla^2\mathcal{N}(\rvw)$ is continuous in the neighborhood of $\rvw_*$. The third inequality follows from Lemma \ref{inn_pd_ineq}. Define $f(\rvw) = \mathcal{J}(\rmX;\rvw)^\top(\ell'(\mathcal{H}(\rmX;\rvw), \rvy) - \ell'(\mathbf{0}, \rvy))$, then, as shown in the proof of Lemma \ref{init_dl3_2hm}, there exists a $\beta>0$ such that
	\begin{equation}
	\|f(\rvw)\|_2 \leq  \beta\|\rvw\|_2^3, \text{ for all }\rvw\in \sR^k.
	\label{lips_res_2hm_1}
	\end{equation}
	Let $\rvz_1(t)$ denote the solution of 
	\begin{equation*}
	\dot{\rvz} = \nabla\mathcal{N}(\rvz), \rvz(0) = \rvw_0.
	\end{equation*}
	Since $\rvw_0\in \mathcal{S}(\rvw_*)$, it follows that $\rvz_1(t)$ converges to $\rvw_*$ in direction. Thus, we can assume there exists some time $T_\gamma$ and a constant $B_1$ such that
	\begin{equation*}
	\frac{\rvw_*^\top\rvz_1(T_\gamma)}{\|\rvz_1(T_\gamma)\|_2} = 1-\frac{\gamma^2}{8}, \text{ and }\|\rvz_1(t)\|_2\leq B_1, \text{ for all }t\in [0,T_\gamma].
	\end{equation*}
	Let $\mu_1 \coloneqq \max_{\|\rvw\|_2 = 1}\mathcal{N}(\rvw),$ and $\rvw(t) \coloneqq \bm{\psi}\left(t,\delta\rvw_0\right),$	then $\rvw(t)$ is the solution of 
	\begin{equation*}
	\dot{\rvw} = \nabla\mathcal{N}(\rvw)  - \mathcal{J}(\rmX;\rvw)^\top(\ell'(\mathcal{H}(\rmX;\rvw), \rvy) - \ell'(\mathbf{0}, \rvy)), \rvw(0) = \delta\rvw_0.
	\end{equation*}
	Multiplying the above equation by $\rvw^\top$ from the left and using convexity of $\ell(\cdot,\cdot)$, we get
	\begin{equation*}
	\frac{1}{2}\frac{d\|\rvw\|_2^2}{dt} =  2\mathcal{N}(\rvw) - 2 \mathcal{H}(\rmX;\rvw)^\top(\ell'(\mathcal{H}(\rmX;\rvw), \rvy) - \ell'(\mathbf{0}, \rvy))
	\leq 2\mathcal{N}(\rvw) \leq \mu_1\|\rvw\|_2^2,
	\end{equation*}
	which implies $\|\rvw(t)\|_2\leq \delta e^{\mu_1t}$. Now, since $\nabla\mathcal{N}(\rvw)$ is locally Lipschitz, and for all $t\in [0,T_\gamma]$, $\|\rvz_1(t)\|_2\leq B_1$ and $\|\rvw(t)\|_2/\delta\leq  e^{\mu_1T_\gamma} := B_2$, there exists a $\mu_2>0$ such that
	\begin{equation*}
	\|\nabla\mathcal{N}({\rvw(t)}/{\delta} ) - \nabla\mathcal{N}(\rvz_1(t))\|_2 \leq \mu_2\|\rvw(t)/\delta-\rvz_1(t)\|_2, \forall t\in [0,T_\gamma].
	\end{equation*}
	Hence, for all $t\in[0,T_\gamma]$,
	\begin{align*}
	\frac{1}{2}\frac{d}{dt} \left\|\frac{\rvw(t)}{\delta} - \rvz_1(t) \right\|_2^2 &= \left(\frac{\rvw}{\delta} - \rvz_1\right)^\top\left(\frac{\dot{\rvw}}{\delta}  - \dot{\rvz}_1\right)\\
	& = \left(\frac{\rvw}{\delta} - \rvz_1\right)^\top\left(\nabla\mathcal{N}({\rvw}/{\delta} ) - \nabla\mathcal{N}(\rvz_1)\right) - \left(\frac{\rvw}{\delta} - \rvz_1\right)^\top\frac{f(\rvw)}{\delta}\\
	&\leq \mu_2\left\|\frac{\rvw}{\delta}  - \rvz_1 \right\|_2^2 + \beta\left\|\frac{\rvw}{\delta}  - \rvz_1 \right\|_2\|\rvw\|_2^3/\delta,
	\end{align*}	
	which implies
	\begin{equation*}
	\frac{d}{dt} \left\|\frac{\rvw(t)}{\delta} - \rvz_1(t) \right\|_2
	\leq \mu_2\left\|\frac{\rvw}{\delta}  - \rvz_1(t) \right\|_2  + \beta\|\rvw\|_2^3/\delta.
	\end{equation*}	
	Using Lemma \ref{gronwall_ineq}, we get
	\begin{align}
	\left\|\frac{\rvw(T_\gamma)}{\delta} - \rvz_1(T_\gamma) \right\|_2 &\leq e^{\mu_2 T_\gamma}\int_{0}^{T_\gamma}e^{-\mu_2 s}\beta\|\rvw(s)\|_2^3/\delta ds 	 \nonumber\\
	&\leq e^{\mu_2 T_\gamma}\int_{0}^{T_\gamma} \beta\|\rvw(s)\|_2^3/\delta ds \leq \beta\delta^2e^{\mu_2 T_\gamma}\int_{0}^{T_\gamma} e^{3\mu_1s} ds \leq A_1\delta^2,
	\label{wtg_bd_2hm}
	\end{align}
	where $A_1$ is a sufficiently large constant. Let $\tau_1 \coloneqq {\rvw(T_\gamma)}/{\delta} - \rvz_1(T_\gamma)$, then, for all sufficiently small $\delta>0$, we have
	\begin{align*}
	\frac{\rvw_*^\top\rvw(T_\gamma)}{\|\rvw(T_\gamma)\|_2} = \frac{\rvw_*^\top\rvz_1(T_\gamma) + \rvw_*^\top\tau_1}{\|\rvw(T_\gamma)\|_2/\delta} &\geq \frac{(1-\gamma^2/8)\|\rvz_1(T_\gamma)\|_2 - A_1\delta^2}{\|\rvw(T_\gamma)\|_2/\delta}\\
	&\geq \frac{(1-\gamma^2/8)\|\rvz_1(T_\gamma)\|_2 - A_1\delta^2}{\|\rvz_1(T_\gamma)\|_2 +A_1\delta^2} \\
	& = 1 - \frac{\gamma^2\|\rvz_1(T_\gamma)\|_2/8 - 2A_1\delta^2}{\|\rvz_1(T_\gamma)\|_2+A_1\delta^2}\geq 1-\gamma^2/4.
	\end{align*}
	Let $A_2 = \|\rvz_1(T_\gamma)\|_2$, where note that $A_2$ is a constant that does not depend on $\delta$. Define $\rvw_1 = \rvw(T_\gamma)/\| \rvw(T_\gamma)\|_2$ and $\widetilde{\delta} = \| \rvw(T_\gamma)\|_2$. From \cref{wtg_bd_2hm}, we know
	\begin{equation*}
	\widetilde{\delta}\in (A_2\delta - A_1\delta^3, A_2\delta + A_1\delta^3),
	\end{equation*}
	thus, $\widetilde{\delta}$ can be made sufficiently small by choosing $\delta$ sufficiently small. Next, let
	$\rvu(t) = \bm{\psi}(t+T_\gamma,\delta\rvw_0)$. Then,
	\begin{equation*}
	\rvu(t) = \bm{\psi}(t+T_\gamma,\delta\rvw_0) = \bm{\psi}(t,\bm{\psi}(T_\gamma,\delta\rvw_0)) = \bm{\psi}(t,\rvw(T_\gamma)) = \bm{\psi}(t,\widetilde{\delta} \rvw_1),
	\end{equation*} 
	and $\rvu(t)$ is the solution of 
	\begin{equation*}
	\dot{\rvu} =\nabla\mathcal{N}(\rvu)  - \mathcal{J}(\rmX;\rvu)^\top(\ell'(\mathcal{H}(\rmX;\rvu), \rvy) - \ell'(\mathbf{0}, \rvy)), \rvu(0) = \widetilde{\delta} \rvw_1.
	\end{equation*}
	Since $\rvw_*^\top\rvu(0)/\|\rvu(0)\|_2 = \rvw_*^\top\rvw_1 \geq 1-\gamma^2/4 > 1-\gamma$, we define 
	$$T^* = \min_{t\geq 0}\left\{t: \frac{\rvw_*^\top\rvu(t)}{\|\rvu(t)\|_2} = 1-\gamma\right\}.$$
	Now, since $\rvw_*^\top\rvu(t)/\|\rvu(t)\|_2 \geq 1-\gamma$, for all $t\in [0,T^* ]$, from \cref{loc_min_2hm}, we have
	\begin{equation*}
	\frac{1}{2}\frac{d\|\rvu(t)\|_2^2}{dt} = 2\mathcal{N}(\rvu) - 2 \mathcal{H}(\rmX;\rvu)^\top(\ell'(\mathcal{H}(\rmX;\rvu), \rvy) - \ell'(\mathbf{0}, \rvy)) \leq 2\mathcal{N}(\rvw_*)\|\rvu(t)\|_2^2,
	\end{equation*}
	which implies
	\begin{equation*}
	\|\rvu(t)\|_2 \leq 	\|\rvu(0)\|_2 e^{2t\mathcal{N}(\rvw_*)} = \widetilde{\delta}e^{2t\mathcal{N}(\rvw_*)} .
	\end{equation*}
	Next, let $\rvz_2(t) = e^{2t\mathcal{N}(\rvw_*)}\rvw_*$, which is the solution of
	\begin{equation}
	\dot{\rvz} = \nabla\mathcal{N}(\rvz), \rvz(0) = \rvw_*.
	\end{equation}
	Note that, for $t\in [0,T^*]$,
	\begin{align}
	&\left(\nabla \mathcal{N}\left(\frac{\rvu(t)}{\|\rvu(0)\|_2}\right) - \nabla\mathcal{N}(\rvz_2(t))\right)^\top\left(\frac{\rvu(t)}{\|\rvu(0)\|_2}- \rvz_2(t)\right)\nonumber\\
	&= \left(\nabla \mathcal{N}\left(\frac{\|\rvu(t)\|_2}{\|\rvu(0)\|_2}\frac{\rvu(t)}{\|\rvu(t)\|_2}\right) - \nabla\mathcal{N}\left({\rvz_2(t)}\right)\right)^\top\left(\frac{\|\rvu(t)\|_2}{\|\rvu(0)\|_2}\frac{\rvu(t)}{\|\rvu(t)\|_2}- {\rvz_2(t)}\right)\nonumber\\
	&\leq 2\mathcal{N}(\rvw_*)\left\|\frac{\|\rvu(t)\|_2}{\|\rvu(0)\|_2}\frac{\rvu(t)}{\|\rvu(t)\|_2}- \|\rvz_2(t)\|_2\frac{\rvz_2(t)}{\|\rvz_2(t)\|_2}\right\|^2 = 2\mathcal{N}(\rvw_*)\left\|\frac{\rvu(t)}{\|\rvu(0)\|_2}- {\rvz_2(t)}\right\|^2,
	\label{lm_init_inn_pd_pf_2hm}
	\end{align}
	where the inequality follows from ${\|\rvu(t)\|_2}/{\|\rvu(0)\|_2}\leq e^{2t\mathcal{N}(\rvw_*)} = \|\rvz_2(t)\|_2$, $\rvw_*^\top\rvu(t)/\|\rvu(t)\|_2 \geq 1-\gamma$ for all $t\in [0,T^*]$, and \cref{inn_pd_2hm_3}. Then, for all $t\in [0,T^*]$, we have
	\begin{align*}
	\frac{1}{2}\frac{d}{dt} \left\|\frac{\rvu(t)}{\widetilde{\delta}} - \rvz_2(t) \right\|_2^2&= \left(\frac{\rvu}{\widetilde{\delta}} - \rvz_2\right)^\top\left(\frac{\dot{\rvu}}{\widetilde{\delta}} - \dot{\rvz}_2\right)\\
	& = \left(\frac{\rvu}{\widetilde{\delta}} - \rvz_2\right)^\top\left(\nabla\mathcal{N}(\rvu/\widetilde{\delta}) - \nabla\mathcal{N}(\rvz_2)\right) - \left(\frac{\rvu}{\widetilde{\delta}} - \rvz_2\right)^\top\frac{f(\rvu)}{\widetilde{\delta}}\\
	&\leq 2\mathcal{N}(\rvw_*)\left\|\frac{\rvu}{\widetilde{\delta}} - \rvz_2 \right\|_2^2 + \beta\left\|\frac{\rvu}{\widetilde{\delta}} - \rvz_2 \right\|_2\|\rvu\|_2^3/\widetilde{\delta}.
	\end{align*}	
	From the above equation, we have
	\begin{equation*}
	\frac{d}{dt} \left\|\frac{\rvu}{\widetilde{\delta}} - \rvz_2 \right\|_2\leq 2\mathcal{N}(\rvw_*)\left\|\frac{\rvu}{\widetilde{\delta}} - \rvz_2 \right\|_2 +  \beta\|\rvu\|_2^3/\widetilde{\delta},
	\end{equation*}
	which implies
	\begin{align*}
	\left\|\frac{\rvu(t)}{\widetilde{\delta}} - \rvz_2(t) \right\|_2 &\leq e^{2t\mathcal{N}(\rvw_*)}\left(\|\rvw_1-\rvw_*\|_2 + \int_{0}^te^{-2s\mathcal{N}(\rvw_*)}\beta\|\rvu(s)\|_2^3/\widetilde{\delta} ds\right) \\
	&  \leq e^{2t\mathcal{N}(\rvw_*)}\|\rvw_1-\rvw_*\|_2 + \beta\widetilde{\delta}^2 e^{2t\mathcal{N}(\rvw_*)}\int_{0}^t e^{4s\mathcal{N}(\rvw_*)} ds \\
	& \leq e^{2t\mathcal{N}(\rvw_*)}\|\rvw_1-\rvw_*\|_2 + {A_3\widetilde{\delta}^2 e^{6t\mathcal{N}(\rvw_*)}},
	\end{align*}
	where $A_3$ is a sufficiently large constant. Now, define $\overline{T} = \frac{4\ln(1/\widetilde{\delta})}{\Delta + 8\mathcal{N}(\rvw_*)} $, then we claim that $\overline{T} < T^*$, for all sufficiently small $\delta>0$. For the sake of contradiction, let  $T^*\leq\overline{T}$. Define $\tau_2(t) = {\rvu(t)} - \widetilde{\delta}\rvz_2(t)$, then 
	\begin{equation*}
	\|\tau_2(T^*)\|_2 = \|{\rvu(T^*)} - \widetilde{\delta}\rvz_2(T^*)\|_2 \leq \widetilde{\delta}e^{2T^*\mathcal{N}(\rvw_*)}\|\rvw_1-\rvw_*\|_2 + A_3\widetilde{\delta}^3 e^{6T^*\mathcal{N}(\rvw_*)}.
	\end{equation*}
	Dividing by $\widetilde{\delta}e^{2T^*\mathcal{N}(\rvw_*)}$ on both sides and using $T^*\leq\overline{T}$, we get
	\begin{equation*}
	\frac{\|\tau_2(T^*)\|_2}{\widetilde{\delta}e^{2T^*\mathcal{N}(\rvw_*)}} \leq \|\rvw_1-\rvw_*\|_2 + {A_3\widetilde{\delta}^2 e^{4\overline{T}\mathcal{N}(\rvw_*)} }.
	\end{equation*}
	Next, by definition of $T^*$, we have
	\begin{align*}
	1-\gamma = \frac{\rvw_*^\top\rvu(T^*)}{\|\rvu(T^*)\|_2} &\geq \frac{\rvw_*^\top\rvu(T^*)}{\widetilde{\delta}e^{2T^*\mathcal{N}(\rvw_*)}} = \frac{\widetilde{\delta} e^{2T^*\mathcal{N}(\rvw_*)} + \rvw_*^\top\tau_2(T^*)}{\widetilde{\delta} e^{2T^*\mathcal{N}(\rvw_*)}}\geq 1 - \frac{\|\tau_2(T^*)\|_2}{ \widetilde{\delta}e^{2T^*\mathcal{N}(\rvw_*)}}.
	\end{align*}
	Now, since $\rvw_1^\top\rvw_* \geq 1 - \gamma^2/4$,
	\begin{equation*}
	\|\rvw_1-\rvw_*\|_2^2 = 2-2\rvw_1^\top\rvw_* \leq \gamma^2/2.
	\end{equation*}
	Hence, 
	\begin{align*}
	1-\gamma  \geq   1-\gamma/\sqrt{2} - {A_3\widetilde{\delta}^2 e^{4\overline{T}\mathcal{N}(\rvw_*)}} = 1-\gamma/\sqrt{2} - {A_3\widetilde{\delta}^2}/{\widetilde{\delta}^{\frac{16\mathcal{N}(\rvw_*)}{\Delta + 8\mathcal{N}(\rvw_*)}}},
	\end{align*}
	which implies
	\begin{equation*}
	\gamma(1-1/\sqrt{2}) \leq {A_3\widetilde{\delta}^2}/{\widetilde{\delta}^{\frac{16\mathcal{N}(\rvw_*)}{\Delta + 8\mathcal{N}(\rvw_*)}}}=  A_3\widetilde{\delta}^\frac{2\Delta}{\Delta + 8\mathcal{N}(\rvw_*)}.
	\end{equation*}
	The above inequality can not be true if $\delta$ is chosen sufficiently small, leading to a contradiction. Hence, $\overline{T}<T^*$.
	Next, let $\rvz_3(t)$ be the the solution of
	\begin{equation}
	\dot{\rvz} = \nabla\mathcal{N}(\rvz), \rvz(0) = \rvw_1,
	\end{equation}
	Since $\rvw_1^\top\rvw_*\geq 1-\gamma^2/4>1-\gamma$, then we may assume without loss of generality that $\gamma>0$ is sufficiently small such that Lemma \ref{rate_ineq_2hm} is applicable and hence,
	\begin{equation}
	\frac{\rvw_* ^\top\rvz_3(t)}{\|\rvz_3(t)\|_2} \geq 1 - e^{-t\Delta}\gamma, \text{ for all }t\geq 0, \text{ and } \rvz_3(t) = \rvg(t)e^{2t\mathcal{N}(\rvw_*)},
	\end{equation}
	where $\|\rvg(t)\|_2 \in [\kappa_1,\kappa_2]$, for some $\kappa_2\geq\kappa_1>0$. By definition of $\overline{T}$, we have
	\begin{equation*}
	\frac{\rvw_* ^\top\rvz_3(\overline{T})}{\|\rvz_3(\overline{T})\|_2} \geq 1-\widetilde{\delta}^{\frac{4\Delta}{\Delta + 8\mathcal{N}(\rvw_*)}}\gamma, \text{ and }	\|\rvz_3(\overline{T})\|_2 = \frac{\|\rvg(\overline{T})\|_2}{\widetilde{\delta}^{\frac{ 8\mathcal{N}(\rvw_*)}{\Delta + 8\mathcal{N}(\rvw_*)}}},
	\end{equation*}
	which implies
	\begin{equation*}
	\left\|\frac{\rvz_3(\overline{T})}{\|\rvz_3(\overline{T})\|_2}- \rvw_*  \right\|_2^2 = 2-\frac{2\rvw_* ^\top\rvz_3(\overline{T})}{\|\rvz_3(\overline{T})\|_2} \leq 2\widetilde{\delta}^{\frac{4\Delta}{\Delta + 8\mathcal{N}(\rvw_*)}}\gamma.
	\end{equation*}
	Multiplying by $\widetilde{\delta}^2\|\rvz_3(\overline{T})\|_2^2$ on both sides gives us
	\begin{align}
	\left\| \widetilde{\delta}\rvz_3(\overline{T}) - \widetilde{\delta}\|\rvz_3(\overline{T})\|_2\rvw_*   \right\|_2^2 &\leq 2\widetilde{\delta}^2\|\rvz_3(\overline{T})\|_2^2\delta^{\frac{4\Delta}{\Delta + 8\mathcal{N}(\rvw_*)}}\gamma\nonumber\\
	&\leq {2\|\rvg(\overline{T})\|_2^2\gamma}\frac{\widetilde{\delta}^2\widetilde{\delta}^{\frac{4\Delta}{\Delta+ 8\mathcal{N}(\rvw_*)}}}{\widetilde{\delta}^{\frac{ 16\mathcal{N}(\rvw_*)}{\Delta + 8\mathcal{N}(\rvw_*)}}}= 2{\|\rvg(\overline{T})\|_2^2\gamma}\widetilde{\delta}^{\frac{6\Delta}{\Delta + 8\mathcal{N}(\rvw_*)}}.
	\label{bd_zt}
	\end{align}
	Next, note that for $t\in [0,\overline{T}]$, $	{\rvw_* ^\top\rvz_3(t)}/{\|\rvz_3(t)\|_2}, {\rvw_* ^\top\rvu(t)}/{\|\rvu(t)\|_2}\geq 1-\gamma $. Since $\nabla^2\mathcal{N}(\rvw)$ is $0$-homogeneous, from the mean value theorem and \cref{loc_min_2hm}, for all $t\in [0,\overline{T}]$, we have
	\begin{align*}
	\left\|\left(\frac{\rvu(t)}{\widetilde{\delta}} - \rvz_3(t)\right)^\top\left(\nabla\mathcal{N}(\rvu(t)/\widetilde{\delta}) - \nabla\mathcal{N}(\rvz_3(t))\right) \right\|_2 \leq 3\mathcal{N}(\rvw_*)\left\|\frac{\rvu(t)}{\widetilde{\delta}} - \rvz_3(t) \right\|_2^2. 
	\end{align*}
	Hence, for $t\in [0,\overline{T}]$, we have
	\begin{align*}
	\frac{1}{2}\frac{d}{dt} \left\|\frac{\rvu(t)}{\widetilde{\delta}} - \rvz_3(t) \right\|_2^2 &= \left(\frac{\rvu}{\widetilde{\delta}} - \rvz_3\right)^\top\left(\frac{\dot{\rvu}}{\widetilde{\delta}} - \dot{\rvz}_3\right)\\
	& = \left(\frac{\rvu}{\widetilde{\delta}} - \rvz_3\right)^\top\left(\nabla\mathcal{N}(\rvu/\widetilde{\delta}) - \nabla\mathcal{N}(\rvz_3)\right) - \left(\frac{\rvu}{\widetilde{\delta}} - \rvz_3\right)^\top\frac{f(\rvw)}{\widetilde{\delta}}\\
	&\leq 3\mathcal{N}(\rvw_*)\left\|\frac{\rvu(t)}{\widetilde{\delta}} - \rvz_3(t) \right\|_2^2 + \beta\left\|\frac{\rvu(t)}{\widetilde{\delta}} - \rvz_3(t) \right\|_2\|\rvu\|_2^3/\widetilde{\delta}.
	\end{align*}	
	From the above equation, we have
	\begin{equation*}
	\frac{d}{dt} \left\|\frac{\rvu(t)}{\widetilde{\delta}} - \rvz_3(t) \right\|_2 \leq 3\mathcal{N}(\rvw_*)\left\|\frac{\rvu(t)}{\widetilde{\delta}} - \rvz_3(t) \right\|_2+  \beta\|\rvu(t)\|_2^3/\widetilde{\delta},
	\end{equation*}
	which implies
	\begin{align*}
	\left\|\frac{\rvu(t)}{\widetilde{\delta}} - \rvz_3(t) \right\|_2  &\leq e^{3\mathcal{N}(\rvw_*) t}\int_{0}^te^{-3\mathcal{N}(\rvw_*) s}\beta\|\rvu(s)\|_2^3/\widetilde{\delta} ds\\
	& 	 \leq \beta\widetilde{\delta}^2 e^{3\mathcal{N}(\rvw_*)t}\int_{0}^t e^{3\mathcal{N}(\rvw_*) s} ds \leq {{A_4}\widetilde{\delta}^2 e^{6t\mathcal{N}(\rvw_*)} },
	\end{align*}
	where $A_4$ is a sufficiently large constant. Putting $t=\overline{T}$ in the above equation gives us
	\begin{equation*}
	\left\|{\rvu(\overline{T})} - \widetilde{\delta}\rvz_3(\overline{T}) \right\|_2 \leq A_4\widetilde{\delta}^3 e^{6\overline{T}\mathcal{N}(\rvw_*)} = A_4\widetilde{\delta}^3/\widetilde{\delta}^{\frac{24\mathcal{N}(\rvw_*)}{\Delta+8\mathcal{N}(\rvw_*)}}= A_4\widetilde{\delta}^{\frac{3\Delta}{\Delta+8\mathcal{N}(\rvw_*)}}.
	\end{equation*}
	From \cref{bd_zt} and the above equation, we get
	\begin{align*}
	\left\|{\rvu(\overline{T})} - \widetilde{\delta}\|\rvz_3(\overline{T})\|_2\rvw_* \right\|_2 &\leq \left\|{\rvu(\overline{T})} - \widetilde{\delta}\rvz_3(\overline{T}) \right\|_2 + \left\| \widetilde{\delta}\|\rvz_3(\overline{T})\|_2\rvw_*  - \widetilde{\delta}\rvz_3(\overline{T}) \right\|_2\\
	&\leq \left(A_4+{\|\rvg(\overline{T})\|_2\sqrt{2\gamma}}\right)\widetilde{\delta}^{\frac{3\Delta}{\Delta+8\mathcal{N}(\rvw_*)}}.
	\end{align*}
	Also, note that
	\begin{equation*}
	\widetilde{\delta}\|\rvz_3(\overline{T})\|_2=\|\rvg(\overline{T})\|_2{\widetilde{\delta}}/{\widetilde{\delta}^{\frac{ 8\mathcal{N}(\rvw_*)}{\Delta + 8\mathcal{N}(\rvw_*)}}} = \|\rvg(\overline{T})\|_2\widetilde{\delta}^{\frac{\Delta}{\Delta+8\mathcal{N}(\rvw_*)}}.
	\end{equation*}
	Thus, using the above two equations and the definition of $\rvu(t)$, we get
	\begin{equation*}
	\left\|\bm{\psi}(\overline{T}+T_\gamma, \delta\rvw_0) - \|\rvg(\overline{T})\|_2\widetilde{\delta}^{\frac{\Delta}{\Delta+8\mathcal{N}(\rvw_*)}} \rvw_*\right\|_2 \leq \left(A_4+{\|\rvg(\overline{T})\|_2\sqrt{2\gamma}}\right)\widetilde{\delta}^{\frac{3\Delta}{\Delta+8\mathcal{N}(\rvw_*)}}.
	\end{equation*}
	Since $\overline{T}$ depends on $\delta$, $\|\rvg(\overline{T})\|_2$ may also depend on $\delta$, but $\|\rvg(\overline{T})\|_2\in [\kappa_1,\kappa_2]$. Hence, for all sufficiently small $\delta>0$,
	\begin{equation*}
	\left\|\bm{\psi}(\overline{T}+T_\gamma, \delta\rvw_0) - \|\rvg(\overline{T})\|_2\widetilde{\delta}^{\frac{\Delta}{\Delta+8\mathcal{N}(\rvw_*)}} \rvw_*\right\|_2 \leq C\widetilde{\delta}^{\frac{3\Delta}{\Delta+8\mathcal{N}(\rvw_*)}},
	\end{equation*}
	where $C$ is a sufficiently large constant, which completes the proof.
\end{proof}
\begin{proof}\textbf{of Corollary \ref{cor_sq_loss_2hm}: } Since  $\rvp(t)$ is bounded for all $t\geq 0$, there exists a constant $B>0$ such that $\|\rvp(t)\|_2\leq B$, for all $t\geq 0$. Moreover, since $\mathcal{L}(\cdot)$ has locally Lipschitz gradient, there exists a constant $\widetilde{A}>0$ such that, if $\|\rvw_1\|_2, \|\rvw_2\|_2\leq 2B$, then
	\begin{equation}
	\|\nabla\mathcal{L}(\rvw_1)-\nabla\mathcal{L}(\rvw_2)\|_2\leq \widetilde{A}\|\rvw_1-\rvw_2\|_2.
	\label{lips_sq_2hm}
	\end{equation} 
	Since $\rvp^*= \lim_{t\to \infty} \rvp(t)$ and $\nabla\mathcal{L}(\rvp^*) = \mathbf{0}$, for any $\epsilon\in (0,B)$, there exists a $T_\epsilon$ such that
	\begin{equation}
	\|\rvp(T_\epsilon)-\rvp_*\|_2\leq \epsilon/2 \text{ and } \|\nabla\mathcal{L}(\rvp(T_\epsilon))\|_2 \leq \epsilon/2.
	\label{err_sq_2hm}
	\end{equation}
	Since $T_\epsilon$ does not depend on $\delta$, therefore, from Theorem \ref{main_thm_2hm}, for all sufficiently small $\delta>0$ and for all $t\in [-T_\epsilon,T_\epsilon]$, 
	\begin{equation*}
	\left\|\bm{\psi}\left(t+T_1 +\frac{\ln(1/b_\delta)}{2\mathcal{N}(\rvw_*)}+ \frac{\ln({1}/{\widetilde{\delta}})}{2\mathcal{N}(\rvw_*)},\delta\rvw_0\right) - \rvp(t)\right\|_2 \leq \widetilde{C}\delta^{\frac{\Delta}{\Delta+8\mathcal{N}(\rvw_*)}} \leq \epsilon/2.
	\end{equation*} 
	Putting $t=T_\epsilon$ in the above equation, and using  \cref{err_sq_2hm}, we get
	\begin{equation*}
	\left\|\bm{\psi}\left(T_\delta,\delta\rvw_0\right) - \rvp^*\right\|_2 \leq \epsilon,
	\end{equation*}
	where $T_\delta\coloneqq T_\epsilon+T_1 +\frac{\ln(1/b_\delta)}{2\mathcal{N}(\rvw_*)}+ \frac{\ln({1}/{\widetilde{\delta}})}{2\mathcal{N}(\rvw_*)}$. Next, since $\|\bm{\psi}\left(T_\delta,\delta\rvw_0\right) \|_2\leq B+\epsilon/2\leq 2B$, using \cref{lips_sq_2hm} and \cref{err_sq_2hm}, we get
	\begin{align*}
	\left\|\nabla \mathcal{L}\left(\bm{\psi}\left(T_\delta,\delta\rvw_0\right)\right)\right\|_2 &\leq 	\left\|\nabla \mathcal{L}(\rvp(T_\epsilon))\right\|_2 + \left\|\nabla \mathcal{L}\left(\bm{\psi}\left(T_\delta,\delta\rvw_0\right)\right)-\nabla \mathcal{L}(\rvp(T_\epsilon))\right\|_2\\
	  &\leq {\epsilon}/{2}+ \widetilde{A}\widetilde{C}\delta^{\frac{\Delta}{\Delta+8\mathcal{N}(\rvw_*)}}\leq \epsilon,
	\end{align*}
	where the final inequality is true for all sufficiently small $\delta>0$, thus completing the proof.
\end{proof}
\begin{proof}\textbf{of Corollary \ref{cor_log_loss_2hm}: } Since $\lim_{t\to \infty} \rvp(t)/\|\rvp(t)\|_2 = \rvp^*$, $\lim_{t\to \infty} \|\rvp(t)\|_2 = \infty$, and $\lim_{t\to\infty} \nabla \mathcal{L}(\rvp(t)) = \mathbf{0}$, therefore, for any $\epsilon\in (0,1)$, we can choose a $T_\epsilon$ such that
	\begin{equation}
	\rvp(T_\epsilon)^\top\rvp^*/\|\rvp(T_\epsilon)\|_2 \geq 1-\epsilon/2, \|\rvp(T_\epsilon)\|_2\geq 1/\epsilon,  \text{ and } \|\nabla\mathcal{L}(\rvp(T_\epsilon))\|_2 \leq \epsilon/2.
	\label{err_log_2hm}
	\end{equation}
	Let $B_\epsilon \coloneqq  \max_{t\in [0,T_\epsilon]}\|\rvp(t)\|_2$. Then, since $\mathcal{L}(\cdot)$ has locally Lipschitz gradient, there exists a constant $\widetilde{A}_\epsilon>0$ such that, if $\|\rvw_1\|_2, \|\rvw_2\|_2\leq B_\epsilon+\epsilon$, it follows that
	\begin{equation}
	\|\nabla\mathcal{L}(\rvw_1)-\nabla\mathcal{L}(\rvw_2)\|_2\leq \widetilde{A}_\epsilon\|\rvw_1-\rvw_2\|_2,
	\label{lips_log_2hm}
	\end{equation} 
	where note that $\widetilde{A}_\epsilon$ depends on $\epsilon$. Since $T_\epsilon$ does not depend on $\delta$, from Theorem \ref{main_thm_2hm}, for all sufficiently small $\delta>0$ and for all $t\in [-T_\epsilon,T_\epsilon]$, 
	\begin{equation}
	\left\|\bm{\psi}\left(t+T_1 +\frac{\ln(1/b_\delta)}{2\mathcal{N}(\rvw_*)}+ \frac{\ln({1}/{\widetilde{\delta}})}{2\mathcal{N}(\rvw_*)},\delta\rvw_0\right) - \rvp(t)\right\|_2 \leq \widetilde{C}\delta^{\frac{\Delta}{\Delta+8\mathcal{N}(\rvw_*)}} \leq \epsilon/2.
	\label{err_bd_log_2hm}
	\end{equation} 
	Putting $t=T_\epsilon$ in the above equation, and using  \cref{err_log_2hm}, we get, for all sufficiently small $\delta$,
	\begin{equation*}
	\left\|\bm{\psi}\left(T_\delta,\delta\rvw_0\right)\right\|_2 \geq \|\rvp(T_\epsilon)\|_2 - \widetilde{C}\delta^{\frac{\Delta}{\Delta+8\mathcal{N}(\rvw_*)}} \geq {1}/{\epsilon} - \widetilde{C}\delta^{\frac{\Delta}{\Delta+8\mathcal{N}(\rvw_*)}}\geq {1}/{(2\epsilon)} ,
	\end{equation*}
	where $T_\delta\coloneqq T_\epsilon+T_1 +\frac{\ln(1/b_\delta)}{2\mathcal{N}(\rvw_*)}+ \frac{\ln({1}/{\widetilde{\delta}})}{2\mathcal{N}(\rvw_*)}$. We also have
	\begin{equation*}
	\frac{\bm{\psi}\left(T_\delta,\delta\rvw_0\right)^\top\rvp^*}{\left\|\bm{\psi}\left(T_\delta,\delta\rvw_0\right)\right\|_2 } \geq \frac{\rvp(T_\epsilon)^\top\rvp_* - \widetilde{C}\delta^{\frac{\Delta}{\Delta+8\mathcal{N}(\rvw_*)}}}{\left\|\bm{\psi}\left(T_\delta,\delta\rvw_0\right)\right\|_2} \geq \frac{(1-\epsilon/2)\|\rvp(T_\epsilon)\|_2 - \widetilde{C}\delta^{\frac{\Delta}{\Delta+8\mathcal{N}(\rvw_*)}}}{\left\|\rvp(T_\epsilon)\right\|_2 + \widetilde{C}\delta^{\frac{\Delta}{\Delta+8\mathcal{N}(\rvw_*)}}}\geq 1-\epsilon,
	\end{equation*}
	where the first inequality uses \cref{err_bd_log_2hm}. The second inequality uses \cref{err_bd_log_2hm} and \cref{err_log_2hm}. The final inequality is true for all sufficiently small $\delta>0$. Next, since $\|\bm{\psi}\left(T_\delta,\delta\rvw_0\right) \|_2\leq B_\epsilon+\epsilon/2$, using \cref{lips_log_2hm} and \cref{err_log_2hm}, we get
	\begin{align*}
	\left\|\nabla \mathcal{L}\left(\bm{\psi}\left(T_\delta,\delta\rvw_0\right)\right)\right\|_2 &\leq 	\left\|\nabla \mathcal{L}(\rvp(T_\epsilon))\right\|_2 + \left\|\nabla \mathcal{L}\left(\bm{\psi}\left(T_\delta,\delta\rvw_0\right)\right)-\nabla \mathcal{L}(\rvp(T_\epsilon))\right\|_2\\
		&\leq {\epsilon}/{2}+ \widetilde{A}_\epsilon\widetilde{C}\delta^{\frac{\Delta}{\Delta+8\mathcal{N}(\rvw_*)}}\leq \epsilon,
	\end{align*}
		where the final inequality is true for all sufficiently small $\delta>0$, thus completing the proof.
\end{proof}
\section{Proof Omitted from \Cref{sec_Lhm}}\label{appendix_Lhm}
We first prove three important lemmata, which are then used to prove \Cref{main_thm_Lhm}.
\begin{lemma}\label{init_dl3_Lhm}
	Consider the setting in \Cref{main_thm_Lhm}. Suppose there exists a constant $C_1>0$ such that for every $\delta>0$, ${\rva}_\delta$ is a vector that satisfies $\|{\rva}_\delta - \delta\rvw_*\|_2\leq C_1\delta^{L+1}$. Then, for any fixed $\widetilde{T}\in (-\infty,\infty)$, there exists a constant $C>0$ such that for all sufficiently small $\delta>0$, 
	\begin{equation*}
	\left\|\bm{\psi}\left(t+ \frac{1/\delta^{L-2}}{L(L-2)\mathcal{N}(\rvw_*)},{\rva}_\delta\right) - \bm{\psi}\left(t+ \frac{1/\delta^{L-2}}{L(L-2)\mathcal{N}(\rvw_*)},\delta{\rvw}_*\right) \right\|_2 \leq C\delta, \text{ for all } t\in[-\widetilde{T},\widetilde{T}].
	\end{equation*}
\end{lemma}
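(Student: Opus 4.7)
The plan is to adapt the two-step argument in the proof of Lemma \ref{init_dl3_2hm}, replacing the exponential scaling $e^{2t\mathcal{N}(\rvw_*)}$ by the finite-time blow-up $\|\rvz(s)\|_2 \sim (T-s)^{-1/(L-2)}$ of the NCF gradient flow provided by Lemma \ref{rate_ineq}. Set $\rvq(t) = \bm{\psi}(t,\rva_\delta)$, $\rvu(t) = \bm{\psi}(t,\delta\rvw_*)$, and let $\rvz(s)$ solve $\dot{\rvz} = \nabla\mathcal{N}(\rvz)$ with $\rvz(0) = \rvw_*$, so by Lemma \ref{rate_ineq}, $\rvz(s) = \rvg(s)/(T-s)^{1/(L-2)}$ with $T = 1/(L(L-2)\mathcal{N}(\rvw_*))$ and $\|\rvg(s)\|_2 \in [\kappa_1,\kappa_2]$. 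Pick $\gamma>0$ small enough that Lemmas \ref{inn_pd_ineq}, \ref{lips_ineq}, and \ref{second_kkt} apply in the cone $\{\rvw : \rvw_*^\top\rvw/\|\rvw\|_2 > 1-\gamma\}$. By homogeneity and the smoothness of $\ell$, the residual $f(\rvw) := \mathcal{J}(\rmX;\rvw)^\top(\ell'(\mathcal{H}(\rmX;\rvw),\rvy) - \ell'(\mathbf{0},\rvy))$ satisfies $\|f(\rvw)\|_2 \leq \beta\|\rvw\|_2^{2L-1}$ and $\|f(\rvw_1)-f(\rvw_2)\|_2 \leq \zeta\max(\|\rvw_1\|_2,\|\rvw_2\|_2)^{2L-2}\|\rvw_1-\rvw_2\|_2$ in this cone.

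The first step is to introduce the cutoff $t^* := (T - C_g(\delta/C_4)^{L-2})/\delta^{L-2}$, where $C_4>0$ is a small constant chosen later and $C_g$ is fixed, so that $\delta\|\rvz(\delta^{L-2}t^*)\|_2 = O(C_4)$. Paralleling the $\overline{T}_{1}$ argument of the 2-homogeneous proof, I would show that both $\rvu$ and $\rvq$ remain in the cone on $[0,t^*]$, that $\|\rvu(t)\|_2,\|\rvq(t)\|_2 \leq 2\delta\|\rvz(\delta^{L-2} t)\|_2$ there, and that after rescaling $\hat{\rvu}(s) := \rvu(s/\delta^{L-2})/\delta$ and $\hat{\rvq}(s) := \rvq(s/\delta^{L-2})/\delta$ (which satisfy $\dot{\hat{\rvw}} = \nabla\mathcal{N}(\hat{\rvw}) + O(\delta^L\|\hat{\rvw}\|_2^{2L-1})$ by $(L-1)$-homogeneity of $\nabla\mathcal{N}$), both rescaled trajectories shadow $\rvz$ up to an $O(\delta^{L})$ initial perturbation. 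The central tool is Lemma \ref{inn_pd_ineq}, which, applied with $t_2 = \|\rvz(s)\|_2$, bounds the one-sided inner product by $L(L-1)\mathcal{N}(\rvw_*)\|\rvz(s)\|_2^{L-2} \leq L(L-1)\mathcal{N}(\rvw_*)\kappa_2^{L-2}/(T-s)$; this is integrable over $[0,s^*]$ (with $s^* = \delta^{L-2} t^*$), yielding an $O(\ln(1/\delta))$ exponent rather than the unusable $1/\delta^{L-2}$ growth one would get from the unrestricted Hessian operator norm applied in original time.

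The second step is the direct comparison of $\hat{\rvu}$ and $\hat{\rvq}$ on $[0,s^*]$. Differentiating $\|\hat{\rvu}-\hat{\rvq}\|_2^2$, decomposing $(\hat{\rvu}-\hat{\rvq})^\top(\nabla\mathcal{N}(\hat{\rvu})-\nabla\mathcal{N}(\hat{\rvq}))$ along and transverse to $\rvw_*$, and using the spectral information in Lemma \ref{second_kkt} (the $\rvw_*$-eigendirection carries eigenvalue $L(L-1)\mathcal{N}(\rvw_*)$, whereas the $\rvw_*^\perp$ directions are controlled by the spectral gap $\Delta$), together with the $\zeta M^{2L-2}$-Lipschitz bound for $f$ whose $\delta^L$ prefactor cancels the singular integral $\int_0^{s^*}(T-s)^{-(2L-1)/(L-2)}ds$, Gronwall's inequality (Lemma \ref{gronwall_ineq}) yields
\begin{align*}
\|\hat{\rvu}(s^*) - \hat{\rvq}(s^*)\|_2 \leq \|\hat{\rvu}(0) - \hat{\rvq}(0)\|_2 \Bigl(\tfrac{T}{T-s^*}\Bigr)^{K/2} + O(1),
\end{align*}
for a $\delta$-independent constant $K$. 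Since $T - s^* = C_g(\delta/C_4)^{L-2}$ and $\|\hat{\rvu}(0) - \hat{\rvq}(0)\|_2 \leq C_1\delta^L$, this translates back to $\|\rvu(t^*) - \rvq(t^*)\|_2 \leq C\delta$ provided the alignment-based refinement of $K$ keeps $K(L-2)/2 \leq L-1$. Finally, $T_*^\delta := 1/(\delta^{L-2}L(L-2)\mathcal{N}(\rvw_*)) = T/\delta^{L-2}$ differs from $t^*$ only by the $\delta$-independent constant $C_g/C_4^{L-2}$, so $[t^*,T_*^\delta+\widetilde{T}]$ has bounded length and Lemma \ref{err_bd_gf} propagates the $O(\delta)$ bound to $[T_*^\delta-\widetilde{T},T_*^\delta+\widetilde{T}]$.

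The main obstacle is controlling the Gronwall constant $K$ in the second step. The naive Lipschitz bound from $\|\nabla^2\mathcal{N}(\rvz)\|_2 \leq L(L-1)\mathcal{N}(\rvw_*)\|\rvz\|_2^{L-2}$ gives $K$ proportional to $L(L-1)\kappa_2^{L-2}$, which can exceed the threshold $2L/(L-2)$ and would then produce a power of $\delta$ too weak to absorb the $\delta^L$ excess in the initial gap. The fix is to exploit that the dominant eigendirection $\rvw_*$ of $\nabla^2\mathcal{N}(\rvw_*)$ (Lemma \ref{second_kkt}) is precisely the direction along which both $\hat\rvu$ and $\hat\rvq$ grow \emph{coherently}, so their difference grows at a strictly slower rate than the full Hessian bound suggests. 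Carrying out this radial/transverse decomposition carefully, and showing that the $\rvw_*$-component of $\hat\rvu - \hat\rvq$ contributes only to an $O(1)$ factor while the transverse component is damped by the spectral gap $\Delta$, is the central technical content required to turn the formal Gronwall estimate into the desired $O(\delta)$ bound.
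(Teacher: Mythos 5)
Your proposal follows the paper's route: rescale the trajectories by $\delta$ and time by $\delta^{L-2}$, compare against an NCF gradient flow $\rvz$ on the ray through $\rvw_*$ that blows up at a finite time $T\approx 1/(L(L-2)\mathcal{N}(\rvw_*))$, control the one-sided inner product via Lemma~\ref{inn_pd_ineq}, run Gronwall with the $(T-s)^{-1}$ coefficient, cut off where $\delta\|\rvz\|_2 = O(1)$, and propagate across the remaining bounded window with Lemma~\ref{err_bd_gf}. This is essentially the paper's proof; the only cosmetic difference is that you invoke the $\rvg(s)/(T-s)^{1/(L-2)}$ factorization of Lemma~\ref{rate_ineq}, whereas the paper works with the purely radial comparison $\eta^{1/(L-2)}\rvw_*/(1-s\eta L(L-2)\mathcal{N}(\rvw_*))^{1/(L-2)}$ together with a direct norm bound.

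The obstacle you flag at the end, however, does not arise, and the proposed radial/transverse Hessian decomposition is unnecessary. In the cone, the mean-value Hessian coefficient entering the comparison step is $\|\nabla^2\mathcal{N}(\rvr)\|_2\le (L(L-1)\mathcal{N}(\rvw_*)+O(\gamma))\max(\|\rvs_q\|_2,\|\rvs_u\|_2)^{L-2}$ by Lemma~\ref{second_kkt}, $(L-2)$-homogeneity, and continuity; and the norm factor satisfies $\max(\|\rvs_q(s)\|_2,\|\rvs_u(s)\|_2)^{L-2}\le \eta/(1-s\eta L(L-2)\mathcal{N}(\rvw_*))$, which follows from differentiating $\|\rvs\|_2^2$ and using $\mathcal{N}(\cdot)\le\mathcal{N}(\rvw_*)$ in the cone. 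In your $\rvg$ parametrization the point is that $\|\rvg(s)\|_2^{L-2}$ is a \emph{decreasing} function with maximum $\|\rvg(0)\|_2^{L-2}=T\le 1/(L(L-2)(\mathcal{N}(\rvw_*)-\alpha))$ (Lemma~\ref{rate_ineq}), not an unconstrained $\kappa_2^{L-2}$. The product of the two constants is therefore $\approx (L-1)/(L-2)$, strictly below your threshold $L/(L-2)$ (your first statement $K\le 2L/(L-2)$ is correct; the later condition $K(L-2)/2\le L-1$ should read $\le L$). This slack is precisely what absorbs the $O(\delta^L\max\|\cdot\|^{2L-2})$ Lipschitz term from the residual $f$; the paper simply rounds the combined coefficient up to $L^2\mathcal{N}(\rvw_*)$, landing exactly at the threshold, which is still enough for $P(\overline{T})\gtrsim\delta^L$ to cancel the $\delta^L$ initial gap and yield the $O(\delta)$ bound. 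Drop the spectral-gap argument; a straightforward Gronwall estimate with the correct constant tracking suffices.
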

\begin{proof}
	We choose $\gamma>0$ sufficiently small such that for all unit-norm vector $\rvw$ satisfying $\rvw^\top\rvw_* \geq1-\gamma$, we have
	\begin{align}
	&\mathcal{N}(\rvw) \leq \mathcal{N}(\rvw_*), \|\nabla^2\mathcal{N}(\rvw)\|_2\leq L(L-1/2)\mathcal{N}(\rvw_*), \text{ and }
	\label{loc_ineq_Lhm}\\
	&(\nabla \mathcal{N}(t_1\rvw) -  \nabla \mathcal{N}(t_2\rvw_*))^\top\left( t_1\rvw - t_2\rvw_*\right) \leq L(L-1)\mathcal{N}(\rvw_*)t_2^{L-2}\|t_1\rvw - t_2\rvw_*\|_2^2, \forall t_2\geq t_1\geq 0.
	\label{inn_pd_Lhm}
	\end{align}
	The first inequality follows from Lemma \ref{lips_ineq}. The second inequality holds since, from Lemma \ref{second_kkt}, $\|\nabla^2\mathcal{N}(\rvw_*)\|_2 = L(L-1)\mathcal{N}(\rvw_*)$, and $\nabla^2\mathcal{N}(\rvw)$ is continuous in the neighborhood of $\rvw_*$. The third inequality follows from Lemma \ref{inn_pd_ineq}. 
	
	We further define $f(\rvw) = \mathcal{J}(\rmX;\rvw)^\top(\ell'(\mathcal{H}(\rmX;\rvw), \rvy) - \ell'(\mathbf{0}, \rvy))$. Then, note that
	\begin{equation*}
	\|f(\rvw)\|_2 \leq \| \mathcal{J}(\rmX;\rvw)\|_2\|(\ell'(\mathcal{H}(\rmX;\rvw), \rvy) - \ell'(\mathbf{0}, \rvy)\|_2 \leq Kn\| \mathcal{J}(\rmX;\rvw)\|_2\|\mathcal{H}(\rmX;\rvw), \rvy)\|_2 ,
	\end{equation*}
	where the first inequality follows from the Cauchy-Schwartz inequality, and the second follows from the smoothness of the loss  function (see \Cref{ass_loss}). Note that the RHS in the above inequality is $(2L-1)-$homogeneous in $\rvw$. Thus, there exists a $\beta>0$ such that
	\begin{equation}
	\|f(\rvw)\|_2 \leq  \beta\|\rvw\|_2^{2L-1}, \text{ for all }\rvw\in \sR^k.
	\label{lips_res_Lhm}
	\end{equation}
	Also, $ f(\rvw)$ is continuously differentiable in the neighborhood of $\rvw_*$. Thus, in a similar way as in the proof of Lemma \ref{init_dl3_2hm}, we may assume that for all vectors $\rvw\in\sR^k$ that satisfy $\rvw_*^\top\rvw/\|\rvw\|_2 \geq 1-\gamma, $ we have
	\begin{equation*}
	\|\nabla f(\rvw)\|_2 \leq  K\sum_{i=1}^n\|\nabla^2\mathcal{H}(\rvx_i;\rvw) \|_2 |\mathcal{H}(\rvx_i;\rvw) | + \|\nabla\mathcal{H}(\rvx_i;\rvw) \nabla\mathcal{H}(\rvx_i;\rvw) ^\top\|_2.
	\end{equation*}
	Note that the final upper bound in the above equation is $(2L-2)$-homogeneous. Thus, there exists a $\zeta>0$ such that for all vector $\rvw\in\sR^k$ satisfying $\rvw_*^\top\rvw/\|\rvw\|_2 \geq 1-\gamma, $ we have
	\begin{equation*}
	\|\nabla f(\rvw)\|_2 \leq \zeta\|\rvw\|_2^{2L-2}.
	\end{equation*}
	Furthermore, from the mean value theorem, we have
	\begin{align}
	\|f(\rvw_1) - f(\rvw_2)\|_2 &\leq \|\nabla f(\tilde{\rvw})\|_2\|\rvw_1-\rvw_2\|_2\nonumber \\
	&\leq \zeta\max(\|\rvw_2\|_2^{2L-2},\|\rvw_1\|_2^{2L-2})\|\rvw_1-\rvw_2\|_2,
	\label{lips_f_Lhm}
	\end{align}
	where $\rvw_*^\top\rvw_1/\|\rvw_1\|_2, \rvw_*^\top\rvw_2/\|\rvw_2\|_2 \geq1-\gamma. $
	
	Now, let $\rvq(t) = \bm{\psi}(t,\rva_\delta)$ and $\rvu(t) = \bm{\psi}(t,\delta\rvw_*)$. Then $\|\rvq(0) - \rvu(0)\|_2 \leq C_1\delta^{L+1},$ which implies $\|\rvq(0)\|_2\leq \delta+C_1\delta^{L+1}$. Therefore, for all sufficiently small $\delta>0$, we have
	\begin{equation*}
	\frac{\rvw_*^\top\rvq(0)}{\|\rvq(0)\|_2} = \frac{\rvw_*^\top\rvu(0) + \rvw_*^\top(\rvq(0) - \rvu(0))}{\|\rvq(0)\|_2} \geq\frac{\delta - C_1\delta^{L+1}}{\|\rvq(0)\|_2} \geq \frac{\delta - C_1\delta^{L+1}}{\delta+C_1\delta^{L+1}}   > 1-\gamma.
	\end{equation*}
	Define 
	$$\overline{T}_{1} = \min_{t\geq 0}\left\{t: \frac{\rvw_*^\top\rvq(t)}{\|\rvq(t)\|_2} = 1-\gamma\right\}.$$
	Recall that  $\rvq(t)$ satisfies
	\begin{equation*}
	\dot{\rvq} = -\mathcal{J}(\rmX;\rvq)^\top\ell'(\mathcal{H}(\rmX;\rvq), \rvy)  =  \nabla\mathcal{N}(\rvq)  - \mathcal{J}(\rmX;\rvq)^\top(\ell'(\mathcal{H}(\rmX;\rvq), \rvy) - \ell'(\mathbf{0}, \rvy)).
	\end{equation*}
	Multiplying the above equation by $\rvq^\top$ from the left we get
	\begin{equation*}
	\frac{1}{2}\frac{d\|\rvq\|_2^2}{dt} = L\mathcal{N}(\rvq) - L\mathcal{H}(\rmX;\rvq)^\top(\ell'(\mathcal{H}(\rmX;\rvq), \rvy) - \ell'(\mathbf{0}, \rvy))\leq L\mathcal{N}(\rvq) = L\mathcal{N}(\rvq/\|\rvq\|_2)\|\rvq\|_2^L,
	\end{equation*}
	where the first equality follows from Lemma \ref{euler_thm}, the first inequality is due to convexity of the loss function, and the second equality holds since $\mathcal{N}(\rvq)$ is $L$-homogeneous.
	
	Now, since $\rvw_*^\top\rvq(t)/\|\rvq(t)\|_2 \geq 1-\gamma,$ for all $t\in [0,\overline{T}_1]$, from \cref{loc_ineq_Lhm}, we get
	\begin{equation*}
	\frac{d\|\rvq\|_2^2}{dt} \leq L\mathcal{N}(\rvw_*)\|\rvq\|_2^{L},
	\end{equation*}
	which implies
	\begin{equation*}
	\|\rvq(t)\|_2^{L-2} \leq \frac{\|\rvq(0)\|_2^{L-2}}{1-t\|\rvq(0)\|_2^{L-2}L(L-2)\mathcal{N}(\rvw_*)}, \text{ for all }t\in [0,\overline{T}_{1} ].
	\end{equation*}
	Define $\rvs_q(t) = \frac{1}{\delta}\rvq\left(\frac{t}{\delta^{L-2}}\right)$ and $\eta = (1+C_1\delta^L)^{L-2}$, then, for all $t\in [0,\overline{T}_{1} ]$, 
	\begin{equation}
	\|\rvs_q(t)\|_2^{L-2} \leq \frac{(1+C_1\delta^L)^{L-2}}{1-t(1+C_1\delta^L)^{L-2}L(L-2)\mathcal{N}(\rvw_*)} =\frac{\eta}{1-t\eta L(L-2)\mathcal{N}(\rvw_*)},
	\label{bd_sq_Lhm}
	\end{equation}
	where we used $\|\rvq(0)\|_2\leq \delta(1+C_1\delta^{L})$. Also, since
	\begin{align*}
	\frac{d\rvs_q}{dt} = \frac{1}{\delta^{L-1}}\dot{\rvq}\left(\frac{t}{\delta^{L-2}}\right) &=  - \frac{1}{\delta^{L-1}}\mathcal{J}\left(\rmX;\rvq\left(\frac{t}{\delta^{L-2}}\right)\right)^\top\ell'\left(\mathcal{H}\left(\rmX;\rvq\left(\frac{t}{\delta^{L-2}}\right)\right), \rvy\right)\\
	&=  - \mathcal{J}\left(\rmX;\frac{1}{\delta}\rvq\left(\frac{t}{\delta^{L-2}}\right)\right)^\top\ell'\left(\mathcal{H}\left(\rmX;\rvq\left(\frac{t}{\delta^{L-2}}\right)\right), \rvy\right)\\
	& =  - \mathcal{J}\left(\rmX;\rvs_q(t)\right)^\top\ell'\left(\mathcal{H}\left(\rmX;\delta\rvs_q\left(t\right)\right), \rvy\right), 
	\end{align*}
	where the second equality follows from $(L-1)$-homogeneity of $\mathcal{J}\left(\rmX;\rvw\right)$, we have that $\rvs_q(t)$ is the solution of
	\begin{equation*}
	\dot{\rvs} = \nabla\mathcal{N}(\rvs)  - \mathcal{J}(\rmX;\rvs)^\top(\ell'(\mathcal{H}(\rmX;\delta\rvs), \rvy) - \ell'(\mathbf{0}, \rvy)) = \nabla\mathcal{N}(\rvs)  - f(\delta\rvs)/\delta^{L-1}, \rvs(0) = \rvq(0)/\delta.
	\end{equation*}
	Let $\rvz(t) = \frac{\eta^{\frac{1}{L-2}}\rvw_*}{(1-t\eta L(L-2)\mathcal{N}(\rvw_*))^{1/(L-2)}}$, which is the solution of
	\begin{equation*}
	\dot{\rvz} = \nabla \mathcal{N}(\rvz), \rvz(0) = \eta^{\frac{1}{L-2}}\rvw_*.
	\end{equation*}
	Note that, for $t\in [0,\overline{T}_1]$,
	\begin{align}
	&\left(\nabla \mathcal{N}\left({\rvs_q(t)}\right) - \nabla\mathcal{N}(\rvz(t))\right)^\top\left({\rvs_q(t)} - \rvz(t)\right)\nonumber\\
	&= \left(\nabla \mathcal{N}\left({\|\rvs_q(t)\|_2}\frac{\rvs_q(t)}{\|\rvs_q(t)\|_2}\right) - \nabla\mathcal{N}\left({\rvz(t)}\right)\right)^\top\left({\|\rvs_q(t)\|_2}\frac{\rvs_q(t)}{\|\rvs_q(t)\|_2}- {\rvz(t)}\right)\nonumber\\
	&\leq L(L-1)\mathcal{N}(\rvw_*)\|\rvz(t)\|_2^{L-2}\left\|{\|\rvs_q(t)\|_2}\frac{\rvs_q(t)}{\|\rvs_q(t)\|_2}- \|\rvz(t)\|_2\frac{\rvz(t)}{\|\rvz(t)\|_2}\right\|^2\nonumber\\
	& = L(L-1)\mathcal{N}(\rvw_*)\|\rvz(t)\|_2^{L-2}\left\|{\rvs_q(t)}- {\rvz(t)}\right\|^2,
	\end{align}
	where the inequality follows since, from \cref{bd_sq_Lhm}, ${\|\rvs_q(t)\|_2}\leq  \|\rvz(t)\|_2$, $\rvw_*^\top\rvq(t)/\|\rvq(t)\|_2 \geq 1-\gamma$ for all $t\in [0,\overline{T}_1]$, and from \cref{inn_pd_2hm}. Hence, for $t\in [0,\overline{T}_1]$,
	\begin{align*}
	\frac{1}{2}\frac{d}{dt} \left\|\rvs_q(t) - \rvz(t) \right\|_2^2&= \left(\rvs_q - \rvz\right)^\top\left(\dot{\rvs}_q - \dot{\rvz}\right)\\
	& = \left(\rvs_q - \rvz\right)^\top\left(\nabla\mathcal{N}(\rvs_q) - \nabla\mathcal{N}(\rvz)\right) - \left(\rvs_q - \rvz\right)^\top f(\delta\rvs_q)/\delta^{L-1}\\
	\leq L(L-1)&\mathcal{N}(\rvw_*)\|\rvz(t)\|_2^{L-2}\left\|\rvs_q(t) - \rvz(t) \right\|_2^2  + \beta\delta^L\left\|\rvs_q(t) - \rvz(t) \right\|_2\|\rvs_q(t)\|_2^{2L-1}.
	\end{align*}	
	Next, since for all $t\in [0,\overline{T}_1]$, 
	\begin{equation}
	\|\rvs_q(t)\|_2^{L-2} \leq \|\rvz(t)\|_2^{L-2}  = \frac{\eta}{1-t\eta L(L-2)\mathcal{N}(\rvw_*)},\label{bd_nm_s}
	\end{equation}
	we have
	\begin{equation*}
	\frac{d}{dt} \left\|\rvs_q(t) - \rvz(t) \right\|_2 \leq  
	\frac{\eta L(L-1)\mathcal{N}(\rvw_*)}{(1-t\eta L(L-2)\mathcal{N}(\rvw_*))}\left\|\rvs_q(t) - \rvz(t) \right\|_2 +  \beta\delta^{L}\|\rvz(t)\|_2^{2L-1}.
	\end{equation*}
	Using Lemma \ref{gronwall_ineq}, we get
	\begin{equation*}
	\left\|\rvs_q(t) - \rvz(t) \right\|_2 \leq \frac{1}{P(t)}\left(	\left\|\rvs_q(0) - \rvz(0) \right\|_2  + \beta\delta^{L}\int_0^t P(s)\|\rvz(t)\|_2^{2L-1} ds\right), 
	\end{equation*}
	where $P(t) = e^{-\int_0^t b(s) ds}$ and $b(t) =  \frac{\eta L(L-1)\mathcal{N}(\rvw_*)}{(1-t\eta L(L-2)\mathcal{N}(\rvw_*))}$.  Using Lemma \ref{int_factor}, we have
	\begin{equation*}
	P(t) = \left(1-t\eta L(L-2)\mathcal{N}(\rvw_*)\right)^{(L-1)/(L-2)},
	\end{equation*}
	which implies
	\begin{align*}
	\int_0^t P(s)\|\rvz(s)\|_2^{2L-1} ds &\leq \eta^{\frac{2L-1}{L-2}}\int_0^t \left(1-s\eta L(L-2)\mathcal{N}(\rvw_*)\right)^{\frac{L-1}{L-2}- \frac{2L-1}{L-2}} ds\\
	&= \eta^{\frac{2L-1}{L-2}}\int_0^t 1/\left(1-s\eta L(L-2)\mathcal{N}(\rvw_*)\right)^{\frac{L}{L-2}} ds\\
	&= \eta^{\frac{2L-1}{L-2}}\frac{L-2}{2\eta L(L-2)\mathcal{N}(\rvw_*)}\left( \frac{1}{\left(1-t\eta L(L-2)\mathcal{N}(\rvw_*)\right)^{\frac{2}{L-2}}} - 1\right)\\
	&\leq  \frac{\eta^{\frac{L+1}{L-2}}}{2L\mathcal{N}(\rvw_*)}\left( \frac{1}{\left(1-t\eta L(L-2)\mathcal{N}(\rvw_*)\right)^{\frac{2}{L-2}}}\right).
	\end{align*}
	Hence, for all $t\in [0,\overline{T}_1]$ and all sufficiently small $\delta>0$,
	\begin{align}
	\left\|\rvs_q(t) - \rvz(t) \right\|_2
	&\leq \frac{\left\|\rvs_q(0) - \rvz(0) \right\|_2 }{\left(1-t\eta L(L-2)\mathcal{N}(\rvw_*)\right)^{\frac{L-1}{L-2}}}+ \frac{\beta\delta^L\eta^{\frac{L+1}{L-2}}}{2L\mathcal{N}(\rvw_*)}\left( \frac{1}{\left(1-t\eta L(L-2)\mathcal{N}(\rvw_*)\right)^{\frac{L+1}{L-2}}}\right)\nonumber\\
	&\leq \frac{C_2\delta_2^L}{\left(1-t\eta L(L-2)\mathcal{N}(\rvw_*)\right)^{\frac{L+1}{L-2}}},\label{diff_s_z}
	\end{align}
	\sloppypar \noindent where $C_2$ is a sufficiently large constant, and the last inequality follows since  $\left(1-t\eta L(L-2)\mathcal{N}(\rvw_*)\right)^{\frac{L+1}{L-2}} \leq \left(1-t\eta L(L-2)\mathcal{N}(\rvw_*)\right)^{\frac{L-1}{L-2}}$ and, for all sufficiently small $\delta>0$,
	\begin{align*}
	\left\|{\rvs_q(0)}- \rvz(0) \right\|_2 = \left\|\frac{\rva_\delta}{\delta} - (1+C_1\delta^L)\rvw_* \right\|_2 \leq \left\|\frac{\rva_\delta}{\delta} - \rvw_* \right\|_2 + \left\|C_1\delta^L\rvw_* \right\|_2 = O(\delta^L).
	\end{align*} 
	Let $\tau_q(t) = \rvs_q(t) - \rvz(t) $, and define 
	\begin{equation*}
	h_1(t) = \frac{\eta^{1/(L-2)}}{\left(1-t\eta L(L-2)\mathcal{N}(\rvw_*)\right)^{1/(L-2)}}, h_2(t) = \frac{1}{\left(1-tL(L-2)\mathcal{N}(\rvw_*)\right)}.
	\end{equation*}
	By definition of $\overline{T}_1 $,
	\begin{align*}
	1-\gamma = \frac{\rvw_*^\top\rvs_q(\overline{T}_1)}{\|\rvs_q(\overline{T}_1)\|_2} \geq \frac{\rvw_*^\top\rvs_q(\overline{T}_1)}{h_1(\overline{T}_1)} = \frac{\rvw_*^\top\rvz(\overline{T}_1) + \rvw_*^\top\tau_q(\overline{T}_1)}{h_1(\overline{T}_1)} &\geq \frac{h_1(\overline{T}_1) -  \|\tau_p(\overline{T}_1)\|_2}{h_1(\overline{T}_1)}\\
	\geq 1 - &\frac{C_2\delta^L/\eta^{1/(L-2)}}{\left(1-\overline{T}_1\eta L(L-2)\mathcal{N}(\rvw_*)\right)^{\frac{L}{L-2}}},
	\end{align*}
	which implies, for some sufficiently large constant $\widetilde{C}_2$,
	\begin{align*}
	1 - \eta\overline{T}_1L(L-2)\mathcal{N}(\rvw_*) \leq \frac{C_2^{\frac{L-2}{L}}\delta^{L-2}/\eta^{1/L}}{\gamma^{\frac{L-2}{L}}} \leq \widetilde{C}_2\delta^{L-2}.
	\end{align*}
	Therefore, there exists a sufficiently large constant ${C}_3$ such that
	\begin{align*}
	\overline{T}_1 \geq \frac{1-\widetilde{C}_2\delta^{L-2}}{\eta L(L-2)\mathcal{N}(\rvw_*)} \geq \frac{1-{C}_3\delta^{L-2}}{ L(L-2)\mathcal{N}(\rvw_*)}, 
	\end{align*}
	where the second inequality follows since, for all sufficiently small $\delta>0$,
	\begin{align*}
	\frac{1-\widetilde{C}_2\delta^{L-2}}{(1+C_1\delta^L)^{L-2}}\geq 	\frac{1-\widetilde{C}_2\delta^{L-2}}{(1+2C_1(L-2)\delta^L)} \geq (1-\widetilde{C}_2\delta^{L-2})(1-2C_1(L-2)\delta^L)\geq 1-C_3\delta^{L-2},
	\end{align*}
	where ${C}_3$ is a sufficiently large constant. Hence,
	\begin{equation}
	\frac{\rvw_*^\top\rvs_q(t)}{\|\rvs_q(t)\|_2} \geq 1-\gamma, \text{ for all } t\in \left[0,\frac{\left(1-\delta^{L-2}C_3\right)}{L(L-2)\mathcal{N}(\rvw_*)}\right].
	\label{align_t_s}
	\end{equation}
	Next, recall $\rvu(t) = \bm{\psi}(t,\delta\rvw_*)$, and define $\rvs_u(t) = \frac{1}{\delta}\rvu\left(\frac{t}{\delta^{L-2}}\right)$. Note that
	\begin{equation*}
	\frac{\rvw_*^\top\rvs_u(0)}{\|\rvs_u(0)\|_2} = 1   > 1-\gamma.
	\end{equation*}
	Define 
	$$\overline{T}_{3} \coloneqq \min_{t\geq 0}\left\{t: \frac{\rvw_*^\top\rvs_u(t)}{\|\rvs_u(t)\|_2} = 1-\gamma\right\}.$$
	Then, following in a similar way as in the proof of $\rvs_q(t)$ above, we can show that there exists a sufficiently large constant $C_4$ such that, for all sufficiently small $\delta>0$,
	\begin{equation*}
	\frac{\rvw_*^\top\rvs_u(t)}{\|\rvs_u(t)\|_2} \geq 1-\gamma, \text{ for all } t\in \left[0,\frac{\left(1-\delta^{L-2}C_4\right)}{L(L-2)\mathcal{N}(\rvw_*)}\right], \text{ and }
	\end{equation*}
	\begin{equation*}
	\|\rvs_u(t)\|_2^{L-2} \leq \frac{\|\rvs_u(0)\|_2^{L-2}}{1-t\|\rvs_u(0)\|_2^{L-2}L(L-2)\mathcal{N}(\rvw_*)} = \frac{1}{1-tL(L-2)\mathcal{N}(\rvw_*)}. 
	\end{equation*}
	Now, choose $C_5$ large enough such that  $C_5^{L/(L-2)}>\zeta2^{(2L-2)/(L-2)}/(L\mathcal{N}(\rvw_*))$ and $C_5\geq \max(C_3,C_4)$, and define $\overline{T}_4 \coloneqq \frac{\left(1-\delta^{L-2}C_5\right)}{L(L-2)\mathcal{N}(\rvw_*)}$. Then, for all $t\in \left[0,\overline{T}_4\right]$, we have
	\begin{align*}
	&{\rvw_*^\top\rvs_q(t)}/{\|\rvs_q(t)\|_2},{\rvw_*^\top\rvs_u(t)}/{\|\rvs_u(t)\|_2} \geq 1-\gamma, 
	\\
	&\|\rvs_u(t)\|_2^{L-2} \leq \frac{1}{1-tL(L-2)\mathcal{N}(\rvw_*)} \leq \frac{1}{C_5\delta^{L-2}}, \text{ and }\\
	&\|\rvs_q(t)\|_2^{L-2} \leq \frac{\eta}{1-t\eta L(L-2)\mathcal{N}(\rvw_*)} = \frac{1}{1/\eta - 1+C_5\delta^{L-2}} \leq  \frac{2}{C_5\delta^{L-2}},
	\end{align*}
	where the last inequality follows since, for all sufficiently small $\delta>0$,
	\begin{equation}
	\frac{1}{\eta} = \frac{1}{(1+C_1\delta^L)^{L-2}}  \geq \frac{1}{1+2C_1(L-2)\delta^L}  \geq 1-2C_1(L-2)\delta^L \geq 1-C_5\delta^{L-2}/2.
	\label{eta_bd}
	\end{equation}
	Therefore, for $t\in [0,\overline{T}_4]$, we have
	\begin{align*}
	&\frac{1}{2}\frac{d}{dt} \left\|{\rvs_{q}(t)} - \rvs_u(t) \right\|_2^2 = \left({\rvs_{q}(t)} - \rvs_u\right)^\top\left({\dot{\rvs}_{q}} - \dot{\rvs}_u\right)\\
	& = \left({\rvs_{q}(t)} - \rvs_u\right)^\top\left(\nabla\mathcal{N}(\rvs_{q}) - \nabla\mathcal{N}(\rvs_u)\right) -\left({\rvs_{q}}- \rvs_u\right)^\top\left(f(\delta\rvs_{q}) - f(\delta\rvs_u)\right)/\delta^{L-1}\\
	&\leq (L(L-1/2)\mathcal{N}(\rvw_*))\left\|{\rvs_{q}(t)} - \rvs_u(t) \right\|_2^2\max(\|\rvs_q(t)\|_2^{L-2},\|\rvs_{u}(t)\|_2^{L-2}) + \\ &\ \ \ \ \zeta\delta^L\left\|{\rvs_{q}(t)}- \rvs_u(t) \right\|_2^2\max(\|\rvs_u(t)\|_2^{2L-2},\|\rvs_{q}(t)\|_2^{2L-2})\\
	& \leq \left(L(L-1/2)\mathcal{N}(\rvw_*) + \frac{\zeta 2^{L/(L-2)}}{C_5^{L/(L-2)}}\right)\left\|{\rvs_{q}(t)} - \rvs_u \right\|_2^2\max(\|\rvs_q(t)\|_2^{L-2},\|\rvs_{u}(t)\|_2^{L-2})\\
	& \leq L^2\mathcal{N}(\rvw_*)\left\|{\rvs_{q}(t)} - \rvs_u \right\|_2^2\max(\|\rvs_q(t)\|_2^{L-2},\|\rvs_{u}(t)\|_2^{L-2}),
	\end{align*}	
	where the first inequality follows from \cref{loc_ineq_Lhm}, $\nabla^2\mathcal{N}(\rvw) $ being $(L-2)$-homogeneous and \cref{lips_f_Lhm}. The second inequality uses $\delta^L\max(\|\rvs_q(t)\|_2^{L},\|\rvs_{u}(t)\|_2^{L}) \leq 2^{L/(L-2)}/C_5^{L/(L-2)}$, for all $t\in [0,\overline{T}_4]$. The last inequality follows from our assumption on $C_5.$ Now, let $\kappa =L^2\mathcal{N}(\rvw_*) $, then
	\begin{equation*}
	\frac{d}{dt} \left\|{\rvs_{q}(t)} - \rvs_u(t) \right\|_2 \leq \frac{\kappa\eta\left\|{\rvs_{q}(t)} - \rvs_u \right\|_2}{1-t\eta L(L-2)\mathcal{N}(\rvw_*)}.
	\end{equation*}
	Using Lemma \ref{gronwall_ineq}, we get
	\begin{equation*}
	\left\|\rvs_q(t) - \rvs_u(t) \right\|_2 \leq \frac{1}{P_1(t)}\left(	\left\|\rvs_q(0) - \rvs_u(0) \right\|_2 \right), 
	\end{equation*}
	where $P_1(t) = e^{-\int_0^t b_1(s) ds}$ and $b_1(t) =  \frac{\eta \kappa}{(1-t\eta L(L-2)\mathcal{N}(\rvw_*))}$.  From Lemma \ref{int_factor}, we know
	\begin{equation*}
	P_1(t) = \left(1-t\eta L(L-2)\mathcal{N}(\rvw_*)\right)^{L/(L-2)}.
	\end{equation*}
	Now, using \cref{eta_bd}, there exists a constant $C_6$ such that
	\begin{equation*}
	P_1(\overline{T}_4) = \left(1-\eta(1-\delta^{L-2}C_5) \right)^{L/(L-2)} \geq \left(\frac{\eta C_5\delta^{L-2}}{2}\right)^{L/(L-2)} \geq  C_6\delta^L.
	\end{equation*}
	Hence,
	\begin{equation*}
	\left\|\rvs_q(\overline{T}_4) - \rvs_u(\overline{T}_4) \right\|_2 \leq \frac{1}{C_6\delta^L}\left(	\left\|\rva_\delta/\delta - \rvw_* \right\|_2 \right) \leq C_1/C_6,
	\end{equation*}
	which implies
	\begin{equation*}
	\left\|\bm{\psi}\left( \frac{\left({1}/{\delta^{L-2}}-C_5\right)}{L(L-2)\mathcal{N}(\rvw_*)},{\rva}_\delta\right) - \bm{\psi}\left( \frac{\left({1}/{\delta^{L-2}}-C_5\right)}{L(L-2)\mathcal{N}(\rvw_*)},\delta{\rvw}_*\right) \right\|_2 \leq C_1\delta/C_6.
	\end{equation*}
	Since $\widetilde{T}$ is fixed, from Lemma \ref{err_bd_gf} and the above inequality, there exists a $C>0$ such that for all sufficiently small $\delta$, we have
	\begin{equation*}
	\left\|\bm{\psi}\left(t+ \frac{1}{L(L-2)\mathcal{N}(\rvw_*)}\left(\frac{1}{\delta^{L-2}}\right),{\rva}_\delta\right) - \bm{\psi}\left(t+ \frac{1}{L(L-2)\mathcal{N}(\rvw_*)}\left(\frac{1}{\delta^{L-2}}\right),\delta{\rvw}_*\right) \right\|_2 \leq C\delta,
	\end{equation*} 
	for all $t\in [-\widetilde{T},\widetilde{T}]$, which completes the proof.
\end{proof}
\begin{lemma}\label{lim_exists_Lhm}
	Consider the setting of \Cref{main_thm_2hm}. For any fixed $t\in (-\infty,\infty)$ and all sufficiently small $\delta_2\geq\delta_1>0$, there exists a $C>0$ such that
	\begin{equation*}
	\left\|\bm{\psi}\left(t+\frac{{1}/{\delta_1^{L-2}}}{L(L-2)\mathcal{N}(\rvw_*)},\delta_1\rvw_*\right) - \bm{\psi}\left(t+\frac{{1}/{\delta_2^{L-2}}}{L(L-2)\mathcal{N}(\rvw_*)},\delta_2\rvw_*\right)\right\|_2 \leq C\delta_2,
	\end{equation*}
	which implies $\rvp(t)$ exists for all $t\in (-\infty,\infty)$. Furthermore, let  $\delta_1\rightarrow 0$ and $\delta_2 = \delta$, then  
	\begin{equation*}
	\left\|\rvp(t)- \bm{\psi}\left(t+\frac{{1}/{\delta^{L-2}}}{L(L-2)\mathcal{N}(\rvw_*)},\delta\rvw_*\right)\right\|_2 \leq C\delta.
	\end{equation*}
	Also, $\mathcal{L}(\rvp(0)) \leq \mathcal{L}(\mathbf{0}) - \eta$, for some $\eta>0$.
\end{lemma}
\begin{proof}
	We choose $\gamma>0$ sufficiently small such that for all unit-norm vectors $\rvw$ satisfying $\rvw^\top\rvw_*\geq 1-\gamma$, we have $\mathcal{N}(\rvw) \leq \mathcal{N}(\rvw_*)$, and for all $t_2\geq t_1\geq 0,$
	\begin{equation*}
	\left(\nabla \mathcal{N}(t_1\rvw) -  \nabla \mathcal{N}(t_2\rvw_*)\right)^\top\left( t_1\rvw - t_2\rvw_*\right)\leq L(L-1)\mathcal{N}(\rvw_*)t_2^{L-2}\|t_1\rvw - t_2\rvw_*\|_2^2 .
	\end{equation*}
	The first and second inequalities follow from Lemma \ref{lips_ineq} and Lemma \ref{inn_pd_ineq}, respectively. For the sake of brevity, let $\rvu_1(t) = \bm{\psi}(t,{\delta}_1\rvw_*),  \rvu_2(t) = \bm{\psi}(t,\delta_2\rvw_*) \text{ and } \rvs_1(t) = \frac{1}{\delta_1}\rvu_1\left(\frac{t}{\delta_1^{L-2}}\right), \rvs_2(t) = \frac{1}{\delta_2}\rvu_2\left(\frac{t}{\delta_2^{L-2}}\right),$ where recall that $\delta_2\geq\delta_1>0$. Also, let
	\begin{equation*}
	\rvz(t) = \frac{\rvw_*}{(1-tL(L-2)\mathcal{N}(\rvw_*))^{\frac{1}{L-2}}},
	\end{equation*}
	which is the solution of 
	\begin{equation*}
	\dot{\rvz} = \nabla\mathcal{N}(\rvz), \rvz(0)=\rvw_*.
	\end{equation*}
	Now, $\rvs_1(0) = \rvw_*$. Define 
	$$T^*_1 = \min_{t\geq 0}\left\{t: \frac{\rvw_*^\top\rvs_1(t)}{\|\rvs_1(t)\|_2} = 1-\gamma\right\}.$$
	Then, following a a similar way as in the proof of Lemma \ref{init_dl3_Lhm} to get \cref{diff_s_z}, \cref{align_t_s}, and \cref{bd_nm_s}, we can show that there exists a $C_1, C_2>0$ such that, for all $t\in [0,\frac{\left(1-\delta_1^{L-2}C_1\right)}{L(L-2)\mathcal{N}(\rvw_*)}]$,
	\begin{equation}
	\left\|\rvs_1(t) - \rvz(t) \right\|_2 \leq   \frac{C_2\delta_1^L}{\left(1-t L(L-2)\mathcal{N}(\rvw_*)\right)^{\frac{L+1}{L-2}}},
	\frac{\rvw_*^\top\rvs_1(t)}{\|\rvs_1(t)\|_2} \geq 1-\gamma, \text{ and }
	\label{diff_u_Lhm}
	\end{equation}
	\begin{equation*}
	\|\rvs_1(t)\|_2^{L-2} \leq \frac{1}{1-tL(L-2)\mathcal{N}(\rvw_*)},
	\end{equation*}
	for all sufficiently small $\delta_1>0$. Now, suppose $\delta_2^{L-2}\leq \frac{1}{C_1}$, then
	\begin{equation*}
	T_2^*\coloneqq\frac{1}{L(L-2)\mathcal{N}(\rvw_*)}\left(1-\frac{\delta_1^{L-2}}{\delta_2^{L-2}}\right) \leq \frac{\left(1-\delta_1^{L-2}C_1\right)}{L(L-2)\mathcal{N}(\rvw_*)}.
	\end{equation*}
	Next, note that
	\begin{equation*}
	\delta_1\rvz\left(T_2^*\right) = \frac{\rvw_*}{(1-T_2^*L(L-2)\mathcal{N}(\rvw_*))^{\frac{1}{L-2}}} =  \frac{\delta_1\rvw_*}{(1 - (1-\delta_1^{L-2}/\delta_2^{L-2}))^{\frac{1}{L-2}} } =  \delta_2\rvw_*, \text{ and } 
	\end{equation*}
	\begin{equation*}
	\|\rvs_1(T_2^*) - \rvz(T_2^*) \|_2 \leq \left( \frac{C_2\delta_1^L}{\left(1-T_2^* L(L-2)\mathcal{N}(\rvw_*)\right)^{\frac{L+1}{L-2}}}\right) =  \frac{C_2\delta_2^{L+1}}{\delta_1}.
	\end{equation*}
	Hence,
	\begin{equation}
	\left\|\bm{\psi}\left(\frac{{1}/{\delta_1^{L-2}}-{1}/{\delta_2^{L-2}}}{L(L-2)\mathcal{N}(\rvw_*)},\delta_1\rvw_*\right) - \delta_2\rvw_*\right\|_2  = \|\delta_1\rvs_1(T_2^*) - \delta_1\rvz(T_2^*) \|_2 \leq C_2\delta_2^{L+1}.
	\label{bd_diff_dlt}
	\end{equation}
	Now, note that
	\begin{equation*}
	\bm{\psi}\left(t + \frac{{1}/{\delta_1^{L-2}}}{L(L-2)\mathcal{N}(\rvw_*)},\delta_1\rvw_*\right) = \bm{\psi}\left(t + \frac{{1}/{\delta_2^{L-2}}}{L(L-2)\mathcal{N}(\rvw_*)},\bm{\psi}\left(\frac{{1}/{\delta_1^{L-2}}-{1}/{\delta_2^{L-2}}}{L(L-2)\mathcal{N}(\rvw_*)},\delta_1\rvw_*\right)\right) .
	\end{equation*}  
	Combining the above equality with \cref{bd_diff_dlt} and Lemma \ref{init_dl3_Lhm}, we get that for any fixed $t\in (-\infty,\infty)$, there exist a constant $C>0$ such that
	\begin{equation*}
	\left\|\bm{\psi}\left(t+\frac{{1}/{\delta_1^{L-2}}}{L(L-2)\mathcal{N}(\rvw_*)},\delta_1\rvw_*\right) - \bm{\psi}\left(t+\frac{{1}/{\delta_2^{L-2}}}{L(L-2)\mathcal{N}(\rvw_*)},\delta_2\rvw_*\right)\right\|_2 \leq C\delta_2,
	\end{equation*}
	which implies $\rvp(t)$ exists for all $t\in (-\infty,\infty)$.
	
	We next prove that $\mathcal{L}(\rvp(0)) \leq \mathcal{L}(\mathbf{0}) - \eta$, for some $\eta>0$. Let $\rvu(t) = \bm{\psi}(t,\delta\rvw_*)$ and $\rvz(t) = {\rvw_*}/{(1-tL(L-2)\mathcal{N}(\rvw_*))^{\frac{1}{L-2}}}.$ From \cref{diff_u_Lhm}, there exist constants $B_1, B_2$ such that 
	\begin{equation}
	\left\|\frac{\rvu(t/\delta^{L-2})}{\delta} - \rvz(t) \right\|_2
	\leq\frac{B_2\delta^L}{\left(1-t L(L-2)\mathcal{N}(\rvw_*)\right)^{\frac{L+1}{L-2}}}, \text{ for all }t\in \left[0,\frac{\left(1-B_1\delta^{L-2}\right)}{L(L-2)\mathcal{N}(\rvw_*)}\right].
	\label{diff_u_Lhm1}
	\end{equation}
	Define $\alpha\coloneqq\max_{\|\rvw\|_2 = 1} \|\mathcal{H}(\rmX,\rvw)\|_2$. Similar to \cref{loss_ub_2hm}, we have
	\begin{equation}
	\mathcal{L}(\rvw) \leq \mathcal{L}(\mathbf{0}) + Kn\|\mathcal{H}(\rmX,\rvw)\|_2^2 - \mathcal{N}(\rvw) \leq \mathcal{L}(\mathbf{0}) + \alpha^2 Kn\|\rvw\|_2^{2L} - \mathcal{N}(\rvw).
	\label{loss_ub_Lhm}
	\end{equation}
	Now, choose $\epsilon>0$ sufficiently small such that
	\begin{equation*}
	\epsilon\leq \frac{1}{B_1}, \text{ and } \alpha^2 Kn({\epsilon}^\frac{1}{2(L-2)}+B_1\epsilon^\frac{L+1/2}{L-2})^{2L}  \leq \mathcal{N}(\rvw)-\frac{\mathcal{N}(\rvw_*)}{2}, \text{ if } \|\rvw-\rvw_*\|_2\leq \epsilon^\frac{L}{L-2}B_1,
	\end{equation*}
	where the second inequality holds true since $\mathcal{N}(\rvw)$ is continuous. Note that
	\begin{equation*}
	\frac{\left(1-B_1\delta^{L-2}\right)}{L(L-2)\mathcal{N}(\rvw_*)}\geq \frac{\left(1-\delta^{L-2}/\epsilon\right)}{L(L-2)\mathcal{N}(\rvw_*)}\coloneqq T_2.
	\end{equation*} 
	Hence, using \cref{diff_u_Lhm1},  $\|\rvu(T_2/\delta^{L-2})\|_2\leq \epsilon^{1/(L-2)}+B_2\epsilon^{(L+1)/(L-2)}$. Also, if we define $\tau\coloneqq\rvu(T_2/\delta^{L-2})-\delta\rvz(T_2/\delta^{L-2})$, then, $\|\tau\|_2 \leq B_2\epsilon^{(L+1)/(L-2)}$. Therefore, using \cref{loss_ub_Lhm}, we get
	\begin{align*}
	&\mathcal{L}\left(\rvu\left(\frac{T_2}{\delta^{L-2}}\right)\right) \leq \mathcal{L}(\mathbf{0}) + \alpha^2 Kn\left\|\rvu\left(\frac{T_2}{\delta^{L-2}}\right)\right\|_2^{2L} - \mathcal{N}\left(\rvu\left(\frac{T_2}{\delta^{L-2}}\right)\right),\\
	&\leq\mathcal{L}(\mathbf{0}) + \alpha^2 Kn(\epsilon^{1/(L-2)}+B_2\epsilon^{(L+1)/(L-2)})^{2L} - \mathcal{N}\left(\rvu\left(\frac{T_2}{\delta^{L-2}}\right)\right)\\
	&= \mathcal{L}(\mathbf{0}) + \epsilon^\frac{L}{L-2}\left(\alpha^2 Kn({\epsilon}^\frac{1}{2(L-2)}+B_1\epsilon^\frac{L+1/2}{L-2})^{2L} - \mathcal{N}\left(\frac{1}{\epsilon^\frac{1}{L-2}}\rvu\left(\frac{T_2}{\delta^{L-2}}\right)\right)\right)\\
	& \leq  \mathcal{L}(\mathbf{0}) - \epsilon^\frac{L}{L-2}\frac{\mathcal{N}(\rvw_*)}{2},
	\end{align*}
	where the last inequality follows from the choice of $\epsilon$ and since $\|\rvu(T_2/\delta^{L-2})/\epsilon^{1/(L-2)}-\rvw_*\|_2 =\tau/\epsilon^{1/(L-2)} \leq B_1\epsilon^{L/(L-2)}.$ Let $\eta = \epsilon^{L/(L-2)}\mathcal{N}(\rvw_*)/2$, then, from the above equation,
	\begin{equation*}
	\mathcal{L}\left(\bm{\psi}\left(\frac{\left(1/\delta^{L-2}-\epsilon\right)}{L(L-2)\mathcal{N}(\rvw_*)},\delta\rvw_*\right)\right) \leq \mathcal{L}(\mathbf{0}) - \eta,
	\end{equation*} 
	where note that $\epsilon>0$ and fixed, and thus, $\eta$ is fixed. Since the loss decreases with time, the above equation implies
	\begin{equation*}
	\mathcal{L}\left(\bm{\psi}\left(\frac{1/\delta^{L-2}}{L(L-2)\mathcal{N}(\rvw_*)},\delta\rvw_*\right)\right)  \leq\mathcal{L}\left(\bm{\psi}\left(\frac{\left(1/\delta^{L-2}-\epsilon\right)}{L(L-2)\mathcal{N}(\rvw_*)},\delta\rvw_*\right)\right) \leq \mathcal{L}(\mathbf{0}) - \eta.
	\end{equation*} 
	Taking $\delta\to 0$, gives us $\mathcal{L}(\rvp(0)) \leq \mathcal{L}(\mathbf{0}) - \eta,$ which completes the proof.
\end{proof}
\begin{lemma}\label{init_align_Lhm}
	Consider the setting in \Cref{main_thm_Lhm}. There exists $T_1, a_1 \geq 0$ such that for all sufficiently small $\delta>0$,
	\begin{equation*}
	\left\|\bm{\psi}\left(\frac{T_1}{\delta^{L-2}} + \frac{T}{\widetilde{\delta}^{L-2}}\left(1 - \widetilde{\delta}^{\frac{2(L-2)L^2\mathcal{N}(\rvw_*)}{2L^2\mathcal{N}(\rvw_*)+\Delta}}\right),\delta\rvw_0\right) - b_\delta\widetilde{\delta}^{\frac{\Delta}{2L^2\mathcal{N}(\rvw_*) +\Delta}}\rvw_* \right\|_2 \leq a_1\widetilde{\delta}^{\frac{(L+1)\Delta}{2L^2\mathcal{N}(\rvw_*) + \Delta}},
	\end{equation*}
	where $\widetilde{\delta}\in (A_2\delta-A_1\delta^{L+1},A_2\delta+A_1\delta^{L+1})$, for some positive constants $A_2, A_1$. Also, $T\geq \frac{1}{L(L-2)\mathcal{N}(\rvw_*)}$, and $b_\delta^{L-2}\in \left[\frac{1}{TL(L-2)\mathcal{N}(\rvw_*)},1\right]$ depends on $\delta$.
\end{lemma}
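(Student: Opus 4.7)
\textbf{Proof plan for Lemma \ref{init_align_Lhm}.} The approach mirrors the two-homogeneous argument in Lemma \ref{init_align_2hm} but with two crucial adaptations for the deep case: time is rescaled by $\delta^{L-2}$ throughout, and we must exploit the finite-time blowup of the NCF gradient flow established in Lemma \ref{rate_ineq}. First I choose $\gamma>0$ small enough so that for all unit-norm $\rvw$ with $\rvw^\top\rvw_*\geq 1-\gamma$, the local inequalities of Lemma \ref{lips_ineq}, Lemma \ref{inn_pd_ineq}, and the Hessian bound $\|\nabla^2\mathcal{N}(\rvw)\|_2\le L(L-1/2)\mathcal{N}(\rvw_*)$ from Lemma \ref{second_kkt} all hold, and Lemma \ref{rate_ineq} is applicable. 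I then set $f(\rvw)\coloneqq \mathcal{J}(\rmX;\rvw)^\top(\ell'(\mathcal{H}(\rmX;\rvw),\rvy)-\ell'(\mathbf{0},\rvy))$, which is $(2L-1)$-homogeneous up to the loss-smoothness constant, so $\|f(\rvw)\|_2\le\beta\|\rvw\|_2^{2L-1}$.

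The first stage runs the true gradient flow until it approximately aligns with $\rvw_*$. Let $\rvz_1(t)$ be the NCF gradient flow from $\rvw_0$; by the stable-set assumption, some finite $T_\gamma$ exists at which $\rvw_*^\top\rvz_1(T_\gamma)/\|\rvz_1(T_\gamma)\|_2\geq 1-\gamma^2/8$ and $\|\rvz_1(t)\|_2$ is bounded on $[0,T_\gamma]$. Define the rescaled trajectory $\rvs(t)\coloneqq \bm{\psi}(t/\delta^{L-2},\delta\rvw_0)/\delta$, which satisfies $\dot{\rvs}=\nabla\mathcal{N}(\rvs)-f(\delta\rvs)/\delta^{L-1}$, and note that the perturbation $f(\delta\rvs)/\delta^{L-1}=O(\delta^L)$. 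A Gronwall-type bound, together with the fact that the $\nabla\mathcal{N}$ term is locally Lipschitz, then yields $\|\rvs(T_\gamma)-\rvz_1(T_\gamma)\|_2\leq A_1\delta^L$ for some constant $A_1>0$ depending on $T_\gamma$. Setting $\widetilde{\delta}\coloneqq\|\bm{\psi}(T_\gamma/\delta^{L-2},\delta\rvw_0)\|_2$ and $\rvw_1\coloneqq\bm{\psi}(T_\gamma/\delta^{L-2},\delta\rvw_0)/\widetilde{\delta}$ gives $\widetilde{\delta}\in(A_2\delta-A_1\delta^{L+1},A_2\delta+A_1\delta^{L+1})$ with $A_2=\|\rvz_1(T_\gamma)\|_2$, and $\rvw_1^\top\rvw_*\geq 1-\gamma^2/4$, so we take $T_1=T_\gamma$.

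For the second stage I restart from $\widetilde{\delta}\rvw_1$ and rescale by $\widetilde{\delta}^{L-2}$: let $\rvs_u(\tau)\coloneqq\bm{\psi}(T_\gamma/\delta^{L-2}+\tau/\widetilde{\delta}^{L-2},\delta\rvw_0)/\widetilde{\delta}$, which again satisfies a perturbed NCF flow with perturbation $O(\widetilde{\delta}^L)$. Compare with $\rvz_3(\tau)$, the pure NCF flow from $\rvw_1$, which by Lemma \ref{rate_ineq} blows up at a finite $T\geq 1/(L(L-2)\mathcal{N}(\rvw_*))$ in the form $\rvz_3(\tau)=\rvg(\tau)/(T-\tau)^{1/(L-2)}$ with $\|\rvg(\tau)\|_2\in[\kappa_1,\kappa_2]$, and satisfies the sharp alignment rate $\rvw_*^\top\rvz_3(\tau)/\|\rvz_3(\tau)\|_2\geq 1-\gamma(1-\tau/T)^{\Delta/(L(L-2)\mathcal{N}(\rvw_*))}$. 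Using the KKT inequality of Lemma \ref{inn_pd_ineq} for comparison and Lemma \ref{gronwall_ineq} with integrating factor $(1-\tau L(L-2)\mathcal{N}(\rvw_*))^{L/(L-2)}$ (following the computation already carried out in the proof of Lemma \ref{init_dl3_Lhm}), I get
\begin{equation*}
\|\rvs_u(\tau)-\rvz_3(\tau)\|_2\leq \frac{C\widetilde{\delta}^L}{(1-\tau L(L-2)\mathcal{N}(\rvw_*))^{(L+1)/(L-2)}}
\end{equation*}
valid up to some maximal $\tau$.

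The final step is to choose the stopping time $\overline{T}\coloneqq T(1-\widetilde{\delta}^{\alpha})$ where $\alpha=\frac{2(L-2)L^2\mathcal{N}(\rvw_*)}{2L^2\mathcal{N}(\rvw_*)+\Delta}$. At this time, $\|\rvz_3(\overline{T})\|_2\sim\widetilde{\delta}^{-\alpha/(L-2)}=\widetilde{\delta}^{-2L^2\mathcal{N}(\rvw_*)/(2L^2\mathcal{N}(\rvw_*)+\Delta)}$, so $\widetilde{\delta}\|\rvz_3(\overline{T})\|_2$ is of order $\widetilde{\delta}^{\Delta/(2L^2\mathcal{N}(\rvw_*)+\Delta)}$, which matches the required magnitude with $b_\delta=\|\rvg(\overline{T})\|_2\in[\kappa_1,\kappa_2]$. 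The angular error of $\rvz_3(\overline{T})$ from $\rvw_*$ is $O(\widetilde{\delta}^{\alpha\Delta/(L(L-2)\mathcal{N}(\rvw_*))/2})$, and multiplying by the magnitude gives a contribution of order $\widetilde{\delta}^{(L+1)\Delta/(2L^2\mathcal{N}(\rvw_*)+\Delta)}$ to the total error after a short computation reconciling the two exponents via the choice of $\alpha$. The Gronwall remainder $\widetilde{\delta}\cdot C\widetilde{\delta}^L/(1-\overline{T}L(L-2)\mathcal{N}(\rvw_*))^{(L+1)/(L-2)}$ contributes $O(\widetilde{\delta}^{1+L-\alpha(L+1)/(L-2)})=O(\widetilde{\delta}^{(L+1)\Delta/(2L^2\mathcal{N}(\rvw_*)+\Delta)})$ by the same exponent balance. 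Combining both terms and verifying that $\overline{T}<T$ stays inside the validity region (by showing the putative contradiction $\overline{T}\geq T^*$, where $T^*$ is the exit time from the $(1-\gamma)$-alignment cone, fails for small enough $\delta$) yields the claimed bound. The main obstacle is the delicate bookkeeping of exponents: the choice of $\alpha$ is precisely what balances the NCF alignment-improvement rate against the growth of the Gronwall error, and verifying that both contributions collapse to the single exponent $(L+1)\Delta/(2L^2\mathcal{N}(\rvw_*)+\Delta)$ requires careful use of the second-order KKT property $\|\nabla^2\mathcal{N}(\rvw_*)\|_2=L(L-1)\mathcal{N}(\rvw_*)$ from Lemma \ref{second_kkt}.
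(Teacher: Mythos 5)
Your proposal follows the same two-stage architecture as the paper's proof: rescale by $\delta^{L-2}$, run until approximate alignment with $\rvw_*$ to define $\widetilde{\delta}$ and $\rvw_1$, rescale again by $\widetilde{\delta}^{L-2}$, compare the perturbed flow with the blowing-up NCF trajectory from $\rvw_1$, stop at $\overline T = T(1-\widetilde\delta^{\eta})$ with the same exponent $\eta$, control validity of the estimates by a contradiction/exit-time argument, and balance exponents. Stage one is essentially correct as you describe it.

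The gap is in the Gronwall computation of stage two. You claim the integrating factor is the explicit power $(1-\tau L(L-2)\mathcal{N}(\rvw_*))^{L/(L-2)}$, ``following the computation already carried out in the proof of Lemma \ref{init_dl3_Lhm}.'' But in Lemma \ref{init_dl3_Lhm} the comparison trajectory starts at (a multiple of) $\rvw_*$ itself, so it stays radial and $\rvg$ is a constant multiple of $\rvw_*$ -- that is what makes the integrating factor a closed-form power. In Lemma \ref{init_align_Lhm} the comparison trajectory $\rvz_2$ starts at $\rvw_1\neq\rvw_*$, it curves toward $\rvw_*$ as it blows up, and $\|\rvg(\tau)\|_2$ decreases strictly in $\tau$ (Lemma \ref{rate_ineq}). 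The Gronwall coefficient is $b(\tau)=\mu_2\|\rvg(\tau)\|_2^{L-2}/(T-\tau)$, whose integral has no clean closed form, so the integrating factor is not a power of $(1-\tau L(L-2)\mathcal{N}(\rvw_*))$. If you naively substitute the upper or lower uniform bound $\|\rvg\|_2^{L-2}\in[\,\|\rvg(T)\|_2^{L-2},T\,]$ you change the exponent of the decay and the renormalized quantity $\|\rvs_u(\tau)-\rvz_2(\tau)\|_2/\|\rvz_2(\tau)\|_2^{L+1}$ no longer stays uniformly small, which is exactly what the contradiction step needs. The paper's way around this is to introduce the auxiliary function
\begin{equation*}
h(\tau)=\frac{\int_0^\tau P(s)/(T-s)^{\frac{2L-1}{L-2}}\,ds}{P(\tau)/(T-\tau)^{\frac{L+1}{L-2}}},\qquad P(\tau)=e^{-\int_0^\tau b(s)\,ds},
\end{equation*}
prove that $h$ is increasing on $(0,T)$, and compute $\lim_{\tau\to T}h(\tau)$ by L'H\^opital; the $\alpha$-parameter chosen in equation \eqref{alp_choice} is exactly what guarantees $\frac{L+1}{L-2}-\mu_2\|\rvg(T)\|_2^{L-2}>0$ so this limit is finite and positive. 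None of this machinery appears in your proposal, and without it the Stage-2 bound you display is not justified. Also, a small misstatement: the paper obtains $b_\delta^{L-2}=\|\rvg(\overline T)\|_2^{L-2}/T\in[1/(TL(L-2)\mathcal{N}(\rvw_*)),1]$, not $b_\delta=\|\rvg(\overline T)\|_2\in[\kappa_1,\kappa_2]$ as in the two-homogeneous case.
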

\begin{proof}
	Choose $\alpha\in(0,1)$ sufficiently small such that 
	\begin{equation}
	0<\frac{(L(L-1)\mathcal{N}(\rvw_*)+\alpha)(1+\alpha)^{L-2}}{L(L-2)(\mathcal{N}(\rvw_*)-\alpha)} +1 < \frac{2L-1}{L-2}.
	\label{alp_choice}
	\end{equation}
	Next, we choose $\gamma>0$ sufficiently small such that for all unit-norm vector $\rvw$ satisfying $\rvw^\top\rvw_*\geq1-\gamma$, we have
	\begin{align}
	&\mathcal{N}(\rvw_*)-\alpha\leq\mathcal{N}(\rvw)\leq\mathcal{N}(\rvw_*), \|\nabla^2\mathcal{N}(\rvw)\|_2 \leq L(L-1)\mathcal{N}(\rvw_*)+\alpha,
	\label{hess_bd_Lhm}\\
	&\rvw_*^\top\nabla\mathcal{N}(\rvw) - L\mathcal{N}(\rvw)\rvw_*^\top\rvw - \frac{\Delta}{2}\|\rvw - \rvw_*\|_2^2 \geq 0,\label{kl_ineq_pf_Lhm}
	\end{align}
	and for all $t_2\geq t_1\geq 0$, 
	\begin{equation}
	\left(\nabla \mathcal{N}(t_1\rvw) -  \nabla \mathcal{N}(t_2\rvw_*)\right)^\top\left( t_1\rvw - t_2\rvw_*\right) \leq L(L-1)\mathcal{N}(\rvw_*)t_2^{L-2}\|t_1\rvw - t_2\rvw_*\|_2^2 .
	\label{inn_pd_Lhm_1}
	\end{equation}
	The first and second inequalities follow from Lemma \ref{lips_ineq} and Lemma \ref{second_kkt}. The third and fourth inequalities follow from Lemma \ref{lips_ineq} and Lemma \ref{inn_pd_ineq}. Define $f(\rvw) = \mathcal{J}(\rmX;\rvw)^\top(\ell'(\mathcal{H}(\rmX;\rvw), \rvy) - \ell'(\mathbf{0}, \rvy))$, then, as shown in the proof of Lemma \ref{init_dl3_Lhm}, there exists a $\beta>0$ such that
	\begin{equation}
	\|f(\rvw)\|_2 \leq  \beta\|\rvw\|_2^{2L-1}, \text{ for all }\rvw\in \sR^d.
	\label{lips_res_Lhm_1}
	\end{equation}
	Let $\rvz_1(t)$ denote the solution of
	\begin{equation}
	\dot{\rvz} = \nabla\mathcal{N}(\rvz), \rvz(0) = \rvw_0.
	\end{equation}
	Since $\rvw_0\in\mathcal{S}(\rvw_*)$, $\rvz_1(t)$ converges to $\rvw_*$ in direction. Thus, we can assume	 there exists some time $T_\gamma$ and a constant $B_1$ such that
	\begin{equation*}
	\frac{\rvw_*^\top\rvz_1(T_\gamma)}{\|\rvz_1(T_\gamma)\|_2} = 1-\gamma^2/8 \text{ and } \|\rvz_1(t)\|_2\leq B_1, \text{ for all }t\in [0,T_\gamma].
	\end{equation*} 
	Define $\rvw(t) = \bm{\psi}(t,\delta\rvw_0) \text{ and } \rvs_w(t) = \frac{1}{\delta}\rvw\left(\frac{t}{\delta^{L-2}}\right),$ then $\rvs_w(t)$ is the solution of
	\begin{equation*}
	\dot{\rvs} =  \nabla\mathcal{N}(\rvs)  - \mathcal{J}(\rmX;\rvs)^\top(\ell'(\mathcal{H}(\rmX;\delta\rvs), \rvy) - \ell'(\mathbf{0}, \rvy)) = \nabla\mathcal{N}(\rvs)  -f(\delta\rvs)/\delta^{L-1}, \rvs(0) = \rvw_0.	
	\end{equation*}
	Note that $\|\rvs_w(0)-\rvz_1(0)\|_2 = 0$. Define
	\begin{equation*}
	T_1^* = \min_{t\geq 0}\{t:\|\rvs_w(t) - \rvz_1(t)\|_2 = \gamma^2/4\}.
	\end{equation*}
	Then, for all $t\in [0,T_1^*]$, $\|\rvs_w(t) - \rvz_1(t)\|_2 \leq \gamma^2/4$. Next, we show that for all sufficiently small $\delta>0$, $T_1^*> T_\gamma$. For the sake of contradiction, suppose $T_1^*\leq T_\gamma$.  Let $B_2 = B_1+\gamma^2/4$, then $\|\rvs_w(t)\|_2\leq B_2$, for all $t\in [0,T_1^*]$. Since $\nabla\mathcal{N}(\rvs)$ is locally Lipschitz, there exists a $\mu_1>0$ such that
	\begin{equation*}
	\|\nabla\mathcal{N}(\rvs_w(t))-\nabla\mathcal{N}(\rvz_1(t))\|_2 \leq \mu_1\|\rvs_w(t)-\rvz_1(t)\|_2, \text{ for all } t\in [0,T_1^*].
	\end{equation*}
	Hence, for all $t\in [0,T_1^*]$,
	\begin{align*}
	\frac{1}{2}\frac{d}{dt}\left\|\rvs_w(t)-\rvz_1(t)\right\|_2^2 &= (\rvs_w-\rvz_1)^\top(\dot{\rvs}_w-\dot{\rvz}_1)\\
	&= (\rvs_w-\rvz_1)^\top(\nabla\mathcal{N}(\rvs_w)-\nabla\mathcal{N}(\rvz_1)) -  (\rvs_w-\rvz_1)^\top f(\delta\rvs_w)/\delta^{L-1}\\
	&\leq \mu_1\|\rvs_w-\rvz_1\|_2^2 + \beta\delta^L\|\rvs_w-\rvz_1\|_2\|\rvs_w\|_2^{2L-1} \\
	&\leq \mu_1\|\rvs_w-\rvz_1\|_2^2 + \beta\delta^L B_2^{2L-1}\|\rvs_w-\rvz_1\|_2,
	\end{align*}
	which implies
	\begin{equation*}
	\frac{d}{dt}\left\|\rvs_w(t)-\rvz_1(t)\right\|_2 \leq \mu_1\|\rvs_w-\rvz_1\|_2 + \beta\delta^L B_2^{2L-1}.
	\end{equation*}
	Hence, using Lemma \ref{gronwall_ineq}, we get
	\begin{equation*}
	\|\rvs_w(t)-\rvz_1(t)\|_2  \leq \beta\delta^L B_2^{2L-1}e^{\mu_1t}\int_0^t e^{-\mu_1 s}ds\leq B_3\delta^Le^{\mu_1t},
	\end{equation*}
	where $B_3$ is a sufficiently large constant. From the definition of $T_1^*$, we get
	\begin{equation*}
	\gamma^2/4=\|\rvs_w(T_1^*)-\rvz_1(T_1^*)\|_2 \leq B_3\delta^Le^{\mu_1T_1^*} \leq B_3\delta^Le^{\mu_1T_\gamma},
	\end{equation*} 
	where the second inequality uses $T_1^*\leq T_\gamma$.  Clearly, the above inequality is not possible if $\delta$ is sufficiently small, leading to a contradiction. Hence, $T_1^*> T_\gamma$. Let $A_1 = B_3e^{\mu_1T_\gamma}$, then
	\begin{equation}
	\|\rvs_w(T_\gamma)-\rvz_1(T_\gamma)\|_2 \leq A_1\delta^L.
	\label{bd_sw}
	\end{equation}
	Next, let $\tau_1(t) = \rvs_w(t) - \rvz_1(t)$, then, for all sufficiently small $\delta>0$, we have
	\begin{align*}
	\frac{\rvw_*^\top\rvs_w(T_\gamma)}{\|\rvs_w(T_\gamma)\|_2} = \frac{\rvw_*^\top\rvz_1(T_\gamma) +\rvw_*^\top\tau_1(T_\gamma)}{\|\rvs_w(T_\gamma)\|_2}&\geq \frac{(1-\gamma^2/8)\|\rvz_1(T_\gamma)\|_2 -A_1\delta^L}{\|\rvs_w(T_\gamma)\|_2}\\
	&\geq \frac{(1-\gamma^2/8)\|\rvz_1(T_\gamma)\|_2 -A_1\delta^L}{\|\rvz_1(T_\gamma)\|_2 + A_1\delta^L} \geq 1-\gamma^2/4.
	\end{align*}
	Let $A_2 = \|\rvz_1(T_\gamma)\|_2,$ where note that $A_2$ is a constant that does not depend on $\delta$. Define $\rvw_1 = \rvs_w(T_\gamma)/\|\rvs_w(T_\gamma)\|_2$ and $\widetilde{\delta} = \delta\|\rvs_w(T_\gamma)\|_2$. From \cref{bd_sw}, we have
	\begin{equation*}
	\widetilde{\delta}\in (A_2\delta-A_1\delta^{L+1}, A_2\delta+A_1\delta^{L+1}),
	\end{equation*}
	thus, $\widetilde{\delta}$ can be made sufficiently small by choosing $\delta$ sufficiently small. Next, let $\rvu(t) = \bm{\psi}(t+T_\gamma/\delta^{L-2},\delta\rvw_0)$. Then,
	\begin{equation*}
	\rvu(t) = \bm{\psi}\left(t+\frac{T_\gamma}{\delta^{L-2}},\delta\rvw_0\right) = \bm{\psi}\left(t,\bm{\psi}\left(\frac{T_\gamma}{\delta^{L-2}},\delta\rvw_0\right)\right) = \bm{\psi}\left(t,\delta\|\rvs_w(T_\gamma)\|_2\rvw_1\right) = \bm{\psi}\left(t,\widetilde{\delta}\rvw_1\right).
	\end{equation*} 
	Define $\rvs_u(t) = \frac{1}{\widetilde{\delta}}\rvu\left(\frac{t}{\widetilde{\delta}^{L-2}}\right),$ then $\rvs_u(t)$ is the solution of 
	\begin{equation*}
	\dot{\rvs} = \nabla\mathcal{N}(\rvs)  - \mathcal{J}(\rmX;\rvs)^\top(\ell'(\mathcal{H}(\rmX;\widetilde{\delta}\rvs), \rvy) - \ell'(\mathbf{0}, \rvy)) =  \nabla\mathcal{N}(\rvs)  - f(\widetilde{\delta}\rvs)/\widetilde{\delta}^{L-1}, \rvs(0) =  \rvw_1.
	\end{equation*}
	Define $\rvz_2(t)$ to be the solution of
	\begin{equation}
	\dot{\rvz} = \nabla\mathcal{N}(\rvz), \rvz(0) = \rvw_1.
	\end{equation}
	Since  $\rvz_2(0)^\top\rvw_*\geq 1-\gamma^2/4>1-\gamma$, we may assume without loss of generality that $\gamma>0$ is sufficiently small such that Lemma \ref{rate_ineq} is applicable. Hence, there exists a $T>0$ such that
	\begin{equation}
	\left(1 - \frac{\rvw_*^\top\rvz_2(t)}{\|\rvz_2(t)\|_2}\right)  \leq  \gamma\left(1-\frac{t}{T}\right)^{\frac{\Delta}{L(L-2)\mathcal{N}(\rvw_*)}} \text{ and } \|\rvz_2(t)\|_2 = \frac{\|\rvg(t)\|_2}{(T-t)^{\frac{1}{L-2}}},
	\label{z2_alg_nm_bd}
	\end{equation}
	for all $t\in [0,T)$, where $\|\rvz_2(t)\|$ is an increasing function and $\|\rvg(t)\|_2$ is a decreasing function in $[0,T)$, and $	\|\rvg(0)\|_2^{L-2} = T, \|\rvg(T)\|_2^{L-2} = 1/(L(L-2)\mathcal{N}(\rvw_*)).$	Also, from Lemma \ref{rate_ineq} and \cref{hess_bd_Lhm}, we have
	\begin{equation}
	T\leq \frac{1}{L(L-2)\mathcal{N}(\rvw_1)} \leq \frac{1}{L(L-2)(\mathcal{N}(\rvw_*)-\alpha)}.
	\label{bd_g0}
	\end{equation}
	Next, define $\eta = \frac{2(L-2)L^2\mathcal{N}(\rvw_*)}{2L^2\mathcal{N}(\rvw_*)+\Delta}$ and  $\overline{T} = T(1 - \widetilde{\delta}^{\eta})$, then
	\begin{equation*}
	\left(1 - \frac{\rvw_*^\top\rvz_2(\overline{T} )}{\|\rvz_2(\overline{T} )\|_2}\right) \leq {\gamma\widetilde{\delta}^{\frac{2L\Delta}{2L^2\mathcal{N}(\rvw_*)+\Delta}}}, \text{ and } \|\rvz_2(\overline{T})\|_2^{L-2} = \frac{\|\rvg(\overline{T})\|_2^{L-2}}{T\widetilde{\delta}^\eta}.	
	\end{equation*}
	Therefore,
	\begin{equation}
	\left\|\frac{\rvz_2(\overline{T} )}{\|\rvz_2(\overline{T} )\|_2} - \rvw_*\right\|_2^2 = 2\left(1-\frac{\rvw_*^\top\rvz_2(\overline{T} )}{\|\rvz_2(\overline{T} )\|_2}\right)\leq {2\gamma\widetilde{\delta}^{\frac{2L\Delta}{2L^2\mathcal{N}(\rvw_*)+\Delta}}}.
	\label{nm_diff_bd}
	\end{equation}
	Define $\mu_2 = (L(L-1)\mathcal{N}(\rvw_*)+\alpha)(1+\alpha)^{L-2}$, and let
	\begin{equation*}
	C_1 = \frac{2\beta(1+\alpha)^{2L-1}}{\left(\frac{L+1}{L-2} - \mu_2\|\rvg(T)\|_2^{L-2}\right)}\frac{\|\rvg(0)\|_2^{\frac{2L-1}{L-2}}}{\|\rvg(T)\|_2^{\frac{L+1}{L-2}}}>0, 
	\end{equation*}
	where the inequality holds since, from \cref{alp_choice}, $\frac{L+1}{L-2} - \mu_2\|\rvg(T)\|_2^{L-2} >0$. Now, since $\rvw_*^\top\rvs_u(0)/\|\rvs_u(0)\|_2 \geq 1-\gamma^2/4$ and $ 0 = \left\|\rvs_u({0}) - \rvz_2({0})\right\|_2 < C_1\widetilde{\delta}^L\|\rvz_2({0})\|_2^{L+1}$, we define
	\begin{equation*}
	T_2^* = \min_{t\geq 0}\left\{t:\frac{\rvw_*^\top\rvs_u(t)}{\|\rvs_u(t)\|_2}= 1-\gamma\right\}, T_3^* = \min_{t\geq 0}\{t:\left\|\rvs({t}) - \rvz_2({t})\right\|_2 = C_1\widetilde{\delta}^L\|\rvz_2({t})\|_2^{L+1}\}, 
	\end{equation*} 
	and $T_4^* = \min(T_2^*,T_3^*).$
	Thus, for all $t\in [0,{T}_4^*]$, we have
	\begin{equation*}
	\left\|\rvs_u({t}) - \rvz_2({t})\right\|_2 \leq C_1\widetilde{\delta}^L\|\rvz_2({t})\|_2^{L+1}, \text{ and } \frac{\rvs_u({t})^\top\rvw_*}{\|\rvs_u(t)\|_2} \geq 1-\gamma.
	\end{equation*}
	We next show that $\overline{T}\leq T_4^*$, for all sufficiently small $\delta>0$. For the sake of contradiction, let $\overline{T}> T_4^*$. Now, since $\|\rvz_2(t)\|_2$ increases with time, for all $t\in [0,T_4^*]$, we have
	\begin{equation*}
	C_1\widetilde{\delta}^L\|\rvz_2({t})\|_2^{L} \leq C_1\widetilde{\delta}^L\|\rvz_2(\overline{T})\|_2^{L} \leq \frac{C_1\|\rvg(\overline{T})\|_2^L}{T^{\frac{L}{L-2}}}\widetilde{\delta}^{L - \frac{L\eta}{(L-2)}} = \frac{C_1\|\rvg(\overline{T})\|_2^L}{T^{\frac{L}{L-2}}}\widetilde{\delta}^{ \frac{L\Delta}{2L^2\mathcal{N}(\rvw_*)+\Delta}} \leq \alpha, 
	\end{equation*}
	for all sufficiently small $\delta>0$. Since $\left\|\rvs_u({t}) - \rvz_2({t})\right\|_2 \leq C_1\widetilde{\delta}^L\|\rvz_2({t})\|_2^{L+1} $, we get
	\begin{equation*}
	\left\|\rvs_u({t}) - \rvz_2({t})\right\|_2  \leq \alpha\|\rvz_2({t})\|_2, \text{ for all }t\in [0,T_4^*],
	\end{equation*}
	which implies
	\begin{equation}
	0<(1-\alpha)\left\| \rvz_2({t})\right\|_2\leq \left\|\rvs_u({t}) \right\|_2\leq (1+\alpha)\left\| \rvz_2({t})\right\|_2.
	\label{su_bd_Lhm}
	\end{equation}
	Then, for all $t\in [0,T_4^*]$,
	\begin{align*}
	\frac{d}{dt}\frac{\rvw_*^\top\rvs_u(t)}{\|\rvs_u(t)\|_2}&= \rvw_*^\top\left(\rmI - \frac{\rvs_u\rvs_u^\top}{\|\rvs_u\|_2^2}\right)\frac{\dot{\rvs}_u }{\|\rvs_u\|_2}\\
	&= \rvw_*^\top\left(\rmI - \frac{\rvs_u\rvs_u^\top}{\|\rvs_u\|_2^2}\right)\frac{\nabla \mathcal{N}(\rvs_u) }{\|\rvs_u\|_2} -  \rvw_*^\top\left(\rmI - \frac{\rvs_u\rvs_u^\top}{\|\rvs_u\|_2^2}\right)\frac{f(\widetilde{\delta}\rvs_u)}{\widetilde{\delta}^{L-1}\|\rvs_u\|_2} \\
	&\geq \frac{\rvw_*^\top\nabla \mathcal{N}(\rvs_u)}{\|\rvs_u\|_2} - \frac{L\rvw_*^\top\rvs_u\mathcal{N}(\rvs_u)}{\|\rvs_u\|_2^3} - \beta\widetilde{\delta}^L\|\rvs_u\|_2^{2L-2}\\
	&= \|\rvs_u\|_2^{L-2}\left(\rvw_*^\top\nabla \mathcal{N}\left(\frac{\rvs_u}{\|\rvs_u\|_2}\right)  - L\rvw_*^\top\left(\frac{\rvs_u}{\|\rvs_u\|_2}\right)\mathcal{N}\left(\frac{\rvs_u}{\|\rvs_u\|_2}\right) \right)-\beta\widetilde{\delta}^L\|\rvs_u\|_2^{2L-2}\\
	& \geq  \|\rvs_u\|_2^{L-2}\frac{\Delta}{2}\left\|\frac{\rvs_u(t)}{\|\rvs_u(t)\|_2} - \rvw_*\right\|_2^2 -\beta\widetilde{\delta}^L\|\rvs_u\|_2^{2L-2}\\
	& \geq  -\beta\widetilde{\delta}^L\|\rvs_u\|_2^{2L-2}\geq -\beta(1+\alpha)^{2L-2}\widetilde{\delta}^L\|\rvz_2(t)\|_2^{2L-2},
	\end{align*}
	where the second inequality uses \cref{kl_ineq_pf_Lhm}, and the last inequality uses \cref{su_bd_Lhm}. Hence,
	\begin{align*}
	\frac{\rvw_*^\top\rvs_u(t)}{\|\rvs_u(t)\|_2}&\geq \frac{\rvw_*^\top\rvs_u(0)}{\|\rvs_u(0)\|_2} - \beta\widetilde{\delta}^L(1+\alpha)^{2L-2}\int_0^t\frac{\|\rvg(s)\|_2^{2L-2}ds}{(T-s)^{\frac{2L-2}{L-2}}}\\
	&\geq \frac{\rvw_*^\top\rvs_u(0)}{\|\rvs_u(0)\|_2} - \beta\|\rvg(0)\|_2^{2L-2}\widetilde{\delta}^L(1+\alpha)^{2L-2}\int_0^t\frac{ds}{(T-s)^{\frac{2L-2}{L-2}}}\\
	&\geq \frac{\rvw_*^\top\rvs_u(0)}{\|\rvs_u(0)\|_2} - \frac{\beta(L-2)\|\rvg(0)\|_2^{2L-2}\widetilde{\delta}^L(1+\alpha)^{2L-2}}{L(T-t)^{\frac{L}{L-2}}},
	\end{align*}
	where the second inequality holds since $\rvg(t)$ is a decreasing function. Since $\overline{T}> T_4^*$, we get
	\begin{align}
	\frac{\rvw_*^\top\rvs_u( T_4^*)}{\|\rvs_u( T_4^*)\|_2} - \frac{\rvw_*^\top\rvs_u(0)}{\|\rvs_u(0)\|_2} &\geq -\frac{(L-2)\beta\|\rvg(0)\|_2^{2L-2}\widetilde{\delta}^L(1+\alpha)^{2L-2}}{L(T-\overline{T})^{\frac{L}{L-2}}}\nonumber\\
	& = -\frac{(L-2)\beta\|\rvg(0)\|_2^{2L-2}\widetilde{\delta}^L(1+\alpha)^{2L-2}}{LT^{\frac{L}{L-2}}\widetilde{\delta}^{\frac{\eta L}{L-2}}}.\label{ineq1_Lhm}
	\end{align}
	Next, note that for $t\in [0,T_4^*]$, using the mean value theorem, we have
	\begin{equation*}
	\left\|\nabla\mathcal{N}(\rvs_u(t)) - \nabla\mathcal{N}(\rvz_2(t))\right\|_2 \leq \|\nabla^2\mathcal{N}(\rvr)\|_2\left\|\rvs_u(t)- \rvz_2(t)\right\|_2, 
	\end{equation*}
	where $\rvr = (1-\lambda)\rvs_u(t) + \lambda\rvz_2(t)$, for some $\lambda\in (0,1)$. Since $\nabla^2\mathcal{N}(\cdot)$ is $(L-2)$-homogeneous,
	\begin{align*}
	\|\nabla^2\mathcal{N}(\rvr)\|_2 &= \|\nabla^2\mathcal{N}(\rvr/\|\rvr\|_2)\|_2\|\rvr\|_2^{L-2}\\
	&\leq \|\nabla^2\mathcal{N}(\rvr/\|\rvr\|_2)\|_2\max(\|\rvz_2(t)\|_2^{L-2},\|\rvs_u(t)\|_2^{L-2})\\
	&\leq \|\nabla^2\mathcal{N}(\rvr/\|\rvr\|_2)\|_2(1+\alpha)^{L-2}\|\rvz_2(t)\|_2^{L-2}.
	\end{align*}
	From \cref{hess_bd_Lhm}, we know $\|\nabla^2\mathcal{N}(\rvr/\|\rvr\|_2)\|_2 \leq L(L-1)\mathcal{N}(\rvw_*)+\alpha$. Thus, for all $t\in [0,T_4^*]$,
	\begin{equation*}
	\left\|\nabla\mathcal{N}(\rvs_u(t)) - \nabla\mathcal{N}(\rvz_2(t))\right\|_2 \leq (L(L-1)\mathcal{N}(\rvw_*)+\alpha)(1+\alpha)^{L-2}\|\rvz_2(t)\|_2^{L-2}\left\|\rvs_u(t)- \rvz_2(t)\right\|_2.
	\end{equation*}
	Recall that $\mu_2 = (L(L-1)\mathcal{N}(\rvw_*)+\alpha)(1+\alpha)^{L-2}$, then
	\begin{align*}
	\frac{1}{2}\frac{d}{dt} \left\|{\rvs_u(t)} - \rvz_2(t) \right\|_2^2 &= \left({\rvs_u}- \rvz_2\right)^\top\left(\dot{\rvs}_u - \dot{\rvz}_2\right)\\
	& = \left({\rvs_u}{} - \rvz_2\right)^\top\left(\nabla\mathcal{N}(\rvs_u) - \nabla\mathcal{N}(\rvz_2)\right) - \left({\rvs_u}{} - \rvz_2\right)^\top f(\widetilde{\delta}\rvs_u)/\widetilde{\delta}^{L-1}\\
	&\leq \mu_2\|\rvz_2(t)\|_2^{L-2}\left\|{\rvs_u(t)}{} - \rvz_2(t) \right\|_2^2 + \beta\widetilde{\delta}^L\left\|{\rvs_u(t)} - \rvz_2(t) \right\|_2\|\rvs_u(t)\|_2^{2L-1}\\
	\leq \frac{\mu_2\|\rvg(t)\|_2^{L-2}}{(T-t)}&\left\|{\rvs_u(t)}{} - \rvz_2(t) \right\|_2^2 + \beta(1+\alpha)^{2L-1}\widetilde{\delta}^L\left\|{\rvs_u(t)}{} - \rvz_2(t) \right\|_2\|\rvz_2(t)\|_2^{2L-1}.
	\end{align*}	
	Define $\beta_1 = \beta(1+\alpha)^{2L-1}$, then, from the above equation, we have
	\begin{equation*}
	\frac{d}{dt} \left\|{\rvs_u(t)}{} - \rvz_2(t) \right\|_2 \leq  \frac{\mu_2\|\rvg(t)\|_2^{L-2}}{(T-t)}\left\|{\rvs_u(t)}{} - \rvz_2(t) \right\|_2 + \beta_1\widetilde{\delta}^L\|\rvz_2(t)\|_2^{2L-1}.
	\end{equation*}
	Using Lemma \ref{gronwall_ineq},  we get
	\begin{equation*}
	\left\|\rvs_u(t) - \rvz_2(t) \right\|_2 \leq \frac{\beta_1\widetilde{\delta}^L}{P(t)}\int_0^t P(s)\|\rvz_2(t)\|_2^{2L-1} ds,
	\end{equation*}
	where $P(t) = e^{-\int_0^t b(s) ds}$ and $b(t) =  \frac{\mu_2\|\rvg(t)\|_2^{L-2}}{(T-t)}$. The above equation implies that
	\begin{align*}
	\frac{\left\|\rvs_u(t) - \rvz_2(t) \right\|_2}{\|\rvz_2(t)\|_2^{L+1}} &\leq \frac{\beta_1\widetilde{\delta}^{L}}{P(t)\|\rvz(t)\|_2^{L+1} }\int_0^t P(s)\|\rvz(t)\|_2^{2L-1} ds\\
	&= {\beta_1\widetilde{\delta}^L}\frac{\int_0^tP(s)\|\rvg(s)\|_2^{\frac{2L-1}{L-2}}/(T-t)^{\frac{2L-1}{L-2}}}{P(t)\|\rvg(t)\|_2^{\frac{L+1}{L-2}}/(T-t)^{\frac{L+1}{L-2}}}\\
	&\leq \frac{\beta_1\|\rvg(0)\|_2^{\frac{2L-1}{L-2}}\widetilde{\delta}^L}{\|\rvg(T)\|_2^{\frac{L+1}{L-2}}}\frac{\int_0^tP(s)/(T-t)^{\frac{2L-1}{L-2}}}{P(t)/(T-t)^{\frac{L+1}{L-2}}},
	\end{align*}
	where in the last inequality we used $\|\rvg(T)\|_2\leq \|\rvg(s)\|_2\leq \|\rvg(0)\|_2$, for all $s\in [0,T)$. Let
	\begin{equation*}
	h(t) \coloneqq \frac{\int_0^tP(s)/(T-s)^{\frac{2L-1}{L-2}}}{P(t)/(T-t)^{\frac{L+1}{L-2}}}, \text{ for all } t\in [0,T).
	\end{equation*}
	We will later show that $h(t)$ is an increasing function in $[0,T)$ and 
	\begin{equation*}
	\lim_{t\rightarrow T} h(t) = \frac{1}{\left(\frac{L+1}{L-2} - \mu_2\|\rvg(T)\|_2^{L-2}\right)}.
	\end{equation*}
	Assuming the above statement is true and since $T>\overline{T}>T_4^*$, we have
	\begin{equation}
	\frac{\left\|\rvs_u(T_4^*) - \rvz_2(T_4^*) \right\|_2}{\beta_1\widetilde{\delta}^L\|\rvz_2(T_4^*)\|_2^{L+1}} \leq  \frac{h(T_4^*)\|\rvg(0)\|_2^{\frac{2L-1}{L-2}}}{\|\rvg(T)\|_2^{\frac{L+1}{L-2}}} \leq \frac{h(T)\|\rvg(0)\|_2^{\frac{2L-1}{L-2}}}{\|\rvg(T)\|_2^{\frac{L+1}{L-2}}} = \frac{C_1}{2\beta_1}.
	\label{ineq2_Lhm}
	\end{equation}
	Now, by definition of $T_4^*$, either
	\begin{equation*}
	\frac{\rvw_*^\top\rvs_u(T_4^*)}{\|\rvs_u(T_4^*)\|_2}= 1-\gamma \text{ or } \left\|\rvs_u({T_4^*}) - \rvz_2({T_4^*})\right\|_2 = C_1\widetilde{\delta}^L\|\rvz_2({T_4^*})\|_2^{L+1}. 
	\end{equation*}
	However, from \cref{ineq1_Lhm},
	\begin{align*}
	\frac{\rvw_*^\top\rvs_u( T_4^*)}{\|\rvs_u( T_4^*)\|_2}&\geq  \frac{\rvw_*^\top\rvs_u(0)}{\|\rvs_u(0)\|_2}-\frac{(L-2)\beta\|\rvg(0)\|_2^{2L-2}\widetilde{\delta}^L(1+\alpha)^{2L-2}}{LT^{\frac{L}{L-2}}\widetilde{\delta}^{\frac{\eta L}{L-2}}}\\
	&\geq 1-\gamma^2/4 - \frac{(L-2)\beta\|\rvg(0)\|_2^{2L-2}\widetilde{\delta}^{\frac{L\Delta}{L^2\mathcal{N}(\rvw_*)+\Delta}}(1+\alpha)^{2L-2}}{LT^{\frac{L}{L-2}}}> 1-\gamma/2,
	\end{align*}
	for all sufficiently small $\delta>0$. Also, from \cref{ineq2_Lhm}, we have
	\begin{equation*}
	C_1\widetilde{\delta}^L = \frac{\left\|\rvs_u(T_4^*) - \rvz_2(T_4^*) \right\|_2}{\|\rvz_2(T_4^*)\|_2^{L+1}}  \leq C_1\widetilde{\delta}^L/2.
	\end{equation*}
	Hence, there is a contradiction, which implies $T_4^*\geq \overline{T}$. Therefore,
	\begin{align*}
	\|\rvu(\overline{T}/\widetilde{\delta}^{L-2}) - \widetilde{\delta}\|\rvz_2(\overline{T})\|_2\rvw_*\|_2  &= \|\widetilde{\delta}\rvs_u(\overline{T}) - \widetilde{\delta}\|\rvz_2(\overline{T})\|_2\rvw_*\|_2\\
	&\leq \widetilde{\delta}\|\rvs_u(\overline{T}) - \rvz_2(\overline{T})\|_2+ \widetilde{\delta}\|\rvz_2(\overline{T}) - \|\rvz_2(\overline{T})\|_2\rvw_*\|_2\\
	&\leq C_1\widetilde{\delta}^{L+1}\|\rvz_2(\overline{T})\|_2^{L+1} + \widetilde{\delta}\|\rvz_2(\overline{T})\|_2\|\rvz_2(\overline{T})/\|\rvz_2(\overline{T}) \|_2 - \rvw_*\|_2\\
	&\leq  \frac{C_1\widetilde{\delta}^{L+1}\|\rvg(\overline{T})\|_2^{L+1}}{T^{\frac{L+1}{L-2}}\widetilde{\delta}^\frac{\eta(L+1)}{L-2}} + \frac{\sqrt{2\gamma}\|\rvg(\overline{T})\|_2\widetilde{\delta}\widetilde{\delta}^{\frac{L\Delta}{2L^2\mathcal{N}(\rvw_*)+\Delta}}}{T^{\frac{1}{L-2}}\widetilde{\delta}^\frac{\eta}{L-2}}\\
	& \leq  C_2\widetilde{\delta}^{\frac{(L+1)\Delta}{2L^2\mathcal{N}(\rvw_*)+\Delta}} +  C_3\widetilde{\delta}^{\frac{(L+1)\Delta}{2L^2\mathcal{N}(\rvw_*)+\Delta}},
	\end{align*}
	for some constants $C_2,C_3$ and for all sufficiently small $\delta>0$, where the third inequality follows from \cref{z2_alg_nm_bd} and \cref{nm_diff_bd}.  Also, note that
	\begin{equation*}
	\widetilde{\delta}\|\rvz_2(\overline{T})\|_2 =  \widetilde{\delta}{\|\rvg(\overline{T})\|_2}/{(T^{\frac{1}{L-2}}\widetilde{\delta}^\frac{\eta}{L-2})}= \|\rvg(\overline{T})\|_2\widetilde{\delta}^{\frac{\Delta}{2L^2\mathcal{N}(\rvw_*)+\Delta}}/T^{\frac{1}{L-2}}.
	\end{equation*}
	Thus, using the above two equations and the definition of $\rvu(t)$, we get
	\begin{equation*}
	\left\|\bm{\psi}({T}_\gamma/\delta^{L-2}+\overline{T}/\widetilde{\delta}^{L-2},\delta\rvw_0) - \|\rvg(\overline{T})\|_2\widetilde{\delta}^{\frac{\Delta}{2L^2\mathcal{N}(\rvw_*)+\Delta}}/T^{\frac{1}{L-2}}\right\|_2\leq a_1\widetilde{\delta}^{\frac{(L+1)\Delta}{2L^2\mathcal{N}(\rvw_*)+\Delta}},
	\end{equation*}
	where $a_1$ is a sufficiently large constant. Note that, since $\overline{T}$ depends on $\delta$, $\|\rvg(\overline{T})\|_2$ may also depend on $\delta$, but $\|\rvg(\overline{T})\|_2^{L-2}\in [1/(L(L-2)\mathcal{N}(\rvw_*)), T]$.
	
	We next prove $h(t)$ is an increasing function by showing  $h'(t)\geq 0$ for $t\in [0,T)$. Note that
	\begin{equation*}
	P'(t) = -\mu_2 \|\rvg(t)\|_2^{L-2}P(t)/(T-t).
	\end{equation*}
	From the quotient rule of differentiation, the denominator of $h'(t)$ is $P^2(t)/(T-t)^{\frac{2L+2}{L-2}}$, and the  numerator can be written as
	\begin{align*}
	&\frac{P(t)}{(T-t)^{\frac{2L-1}{L-2}}}\frac{P(t)}{(T-t)^{\frac{L+1}{L-2}}} - \left(\int_0^t\frac{P(s)}{(T-s)^{\frac{2L-1}{L-2}}} ds\right)\left(\frac{-\mu_2 \|\rvg(t)\|_2^{L-2}P(t)}{(T-t)^{\frac{2L-1}{L-2}}} + \frac{L+1}{L-2}\frac{P(t)}{(T-t)^{\frac{2L-1}{L-2}}}\right)\\
	&= 	\frac{P^2(t)}{(T-t)^{\frac{3L}{L-2}}}\left(1 - \left(\frac{L+1}{L-2} - \mu_2\|\rvg(t)\|_2^{L-2}\right)h(t)\right).
	\end{align*}
	Let $r(t) = {\left(\frac{L+1}{L-2} - \mu_2\|\rvg(t)\|_2^{L-2}\right)}$, then $h'(t) = \frac{1-r(t)h(t)}{(T-t)},$ and $r(t)$ is an increasing function since $\|\rvg(t)\|_2$ is a decreasing function. Next, note that
	\begin{equation*}
	1-r(t)h(t) = \frac{{P(t)}/{(T-t)^{\frac{L+1}{L-2}}}- r(t)\int_0^t{P(s)}/{(T-s)^{\frac{2L-1}{L-2}}} ds}{{P(t)}/{(T-t)^{\frac{L+1}{L-2}}}}.
	\end{equation*}
	Now, the denominator is an increasing function since its derivative is
	\begin{equation*}
	\left(\frac{-\mu_2 \|\rvg(t)\|_2^{L-2}P(t)}{(T-t)^{\frac{2L-1}{L-2}}} + \frac{L+1}{L-2}\frac{P(t)}{(T-t)^{\frac{2L-1}{L-2}}}\right) = \frac{P(t)r(t)}{(T-t)^{\frac{2L-1}{L-2}}} \geq 0,
	\end{equation*}
	where the inequality is true since, for all $t\in (0,T)$, from \cref{alp_choice} and \cref{bd_g0}, we have
	\begin{equation*}
	\frac{L+1}{L-2} - \mu_2\|\rvg(t)\|_2^{L-2} \geq \frac{L+1}{L-2} - \mu_2\|\rvg(0)\|_2^{L-2} \geq \frac{L+1}{L-2} - \frac{\mu_2}{L(L-2)(\mathcal{N}(\rvw_*)-\alpha)} \geq 0.
	\end{equation*}  
	Also, the numerator is a decreasing function since its derivative is
	\begin{equation*}
	\frac{P(t)r(t)}{(T-t)^{\frac{2L-1}{L-2}}} - \frac{P(t)r(t)}{(T-t)^{\frac{2L-1}{L-2}}} - r'(t)\int_0^t\frac{P(s)}{(T-s)^{\frac{2L-1}{L-2}}} ds =  - r'(t)\int_0^t\frac{P(s)}{(T-s)^{\frac{2L-1}{L-2}}} ds  \leq 0,
	\end{equation*}
	where the inequality is true since $r(t)$ is an increasing function. 
	Hence, $1-r(t)h(t)$ is a decreasing function in $(0,T)$. Now, for the sake of contradiction, let $h'(t_1) = -\epsilon < 0$, for some $t_1\in (0,T)$. Then, we may assume $1-r(t_1)h(t_1) = -\tilde{\epsilon} < 0$. Since $1-r(t)h(t)$ is a decreasing function in $(0,T)$, $1-r(t)h(t) \leq -\tilde{\epsilon}$, for all $t\geq t_1$. Hence,
	\begin{equation*}
	h(t) - h(t_1) \leq \int_{t_1}^t \frac{-\tilde{\epsilon}}{(T-s)} ds = -\tilde{\epsilon}\ln\left(\frac{T-t_1}{T-t}\right), \text{ for all } t\in (t_1,T).
	\end{equation*}
	Now, if $t$ is chosen sufficiently close to $T$, then $h(t)$ will become negative, leading to a contradiction. Hence, $h(t)$ is an increasing function in $(0,T)$.
	
	We next compute $\lim_{t\to T} h(t)$. We first show that 
	\begin{equation*}
	\lim_{t\rightarrow T} \int_0^tP(s)/(T-s)^{\frac{2L-1}{L-2}} = \infty \text{ and } \lim_{t\rightarrow T} P(t)/(T-t)^{\frac{L+1}{L-2}}= \infty.
	\end{equation*}
	From \cref{bd_g0}, we know $\|\rvg(t)\|_2^{L-2} \leq \|\rvg(0)\|_2^{L-2}\leq  \frac{1}{L(L-2)(\mathcal{N}(\rvw_*)-\alpha )}$. Hence,
	\begin{equation*}
	\frac{P(t)}{(T-t)^{\frac{L+1}{L-2}}} \geq \frac{e^{-\frac{\mu_2}{L(L-2)(\mathcal{N}(\rvw_*)-\alpha )}\int_0^t \frac{ds}{(T-s)}}}{(T-t)^{\frac{L+1}{L-2}}} = \left(\frac{T-t}{T}\right)^{\frac{\mu_2}{L(L-2)(\mathcal{N}(\rvw_*)-\alpha )}}{(T-t)^{\frac{-L-1}{L-2}}}.
	\end{equation*}
	Combining the above inequality with \cref{alp_choice}, we get $\lim_{t\rightarrow T} P(t)/(T-t)^{\frac{L+1}{L-2}}= \infty.$ Next,
	\begin{align*}
	&\int_0^t {P(s)}/{(T-s)^{\frac{2L-1}{L-2}}} ds \geq \frac{1}{T^{\frac{\mu_2}{L(L-2)(\mathcal{N}(\rvw_*)-\alpha )}}}\int_0^t \left(T-s\right)^{\left(\frac{\mu_2}{L(L-2)(\mathcal{N}(\rvw_*)-\alpha)} - \frac{2L-1}{L-2}\right)} ds\\
	&= \frac{\big( (T-t)^{\left(\frac{\mu_2}{L(L-2)(\mathcal{N}(\rvw_*)-\alpha)} -\frac{L+1}{L-2}\right)} - T^{\left(\frac{\mu_2}{L(L-2)(\mathcal{N}(\rvw_*)-\alpha) } -\frac{L+1}{L-2}\right)} \big)}{T^{\frac{\mu_2}{L(L-2)(\mathcal{N}(\rvw_*)-\alpha )}}\left(\frac{L+1}{L-2} - \frac{\mu_2}{L(L-2)(\mathcal{N}(\rvw_*)-\alpha)}\right)}.
	\end{align*}
	Combining the above inequality with \cref{alp_choice}, we get $\lim_{t\rightarrow T} \int_0^tP(s)/(T-s)^{\frac{2L-1}{L-2}} = \infty$.
	Thus, using L'Hopital rule, we have
	\begin{equation*}
	\lim_{t\rightarrow T} h(t) = \lim_{t\rightarrow T}\frac{P(t)/(T-t)^{\frac{2L-1}{L-2}}}{\frac{P(t)}{(T-t)^{\frac{2L-1}{L-2}}}{\left(\frac{L+1}{L-2} - \mu_2\|\rvg(t)\|_2^{L-2}\right)}} = \frac{1}{\left(\frac{L+1}{L-2} - \mu_2\|\rvg(T)\|_2^{L-2}\right)}.
	\end{equation*}	
\end{proof}
\begin{proof}\textbf{of Theorem \ref{main_thm_Lhm}: } From Lemma \ref{init_align_Lhm}, we have
	\begin{equation*}
	\left\|\bm{\psi}\left(\frac{T_1}{\delta^{L-2}} + \frac{T}{\widetilde{\delta}^{L-2}}\left(1 - \widetilde{\delta}^{\frac{2(L-2)L^2\mathcal{N}(\rvw_*)}{2L^2\mathcal{N}(\rvw_*)+\Delta}}\right),\delta\rvw_0\right) - b_\delta\widetilde{\delta}^{\frac{\Delta}{2L^2\mathcal{N}(\rvw_*) +\Delta}}\rvw_* \right\|_2 \leq a_1\widetilde{\delta}^{\frac{(L+1)\Delta}{2L^2\mathcal{N}(\rvw_*) + \Delta}},
	\end{equation*}
	for some $T_1, a_1>0$. Define $\bar{\delta} = b_\delta\widetilde{\delta}^{\frac{\Delta}{2L^2\mathcal{N}(\rvw_*)+\Delta}}$, then the above equation implies
	\begin{equation*}
	\left\|\bm{\psi}\left(\frac{T_1}{\delta^{L-2}} + \frac{T}{\widetilde{\delta}^{L-2}}\left(1 - \widetilde{\delta}^{\frac{2(L-2)L^2\mathcal{N}(\rvw_*)}{2L^2\mathcal{N}(\rvw_*)+\Delta}}\right),\delta\rvw_0\right)  - \bar{\delta}\rvw_* \right\|_2 \leq a_2\bar{\delta}^{L+1},
	\end{equation*}
	where $a_2\geq a_1/b_\delta^{L+1}$ is a large enough constant. Define $T_\delta \coloneqq  \frac{T_1}{\delta^{L-2}} + \frac{T\big(1 - \widetilde{\delta}^{\frac{2(L-2)L^2\mathcal{N}(\rvw_*)}{2L^2\mathcal{N}(\rvw_*)+\Delta}}\big)}{\widetilde{\delta}^{L-2}},$ 
	then, using Lemma \ref{init_dl3_Lhm}, for any fixed $\widetilde{T}\in (-\infty,\infty)$, there exists a constant $\widetilde{C}_1>0$ such that for all sufficiently small $\delta>0$, we have 
	\begin{equation*}
	\left\|\bm{\psi}\left(t+ \frac{1/\overline{\delta}^{L-2}}{L(L-2)\mathcal{N}(\rvw_*)},\bm{\psi}\left(T_\delta,\delta\rvw_0\right)\right) - \bm{\psi}\left(t+ \frac{1/\overline{\delta}^{L-2}}{L(L-2)\mathcal{N}(\rvw_*)},\overline{\delta}{\rvw}_*\right) \right\|_2 \leq \widetilde{C}_1\overline{\delta}, 
	\end{equation*}
	for all $t\in[-\widetilde{T},\widetilde{T}]$, and from Lemma \ref{lim_exists_Lhm}, there exists a $\widetilde{C}_2>0$ such that
	\begin{equation*}
	\left\|\rvp(t)- \bm{\psi}\left(t+\frac{{1}/{\overline{\delta}^{L-2}}}{L(L-2)\mathcal{N}(\rvw_*)},\overline{\delta}\rvw_*\right)\right\|_2 \leq \widetilde{C}_2\overline{\delta}, \text{ for all } t\in [-\widetilde{T},\widetilde{T}].
	\end{equation*}
	Since
	\begin{align*}
	\frac{1/\overline{\delta}^{L-2}}{L(L-2)\mathcal{N}(\rvw_*)} + T_\delta &= \frac{T_1}{\delta^{L-2}} + \frac{1/(b_\delta^{L-2}\widetilde{\delta}^{\frac{(L-2)\Delta}{2L^2\mathcal{N}(\rvw_*)+\Delta}})}{L(L-2)\mathcal{N}(\rvw_*)} + \frac{T}{\widetilde{\delta}^{L-2}} - \frac{T}{\widetilde{\delta}^{\frac{(L-2)\Delta}{2L^2\mathcal{N}(\rvw_*)+\Delta}}}\\
	&= \frac{T_1}{\delta^{L-2}} + \frac{T}{\widetilde{\delta}^{L-2}} + \left(\frac{1}{b_\delta^{L-2}L(L-2)\mathcal{N}(\rvw_*)} - T\right){\widetilde{\delta}^{\frac{-(L-2)\Delta}{2L^2\mathcal{N}(\rvw_*)+\Delta}}},
	\end{align*}
	it follows that
	\begin{align*}
	&\bm{\psi}\left(t+ \frac{1/\overline{\delta}^{L-2}}{L(L-2)\mathcal{N}(\rvw_*)},\bm{\psi}\left(T_\delta,\delta\rvw_0\right)\right)\\
	&= \bm{\psi}\left(t+ \frac{T_1}{\delta^{L-2}} + \frac{T}{\widetilde{\delta}^{L-2}} + \left(\frac{1}{b_\delta^{L-2}L(L-2)\mathcal{N}(\rvw_*)} - T\right){\widetilde{\delta}^{\frac{-(L-2)\Delta}{2L^2\mathcal{N}(\rvw_*)+\Delta}}},\delta\rvw_0\right).
	\end{align*}
	Hence, for all $t\in [-\widetilde{T},\widetilde{T}]$ and for all sufficiently small $\delta$, we have 
	\begin{align*}
	&\left\|\bm{\psi}\left(t+ \frac{T_1}{\delta^{L-2}} + \frac{T}{\widetilde{\delta}^{L-2}} + \left(\frac{1}{b_\delta^{L-2}L(L-2)\mathcal{N}(\rvw_*)} - T\right){\widetilde{\delta}^{\frac{-(L-2)\Delta}{2L^2\mathcal{N}(\rvw_*)+\Delta}}},\delta\rvw_0\right) - \rvp(t)\right\|_2 \\
	&\leq  \left\|\bm{\psi}\left(t+ \frac{1/\overline{\delta}^{L-2}}{L(L-2)\mathcal{N}(\rvw_*)},\bm{\psi}\left(T_\delta,\delta\rvw_0\right)\right) - \bm{\psi}\left(t+ \frac{{1}/{\overline{\delta}^{L-2}}}{L(L-2)\mathcal{N}(\rvw_*)},\bar{\delta}\rvw_*\right) \right\|_2 + \\
	& \left\|\bm{\psi}\left(t+ \frac{{1}/{\overline{\delta}^{L-2}}}{L(L-2)\mathcal{N}(\rvw_*)},\bar{\delta}\rvw_*\right)  - \rvp(t)\right\|_2 \leq\widetilde{C}_1\bar{\delta}+\widetilde{C}_2\bar{\delta} \leq \widetilde{C}\delta^{\frac{\Delta}{\Delta + 2L^2\mathcal{N}(\rvw_*)}},
	\end{align*}
	where $\widetilde{C}$ is a positive constant. The last inequality is true since $b_\delta\leq1$, and $\widetilde{\delta}\leq A_2\delta+A_1\delta^{L+1}\leq 2A_2\delta$, for all sufficiently small $\delta$. Thus, the proof is complete.
\end{proof}
\begin{proof}\textbf{of Corollary \ref{cor_sq_loss_Lhm}: } Since  $\rvp(t)$ is bounded for all $t\geq 0$, there exists a constant $B>0$ such that $\|\rvp(t)\|_2\leq B$, for all $t\geq 0$. Moreover, since $\mathcal{L}(\cdot)$ has locally Lipschitz gradient, there exists a constant $\widetilde{A}>0$ such that, if $\|\rvw_1\|_2, \|\rvw_2\|_2\leq 2B$, then
	\begin{equation}
	\|\nabla\mathcal{L}(\rvw_1)-\nabla\mathcal{L}(\rvw_2)\|_2\leq \widetilde{A}\|\rvw_1-\rvw_2\|_2.
	\label{lips_sq_Lhm}
	\end{equation} 
	Since $\rvp^*= \lim_{t\to \infty} \rvp(t)$ and $\nabla\mathcal{L}(\rvp^*) = \mathbf{0}$, for any $\epsilon\in (0,B)$, we can choose a $T_\epsilon$ such that
	\begin{equation}
	\|\rvp(T_\epsilon)-\rvp_*\|_2\leq \epsilon/2 \text{ and } \|\nabla\mathcal{L}(\rvp(T_\epsilon))\|_2 \leq \epsilon/2.
	\label{err_sq_Lhm}
	\end{equation}
	Since $T_\epsilon$ does not depend on $\delta$, from Theorem \ref{main_thm_Lhm}, for all sufficiently small $\delta>0$,
	\begin{align}
	&\left\|\bm{\psi}\left(t+\frac{T_1}{\delta^{L-2}} + \left(\frac{1/b_\delta^{L-2}}{L(L-2)\mathcal{N}(\rvw_*)} - T\right){\widetilde{\delta}^{\frac{-(L-2)\Delta}{2L^2\mathcal{N}(\rvw_*)+\Delta}}}+\frac{T}{\widetilde{\delta}^{L-2}},\delta\rvw_0\right) - \rvp(t)\right\|_2\nonumber \\
	&\leq \widetilde{C}\delta^{\frac{\Delta}{\Delta+2L^2\mathcal{N}(\rvw_*)}} \leq \epsilon/2, \text{ for all } t\in [-T_\epsilon,T_\epsilon].
	\label{err_psi_Lhm}
	\end{align} 
	Putting $t=T_\epsilon$ in the above equation, and using  \cref{err_sq_Lhm}, we get
	\begin{equation*}
	\left\|\bm{\psi}\left(T_\delta,\delta\rvw_0\right) - \rvp^*\right\|_2 \leq \epsilon,
	\end{equation*}
	where $T_\delta\coloneqq T_\epsilon+\frac{T_1}{\delta^{L-2}} + \left(\frac{1/b_\delta^{L-2}}{L(L-2)\mathcal{N}(\rvw_*)} - T\right){\widetilde{\delta}^{\frac{-(L-2)\Delta}{2L^2\mathcal{N}(\rvw_*)+\Delta}}}+\frac{T}{\widetilde{\delta}^{L-2}}$. Next, since $\|\bm{\psi}\left(T_\delta,\delta\rvw_0\right) \|_2\leq B+\epsilon/2\leq 2B$, using \cref{lips_sq_Lhm}, \cref{err_sq_Lhm} and \cref{err_psi_Lhm}, we get
	\begin{equation*}
	\left\|\nabla \mathcal{L}\left(\bm{\psi}\left(T_\delta,\delta\rvw_0\right)\right)\right\|_2 \leq \epsilon/2+ \widetilde{A}\widetilde{C}\delta^{\frac{\Delta}{\Delta+2L^2\mathcal{N}(\rvw_*)}}\leq \epsilon,
	\end{equation*}
	where the final inequality is true for all sufficiently small $\delta>0$. This completes the proof.
\end{proof}
\begin{proof}\textbf{of Corollary \ref{cor_log_loss_Lhm}: } Since $\lim_{t\to \infty} \rvp(t)/\|\rvp(t)\|_2 = \rvp^*$, $\lim_{t\to \infty} \|\rvp(t)\|_2 = \infty$, and $\lim_{t\to\infty} \nabla \mathcal{L}(\rvp(t)) = \mathbf{0}$, we have that for any $\epsilon\in (0,1)$, we can choose a $T_\epsilon$ such that
	\begin{equation}
	\rvp(T_\epsilon)^\top\rvp^*/\|\rvp(T_\epsilon)\|_2 \geq 1-\epsilon/2, \|\rvp(T_\epsilon)\|_2\geq 1/\epsilon,  \text{ and } \|\nabla\mathcal{L}(\rvp(T_\epsilon))\|_2 \leq \epsilon/2.
	\label{err_log_Lhm}
	\end{equation}
	Let $B_\epsilon \coloneqq  \max_{t\in [0,T_\epsilon]}\|\rvp(t)\|_2$. Then, since $\mathcal{L}(\cdot)$ has locally Lipschitz gradient, there exists a constant $\widetilde{A}_\epsilon>0$ such that, if $\|\rvw_1\|_2, \|\rvw_2\|_2\leq B_\epsilon+\epsilon$, then
	\begin{equation}
	\|\nabla\mathcal{L}(\rvw_1)-\nabla\mathcal{L}(\rvw_2)\|_2\leq \widetilde{A}_\epsilon\|\rvw_1-\rvw_2\|_2.
	\label{lips_log_Lhm}
	\end{equation} 
	Here, $\widetilde{A}_\epsilon$ depends on $\epsilon$. Since $T_\epsilon$ does not depend on $\delta$, from Theorem \ref{main_thm_Lhm}, for all sufficiently small $\delta>0$ and for all $t\in [-T_\epsilon,T_\epsilon]$, 
	\begin{align}
	&\left\|\bm{\psi}\left(t+\frac{T_1}{\delta^{L-2}} + \left(\frac{1/b_\delta^{L-2}}{L(L-2)\mathcal{N}(\rvw_*)} - T\right)\frac{1}{\widetilde{\delta}^{\frac{(L-2)\Delta}{2L^2\mathcal{N}(\rvw_*)+\Delta}}}+\frac{T}{\widetilde{\delta}^{L-2}},\delta\rvw_0\right) - \rvp(t)\right\|_2 \nonumber\\
	&\leq \widetilde{C}\delta^{\frac{\Delta}{\Delta+2L^2\mathcal{N}(\rvw_*)}} \leq \epsilon/2.
	\label{err_bd_log_Lhm}
	\end{align} 
	Putting $t=T_\epsilon$ in the above equation, and using  \cref{err_log_Lhm}, we get
	\begin{equation*}
	\left\|\bm{\psi}\left(T_\delta,\delta\rvw_0\right)\right\|_2 \geq \|\rvp(T_\epsilon)\|_2- \widetilde{C}\delta^{\frac{\Delta}{\Delta+2L^2\mathcal{N}(\rvw_*)}} \geq \frac{1}{\epsilon} - \widetilde{C}\delta^{\frac{\Delta}{\Delta+2L^2\mathcal{N}(\rvw_*)}}\geq \frac{1}{2\epsilon} ,
	\end{equation*}
	where $T_\delta\coloneqq T_\epsilon+\frac{T_1}{\delta^{L-2}} + \left(\frac{1/b_\delta^{L-2}}{L(L-2)\mathcal{N}(\rvw_*)} - T\right)\frac{1}{\widetilde{\delta}^{\frac{(L-2)\Delta}{2L^2\mathcal{N}(\rvw_*)+\Delta}}}+\frac{T}{\widetilde{\delta}^{L-2}}$, and the second inequality is true for all sufficiently small $\delta>0$. We also have
	\begin{equation*}
	\frac{\bm{\psi}\left(T_\delta,\delta\rvw_0\right)^\top\rvp^*}{\left\|\bm{\psi}\left(T_\delta,\delta\rvw_0\right)\right\|_2 } \geq \frac{\rvp(T_\epsilon)^\top\rvp_* - \widetilde{C}\delta^{\frac{\Delta}{\Delta+2L^2\mathcal{N}(\rvw_*)}}}{\left\|\bm{\psi}\left(T_\delta,\delta\rvw_0\right)\right\|_2} \geq \frac{(1-\epsilon/2)\|\rvp(T_\epsilon)\|_2 - \widetilde{C}\delta^{\frac{\Delta}{\Delta+2L^2\mathcal{N}(\rvw_*)}}}{\left\|\rvp(T_\epsilon)\right\|_2 + \widetilde{C}\delta^{\frac{\Delta}{\Delta+2L^2\mathcal{N}(\rvw_*)}}}\geq 1-\epsilon,
	\end{equation*}
	where the first inequality uses \cref{err_bd_log_Lhm}. The second inequality uses \cref{err_bd_log_Lhm} and \cref{err_log_Lhm}. The final inequality is true for all sufficiently small $\delta>0$. Next, since $\|\bm{\psi}\left(T_\delta,\delta\rvw_0\right) \|_2\leq B_\epsilon+\epsilon/2$, using \cref{lips_log_Lhm} and \cref{err_log_Lhm}, we get
	\begin{equation*}
	\left\|\nabla \mathcal{L}\left(\bm{\psi}\left(T_\delta,\delta\rvw_0\right)\right)\right\|_2 \leq \epsilon/2+ \widetilde{A}_\epsilon\widetilde{C}\delta^{\frac{\Delta}{\Delta+2L^2\mathcal{N}(\rvw_*)}} \leq \epsilon,
	\end{equation*}
	where the final inequality is true for all sufficiently small $\delta>0$. 
\end{proof}
\section{Proof Omitted from \Cref{sec_implications}}\label{appendix_imp}
\begin{proof}\textbf{of Lemma \ref{zp_subs_ff}: } We first compute the gradient of $\mathcal{H}(\rvx;\rmW_1,\cdots,\rmW_L) $ with respect to $\rmW_{l+1}[:,j] \text{ and } \rmW_l[j,:]$. Let
	\begin{align*}
	&\phi_0(\rvx) = \rvx \text{ and } \phi_l(\rvx) =  \sigma(\rmW_{l}\phi_{l-1}(\rvx)), \text{ for all } l\geq 1, \text{ and }\\
	&\psi_L(\rvz) = \rmW_L\rvz, \text{ and } \psi_l(\rvz) = \psi_{l+1}(\sigma(\rmW_{l}\rvz)), \text{ for all } l\leq L-1.
	\end{align*}
	Then
	\begin{align*}
	\mathcal{H}(\rvx;\rmW_1,\cdots,\rmW_L) &= \psi_{l+1}\left(\rmW_{l+1}\sigma(\rmW_l\phi_{l-1}(\rvx))\right)\\
	& = \psi_{l+1}\left(\sum_{p=1}^{k_l}\rmW_{l+1}[:,p]\sigma(\rmW_l[p,:]^\top\phi_{l-1}(\rvx))\right),
	\end{align*}
	which implies
	\begin{equation*}
	\frac{d \mathcal{H}(\rvx;\rmW_1,\cdots,\rmW_L)}{d \rmW_{l+1}[:,j]} = \nabla\psi_{l+1}\left(\sum_{p=1}^{k_l}\rmW_{l+1}[:,p]\sigma(\rmW_l[p,:]^\top\phi_{l-1}(\rvx))\right)\sigma(\rmW_l[j,:]^\top\phi_{l-1}(\rvx)),
	\end{equation*}
	and
	\begin{align*}
	&\frac{d \mathcal{H}(\rvx;\rmW_1,\cdots,\rmW_L)}{d \rmW_{l}[j,: ]} \\
	&= \rmW_{l+1}[:,j]^\top\nabla\psi_{l+1}\left(\sum_{p=1}^{k_l}\rmW_{l+1}[:,p]\sigma(\rmW_l[p,:]^\top\phi_{l-1}(\rvx))\right)\sigma'(\rmW_l[j,:]^\top\phi_{l-1}(\rvx))\phi_{l-1}(\rvx).
	\end{align*}
	Now, note that, if the weights are bounded, then there exists a $K>0$ such that
	\begin{equation}
	\left\|\frac{d \mathcal{H}(\rvx;\rmW_1,\cdots,\rmW_L)}{d \rmW_{l+1}[:,j]}\right\|_2 \leq K\|\rmW_l[j,:]\|_2, \left\|\frac{d \mathcal{H}(\rvx;\rmW_1,\cdots,\rmW_L)}{d\rmW_l[j,:]}\right\|_2 \leq K\| \rmW_{l+1}[:,j]\|_2.
	\label{bd_gd}
	\end{equation}
	The first inequality follows from Cauchy-Schwartz inequality and since $|\sigma(\rmW_l[j,:]^\top\phi_{l-1}(\rvx))| \leq K_1|\rmW_l[j,:]^\top\phi_{l-1}(\rvx)|$, for some $K_1>0$, which follows from $\sigma(\cdot)$ being locally Lipshitz and $\sigma(0) = 0$. The second inequality follows from Cauchy-Schwartz inequality.
	
	Now, let $(\rmW_1(t),\cdots,\rmW_L(t)) $ be the solution of 
	\begin{equation*}
	\dot{\rmW}_i = -\nabla_{\rmW_i} \mathcal{L}(\rmW_1,\cdots,\rmW_L), \text{ for all } i \in [L].
	\end{equation*}
	If $\rmW_{l+1}[:,j] , \rmW_l[j,:] \in \rvw_z$, for some $l\in [L-1]$ and $j\in [k_l]$, then $\|\rmW_{l+1}[:,j](0)\|_2= 0 = \|\rmW_l[j,:](0)\|_2$. Also, note that
	\begin{align*}
	&\dot{\rmW}_{l+1}[:,j] = -\sum_{i=1}^n\ell'\left(\mathcal{H}(\rvx_i;\rmW_1,\cdots,\rmW_L),y_i\right)\frac{d \mathcal{H}(\rvx_i;\rmW_1,\cdots,\rmW_L)}{d \rmW_{l+1}[:,j]}, \mbox{ and}\\
	&\dot{\rmW}_{l}[j,:] = -\sum_{i=1}^n\ell'\left(\mathcal{H}(\rvx_i;\rmW_1,\cdots,\rmW_L),y_i\right)\frac{d \mathcal{H}(\rvx_i;\rmW_1,\cdots,\rmW_L)}{d \rmW_{l}[j,:]}.
	\end{align*}
	Let $T\in (0,\infty)$, then for all $t\in [-T,T]$, we can assume $(\rmW_1(t),\cdots,\rmW_L(t)) $  are bounded. Since the output of the neural network and the loss function are bounded for bounded input, from \cref{bd_gd}, there exists a $B>0$ such that, for all $t\in [0,T]$, we have
	\begin{align*}
	&\frac{1}{2}\frac{d\|\rmW_{l+1}[:,j]\|_2^2}{dt} \leq \|\rmW_{l+1}[:,j]\|_2\|\dot{\rmW}_{l+1}[:,j]\|_2\leq B\|\rmW_{l+1}[:,j]\|_2\|\rmW_{l}[j,:]\|_2, \mbox{ and} \\
	&\frac{1}{2}\frac{d\|\rmW_{l}[j,:]\|_2^2}{dt} \leq \|\rmW_{l}[j,:]\|_2\|\dot{\rmW}_{l}[j,:]\|_2\leq B\|\rmW_{l}[j,:]\|_2\|\rmW_{l+1}[:,j]\|_2.
	\end{align*}
	Adding the above two equations, and since $2xy\leq x^2+y^2$, we get
	\begin{equation}
	\frac{d\left(\|\rmW_{l+1}[:,j]\|_2^2+\|\rmW_{l}[j,:]\|_2^2\right)}{dt} \leq 2B\left(\|\rmW_{l+1}[:,j]\|_2^2+\|\rmW_{l}[j,:]\|_2^2\right).
	\label{bd_nm_w}
	\end{equation}
	Integrating the above equation from $0$ to $t\in [0,T]$, we get
	\begin{equation*}
	\|\rmW_{l+1}[:,j](t)\|_2^2+\|\rmW_{l}[j,:](t)\|_2^2 \leq (\|\rmW_{l+1}[:,j](0)\|_2^2+\|\rmW_{l}[j,:](0)\|_2^2 )e^{2tB}.
	\end{equation*}
	Since $\|\rmW_{l+1}[:,j](0)\|_2 = 0 = \|\rmW_l[j,:](0)\|_2$, the above equation implies $\|\rmW_{l+1}[:,j](t)\|_2 = 0 = \|\rmW_l[j,:](t)\|_2$, for all $t\in [0,T]$. Similarly, there exists a $B>0$ such that, for all $t\in [-T,0]$, we have
	\begin{align*}
	&\frac{1}{2}\frac{d\|\rmW_{l+1}[:,j]\|_2^2}{dt} \geq -\|\rmW_{l+1}[:,j]\|_2\|\dot{\rmW}_{l+1}[:,j]\|_2\geq -B\|\rmW_{l+1}[:,j]\|_2\|\rmW_{l}[j,:]\|_2, \mbox{ and}\\
	&\frac{1}{2}\frac{d\|\rmW_{l}[j,:]\|_2^2}{dt} \geq -\|\rmW_{l}[j,:]\|_2\|\dot{\rmW}_{l}[j,:]\|_2\geq -B\|\rmW_{l}[j,:]\|_2\|\rmW_{l+1}[:,j]\|_2.
	\end{align*}
	Adding the above two equation gives us
	\begin{equation}
	\frac{d\left(\|\rmW_{l+1}[:,j]\|_2^2+\|\rmW_{l}[j,:]\|_2^2\right)}{dt} \geq  -2B\left(\|\rmW_{l+1}[:,j]\|_2^2+\|\rmW_{l}[j,:]\|_2^2\right).
	\end{equation}
	Integrating the above equation from $t\in [-T,0]$ to $0$, we get
	\begin{equation*}
	\|\rmW_{l+1}[:,j](t)\|_2^2+\|\rmW_{l}[j,:](t)\|_2^2 \leq (\|\rmW_{l+1}[:,j](0)\|_2^2+\|\rmW_{l}[j,:](0)\|_2^2 )e^{-2tB},
	\end{equation*}
	which implies $\|\rmW_{l+1}[:,j](t)\|_2 = 0 = \|\rmW_l[j,:](t)\|_2$, for all $t\in [-T,0]$.
\end{proof}

\begin{proof}\textbf{of Theorem \ref{main_thm_sp}: }Note that $\rvw_z$ is a zero-preserving subset. If $\rmW_l[j,:]\in \rvw_z$, for some $l\in [L-1]$ and $j\in [k_l]$, then $\|\overline{\rmW}_l[j,:]\|_2 = 0$, and from Lemma \ref{bal_r1_kkt}, $\|\overline{\rmW}_{l+1}[:,j]\|_2 = 0$, which implies  ${\rmW}_{l+1}[:,j]\in \rvw_z$. Similarly, if ${\rmW}_{l+1}[:,j]\in \rvw_z$, then $\rmW_l[j,:]\in \rvw_z$. Hence, $\rvw_z$ is a zero-preserving subset. 
	
	Throughout this proof, let $\overline{\rvw}$ denote the vector containing the entries of $(\overline{\rmW}_1,\cdots, \overline{\rmW}_{L})$, and $\overline{\rvw}_z$ be the vector containing the entries of $(\overline{\rmW}_1,\cdots, \overline{\rmW}_{L})$ that belong to $\rvw_z$. Now, consider the case when $\mathcal{H}$ is two-homogeneous. From Lemma \ref{init_align_2hm}, we know
	\begin{equation}
	\left\|\bm{\psi}\left(T_1 + \frac{4\ln({1}/{\widetilde{\delta}})}{\Delta+8\overline{\mathcal{N}}},\delta\rmW_{1:L}^0\right) - b_\delta\widetilde{\delta}^{\frac{\Delta}{\Delta + 8\overline{\mathcal{N}}}}\overline{\rvw}\right\|_2 \leq a_1 \widetilde{\delta}^{\frac{3\Delta}{\Delta + 8\overline{\mathcal{N}}}},
	\label{diff_init_align_2hm}
	\end{equation}
	where $T_1, a_1, b_\delta, \widetilde{\delta}$ are as defined in Lemma \ref{init_align_2hm}. Since $\|\overline{\rvw}\|_2 =1$, $b_\delta\geq \kappa_1>0$, and $\widetilde{\delta}\geq A_2\delta/2$ for all sufficiently small $\delta$,  there exists a $b_1$ such that
	\begin{equation}
	\left\|\bm{\psi}\left(T_1 + \frac{4\ln({1}/{\widetilde{\delta}})}{\Delta+8\overline{\mathcal{N}}},\delta\rmW_{1:L}^0\right) \right\|_2 \geq b_1 \widetilde{\delta}^{\frac{\Delta}{\Delta + 8\overline{\mathcal{N}}}}.
	\label{bd_psi_2hm}
	\end{equation} 
	Now, since $\|\overline{\rvw}_z\|_2 =0$, from \cref{diff_init_align_2hm}, we have
	\begin{equation}
	\left\|\bm{\psi}_{\rvw_z}\left(T_1 + \frac{4\ln({1}/{\widetilde{\delta}})}{\Delta+8\overline{\mathcal{N}}},\delta\rmW_{1:L}^0\right)\right\|_2 \leq a_1\widetilde{\delta}^{\frac{3\Delta}{\Delta + 8\overline{\mathcal{N}}}}.
	\label{bd_psi_wz_2hm}
	\end{equation}
	Dividing \cref{bd_psi_wz_2hm} by \cref{bd_psi_2hm} and using $\widetilde{\delta}\leq 2A_2\delta$, for all sufficiently small $\delta,$ we get \cref{init_algn_2hm_ff}. Next, from the definition of  zero-preserving subset, we know 
	\begin{equation*}
	\|\bm{\psi}_{\rvw_z}(t, \delta\overline{\rmW}_{1:L})\|_2 = 0, \text{ for all } t \in (-\infty,\infty).
	\end{equation*}
	Recall that
	\begin{equation*}
	\rvp(t) = \lim_{\delta\to 0} \bm{\psi}\left(t+\frac{\ln(1/\delta)}{2\overline{\mathcal{N}}},\delta\overline{\rmW}_{1:L}\right), \text{ and let } \rvp_{\rvw_z}(t) = \lim_{\delta\to 0} \bm{\psi}_{\rvw_z}\left(t+\frac{\ln(1/\delta)}{2\overline{\mathcal{N}}},\delta\overline{\rmW}_{1:L}\right).
	\end{equation*}
	From Theorem \ref{main_thm_2hm}, for all sufficiently small $\delta$ and for all $t\in [-\widetilde{T},\widetilde{T}]$, we have
	\begin{equation}
	\left\|\bm{\psi}\left(t+T_1 +\frac{\ln(1/b_\delta)}{2\overline{\mathcal{N}}}+ \frac{\ln({1}/{\widetilde{\delta}})}{2\overline{\mathcal{N}}},\delta\rmW_{1:L}^0\right) - \rvp(t) \right\|_2 \leq \widetilde{C}\delta^\frac{\Delta}{\Delta+8\overline{\mathcal{N}} }.
	\label{2hm_psi_bd}
	\end{equation} 
	Since, for any $t\in (-\infty,\infty)$ and $\delta>0$ we have $\|\bm{\psi}_{\rvw_z}(t + \ln(1/\delta)/2\overline{\mathcal{N}},\delta\overline{\rmW}_{1:L})\|_2 = 0,$ it follows that $\|\rvp_{\rvw_z}(t) \|_2 = 0$, for all $t\in (-\infty,\infty)$. Combining this fact with \cref{2hm_psi_bd} gives us \cref{err_bd_2hm_ff}. 
	
	The proof for $L$-homogeneous networks is similar, as shown next. From Lemma \ref{init_align_Lhm}, 
	\begin{equation}
	\left\|\bm{\psi}\left(T_{\widetilde{\delta}}^1,\delta\rmW_{1:L}^0\right) - b_\delta\widetilde{\delta}^{\frac{\Delta}{2L^2\overline{\mathcal{N}} +\Delta}}\overline{\rvw} \right\|_2 \leq a_1\widetilde{\delta}^{\frac{(L+1)\Delta}{2L^2\overline{\mathcal{N}} + \Delta}}, 
	\label{diff_init_align_Lhm}
	\end{equation}
	where $T_{\widetilde{\delta}}^1 =  \frac{T_1}{\delta^{L-2}} + \frac{T\big(1 - \widetilde{\delta}^{\frac{2(L-2)L^2\overline{\mathcal{N}}}{2L^2\overline{\mathcal{N}}+\Delta}}\big)}{\widetilde{\delta}^{L-2}},$ and $T_1,T,a_1,b_\delta,\widetilde{\delta}$ are as defined in Lemma \ref{init_align_Lhm}. Since $\|\overline{\rvw}\|_2 =1$, $b_\delta^{L-2}\geq 1/(TL(L-2)\overline{\mathcal{N}})>0$, and $\widetilde{\delta}\geq A_2\delta/2$, for all sufficiently small $\delta>0,$ there exists a $b_1$ such that
	\begin{equation}
	\left\|\bm{\psi}\left(T_{\widetilde{\delta}}^1,\delta\rmW_{1:L}^0\right) \right\|_2 \geq b_1 \widetilde{\delta}^{\frac{\Delta}{\Delta + 2L^2\overline{\mathcal{N}}}}.
	\label{bd_psi_Lhm}
	\end{equation} 
	Now, since $\|\overline{\rvw}_z\|_2 =0$, from \cref{diff_init_align_Lhm}, we have
	\begin{equation}
	\left\|\bm{\psi}_{\rvw_z}\left(T_{\widetilde{\delta}}^1,\delta\rmW_{1:L}^0\right)\right\|_2 \leq a_1\widetilde{\delta}^{\frac{(L+1)\Delta}{\Delta + 2L^2\overline{\mathcal{N}}}}.
	\label{bd_psi_wz_Lhm}
	\end{equation}
	Dividing \cref{bd_psi_wz_Lhm} by \cref{bd_psi_Lhm} and using $\widetilde{\delta}\leq 2A_2\delta$, for all sufficiently small $\delta,$ gives us \cref{init_algn_Lhm_ff}. Next, from the definition of  zero-preserving subset, we know 
	\begin{equation*}
	\|\bm{\psi}_{\rvw_z}(t,\delta\overline{\rmW}_{1:L})\|_2 = 0, \text{ for all } t \in (-\infty,\infty).
	\end{equation*}
	Recall that
	\begin{equation*}
	\rvp(t) = \lim_{\delta\to 0} \bm{\psi}\left(t+\frac{1/\delta^{L-2}}{L(L-2)\overline{\mathcal{N}}},\delta\overline{\rmW}_{1:L}\right), \end{equation*}
	and let
	\begin{equation*}
	\rvp_{\rvw_z}(t) = \lim_{\delta\to 0} \bm{\psi}_{\rvw_z}\left(t+\frac{1/\delta^{L-2}}{L(L-2)\overline{\mathcal{N}}},\delta\overline{\rmW}_{1:L}\right).
	\end{equation*}
	From Theorem \ref{main_thm_Lhm}, for all sufficiently small $\delta$ and for all $t\in [-\widetilde{T},\widetilde{T}]$, we have
	\begin{equation}
	\left\|\bm{\psi}\left(t+ T_{\widetilde{\delta}}^2,\delta\rmW_{1:L}^0\right) - \rvp(t) \right\|_2\leq \widetilde{C}\delta^{\frac{\Delta}{2L^2\overline{\mathcal{N}} +\Delta}},
	\end{equation} 
	where 
	\begin{equation*}
	T_{\widetilde{\delta}}^2 = \frac{T_1}{\delta^{L-2}} + \left(\frac{1/b_\delta^{L-2}}{L(L-2)\overline{\mathcal{N}}} - T\right){\widetilde{\delta}^{\frac{-(L-2)\Delta}{2L^2\overline{\mathcal{N}}\Delta}}}+\frac{T}{\widetilde{\delta}^{L-2}}.
	\end{equation*}
	Since, for any $t\in (-\infty,\infty)$ and $\delta>0$, $\|\bm{\psi}_{\rvw_z}(t + (1/\delta)^{L-2}/(L(L-2)\overline{\mathcal{N}}), \delta\overline{\rmW}_{1:L})\|_2 = 0,$ we have $\|\rvp_{\rvw_z}(t) \|_2 = 0$, for all $t\in (-\infty,\infty)$. Combining this fact with the above inequality gives us \cref{err_bd_Lhm_ff}. 
\end{proof}

\section{Additional Results}\label{additonal_results}
The following lemma describes an instance where $\bm{\psi}(t,\delta\rvw_*)$ does not escape from the origin, if $\rvw_*$ is zero KKT point of the constrained NCF in \cref{ncf_gn_const}.
\begin{lemma}\label{ex_not_escape}
	Let $\mathcal{H}$ be a single-layer squared ReLU network with single neuron, i.e, $\mathcal{H}(\rvx;\rvw) = \max(0,\rvw^\top\rvx)^2$. Let $\rvw_*\in\sS^{d-1}$ be such that $\rvw_*^\top\rvx_i < 0$, for all $i\in [n]$. Then, $\rvw_*$ is a first-order KKT point of \cref{ncf_gn_const} such that $\mathcal{N}(\rvw_*) = 0$. Also, $\bm{\psi}(t,\delta\rvw_*) = \delta\rvw_*$, for all $t\geq 0$.
\end{lemma}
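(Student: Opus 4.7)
The plan is to exploit the fact that when $\rvw_*^\top\rvx_i<0$ for every $i$, the squared ReLU activation vanishes identically on a whole neighborhood of $\rvw_*$, which kills both the value and all derivatives of $\mathcal{H}$ along those data points. Concretely, I would first observe that $\max(0,\rvw_*^\top\rvx_i)^2 = 0$ for all $i\in[n]$, so $\mathcal{H}(\rmX;\rvw_*) = \mathbf{0}$ and therefore $\mathcal{N}(\rvw_*) = \widetilde{\rvy}^\top \mathcal{H}(\rmX;\rvw_*) = 0$; this takes care of the claim $\mathcal{N}(\rvw_*) = 0$.

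Next I would verify the first-order KKT condition. By continuity of $\rvw\mapsto \rvw^\top\rvx_i$, there exists an open neighborhood $U\subset\sR^d$ of $\rvw_*$ on which $\rvw^\top\rvx_i<0$ for every $i\in[n]$. On $U$, each $\max(0,\rvw^\top\rvx_i)^2$ is identically zero and hence smooth with zero gradient in $\rvw$, so $\nabla\mathcal{H}(\rvx_i;\rvw) = \mathbf{0}$ throughout $U$. In particular $\nabla\mathcal{N}(\rvw_*) = \mathcal{J}(\rmX;\rvw_*)^\top\widetilde{\rvy} = \mathbf{0}$, and since $\|\rvw_*\|_2^2 = 1$, the KKT condition $\nabla\mathcal{N}(\rvw_*) - 2\lambda_*\rvw_* = \mathbf{0}$ is satisfied with $\lambda_* = 0$.

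For the gradient flow statement, I would apply the same neighborhood argument to the point $\delta\rvw_*$: since $\delta\rvw_*^\top\rvx_i = \delta\,\rvw_*^\top\rvx_i < 0$, there is an open neighborhood $V$ of $\delta\rvw_*$ on which $\rvw^\top\rvx_i < 0$ for every $i$. On $V$ we have $\mathcal{H}(\rmX;\rvw) \equiv \mathbf{0}$ and $\mathcal{J}(\rmX;\rvw)\equiv\mathbf{0}$, so
\begin{equation*}
\nabla\mathcal{L}(\rvw) = \mathcal{J}(\rmX;\rvw)^\top \ell'(\mathcal{H}(\rmX;\rvw),\rvy) = \mathbf{0} \qquad \text{for all } \rvw\in V.
\end{equation*}
Thus the constant trajectory $t\mapsto \delta\rvw_*$ solves the gradient flow ODE with initialization $\delta\rvw_*$, and by local Lipschitzness of $\nabla\mathcal{L}$ (guaranteed by \Cref{homo_assumption}) the solution is unique, giving $\bm{\psi}(t,\delta\rvw_*) = \delta\rvw_*$ for all $t\geq 0$.

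There is really no hard step here; the only delicate point is making sure one invokes the local vanishing of $\mathcal{H}$ on an open neighborhood (so that not just the value but also the gradient is zero) before applying the ODE uniqueness theorem, which requires the locally Lipschitz gradient hypothesis from \Cref{homo_assumption}. The argument would break down if $\rvw_*^\top\rvx_i = 0$ for some $i$, since then $\delta\rvw_*$ sits on the non-smooth ridge of the squared ReLU; the strict inequality in the hypothesis is precisely what sidesteps this issue.
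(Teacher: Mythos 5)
Your proposal is correct and follows essentially the same route as the paper's proof: show $\mathcal{H}$ and its gradient vanish at $\rvw_*$ (hence $\mathcal{N}(\rvw_*)=0$ and the first-order KKT condition holds with $\lambda_*=0$), then show $\nabla\mathcal{L}(\delta\rvw_*)=\mathbf{0}$ so the constant trajectory solves the gradient flow ODE. The only difference is that you spell out the open-neighborhood argument and the appeal to ODE uniqueness, which the paper leaves implicit.
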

\begin{proof}
	Since $\rvw_*^\top\rvx_i < 0$, $\mathcal{H}(\rvx_i;\rvw_*) = 0 $ and $\nabla\mathcal{H}(\rvx_i;\rvw_*) = \mathbf{0},$ for all $i\in [n]$, which implies $\mathcal{J}(\rmX;\rvw_*) = \mathbf{0}.$ Therefore, $\mathcal{N}(\rvw_*) = 0$ and $\nabla\mathcal{N}(\rvw_*) = \mathbf{0}$, implying $\rvw_*$ is a first-order KKT point of \cref{ncf_gn_const}.
	
	Next, note that $\nabla\mathcal{L}(\delta\rvw_*) = \mathcal{J}(\rmX;\rvw_*)^\top\ell'(\mathcal{H}(\rmX;\delta\rvw_*),\rvy ) = \mathbf{0},$ for all $\delta>0$. Thus, $\delta\rvw_*$ is a critical point of the training loss and therefore, $\bm{\psi}(t,\delta\rvw_*) = \delta\rvw_*$. 
\end{proof}
The above lemma essentially implies that if the weights are chosen such that the neuron is inactive for all the training data, then those weights are a first-order KKT point of \cref{ncf_gn_const}, and moreover, in such cases the gradient flow does not escape from the origin\\

\begin{proof}\textbf{of Example \ref{ex_2hm}: }
	For square loss, $-\ell'(\mathbf{0},\mathbf{y}) = 2\rvy$, hence, the NCF in this example is $\mathcal{N}(\rvw) = 8w_1^2 + 2w_2^2.$ Now, $\bm{\phi}(t,\rvw_0) = (\sqrt{2}e^{16t},\sqrt{2}e^{4t}),$ which implies
	\begin{equation*}
	\lim_{t\to\infty}\frac{\bm{\phi}(t,\rvw_0) }{\|\bm{\phi}(t,\rvw_0) \|_2} = (1,0) = \rvw_*.
	\end{equation*}
	Let $(w_1(t),w_2(t)) = \bm{\psi}(t,\delta\rvw_0)$, then
	\begin{equation*}
	\dot{w}_1 = -4w_1(w_1^2-4), w_1(0) = \delta/\sqrt{2}, \text{ and }\dot{w}_2 = -4w_2(w_2^2-1), w_2(0) = \delta/\sqrt{2}.
	\end{equation*}
	Therefore, $ w_1(t) = \frac{2\delta}{\sqrt{\delta^2 + (8-\delta^2)e^{-32t}}}$ and $w_2(t) = \frac{\delta}{\sqrt{\delta^2 + (2-\delta^2)e^{-8t}}}$. Next, let $(u_1(t),u_2(t)) = \bm{\psi}(t,\delta\rvw_*)$, then
	\begin{equation*}
	\dot{u}_1 = -4u_1(u_1^2-4), u_1(0) = \delta, \text{ and }\dot{u}_2 = -4u_2(u_2^2-1), u_2(0) = 0.
	\end{equation*}
	Therefore, $u_1(t) = \frac{2\delta}{\sqrt{\delta^2 + (4-\delta^2)e^{-32t}}}, u_2(t) = 0.$ Since $\mathcal{N}(\rvw_*) = 8$, 
	\begin{equation*}
	\rvp(0) = \lim_{\delta\to 0}\left(u_1\left(\frac{\ln\left({1}/{\delta}\right)}{2\mathcal{N}(\rvw_*)}\right), 0\right) = \lim_{\delta\to 0}\left(\frac{2\delta}{\sqrt{\delta^2 + (4-\delta^2)\delta^2}}, 0\right) = (2/\sqrt{5}, 0),
	\end{equation*}
	which implies $\rvp(t) = \left(\frac{2}{\sqrt{1 + 4e^{-32t}}}, 0\right)$.
\end{proof}
\textbf{Difficulty in proving Theorem \ref{main_thm_2hm} using Lemma \ref{lemma_early_dc}: } In Remark \ref{remark_diff}, we briefly discussed why \Cref{main_thm_2hm} can not be proved using Lemma \ref{lemma_early_dc}. Here, we elaborate more on that topic. 

Suppose $\rvw_0 = \rvw_*+\epsilon\rvb$, where $\rvb \in \rvw_*^\perp$, that is, approximate directional convergence holds at the initialization itself. We assume the training loss has a globally Lipschitz gradient with Lipschitz constant $\kappa > 0$ to simplify the argument, though in reality, the gradient is only locally Lipschitz, a weaker condition. Then,
\begin{align*}
&\frac{1}{2}\frac{d\|\bm{\psi}(t,\delta( \rvw_*+\epsilon\rvb)) - \bm{\psi}(t,\delta \rvw_*)\|_2^2}{dt} \\
&= -(\bm{\psi}(t,\delta( \rvw_*+\epsilon\rvb)) - \bm{\psi}(t,\delta \rvw_*))^\top(\nabla\mathcal{L}(\bm{\psi}(t,\delta( \rvw_*+\epsilon\rvb)) ) - \nabla\mathcal{L}(\bm{\psi}(t,\delta \rvw_*) ))\\
&\leq \kappa\|\bm{\psi}(t,\delta( \rvw_*+\epsilon\rvb)) - \bm{\psi}(t,\delta \rvw_*)\|_2^2,
\end{align*}
which implies
\begin{equation}
\|\bm{\psi}(t,\delta( \rvw_*+\epsilon\rvb)) - \bm{\psi}(t,\delta \rvw_*)\|_2\leq e^{\kappa t}\delta\epsilon.
\label{bd_psi_diff}
\end{equation}
The above bound becomes vacuous after $\ln(1/(\delta\epsilon))/\kappa$ time has elapsed, where note that $\epsilon$ is small but \emph{fixed}. However, from Theorem \ref{main_thm_2hm}, we know that $\bm{\psi}(t,\delta \rvw_*)$ escapes from the origin after $\ln(1/(\delta))/(2\mathcal{N}(\rvw_*))$ time has elapsed. Therefore, the above inequality can be useful in describing the gradient flow dynamics beyond the origin if, for all sufficiently small $\delta>0$, $\kappa \leq 2\mathcal{N}(\rvw_*)$, which we are not sure can be proven. The problem is further aggravated if we consider deep homogeneous neural networks. In that case, \cref{bd_psi_diff} still holds, but note that $\bm{\psi}(t,\delta \rvw_*)$ escapes from the origin after $(1/\delta^{L-2})/(L(L-2\mathcal{N}(\rvw_*)))$ time has elapsed, which is always greater than  $\ln(1/(\delta\epsilon))/\kappa$, for all all sufficiently small $\delta>0$ and $L>2$. 
\section{Additional Experiments}
\label{sec:add_exp}
\subsection{Four-layer ReLU Neural Network}
To further demonstrate the preservation of sparsity structure after weights escape from the origin, we conduct experiments with four-layer ReLU neural networks. The setup closely follows that of the earlier experiments: the weights are trained using gradient descent with small initialization, and training is continued until the weights escape the origin and reach the next saddle point.\\\\
In \Cref{fig:4l_relu}, similar to earlier experiments, the training set consists of 100 points sampled uniformly from the unit sphere in $\sR^{20}$, with the corresponding labels generated by a smaller four-layer network. We observe that certain rows and columns of the weight matrices become relatively small before the weights escape the origin, and they remain small even until reaching the next saddle point. Moreover, these rows and columns follow the definition of zero-preserving subset: if the $j$-th row of $\rmW_l$ is small, then the $j$-th column of $\rmW_{l+1}$ is small, and vice-versa. For instance, the first row of $\rmW_1$ and first column of $\rmW_2$, rows 16-18 of $\rmW_2$ and columns 16-18 of  $\rmW_3$, and rows 17-19 of $\rmW_3$ and columns 17-19 of  $\rvv^\top$ stay small at iteration $i_1$ and $i_2$. 
\begin{figure}[htbp]
	\centering
	\begin{minipage}[c]{0.3\textwidth}
		\centering
		\includegraphics[width=\textwidth]{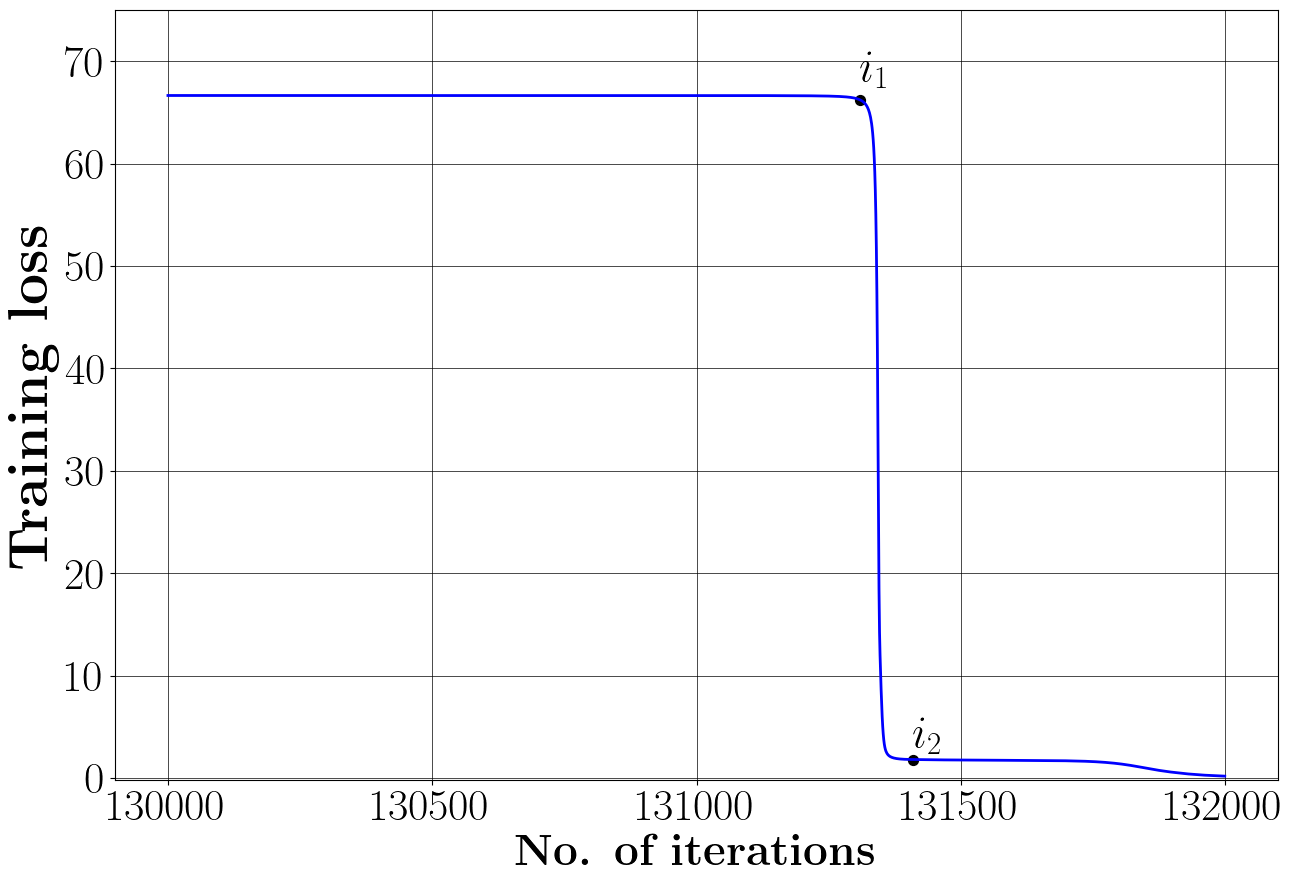}
		\\ (a) Evolution of training loss with iterations
	\end{minipage}\hspace{1cm}%
	\begin{minipage}[c]{0.6\textwidth}
		\centering
		\includegraphics[width=\textwidth]{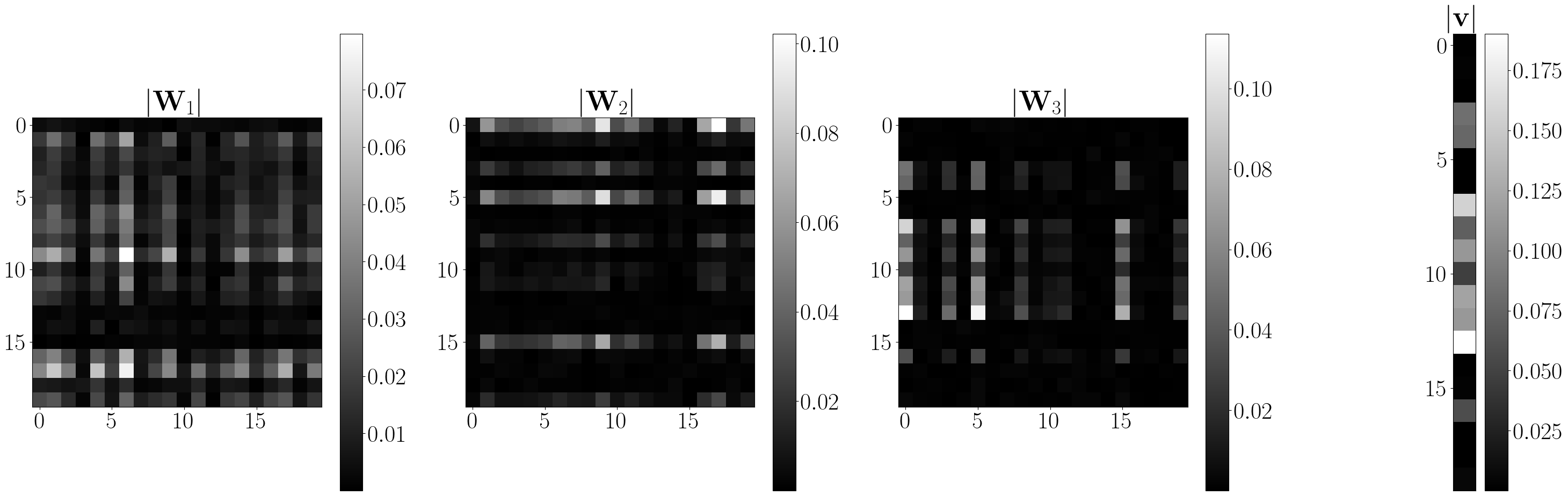}
		\\ (b) Weights at iteration $i_1$ \\[1ex]
		\includegraphics[width=\textwidth]{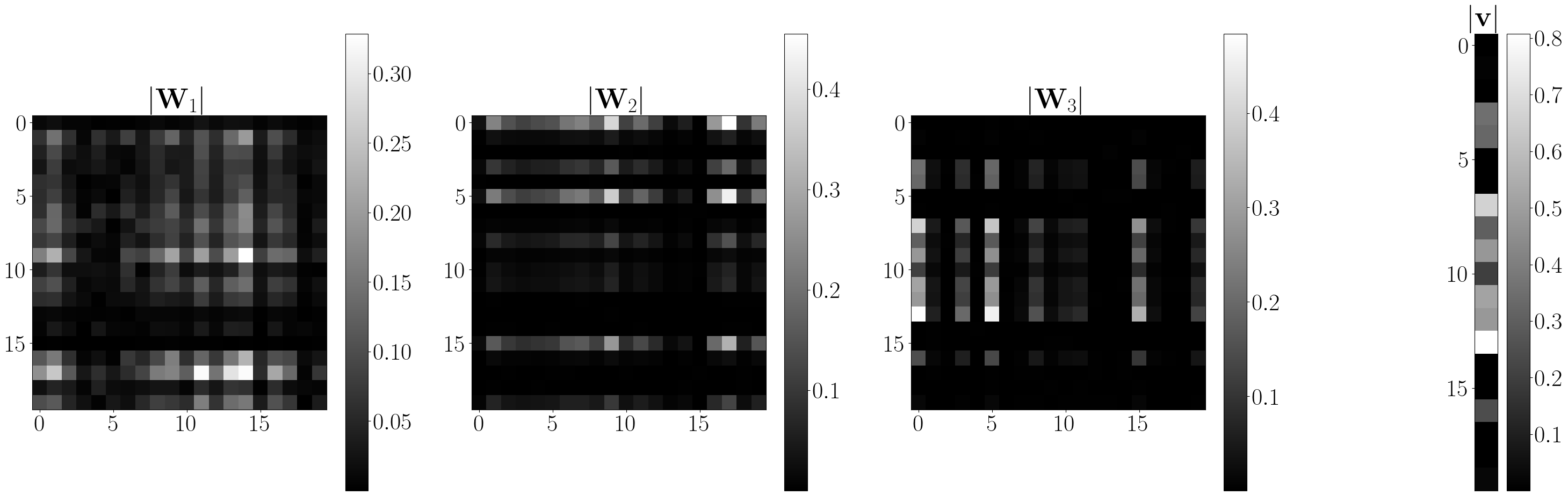}
		\\ (c) Weights at iteration $i_2$
	\end{minipage}
 	\caption{We train a four-layer neural network whose output is $\rvv^\top\sigma(\rmW_3\sigma(\rmW_2\sigma(\rmW_1\rvx))) ,$ where $\sigma(x) = \max(x,0)$, and $\rvv\in \mathbb{R}^{20},\rmW_3,\rmW_2,\rmW_1  \in \mathbb{R}^{20 \times 20}$ are the trainable weights. We observe that rows and columns of the weight matrices that become small near the origin remain small until gradient descent reaches the next saddle point, 
	demonstrating preservation of the sparsity structure.}
\label{fig:4l_relu}
\end{figure}

\noindent We next consider a more realistic setting in \Cref{fig:4_layer_rl_mnist}, where the task is to classify digits $0$ and $1$ from the MNIST dataset \citep{lecun_mnist}. Digits are labeled as $-1$ (for $0$) and $1$ (for $1$). The original $28\times 28$ images are down-sampled to size $14\times 14$, and each image is flattened into a vector of length $196$. Training is performed using a four-layer ReLU network with square loss. Once again, we observe that certain rows and columns of the weight matrices become relatively small before escaping the origin, and they remain small until reaching the next saddle point. Also, these rows and columns follow the definition of zero-preserving subset. For example, the second row of $\rmW_1$ and second column of $\rmW_2$, first five rows of $\rmW_2$ and first five columns of  $\rmW_3$, and first three rows of $\rmW_3$ and first three columns of  $\rvv^\top$ stay small at iteration $i_1$ and $i_2$. 

\begin{figure}[htbp]
	\centering
	\begin{minipage}[c]{0.3\textwidth}
		\centering
		\includegraphics[width=\textwidth]{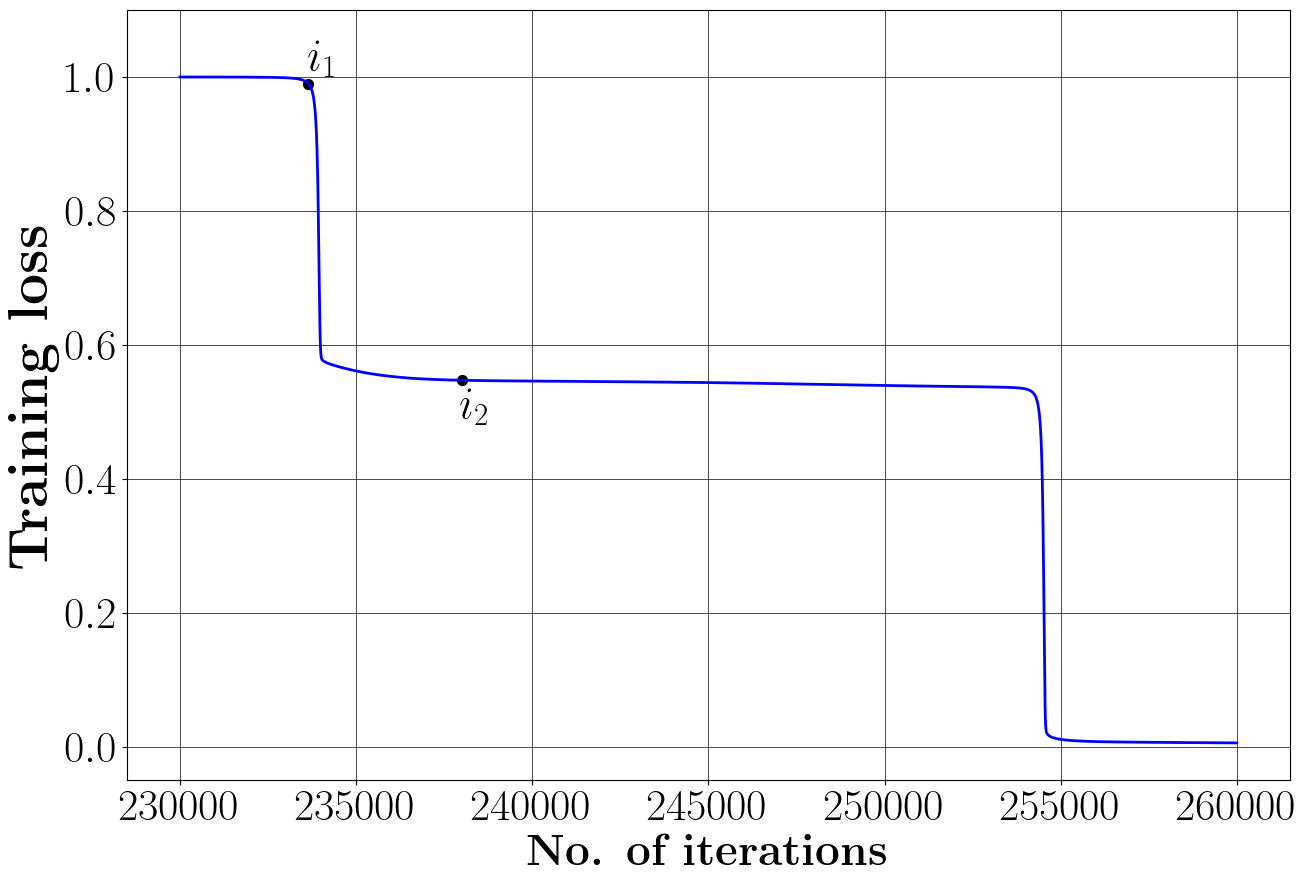}
		\\ (a) Evolution of training loss with iterations
	\end{minipage}\hspace{1cm}%
	\begin{minipage}[c]{0.6\textwidth}
		\centering
		\includegraphics[width=\textwidth]{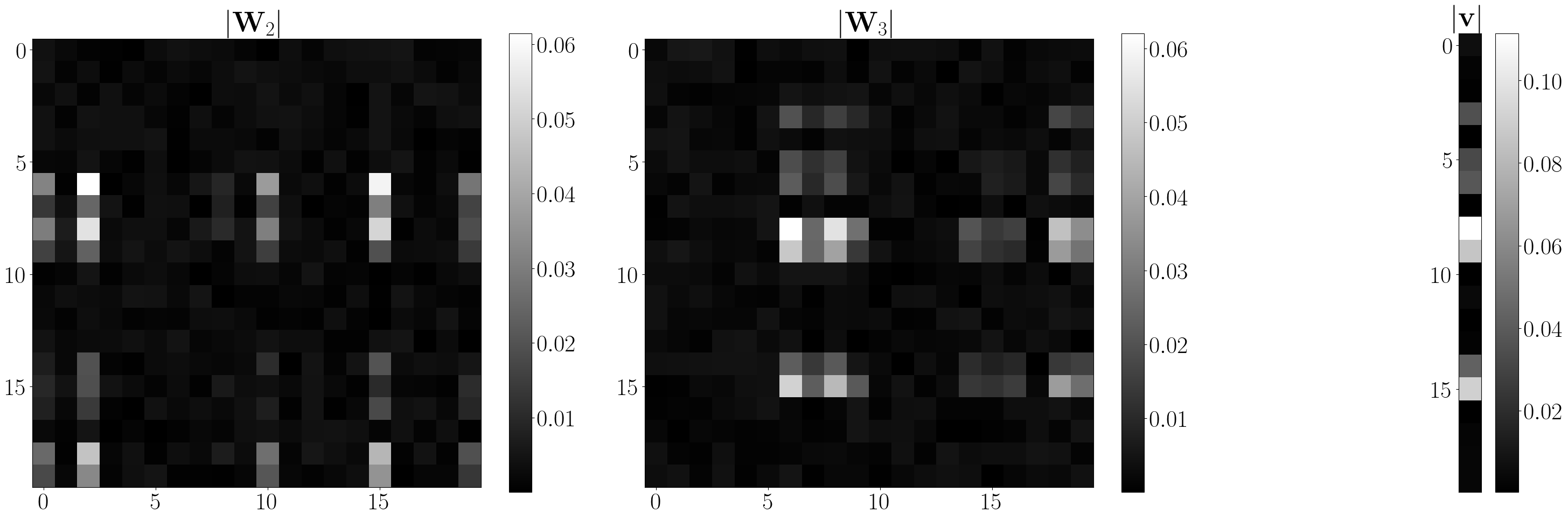}
		\\ (b) Weights of last three layers at iteration $i_1$ \\[1ex]
		\includegraphics[width=\textwidth]{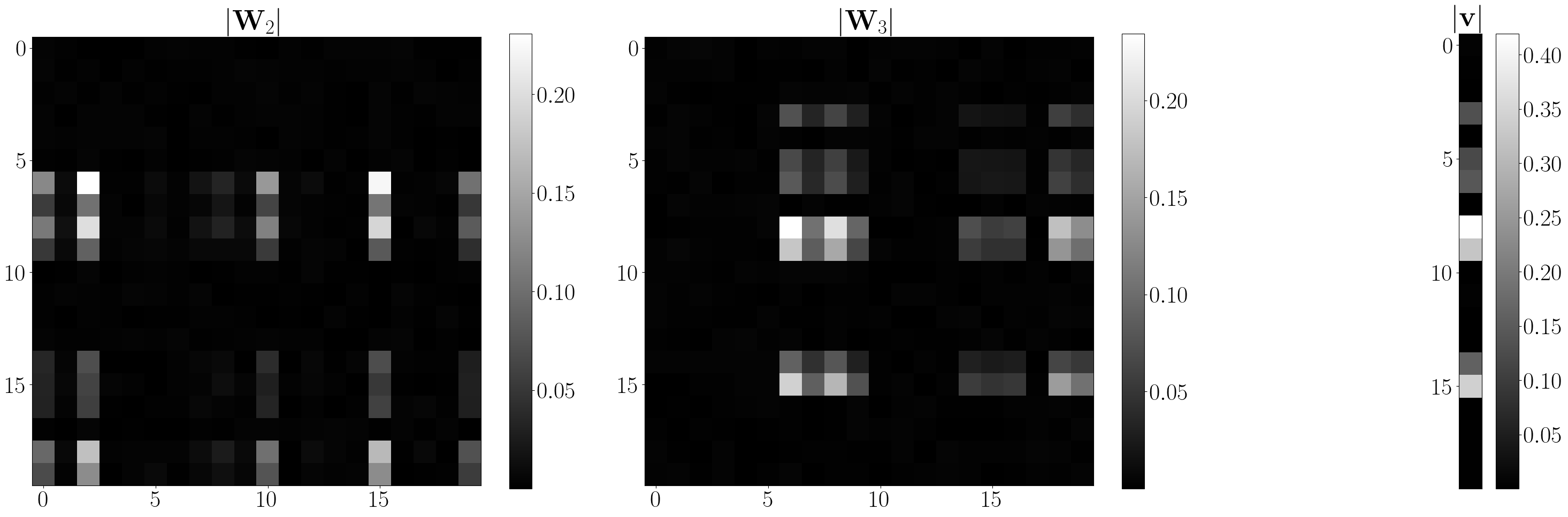}
		\\ (c) Weights of last three layers at iteration $i_2$
	\end{minipage}
	
	\vspace{0.3cm} 
	
	\begin{minipage}{0.9\textwidth}
		\centering
		\includegraphics[width=\textwidth]{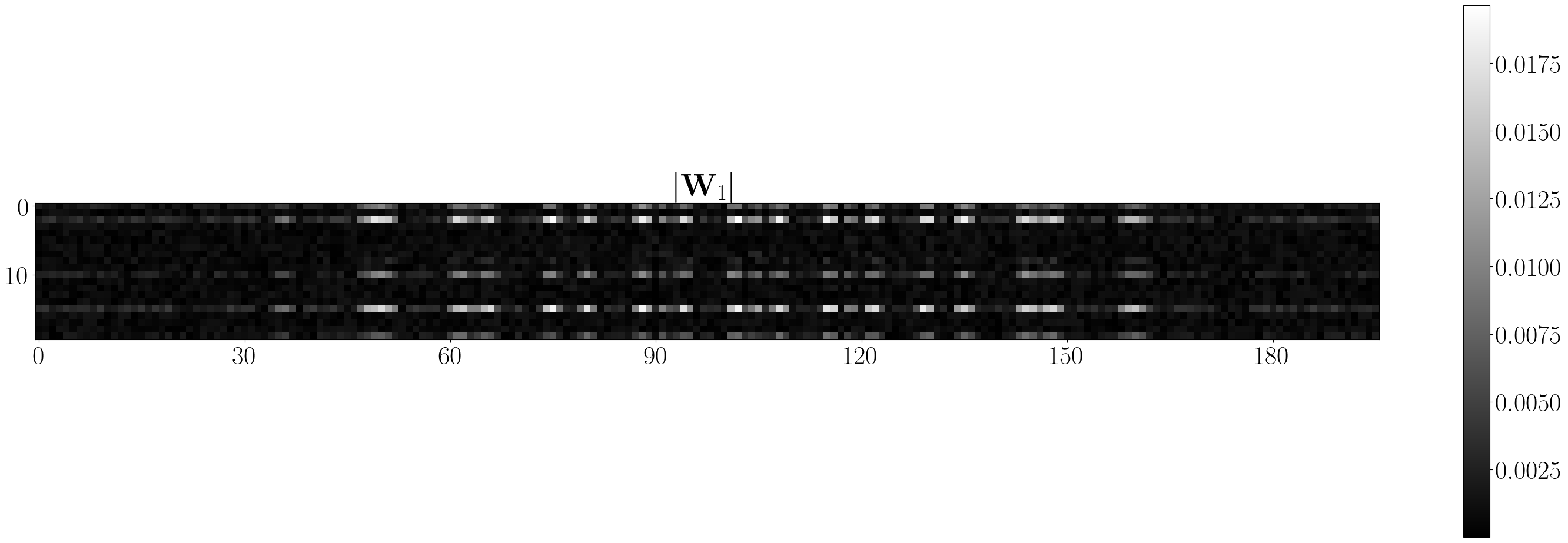} 
		\\ (d) Weights of first layer at iteration $i_1$
	\end{minipage}
	
	\vspace{0.3cm} 
	
	\begin{minipage}{0.9\textwidth}
		\centering
		\includegraphics[width=\textwidth]{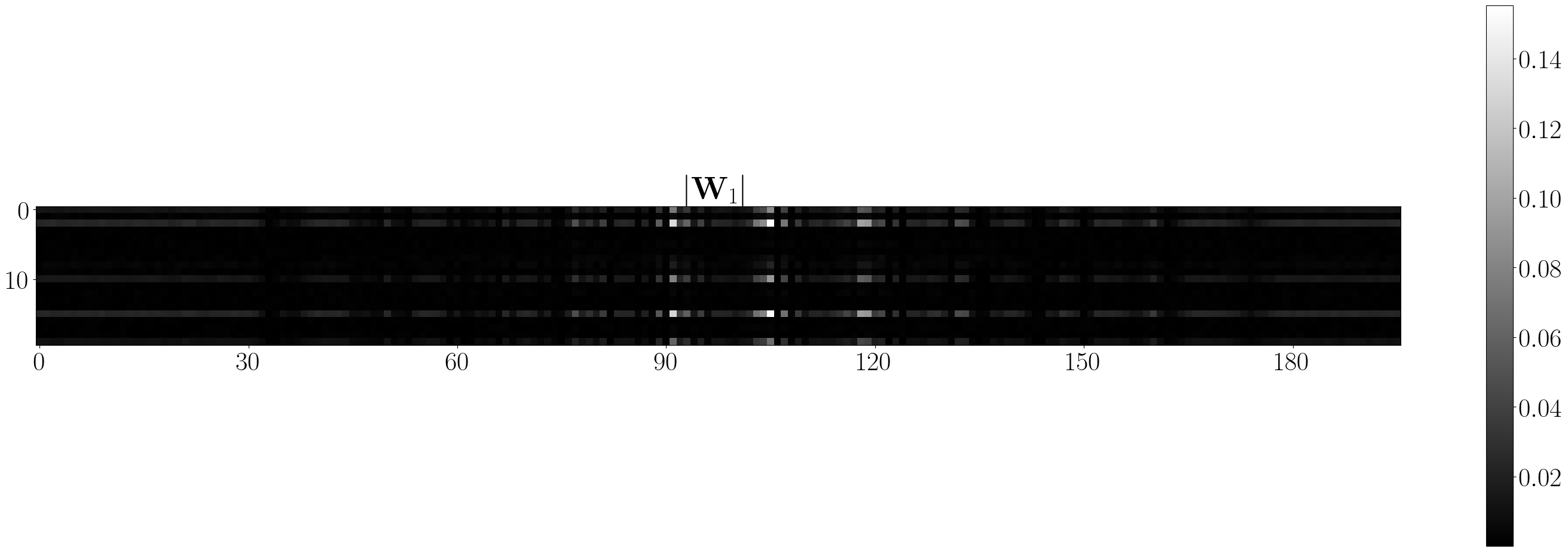} 
		\\ (e) Weights of first layer at iteration $i_2$
	\end{minipage}
	
	\caption{We train a four-layer neural network to classify digits $0$ and $1$ from the MNIST dataset. 
		The images are down-sampled by a factor of $2$ (from $28 \times 28$ to $14 \times 14$) and reshaped into vectors of length $196$. The network output is $\rvv^\top\sigma(\rmW_3\sigma(\rmW_2\sigma(\rmW_1\rvx))) ,$ where $\sigma(x) = \max(x,0)$, and $\rvv\in \mathbb{R}^{20},\rmW_3,\rmW_2 \in \mathbb{R}^{20 \times 20} ,\rmW_1  \in \mathbb{R}^{20 \times 196}$ are the trainable weights. The rows and columns of the weight matrices that become small near the origin remain small until the next saddle point.}
	\label{fig:4_layer_rl_mnist}
\end{figure}

\subsection{Non-homogeneous Activation Function}
Although our theoretical results are stated for homogeneous activations, we evaluate whether the preservation of the sparsity structure also occurs with non-homogeneous activations, specifically tanh and Gaussian Error Linear Unit (GELU) \citep{gelu}. In all experiments the weights are trained using gradient descent with small initialization, and training is continued until the weights escape the origin and reach the next saddle point.\\\\
\textbf{Tanh activation function. }We train two-, three-, and four-layer neural network with tanh activation function, where the results are depicted in \Cref{fig:2_layer_nn_tanh}, \Cref{fig:3_layer_nn_tanh} and \Cref{fig:4l_tanh}, respectively. The training set consists of 100 points sampled uniformly from the unit sphere in $\sR^{20}$, with the corresponding labels generated by a smaller network. In all three cases we observe the same qualitative behavior seen for homogeneous activations: certain rows and columns of the weight matrices become relatively small before the weights escape from the origin, and they remain small until reaching the next saddle point. Moreover, these rows and columns also follow the definition of zero-preserving subset. For instance, in the two-layer case, first row of $\rmW_1$ and first column of  $\rvv^\top$ stay small at iteration $i_1$ and $i_2$. In the three-layer case, last row of $\rmW_1$ and last column of $\rmW_2$, and row 27 of $\rmW_2$ and column 27 of  $\rvv^\top$ stay small at iteration $i_1$ and $i_2$. In the four-layer case, row 14 of $\rmW_1$ and column 14 of $\rmW_2$, the row 25 of $\rmW_2$ and column 25 of $\rmW_3$, and first row of $\rmW_3$ and first column of  $\rvv^\top$ stay small at iteration $i_1$ and $i_2$. \\\\
\textbf{GELU activation function. }We train two- and three-layer neural network with GELU activation function, where the results are depicted in \Cref{fig:2l_gelu} and \Cref{fig:3l_gelu}, respectively. The training set consists of 100 points sampled uniformly from the unit sphere in $\sR^{20}$, with the corresponding labels generated by a smaller network. In both cases, before the weights escape from the origin, certain rows and columns of the weight matrices become relatively small. However, as the weights escape from the origin and training further progresses, it appears that the sparsity among the weights increases, as even the rows and columns which were not small near the origin become small as the training progresses. For example, in the two-layer case, row 40-42 of $\rmW_1$ is not relatively small at iteration $i_1$, but it becomes quite small at iteration $i_3$ and beyond. In the three-layer case, first two rows of $\rmW_1$ are not relatively small at iteration $i_1$, but it becomes quite small at iteration $i_4$ and beyond.

This behavior contrasts with the above experiments using homogeneous activation and tanh activation.
While the precise reason behind this is not entirely clear to us, one possible explanation lies in the motivation behind GELU. As discussed in \citet{gelu}, GELU activation was derived by combining dropout with ReLU. Since dropout is known to encourage sparsity, perhaps this leads to increased sparsity in the weights after escaping from the origin.\\\\
Overall, these set of experiments indicate that the phenomenon of preservation of the sparsity structure   extends beyond homogeneous activation functions, though it may not hold in general. Exploring the conditions under which this phenomenon persists is an important direction for future research.

\begin{figure}[htbp]
	\centering
	\begin{minipage}[c]{0.3\textwidth}
		\centering
		\includegraphics[width=\textwidth]{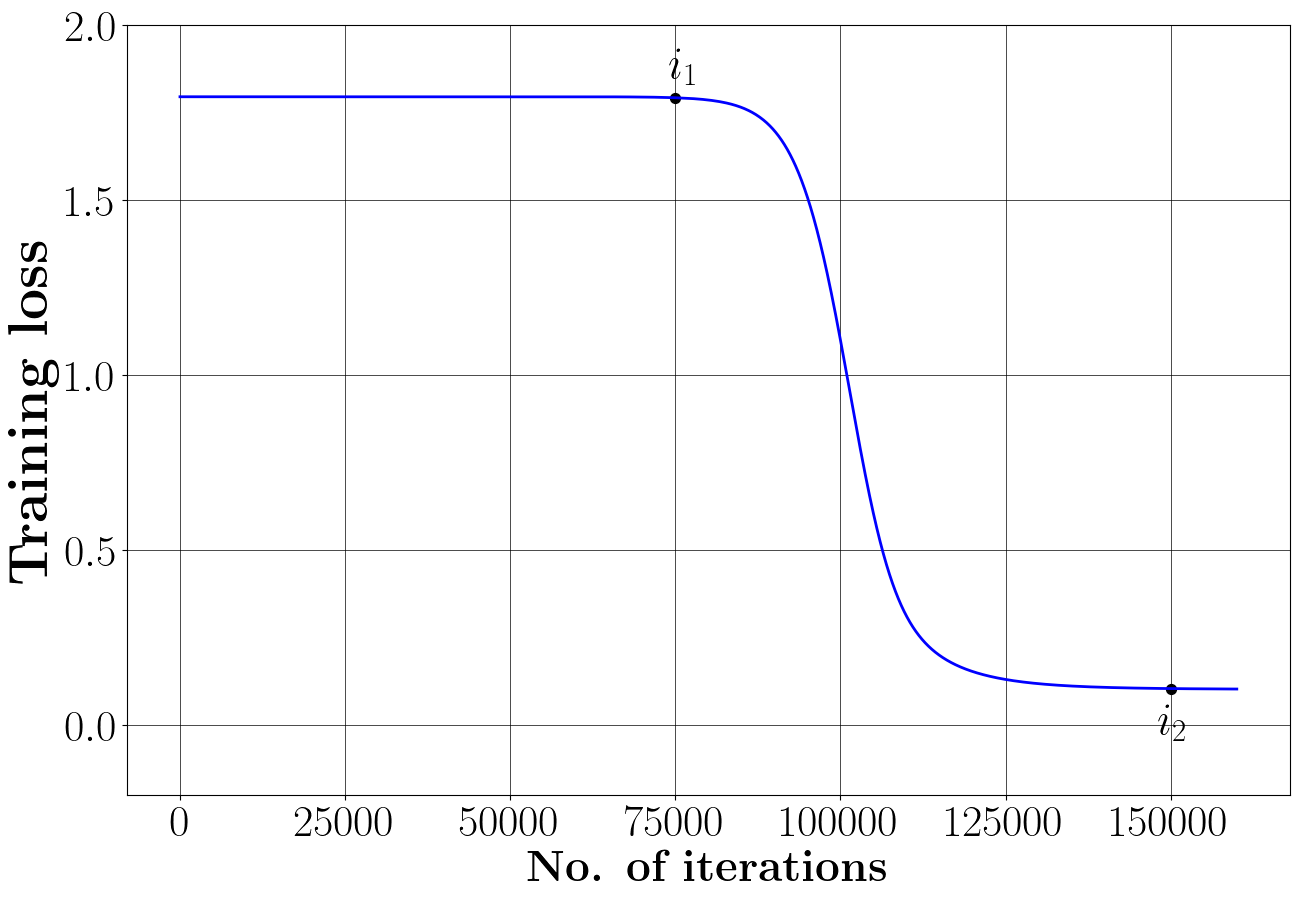}
		\\ (a) Evolution of training loss with iterations
	\end{minipage}\hspace{1cm}%
	\begin{minipage}[c]{0.6\textwidth}
		\centering
		\includegraphics[width=0.4\textwidth]{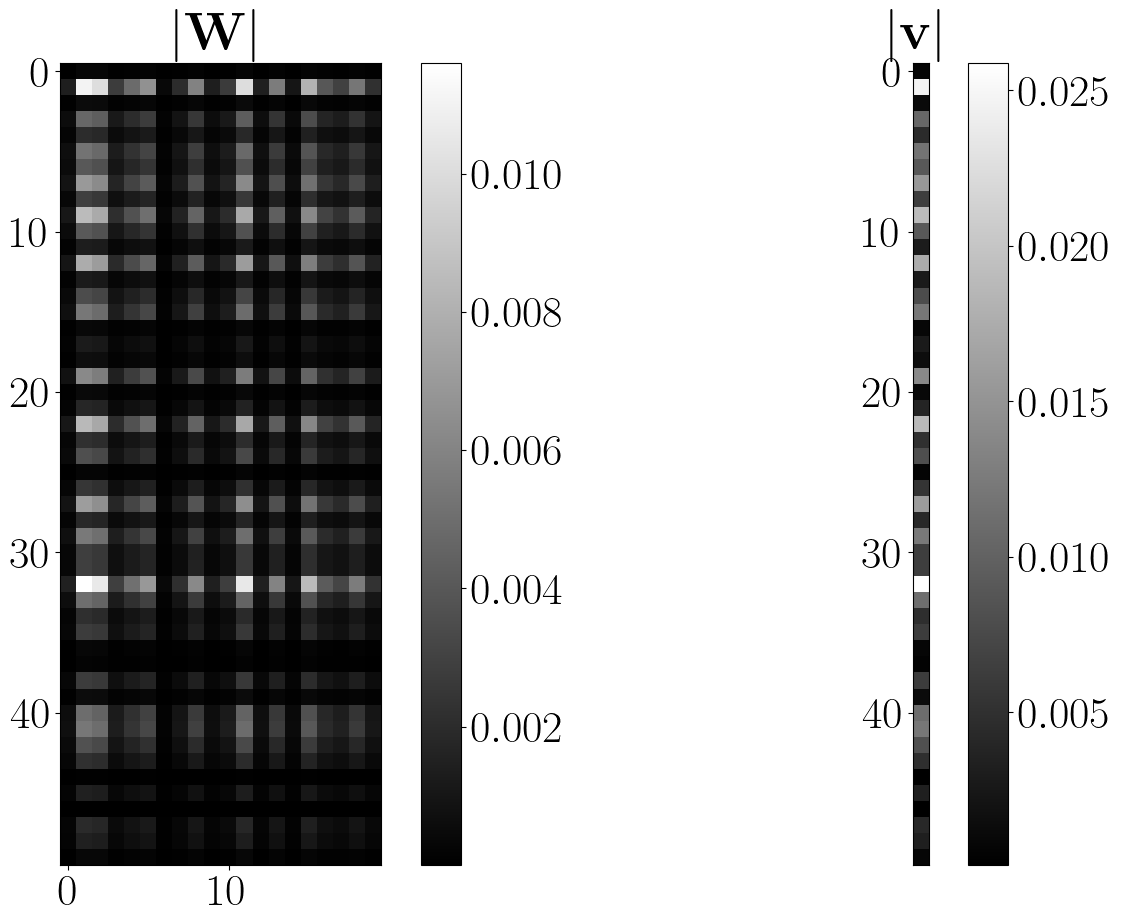}
		\\ (b) Weights at iteration $i_1$ \\[1ex]
		\includegraphics[width=0.4\textwidth]{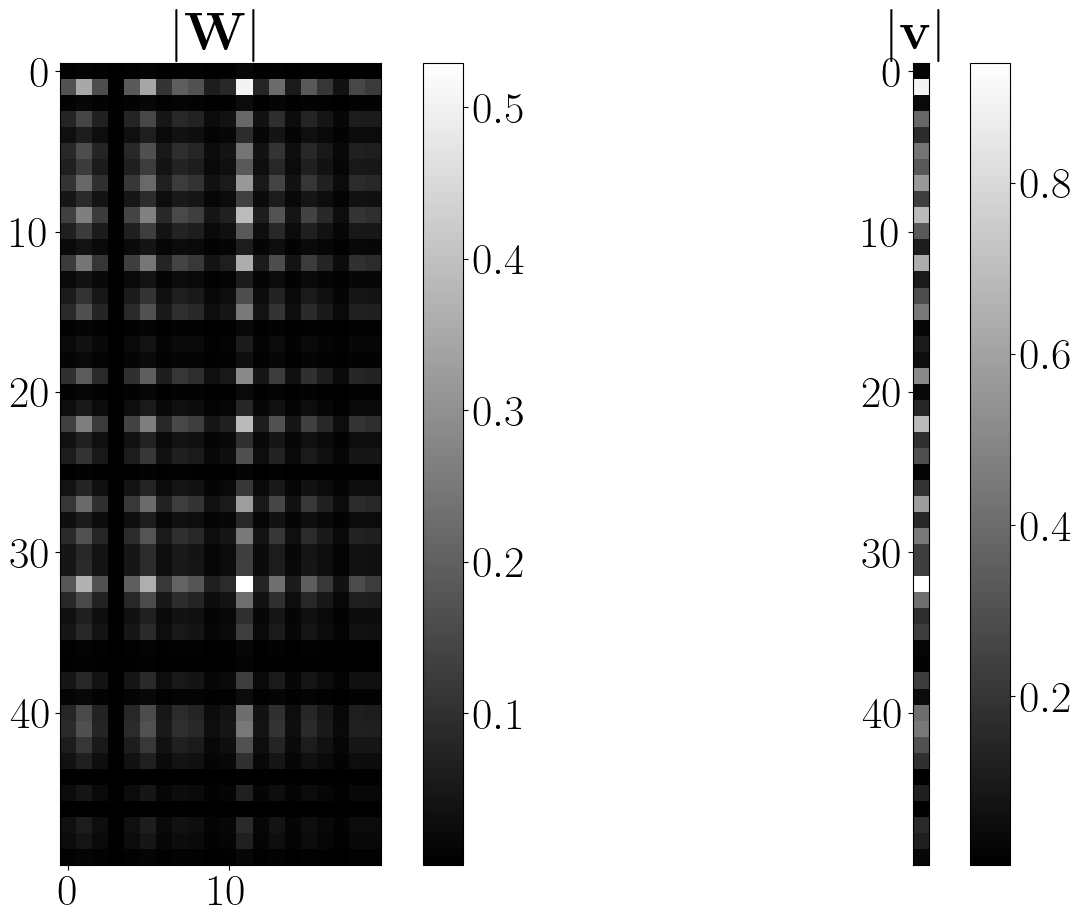}
		\\ (c) Weights at iteration $i_2$
	\end{minipage}
	\caption{We train a two-layer neural network whose output is $\rvv^\top\sigma(\rmW_1\rvx) ,$ where $\sigma(x) ={ \rm tanh}(x)$, and $\rvv\in \mathbb{R}^{50},\rmW_1  \in \mathbb{R}^{50 \times 20}$ are the trainable weights. The sparsity structure that emerges among the weights before before escaping the origin is preserved post-escape and until reaching the next saddle point.}
	\label{fig:2_layer_nn_tanh}
\end{figure}
\begin{figure}[htbp]
	\centering
	\begin{minipage}[c]{0.3\textwidth}
		\centering
		\includegraphics[width=\textwidth]{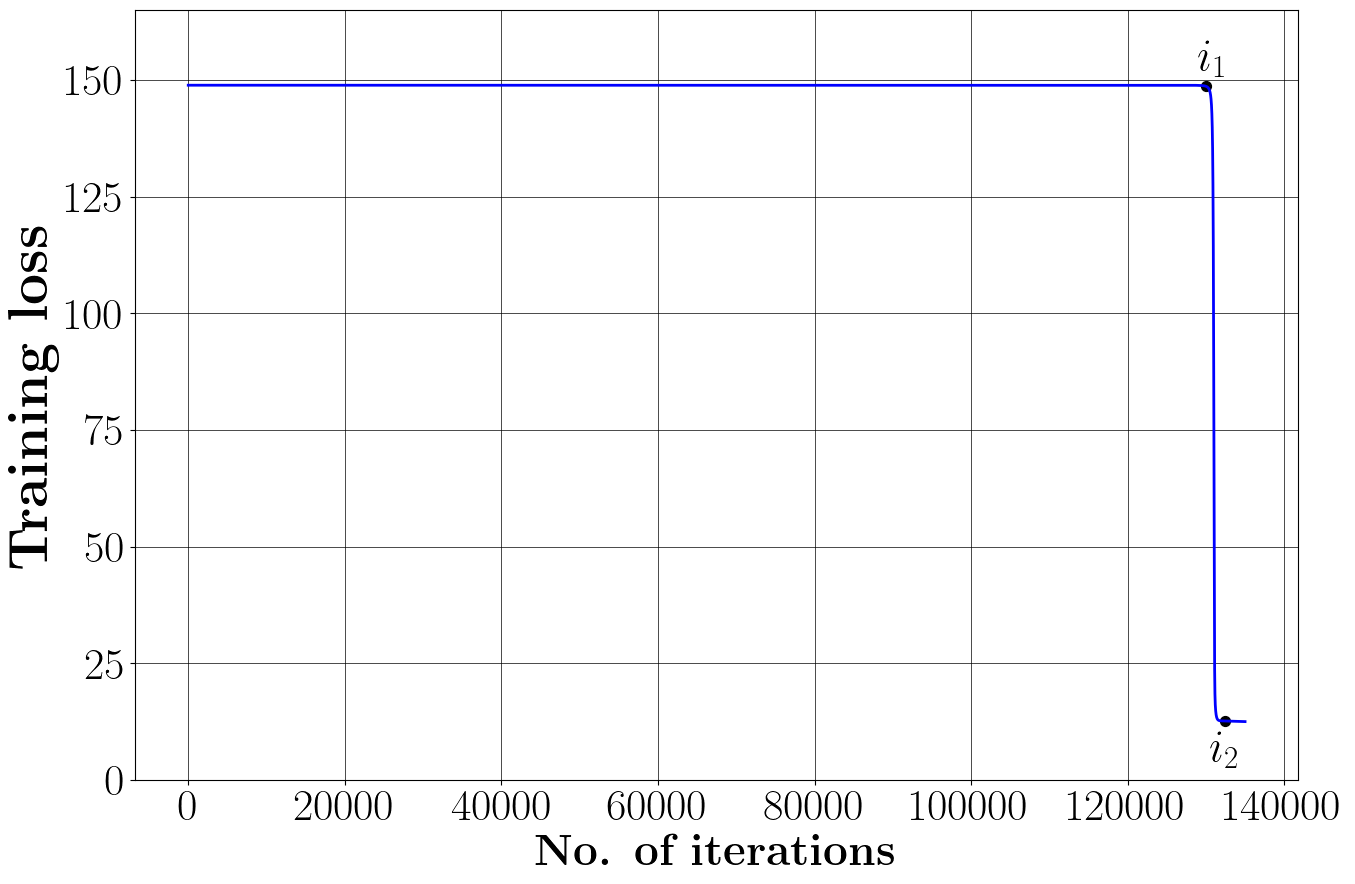}
		\\ (a) Evolution of training loss with iterations
	\end{minipage}\hspace{1cm}%
	\begin{minipage}[c]{0.6\textwidth}
		\centering
		\includegraphics[width=\textwidth]{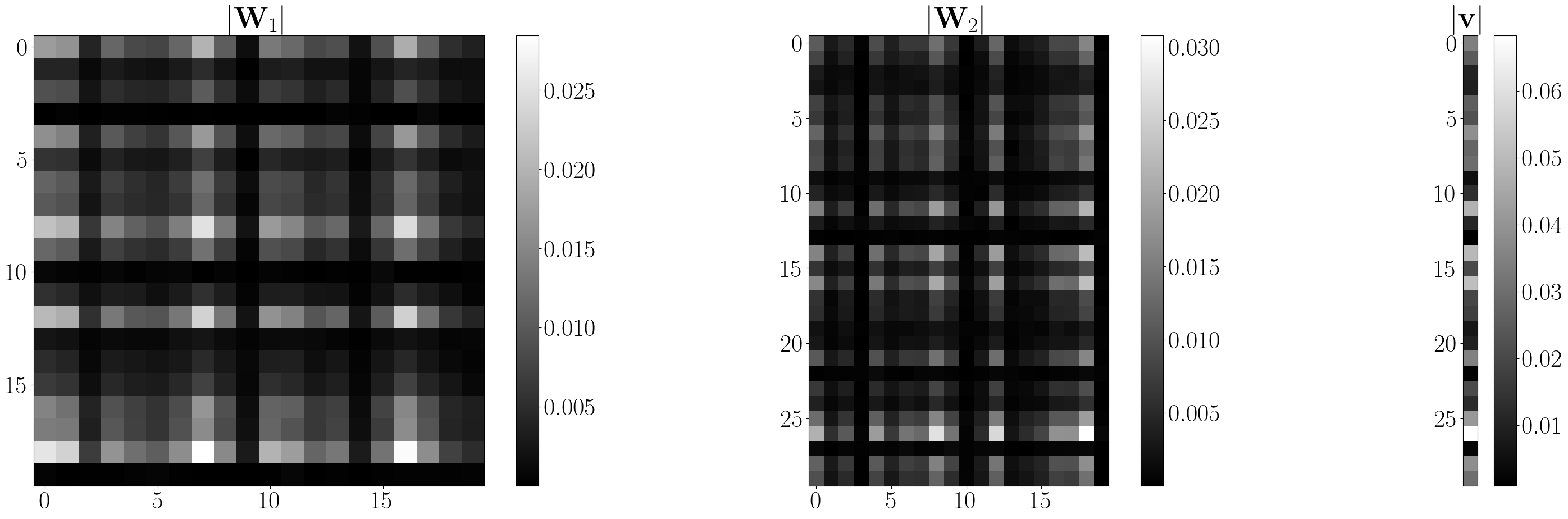}
		\\ (b) Weights at iteration $i_1$ \\[1ex]
		\includegraphics[width=\textwidth]{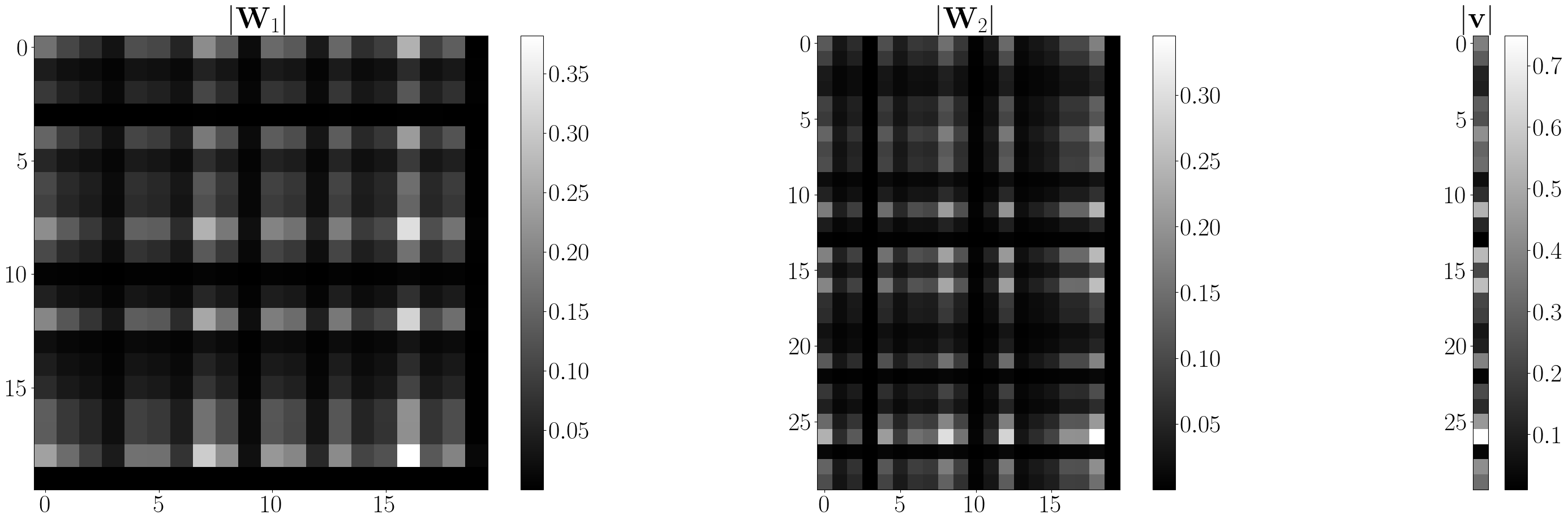}
		\\ (c) Weights at iteration $i_2$
	\end{minipage}
	\caption{We train a three-layer neural network whose output is $\rvv^\top\sigma(\rmW_2\sigma(\rmW_1\rvx)) ,$ where $\sigma(x) ={ \rm tanh}(x)$, and $\rvv\in \mathbb{R}^{30},\rmW_2\in \mathbb{R}^{30 \times 20},\rmW_1  \in \mathbb{R}^{20 \times 20}$ are the trainable weights. The sparsity structure is preserved upon escaping from the origin.}
	\label{fig:3_layer_nn_tanh}
\end{figure}

\begin{figure}[htbp]
	\centering
	\begin{minipage}[c]{0.3\textwidth}
		\centering
		\includegraphics[width=\textwidth]{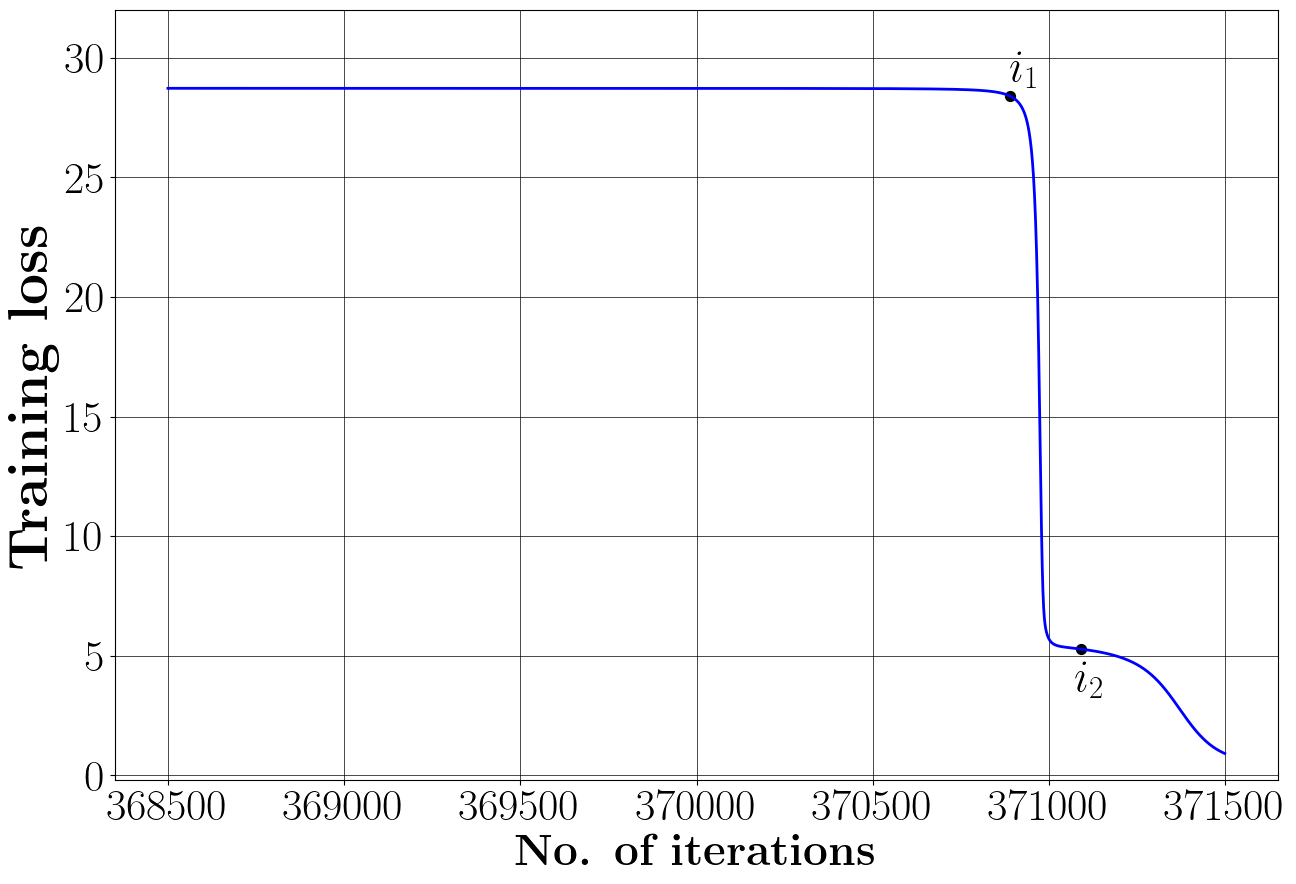}
		\\ (a) Evolution of training loss with iterations
	\end{minipage}\hspace{1cm}%
	\begin{minipage}[c]{0.6\textwidth}
		\centering
		\includegraphics[width=\textwidth]{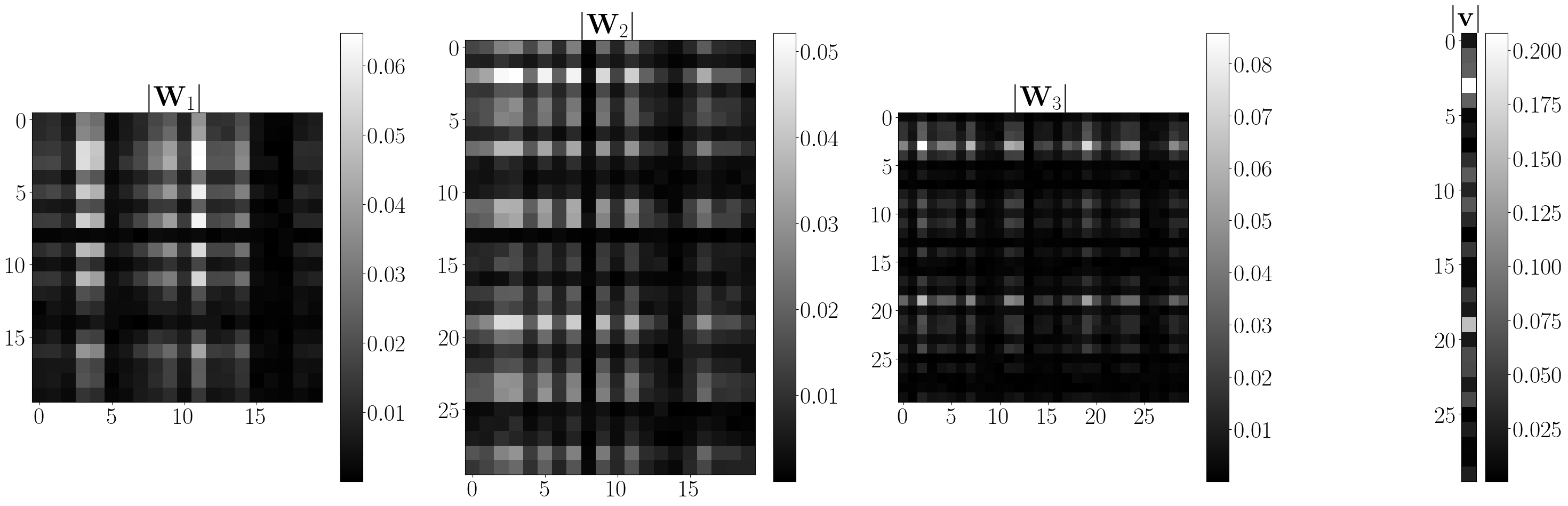}
		\\ (b) Weights at iteration $i_1$ \\[1ex]
		\includegraphics[width=\textwidth]{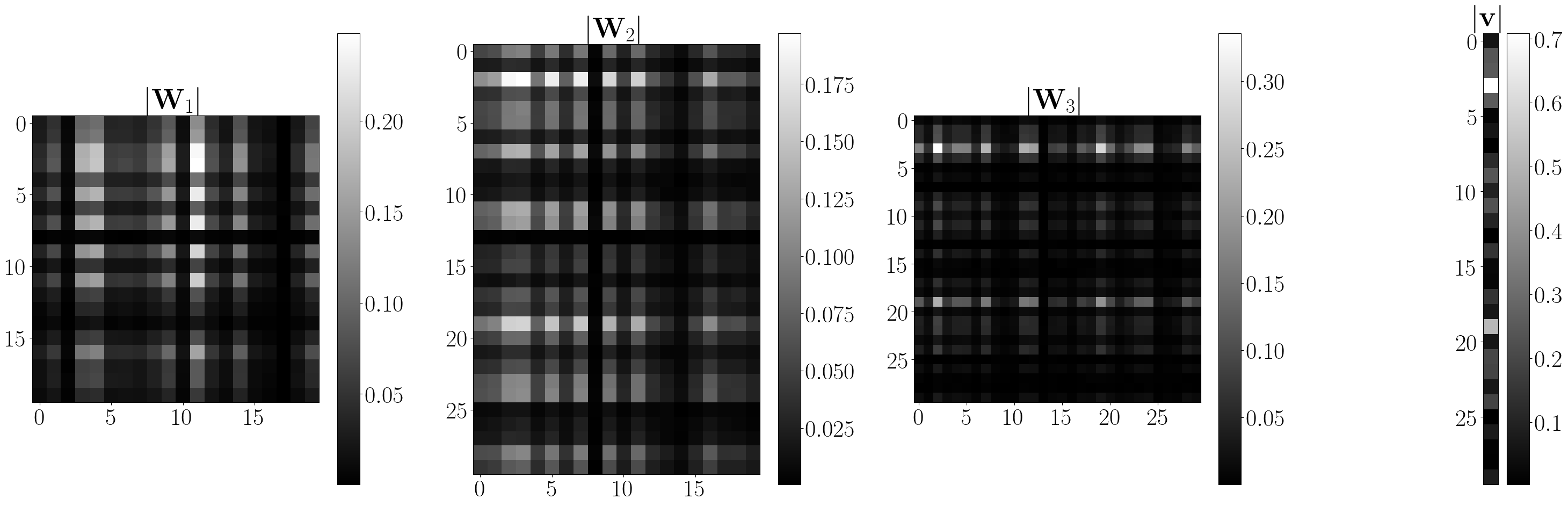}
		\\ (c) Weights at iteration $i_2$
	\end{minipage}
	\caption{We train a four-layer neural network whose output is $\rvv^\top\sigma(\rmW_3\sigma(\rmW_2\sigma(\rmW_1\rvx))) ,$ where $\sigma(x) = { \rm tanh}(x)$, and $\rvv\in \mathbb{R}^{30},\rmW_3 \in \mathbb{R}^{30 \times 30},\rmW_2 \in \mathbb{R}^{30 \times 20},\rmW_1  \in \mathbb{R}^{20 \times 20}$ are the trainable weights. The rows and columns of the weight matrices that become small near the origin remain small until gradient descent reaches the next saddle point, demonstrating preservation of the sparsity structure.}
	\label{fig:4l_tanh}
\end{figure}

\begin{figure}[htbp]
	\centering
	
	\begin{subfigure}{0.4\textwidth}
		\centering
		\includegraphics[width=\linewidth]{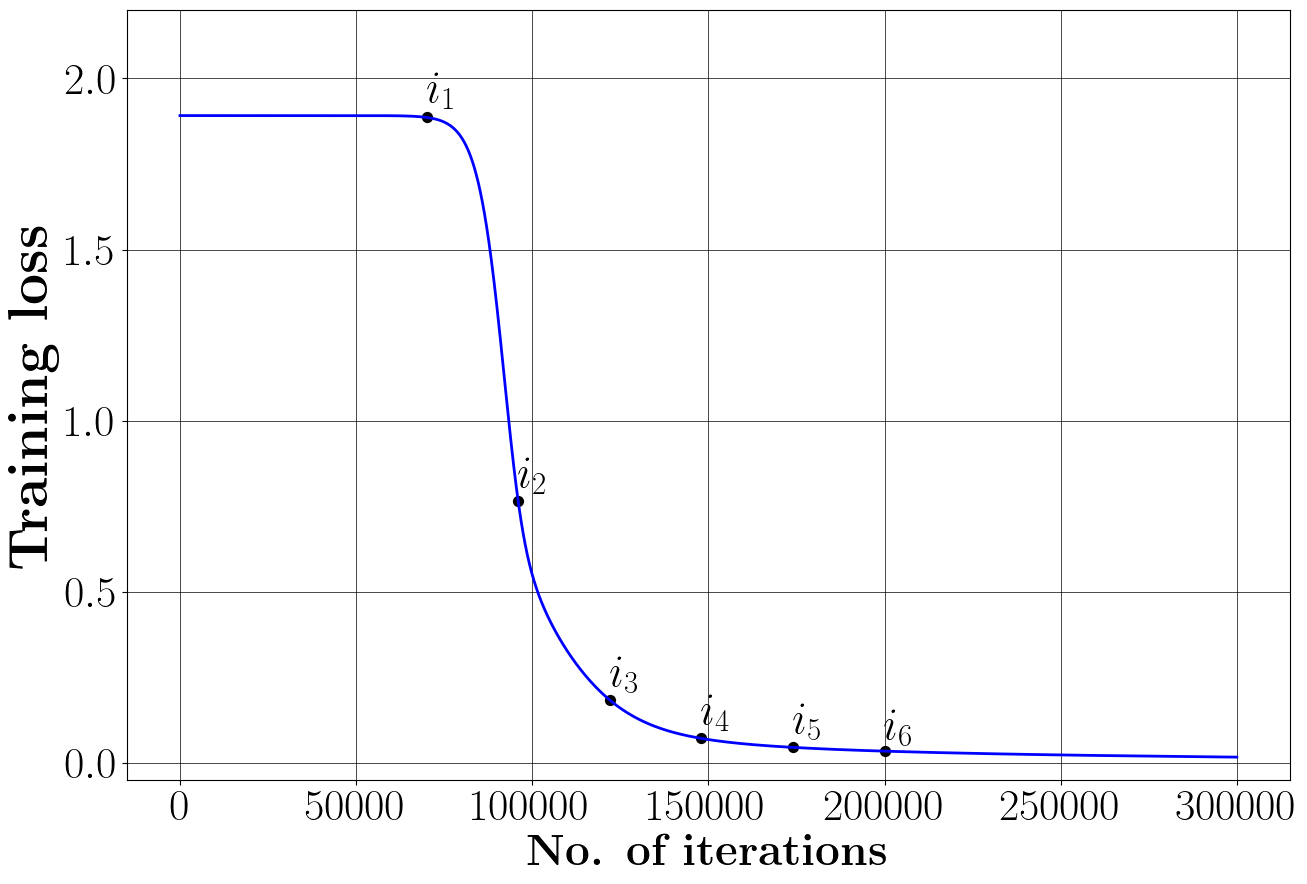}
		\caption{Evolution of training loss}
	\end{subfigure}

	\begin{subfigure}{0.35\textwidth}
		\centering
		\includegraphics[width=\linewidth]{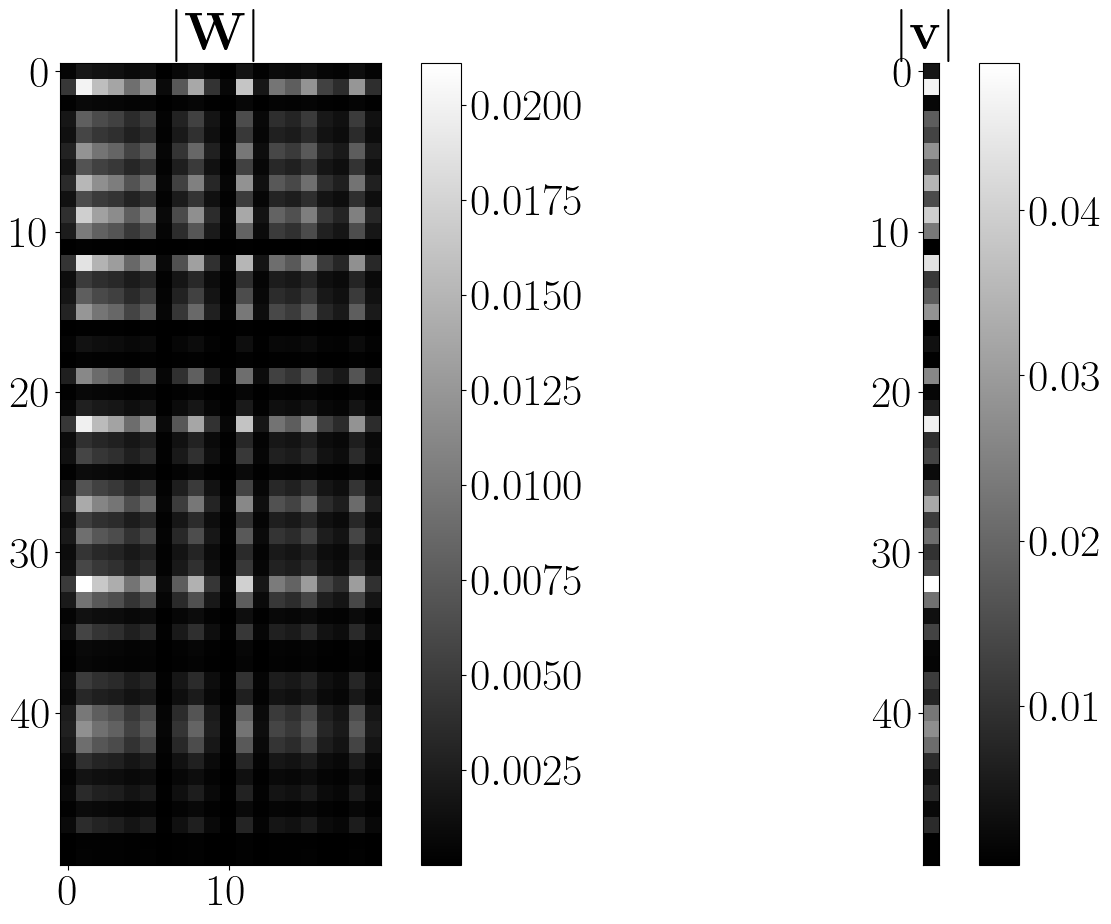}
		\caption{Weights at iteration $i_1$}
	\end{subfigure}\hfill
	\begin{subfigure}{0.35\textwidth}
		\centering
		\includegraphics[width=\linewidth]{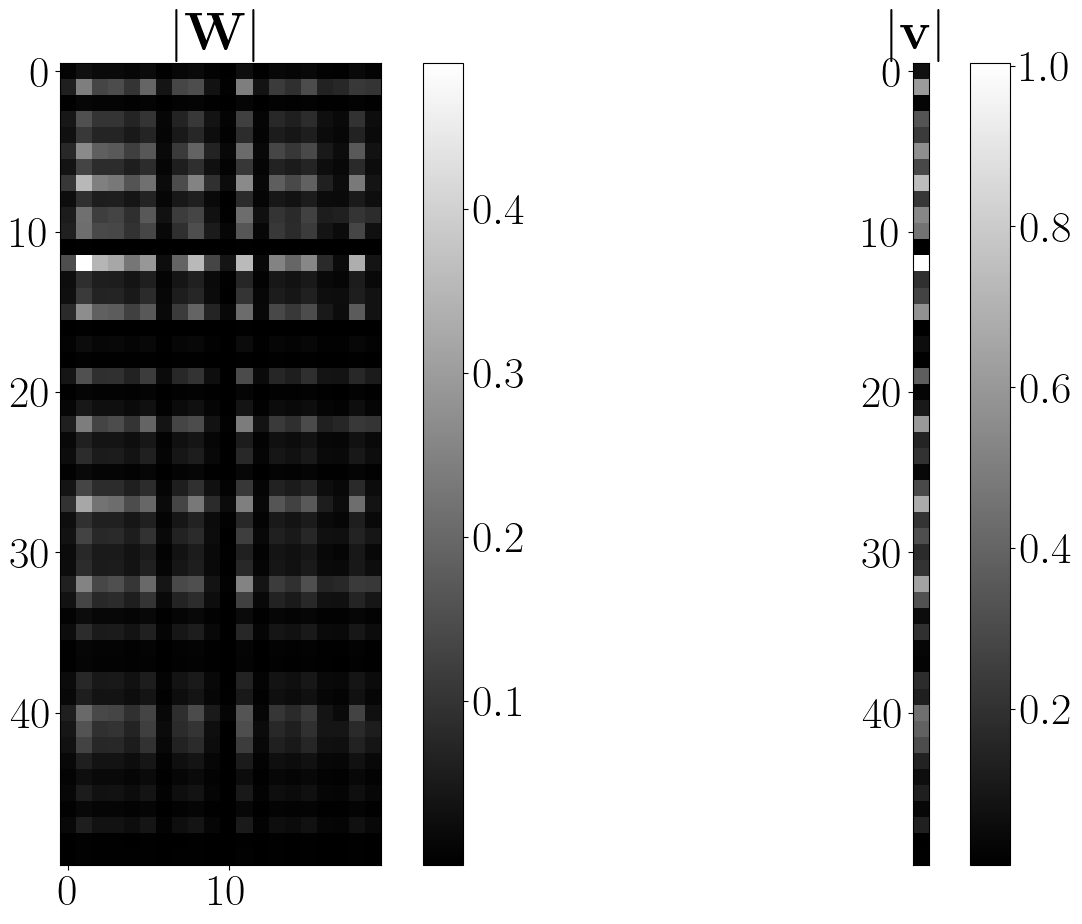}
		\caption{Weights at iteration $i_2$}
	\end{subfigure}

	\begin{subfigure}{0.35\textwidth}
		\centering
		\includegraphics[width=\linewidth]{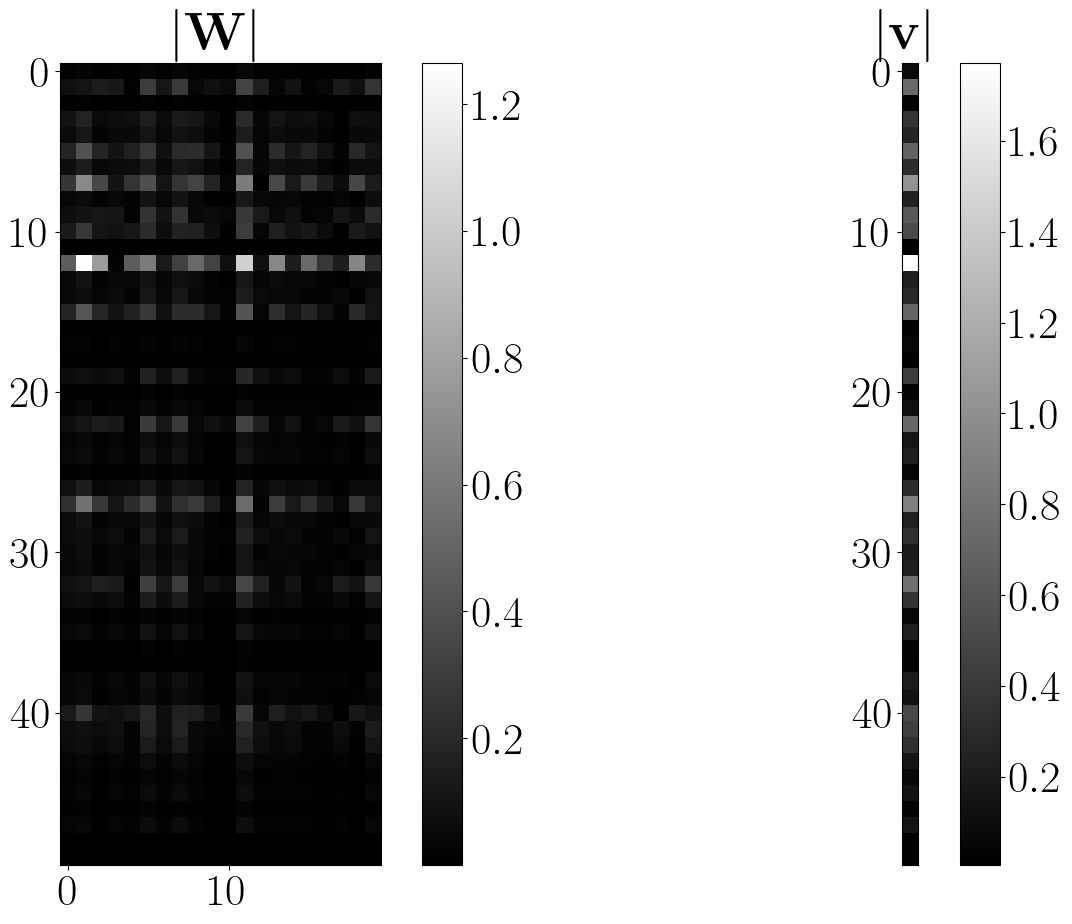}
		\caption{Weights at iteration $i_3$}
	\end{subfigure}\hfill
	\begin{subfigure}{0.35\textwidth}
		\centering
		\includegraphics[width=\linewidth]{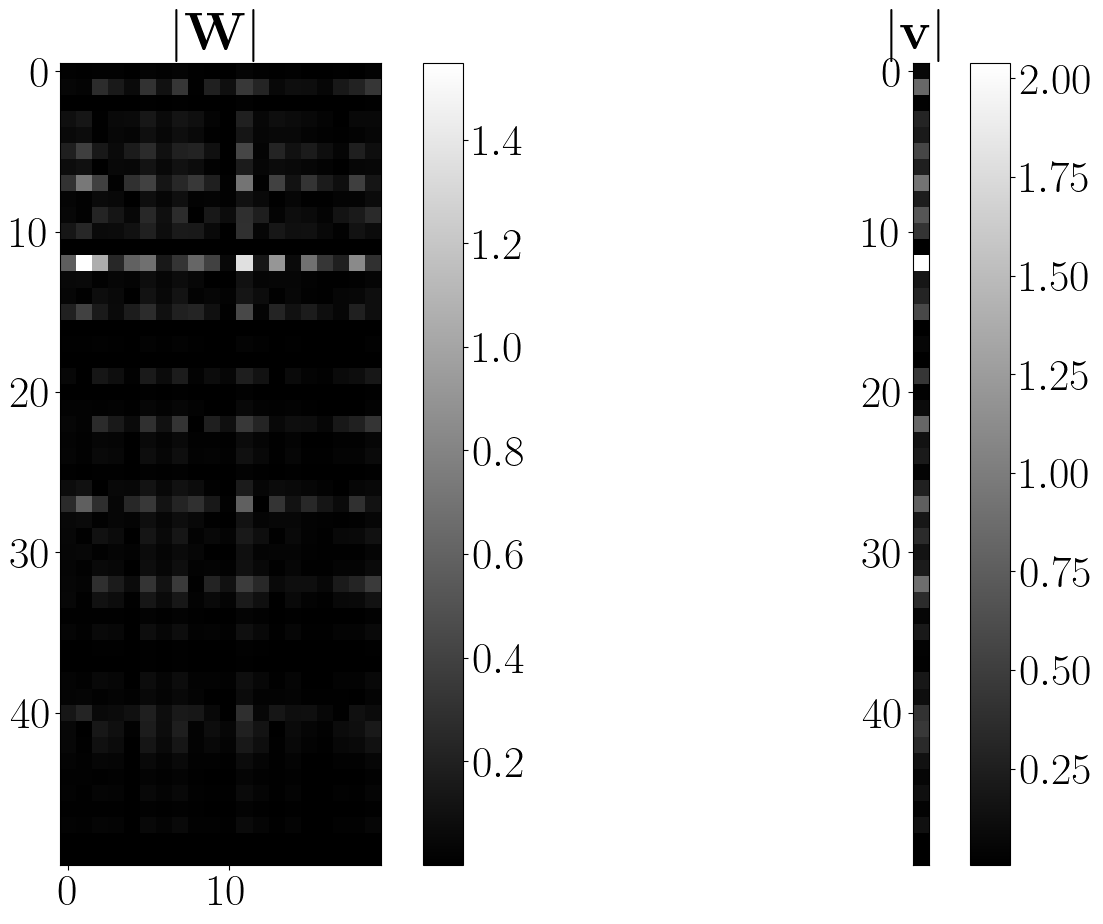}
		\caption{Weights at iteration $i_4$}
	\end{subfigure}

	\begin{subfigure}{0.35\textwidth}
		\centering
		\includegraphics[width=\linewidth]{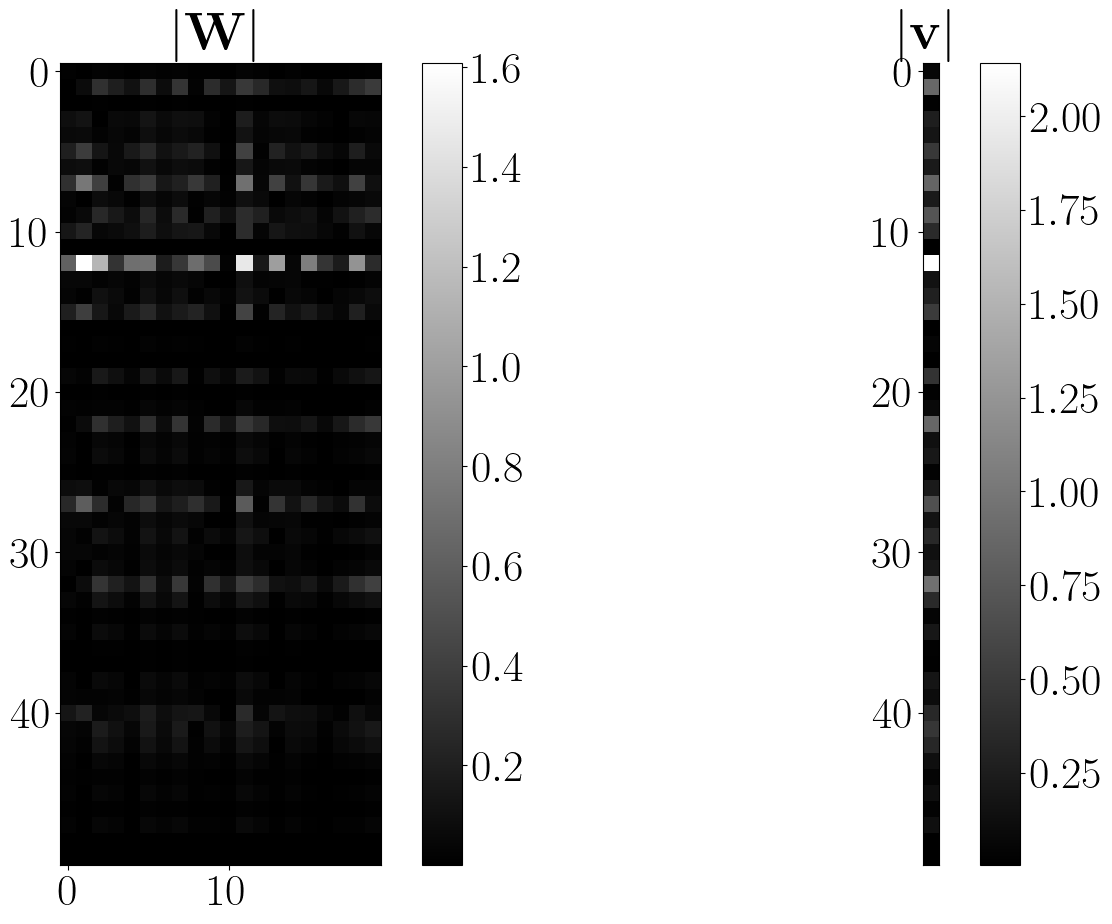}
		\caption{Weights at iteration $i_5$}
	\end{subfigure}\hfill
	\begin{subfigure}{0.35\textwidth}
		\centering
		\includegraphics[width=\linewidth]{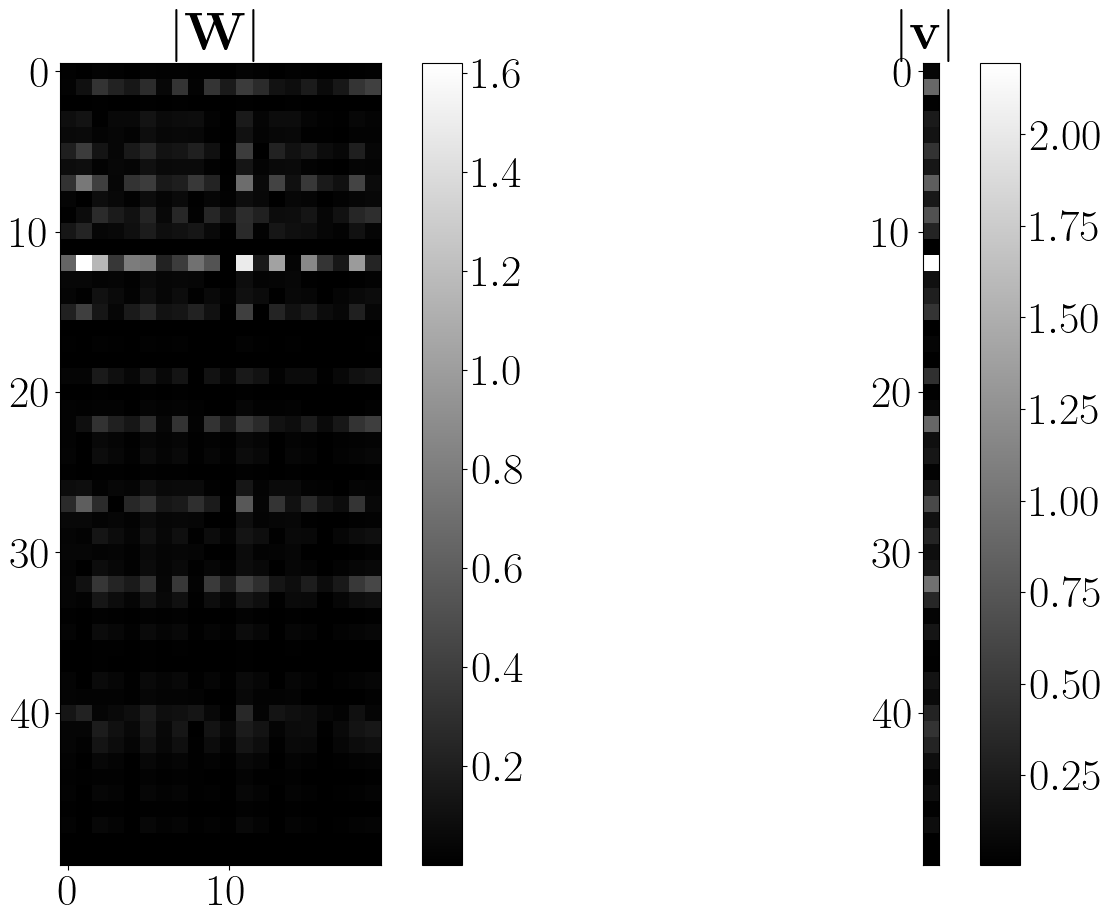}
		\caption{Weights at iteration $i_6$}
	\end{subfigure}
	
	\caption{We train a two-layer neural network whose output is $\rvv^\top\sigma(\rmW_1\rvx) ,$ where $\sigma(x) ={ \rm GELU}(x)$, and $\rvv\in \mathbb{R}^{50},\rmW_1  \in \mathbb{R}^{50 \times 20}$ are the trainable weights.}
	\label{fig:2l_gelu}
\end{figure}

\begin{figure}[htbp]
	\centering
	
	\begin{subfigure}{0.4\textwidth}
		\centering
		\includegraphics[width=\linewidth]{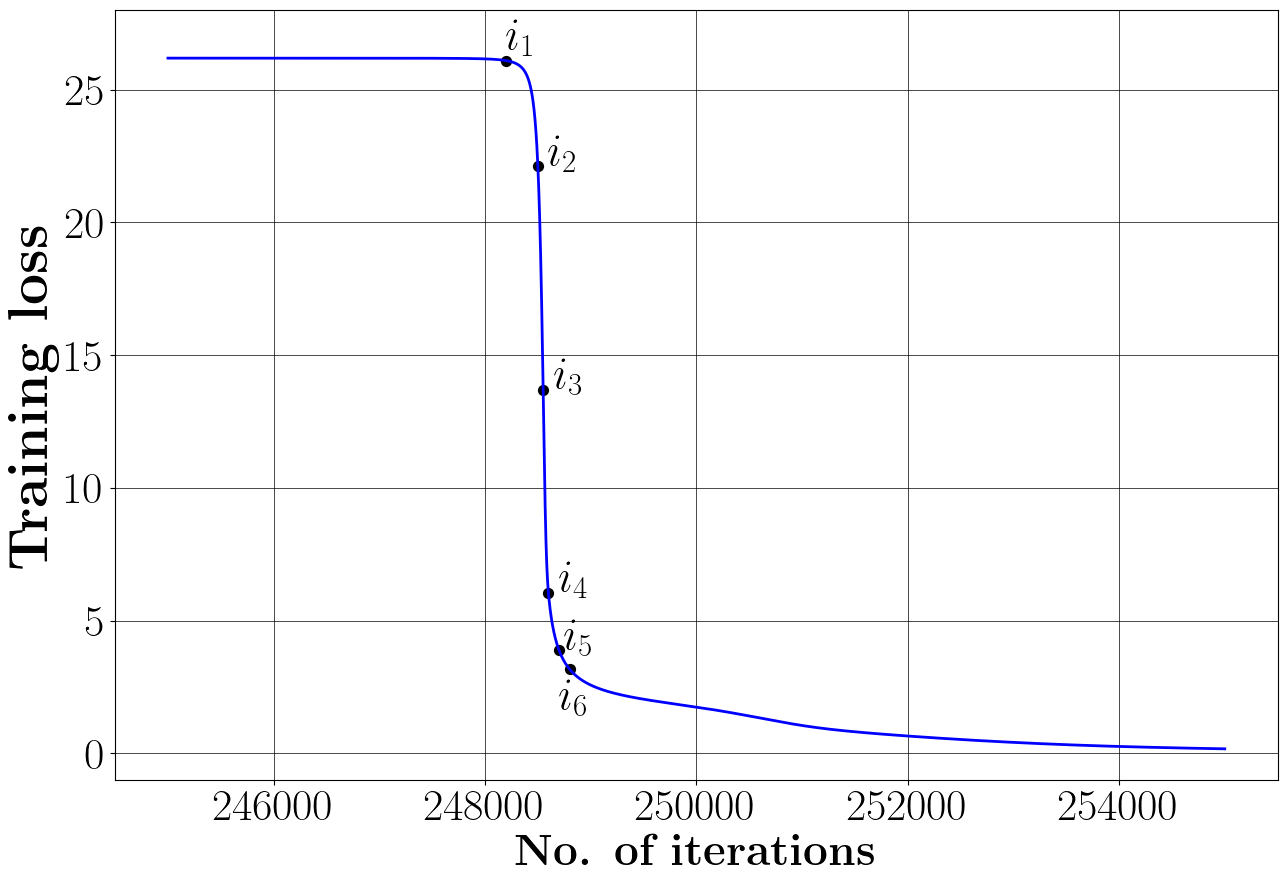}
		\caption{Evolution of training loss}
	\end{subfigure}

	\begin{subfigure}{0.45\textwidth}
		\centering
		\includegraphics[width=\linewidth]{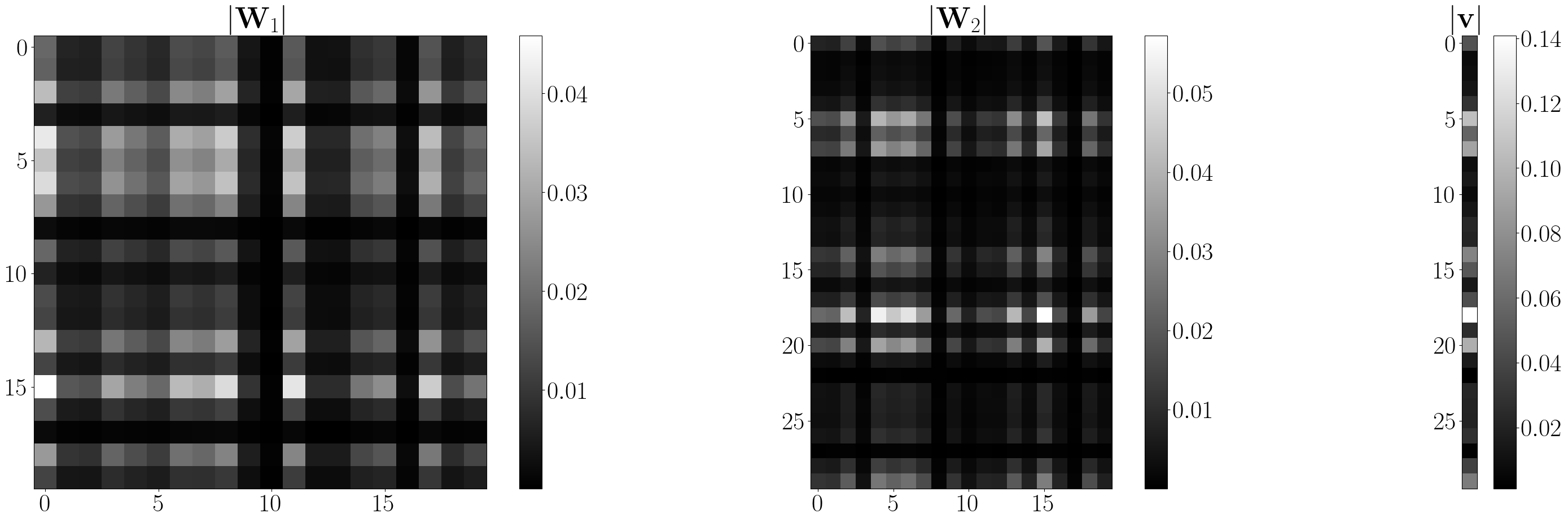}
		\caption{Weights at iteration $i_1$}
	\end{subfigure}\hfill
	\begin{subfigure}{0.45\textwidth}
		\centering
		\includegraphics[width=\linewidth]{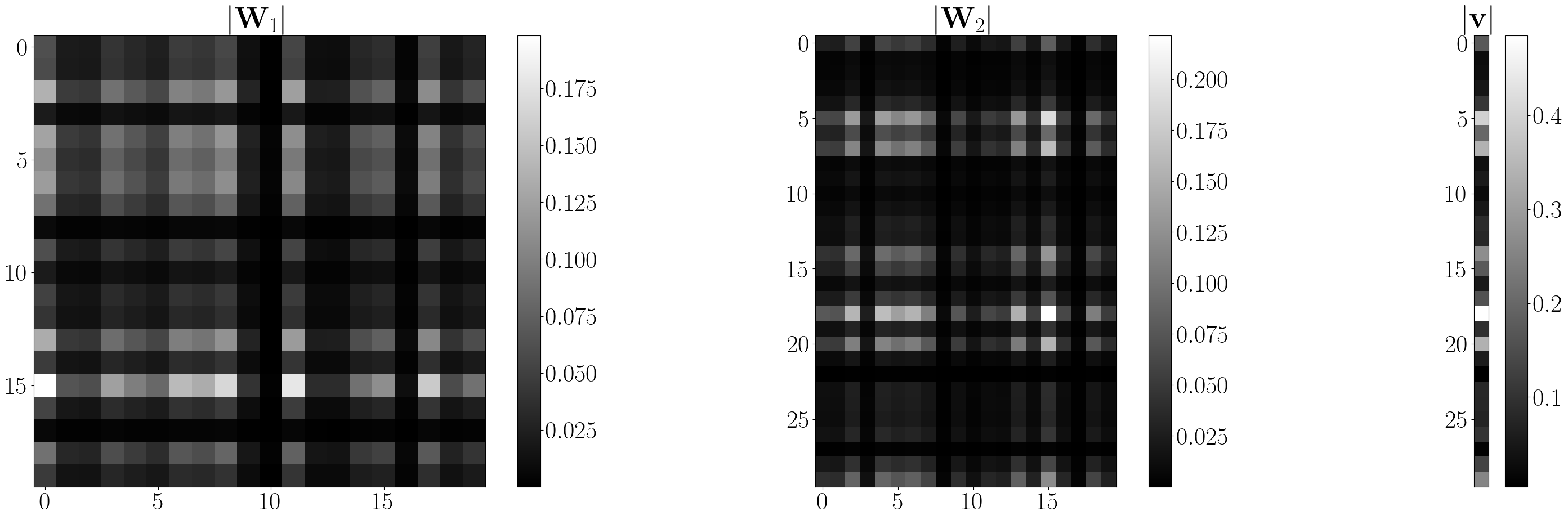}
		\caption{Weights at iteration $i_2$}
	\end{subfigure}

	\begin{subfigure}{0.45\textwidth}
		\centering
		\includegraphics[width=\linewidth]{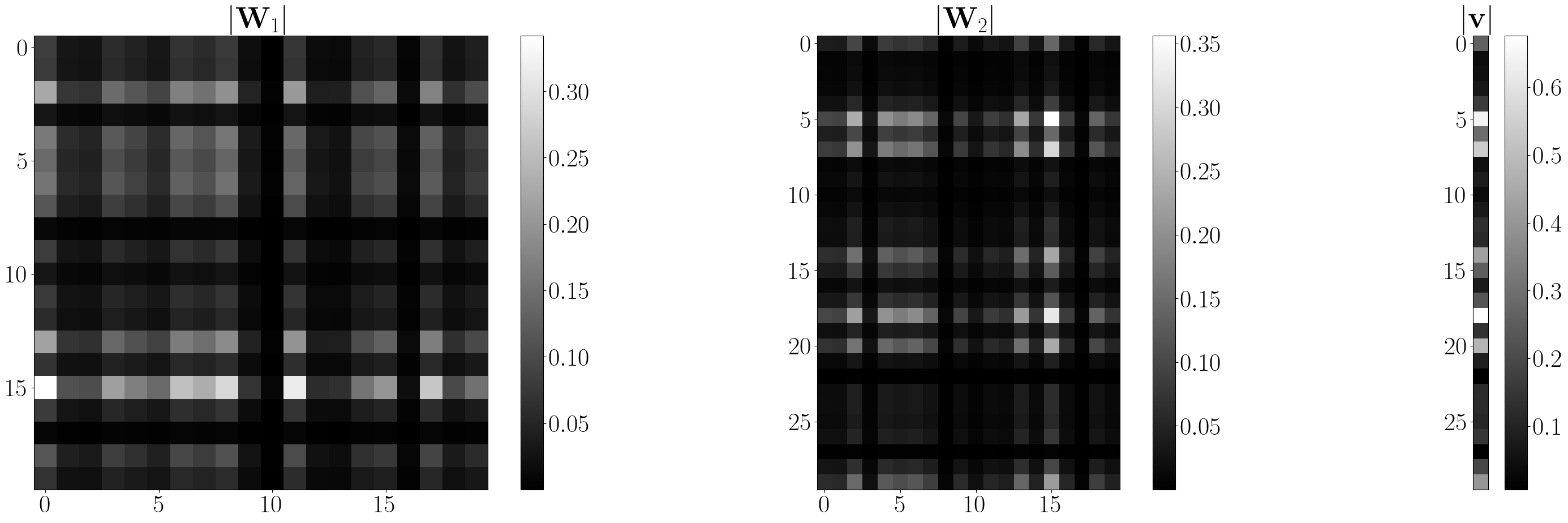}
		\caption{Weights at iteration $i_3$}
	\end{subfigure}\hfill
	\begin{subfigure}{0.45\textwidth}
		\centering
		\includegraphics[width=\linewidth]{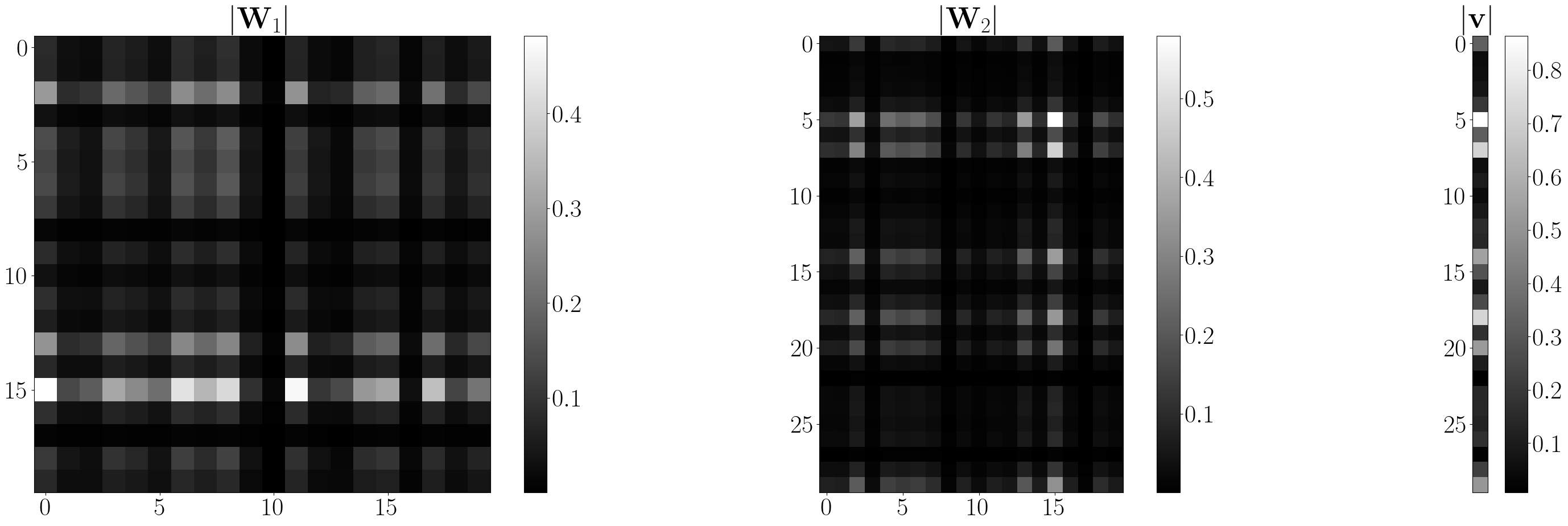}
		\caption{Weights at iteration $i_4$}
	\end{subfigure}

	\begin{subfigure}{0.45\textwidth}
		\centering
		\includegraphics[width=\linewidth]{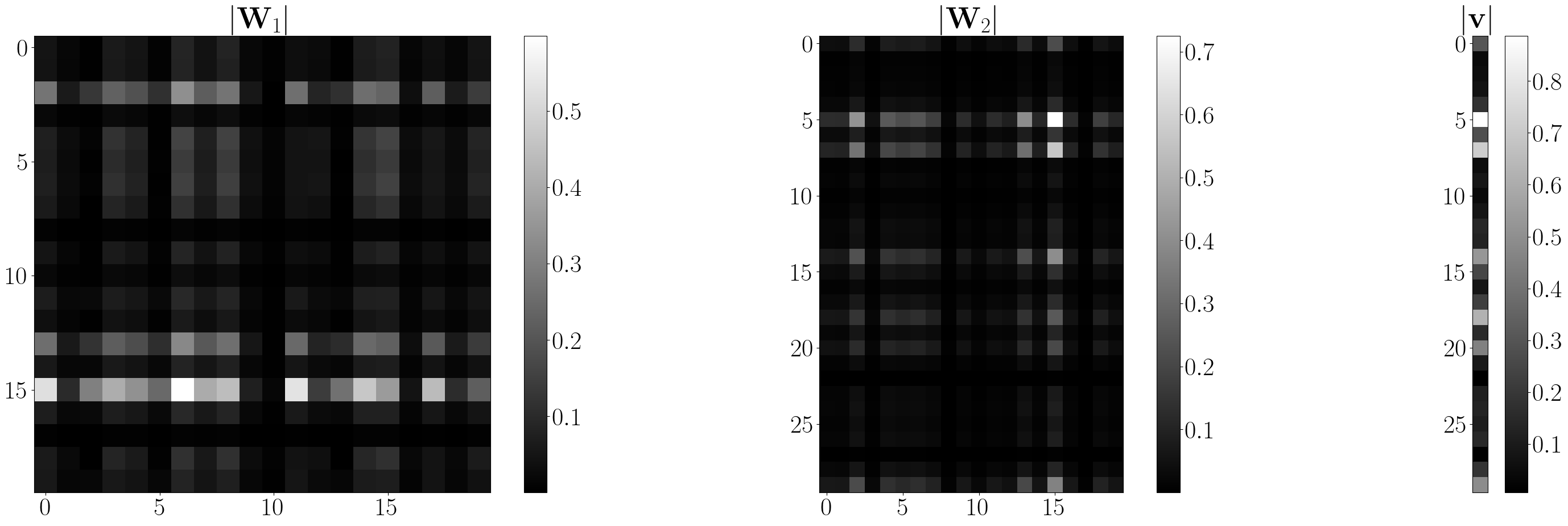}
		\caption{Weights at iteration $i_5$}
	\end{subfigure}\hfill
	\begin{subfigure}{0.45\textwidth}
		\centering
		\includegraphics[width=\linewidth]{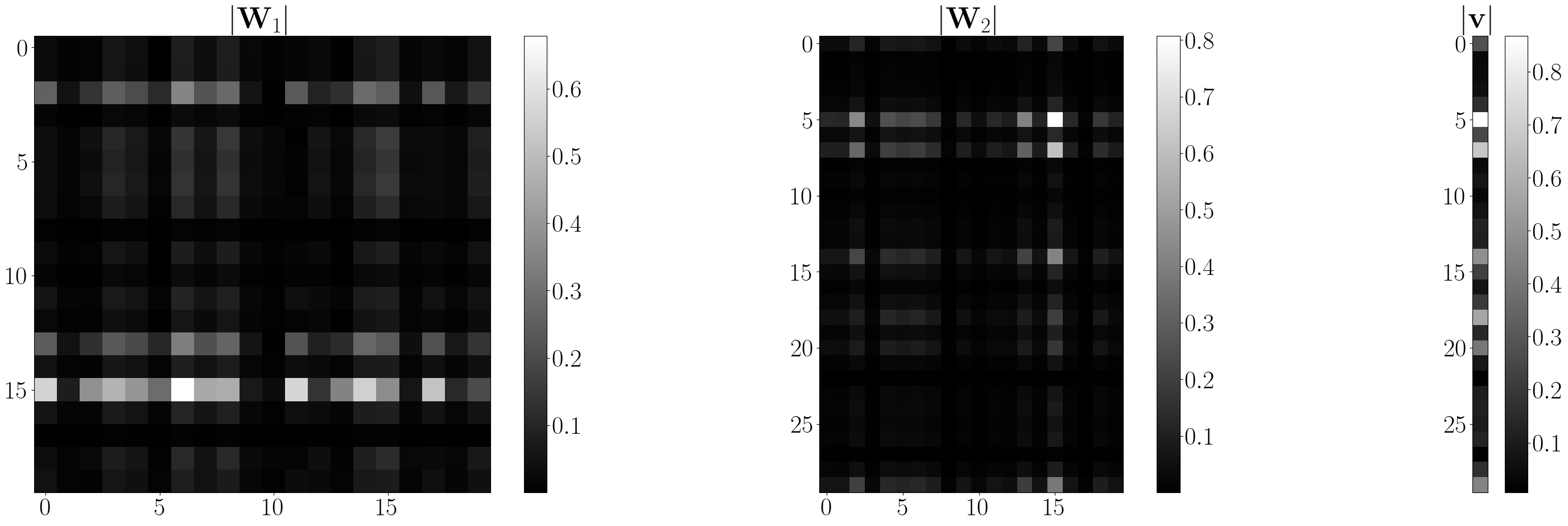}
		\caption{Weights at iteration $i_6$}
	\end{subfigure}
	
	\caption{We train a three-layer neural network whose output is $\rvv^\top\sigma(\rmW_2\sigma(\rmW_1\rvx)) ,$ where $\sigma(x) = { \rm GELU}(x)$, and $\rvv\in \mathbb{R}^{30},\rmW_2\in \mathbb{R}^{30 \times 20},\rmW_1  \in \mathbb{R}^{20 \times 20}$ are the trainable weights.}
	\label{fig:3l_gelu}
\end{figure}
\section{Training Dynamics Beyond the First Saddle Point}
\label{sec:bey_saddle}
Our theoretical results and experiments so far have focused on the regime after escaping the origin and until reaching the first saddle point. As discussed in \Cref{sec_related_works}, it is widely believed that the training dynamics follows a saddle-to-saddle trajectory: after escaping from one saddle point, the weights reach another, and this process continues until convergence. This naturally raises the question of whether the phenomenon of sparsity structure preservation also holds after escaping the saddle points.

To investigate this, we re-consider some of our earlier experiments, running them for additional iterations when needed to ensure that the weights have escaped from the first saddle point. In \Cref{fig:bey_2l_sq} and \Cref{fig:bey_3l_sq}, we revisit the experiment with two- and three-layer networks from \Cref{fig:two_layer_nn} and \Cref{fig:3_layer_nn}, respectively, while \Cref{fig:bey_4l_mnist} and \Cref{fig:bey_4l_mnist_wt1} considers the experiment in \Cref{fig:4_layer_rl_mnist} with four-layer network and MNIST data. In all cases, we plot the weights after they escape from the first saddle point. We observe that the sparsity structure is preserved among the weights after it escapes from the first saddle point and until reaching the next saddle point.

Overall, these experiments suggest that the sparsity structure is perhaps preserved even after escaping the saddle point and until reaching the next saddle point. Establishing a theoretical explanation for this phenomenon is an important direction for future research.

\begin{figure}[htbp]
	\centering
	
	\begin{subfigure}{0.3\textwidth}
		\centering
		\includegraphics[width=\linewidth]{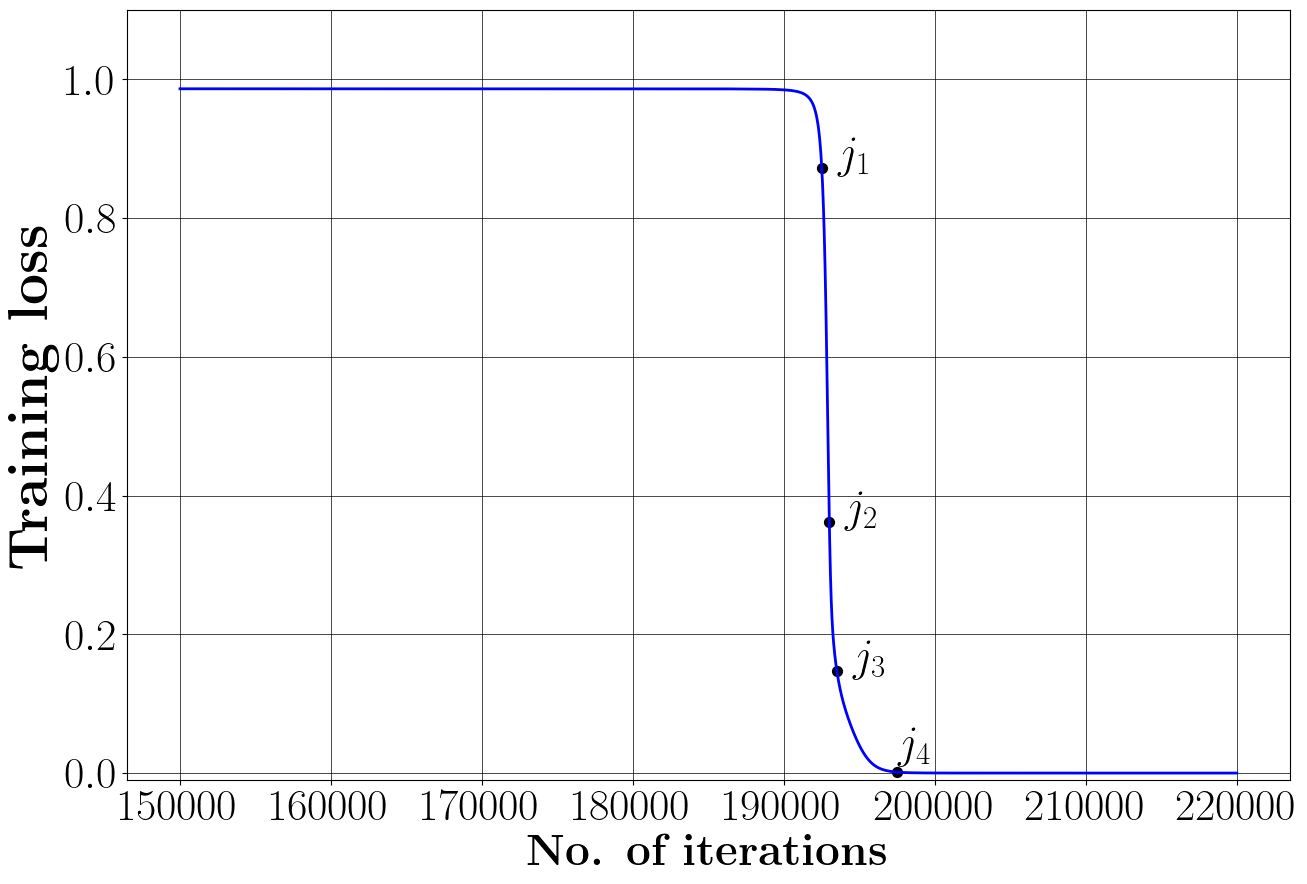}
		\caption{Evolution of training loss}
	\end{subfigure}

	\begin{subfigure}{0.4\textwidth}
		\centering
		\includegraphics[width=\linewidth]{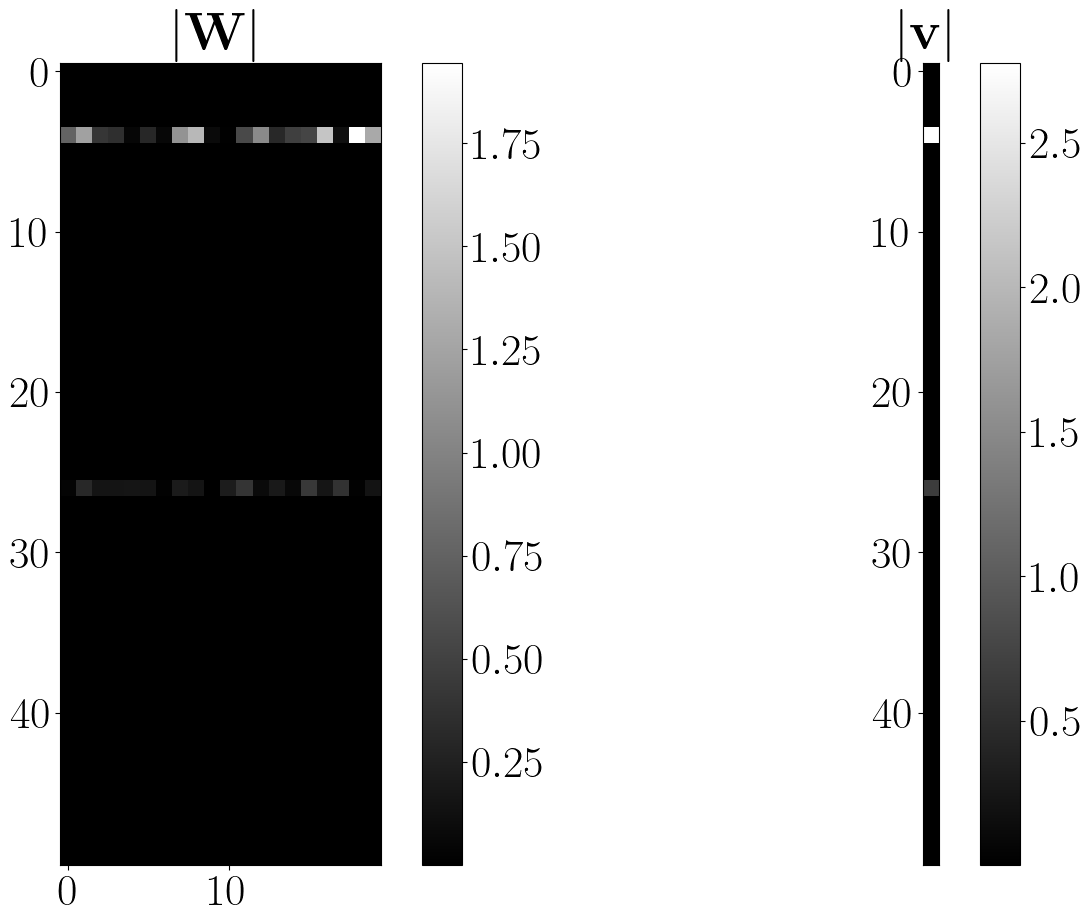}
		\caption{Weights at iteration $j_1$}
	\end{subfigure}\hfill
	\begin{subfigure}{0.4\textwidth}
		\centering
		\includegraphics[width=\linewidth]{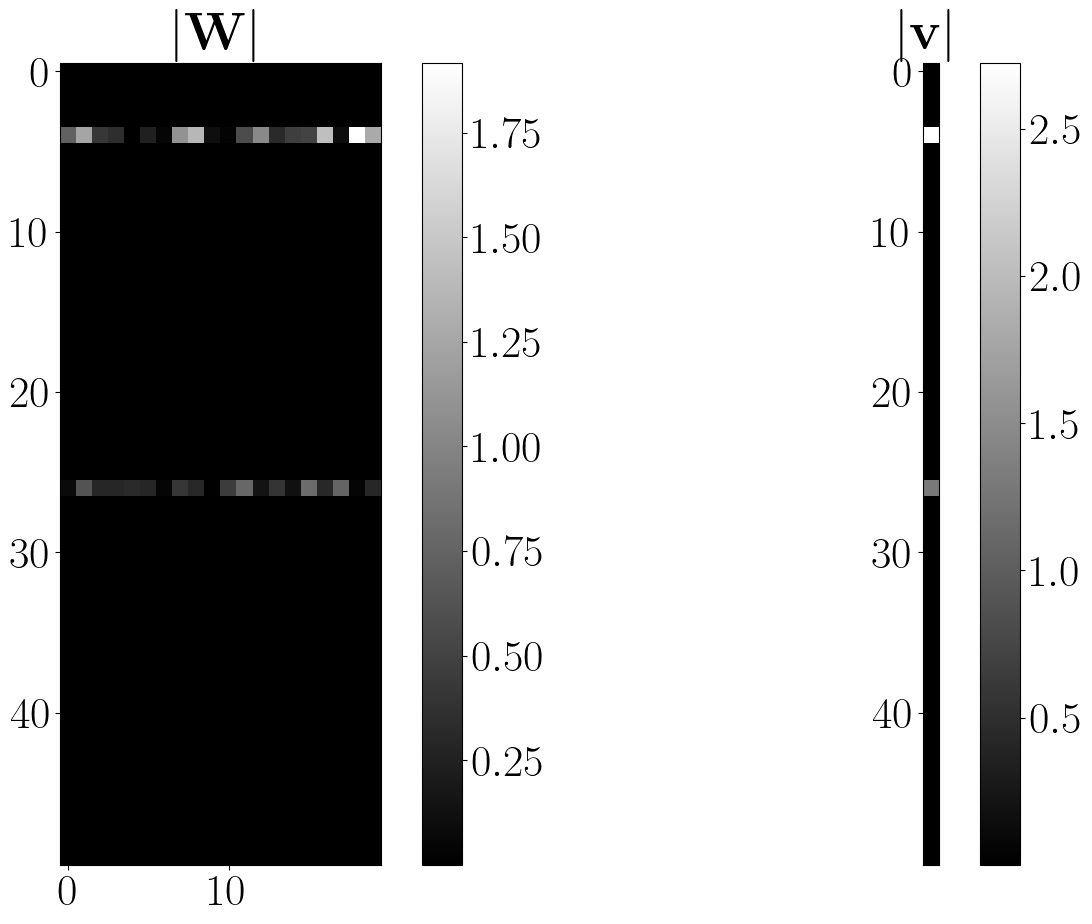}
		\caption{Weights at iteration $j_2$}
	\end{subfigure}

	\begin{subfigure}{0.4\textwidth}
		\centering
		\includegraphics[width=\linewidth]{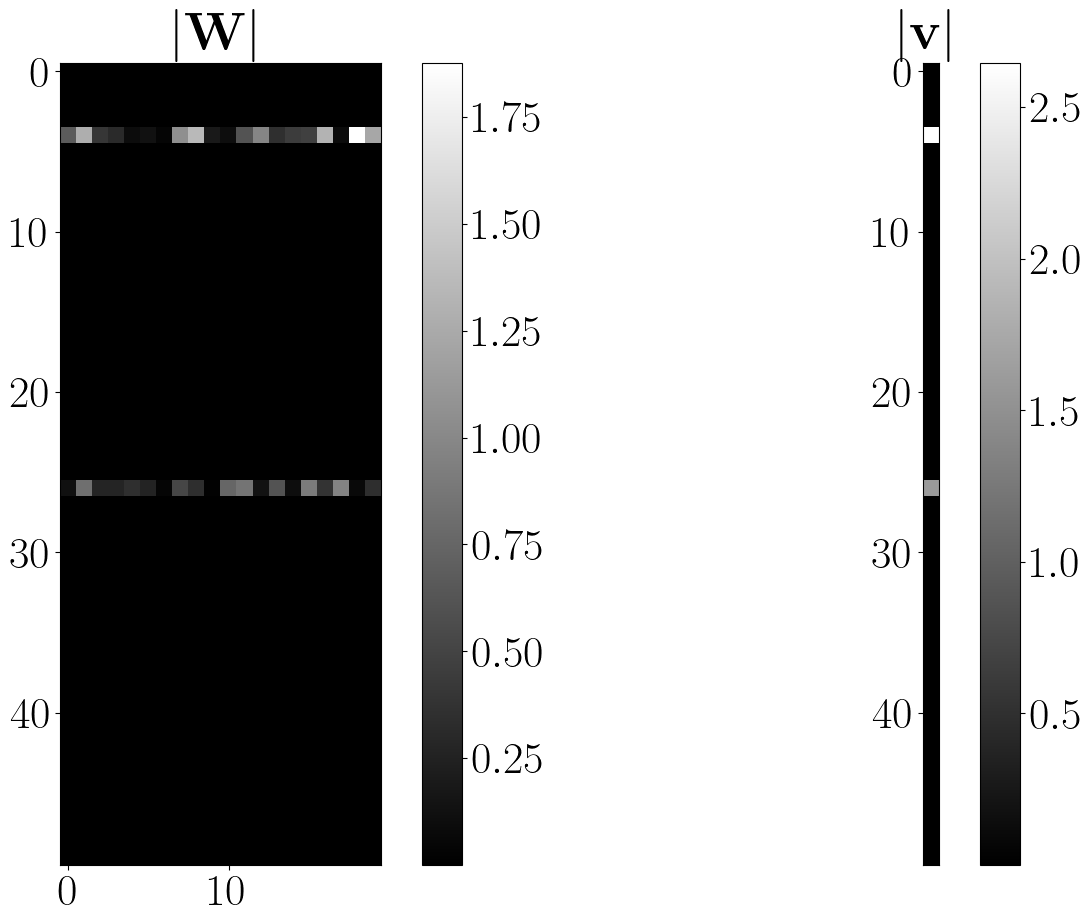}
		\caption{Weights at iteration $j_3$}
	\end{subfigure}\hfill
	\begin{subfigure}{0.4\textwidth}
		\centering
		\includegraphics[width=\linewidth]{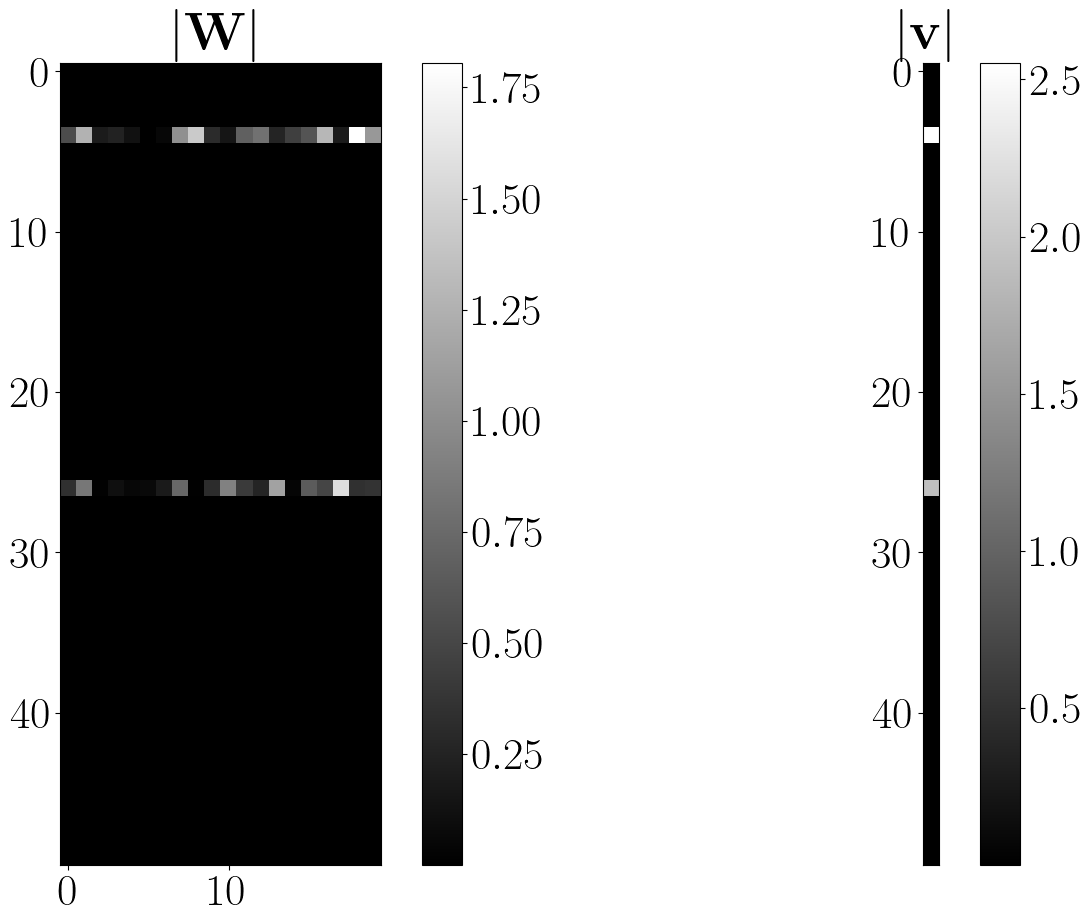}
		\caption{Weights at iteration $j_4$}
	\end{subfigure}
	
	\caption{The experiment in \Cref{fig:two_layer_nn} is considered again. Panel (a) depicts the evolution of training loss near and after escaping from the first saddle point. Panels (b)-(e) depict the weights at different stages of training (marked in panel (a)).}
	\label{fig:bey_2l_sq}
\end{figure}
\begin{figure}[htbp]
	\centering
	\begin{subfigure}{0.3\textwidth}
		\centering
		\includegraphics[width=\linewidth]{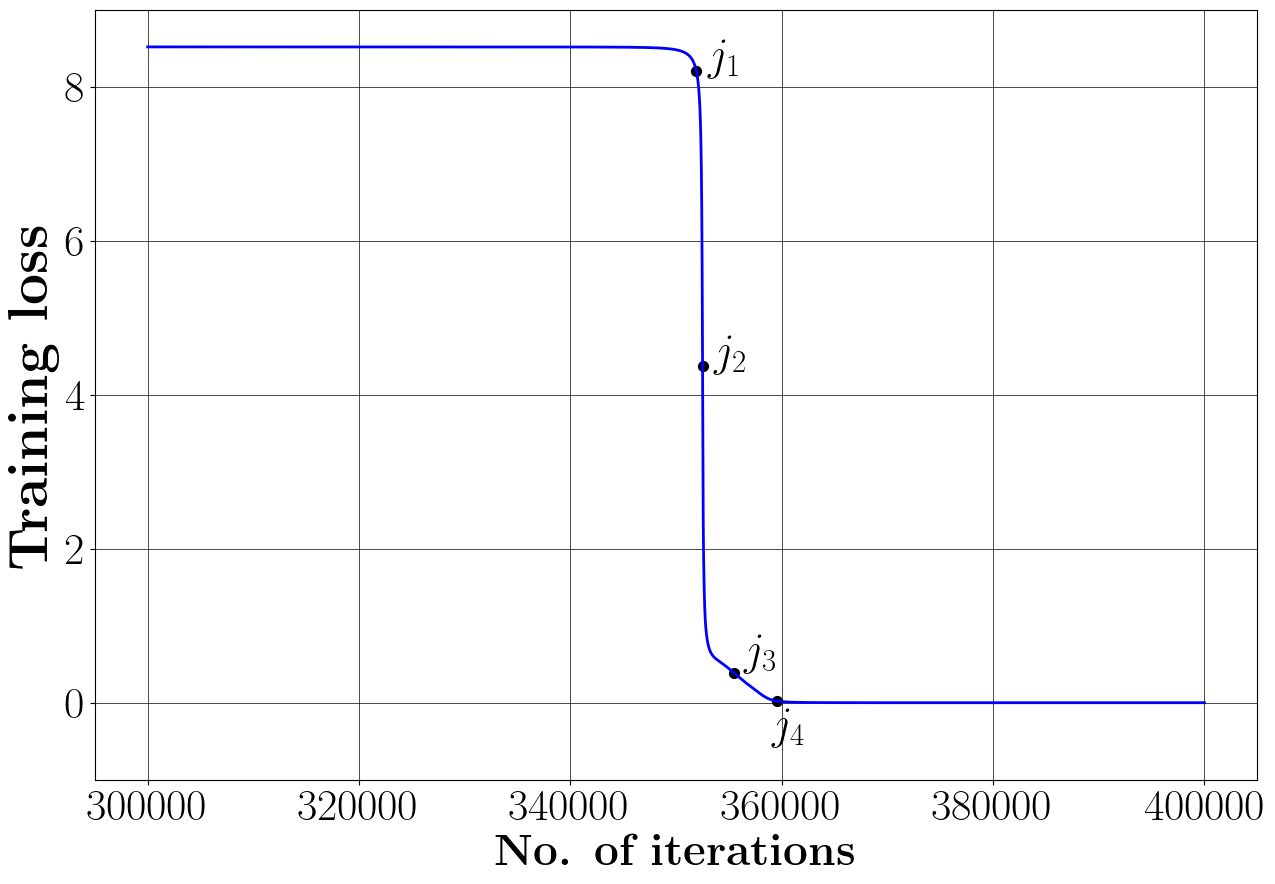}
		\caption{Evolution of training loss}
	\end{subfigure}

	\begin{subfigure}{0.45\textwidth}
		\centering
		\includegraphics[width=\linewidth]{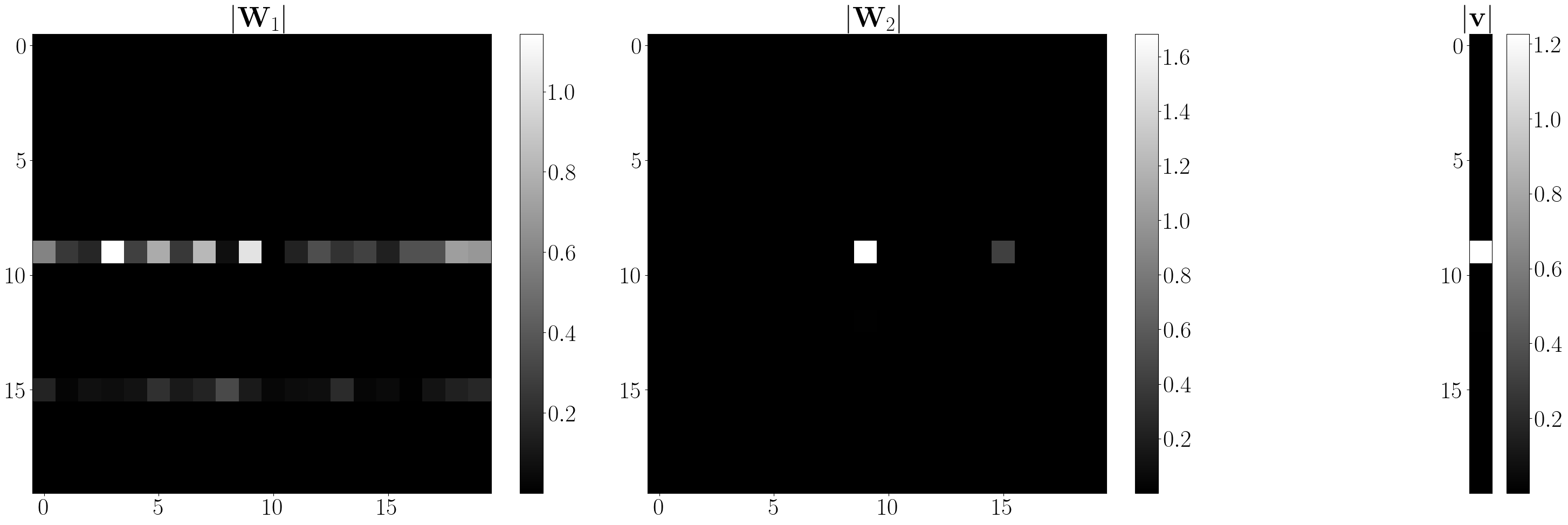}
		\caption{Weights at iteration $j_1$}
	\end{subfigure}\hfill
	\begin{subfigure}{0.45\textwidth}
		\centering
		\includegraphics[width=\linewidth]{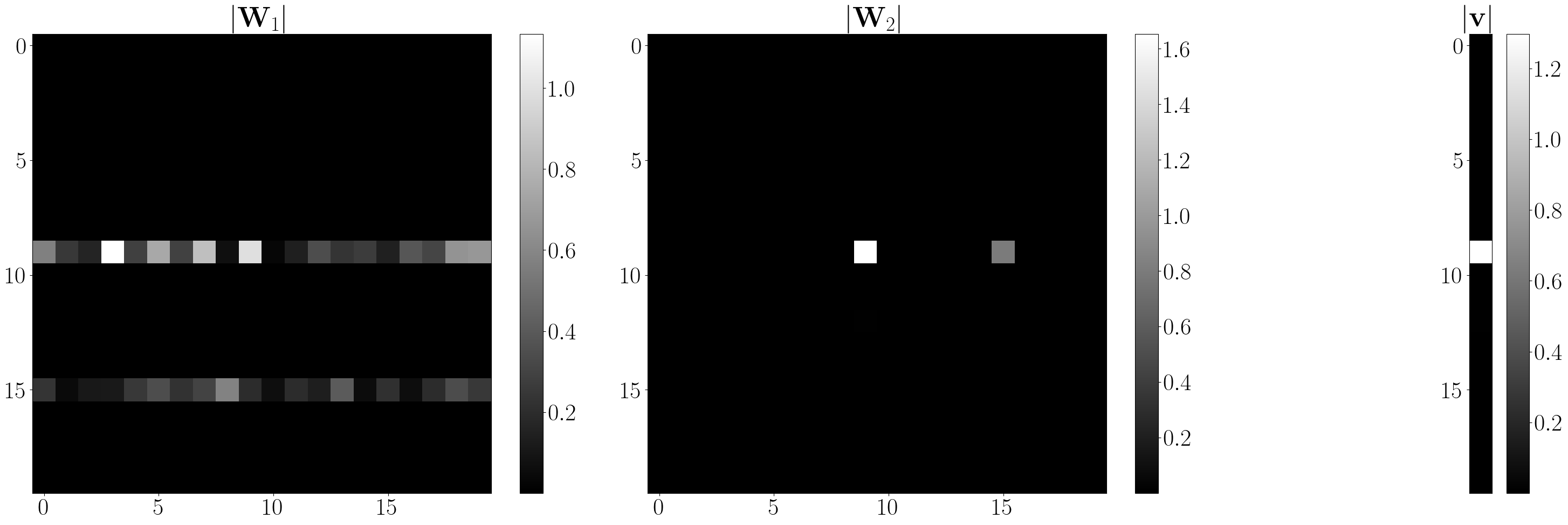}
		\caption{Weights at iteration $j_2$}
	\end{subfigure}

	\begin{subfigure}{0.45\textwidth}
		\centering
		\includegraphics[width=\linewidth]{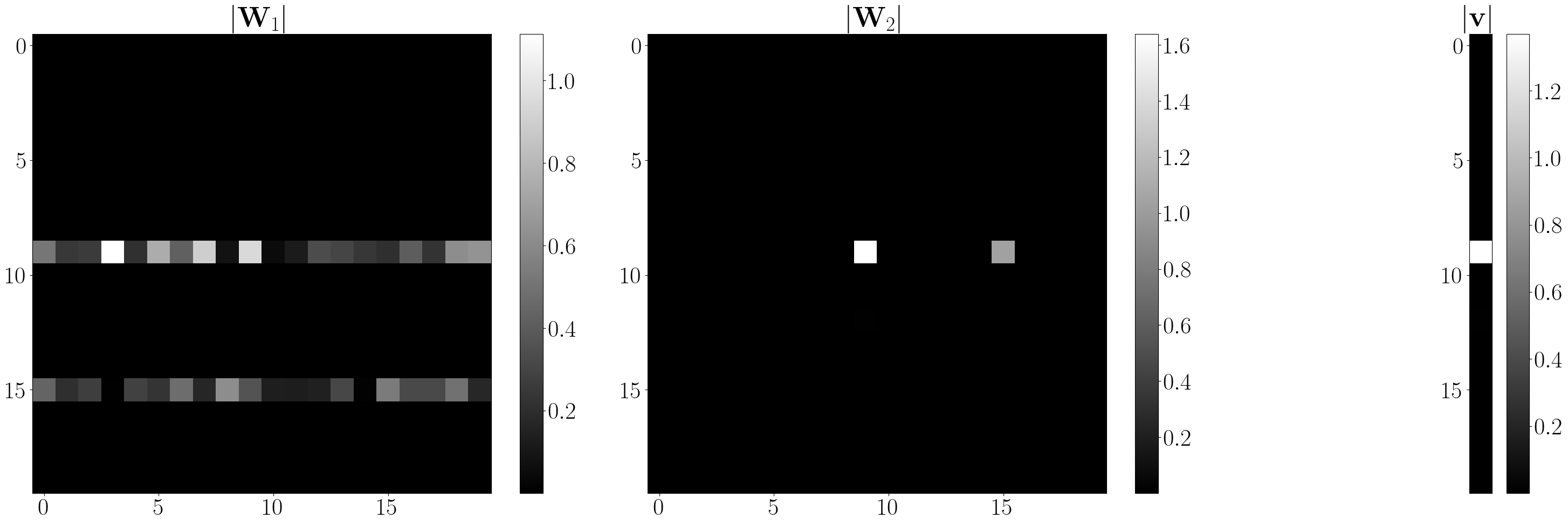}
		\caption{Weights at iteration $j_3$}
	\end{subfigure}\hfill
	\begin{subfigure}{0.45\textwidth}
		\centering
		\includegraphics[width=\linewidth]{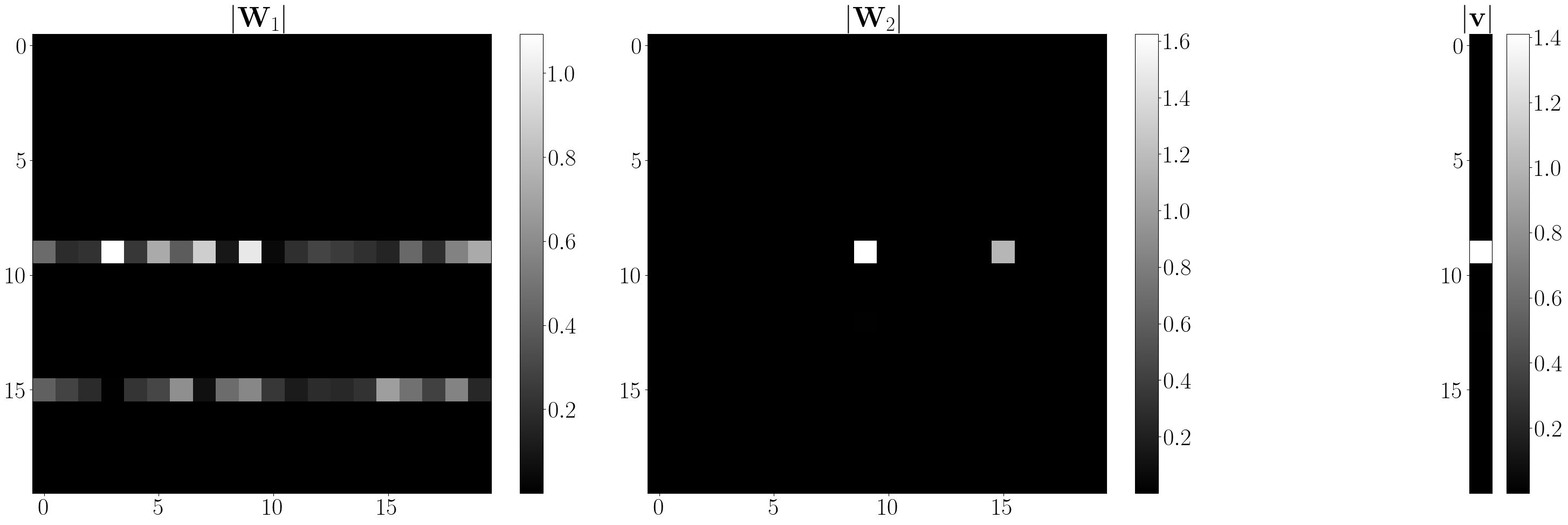}
		\caption{Weights at iteration $j_4$}
	\end{subfigure}
\caption{The experiment in \Cref{fig:3_layer_nn} is run for more iterations, specifically, until the weights escapes from the first saddle point. Panel (a) depicts the evolution of training loss near and after escaping from the first saddle point. Panels (b)-(e) depict the weights at different stages of training (marked in panel (a)).}
	\label{fig:bey_3l_sq}
\end{figure}
\begin{figure}[htbp]
	\centering
	\begin{subfigure}{0.3\textwidth}
		\centering
		\includegraphics[width=\linewidth]{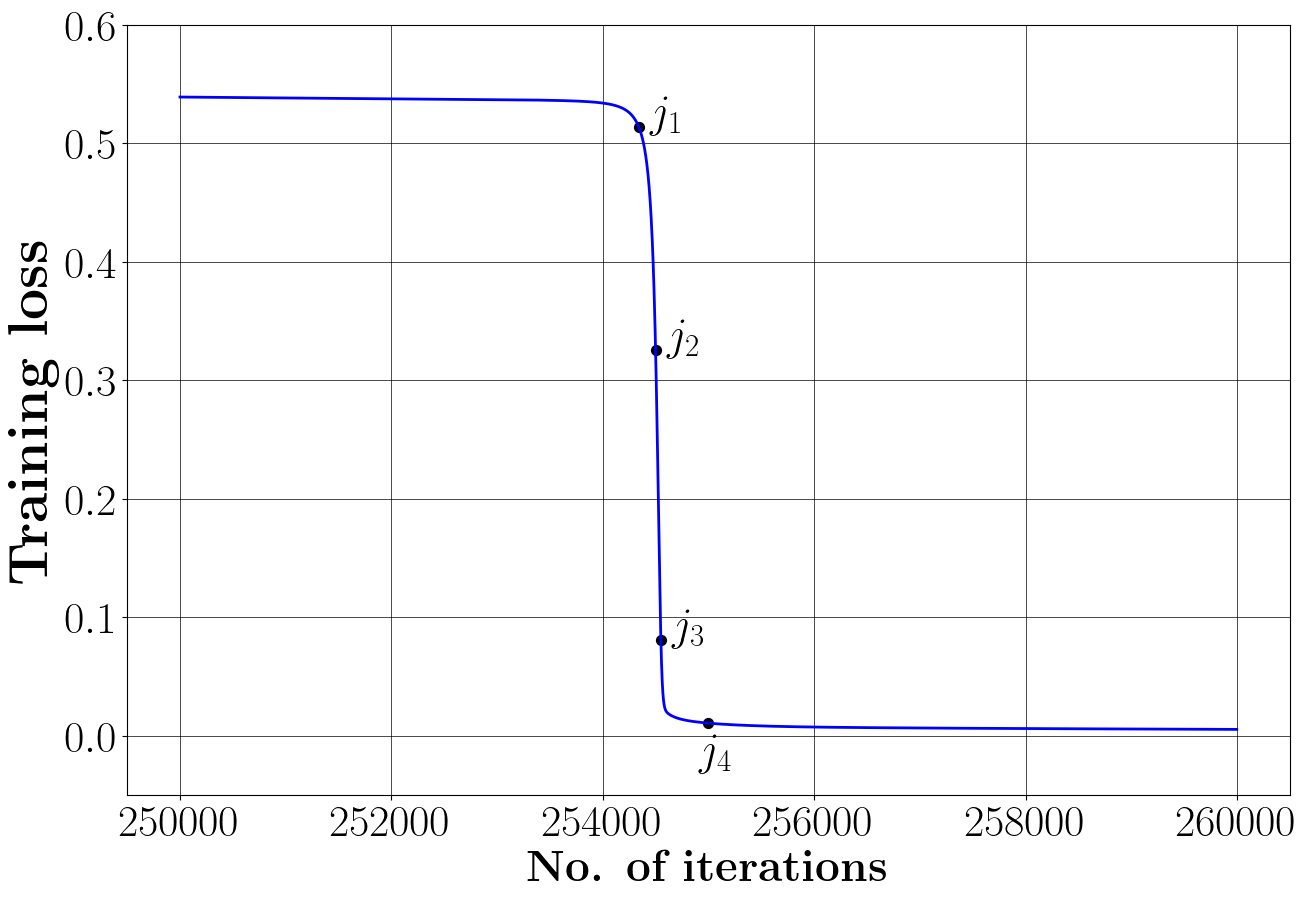}
		\caption{Evolution of training loss}
	\end{subfigure}

	\begin{subfigure}{0.45\textwidth}
		\centering
		\includegraphics[width=\linewidth]{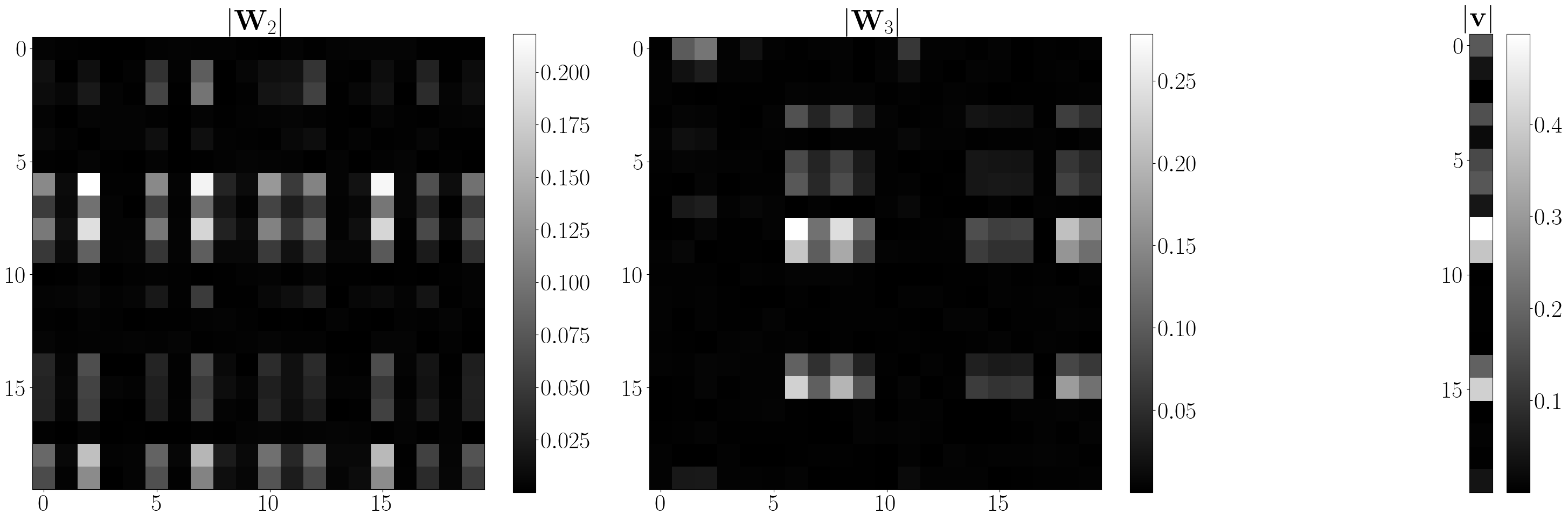}
		\caption{Weights of last three layers at iteration $j_1$}
	\end{subfigure}\hfill
	\begin{subfigure}{0.45\textwidth}
		\centering
		\includegraphics[width=\linewidth]{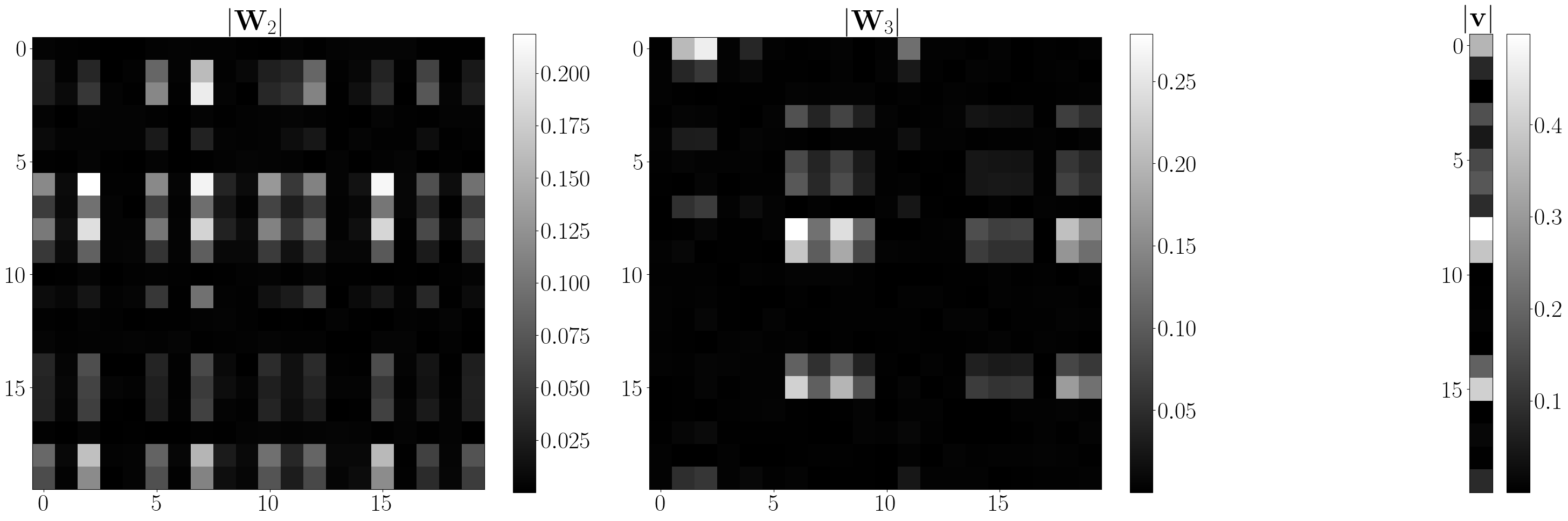}
		\caption{Weights of last three layers at iteration $j_2$}
	\end{subfigure}

	\begin{subfigure}{0.45\textwidth}
		\centering
		\includegraphics[width=\linewidth]{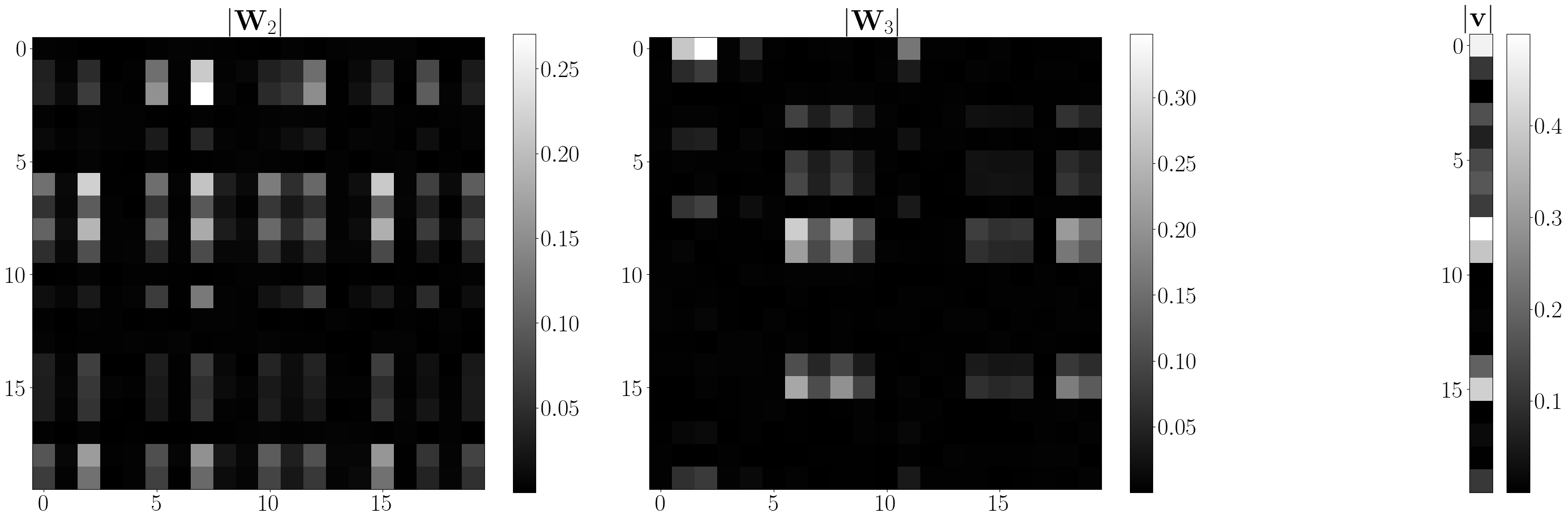}
		\caption{Weights of last three layers at iteration $j_3$}
	\end{subfigure}\hfill
	\begin{subfigure}{0.45\textwidth}
		\centering
		\includegraphics[width=\linewidth]{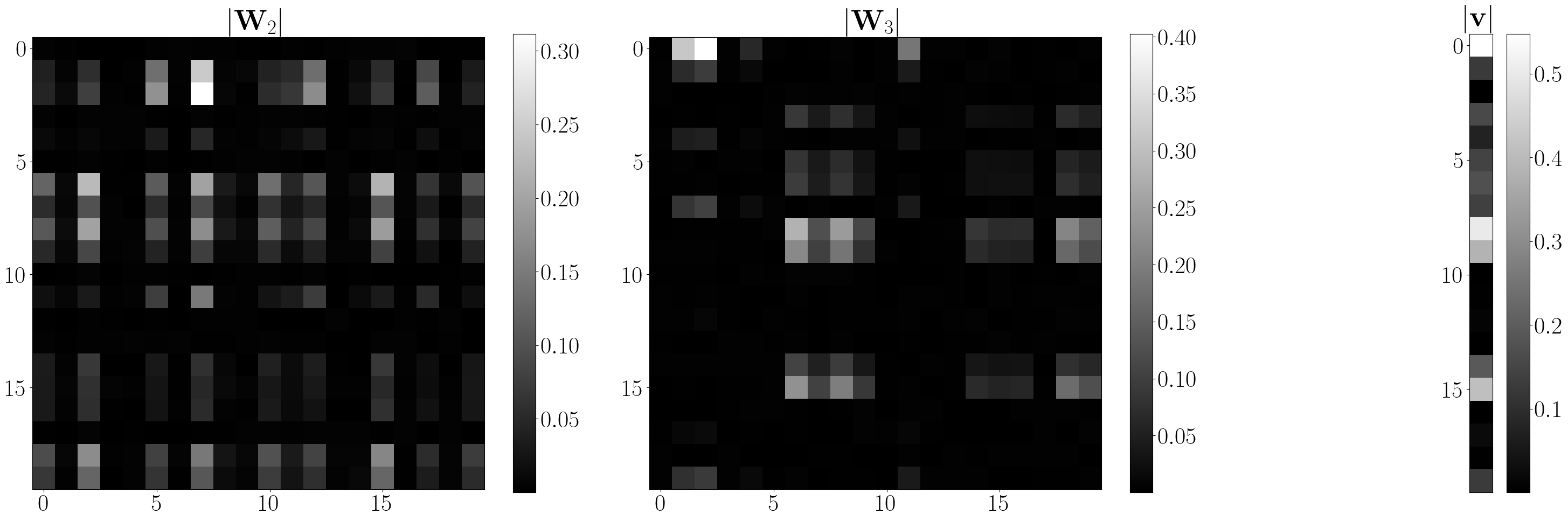}
		\caption{Weights of last three layers at iteration $j_4$}
	\end{subfigure}
	
\caption{The experiment in \Cref{fig:4_layer_rl_mnist} is considered again. Panel (a) depicts the evolution of training loss near and after escaping from the first saddle point. Panels (b)-(e) depict the weights of last three layers at different stages of training (marked in panel (a)). The weights of first layer are shown in \Cref{fig:bey_4l_mnist_wt1} due to limited space.}
	\label{fig:bey_4l_mnist}
\end{figure}

\begin{figure}[htbp]
	\centering
	\vspace{-5mm}
	\begin{subfigure}{0.9\textwidth}
		\centering
		\includegraphics[width=\linewidth]{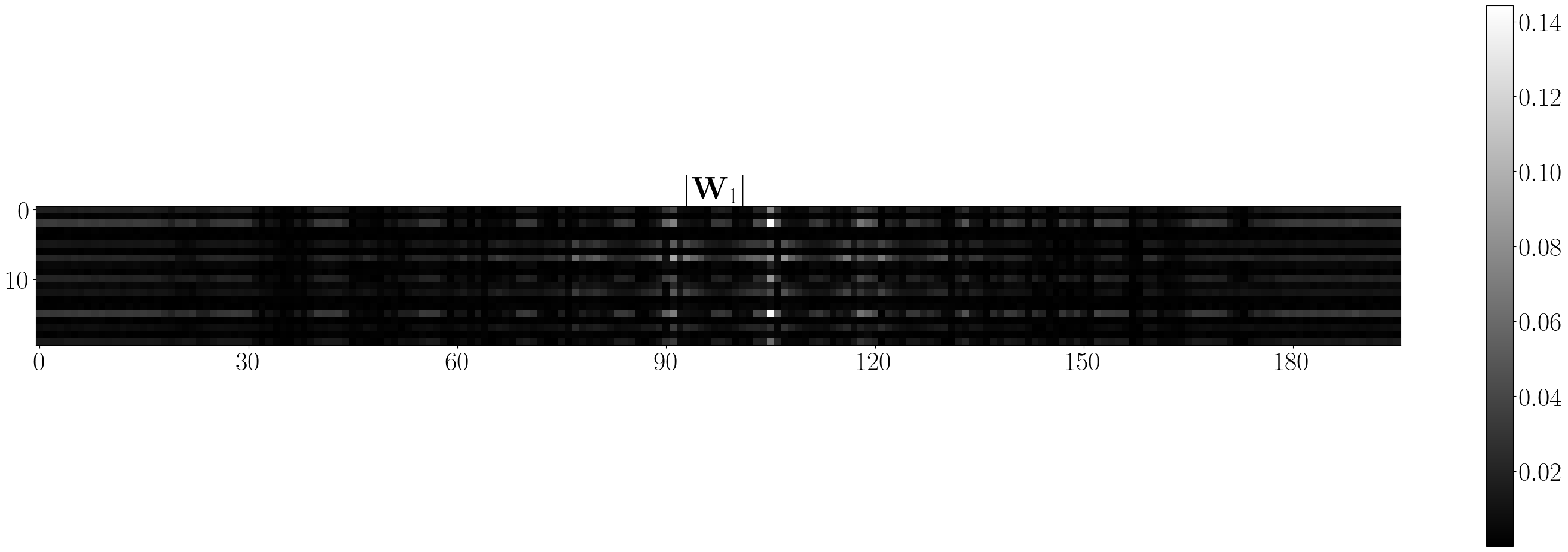}
		\caption{Weights of first layer at iteration $j_1$}
	\end{subfigure}

	\begin{subfigure}{0.9\textwidth}
		\centering
		\includegraphics[width=\linewidth]{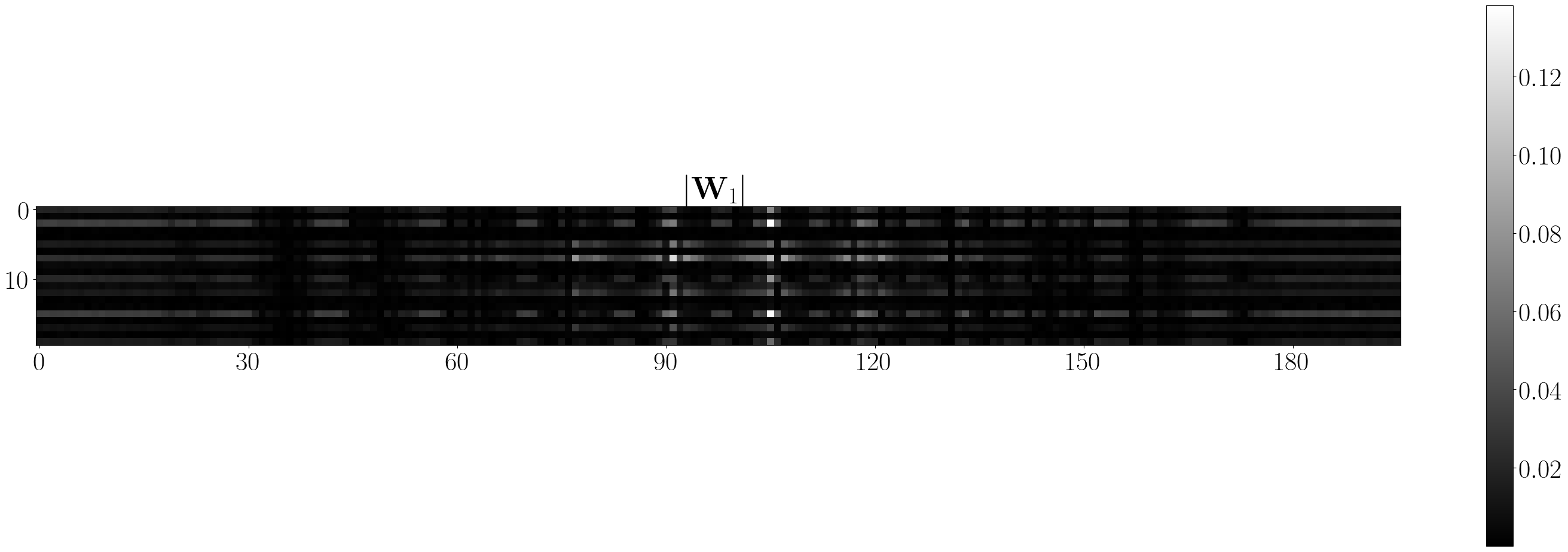}
		\caption{Weights of first layer at iteration $j_2$}
	\end{subfigure}

	\begin{subfigure}{0.9\textwidth}
		\centering
		\includegraphics[width=\linewidth]{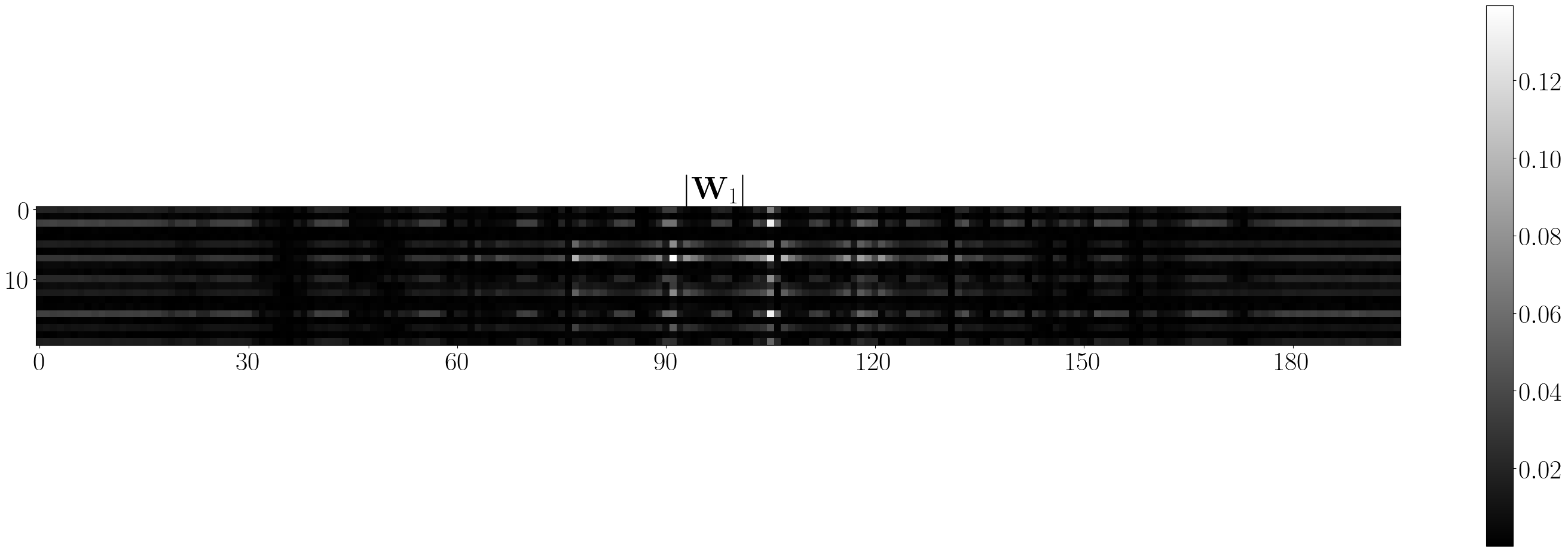}
		\caption{Weights of first layer at iteration $j_3$}
	\end{subfigure}

	\begin{subfigure}{0.9\textwidth}
		\centering
		\includegraphics[width=\linewidth]{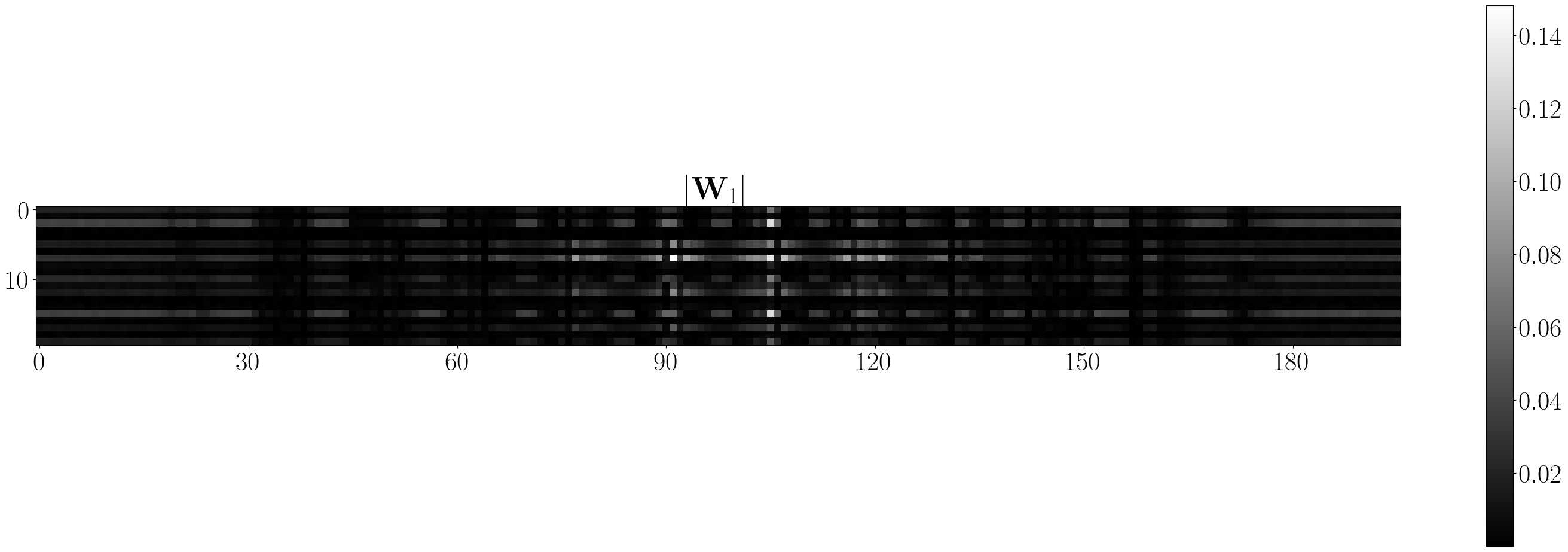}
		\caption{Weights of first layer at iteration $j_4$}
	\end{subfigure}
	
	\caption{The weights of first layer from the experiment of \Cref{fig:bey_4l_mnist}.}
	\label{fig:bey_4l_mnist_wt1}
\end{figure}

\vskip 0.2in
\bibliography{sample}

\end{document}